\newcommand{\Var}{\mathbf{Var}}
\def\E{{\bf E}}
\def\e{{\bf e}}
\def\R{{\bf R}}
\def\X{{\bf X}}
\def\u{{\bf u}}
\def\v{{\bf v}}
\def\x{{\bf x}}
\def\y{{\bf y}}
\def\z{{\bf z}}
\def\m{{\bf m}}
\def\u{{\bf u}}
\def\v{{\bf v}}
\def\0{{\bf 0}}
\def\1{{\bf 1}}
\def\eps{\epsilon}
\def\argmin{\mathop{\rm argmin}}
\newtheorem{lemma}{Lemma}
\newtheorem{definition}{Definition}
\newtheorem{theorem}{Theorem}
\newtheorem{proposition}{Proposition}
\newtheorem{cor}{Corollary}
\newtheorem{fact}{Fact}
\title{Estimating Learnability in the Sublinear Data Regime}
\author{Weihao Kong\\ Stanford University\\ whkong@stanford.edu \and Gregory Valiant\\ Stanford University\\ gvaliant@cs.stanford.edu}
\begin{document}
\maketitle
%\twocolumn[
%\icmltitle{Estimating Learnability: Approximating signal and noise in the sublinear data regime}

%keywords{Testing Linearity, Sublinear Sample Estimation, Property Testing}
%\icmlkeywords{Property estimation, functional estimation, sublinear data regime, feature selection}
%]
\begin{abstract}
We consider the problem of estimating how well a model class is capable of fitting a distribution of labeled data.  We show that it is often possible to accurately estimate this ``learnability'' even when given an amount of data that is too small to reliably learn any accurate model.   Our first result applies to the setting where the data is drawn from a $d$-dimensional distribution with isotropic covariance (or known covariance), and the label of each datapoint is an arbitrary noisy function of the datapoint.  In this setting, we show that with $O(\sqrt{d})$ samples, one can accurately estimate the fraction of the variance of the label that can be explained via the best linear function of the data.  In contrast to this sublinear sample size, finding an approximation of the best-fit linear function requires on the order of $d$ samples. Our sublinear sample results and approach also extend to the non-isotropic setting, where the data distribution has an (unknown) arbitrary covariance matrix: we show that, if the label $y$ of point $x$ is a linear function with independent noise, $y = \langle x , \beta \rangle + noise$ with $\|\beta \|$ bounded, the variance of the noise can be estimated to error $\eps$ with $n=O(d^{1-1/\log{1/\eps}})$ samples if the covariance matrix has bounded condition number, or $n=O(d^{1-\sqrt{\eps}})$ if there are no bounds on the condition number.  We also establish that these sample complexities are optimal, to constant factors.  Finally, we extend these techniques to the setting of binary classification, where we obtain analogous sample complexities for the problem of estimating the prediction error of the best linear classifier, in a natural model of binary labeled data.  We demonstrate the practical viability of our approaches on several real and synthetic datasets.  %This ability to estimate the explanatory value of a set of features (or dataset), even in the regime in which there is too little data to realize that explanatory value, may be relevant to the scientific and industrial settings for which data collection is expensive and there are many potentially relevant feature sets that could be collected.
\end{abstract}

%We also establish that this result is optimal---it is impossible to distinguish pure noise versus zero noise given $o(\sqrt{d})$ samples in this setting.   

\section{Introduction}
Given too little labeled data to learn a model or classifier, is it possible to determine whether an accurate classifier or predictor exists?   %Alternately phrased, suppose you are given $n$ labeled data points drawn from some distribution of interest, $D$;  in the regime in which $n$ is too small to reliably learn a predictor, is it possible to determine whether an accurate predictor exists?  
For example, consider a setting where you are given $n$ datapoints with real-valued labels drawn from some distribution of interest, $D$.  Suppose you are in the regime in which $n$ is too small to learn an accurate prediction model; might it still be possible to estimate the performance that would likely be obtained if, hypothetically, you were to gather more data, say a dataset of size $n' \gg n$ and train a model on that data?  We answer this question affirmatively, and show that in the settings of linear regression and binary classification via linear (or logistic) classifiers, it \emph{is} possible to estimate the likely performance of a (hypothetical) predictor trained on a larger hypothetical dataset, even given an amount of data that is sublinear in the amount that would be required to \emph{learn} such a predictor.  

For concreteness, we begin by describing the flavor of our results in a very basic setting: learning a noisy linear function of high-dimensional data.   Suppose we are given access to independent samples from a $d$-dimensional isotropic Gaussian, and each sample, $\x \in \R^d$ is labeled according to a noisy linear function $y = \langle \x , \beta\rangle + \eta,$ where $\beta$ is the true model and the noise $\eta$ is drawn (independently) from a distribution E of (unknown) variance $\delta^2.$  One natural goal is to estimate the signal to noise ratio, $1-\frac{\delta^2}{\Var[Y]}$, namely estimating how much of the variation in the label we could hope to explain.  Even in the noiseless setting ($\delta = 0$), it is information theoretically \emph{impossible} to learn any function that has even a small constant correlation with the labels unless we are given an amount of data that is linear in the dimension, $d$.  Nevertheless, as was recently shown by Dicker~\cite{dicker2014variance} in this Gaussian setting with independent noise, it is possible to estimate the magnitude of the noise, $\delta$, and variance of the label, given only $O(\sqrt{d})$ samples. % Such information is sufficient to accurately estimate the value of collecting additional data: given how well a model trained on a larger performance of a model trained %As we show, this striking result holds beyond the Gaussian setting, and 

Our results (summarized in Section~\ref{sec:res}), explore this striking ability to estimate the ``learnability'' of a distribution over labeled data based on relatively little data.  
Our results significantly extend previous results and the results of Dicker in the following senses: 1) We present a unified approach that yields accurate estimation of this learnability when $n=o(d)$ which applies even when the $\x$ portion of the datapoints are drawn from a distribution with arbitrary (unknown) covariance.  This is surprising---and was conjectured to be impossible~\cite{verzelen2018adaptive}---because the best linear model can not be approximated with $o(d)$ data, nor can the covariance be consistently estimated with $o(d)$ datapoints.   2) Agnostic setting: Our techniques do not require any distributional assumptions on the label, $y$, in contrast to most previous work that assumed $y$ is a linear function plus independent noise (which is not a realistic assumption for many of the practical settings of interest).  Instead, our approach directly estimates the fraction of the variance in the label that can be explained via a linear function of $\x$. 
3) Binary classification setting: Our techniques naturally extend to  the setting of binary classification, provided a strong distributional assumption is made---namely that the data is drawn according to the logistic model (see Section~\ref{sec:res} for a formal description of this model).

%Our most basic result considers the restrictive setting where the data is drawn from an isotropic $d$-dimensional distribution, and the label consists of a linear function plus independent noise.  In this setting, we show that $O(\sqrt{d})$ samples are necessary and sufficient to estimate the variance of the noise.  We extend this result in several different directions, via natural generalizations of the same techniques.  These extensions include 1) the ``agnostic'' setting, where the label can have an arbitrary joint distribution with the $d$-dimensional datapoint, and the goal is to estimate the fraction of the variance that can be captured by a linear function (sometimes referred to as estimating the \emph{explained variance}), 2) the setting where the data has an unknown (and non-isotropic) covariance, and the label is a linear function of the datapoint plus independent noise, and 3) a class of natural models for data with binary labels, where the goal is to estimate the prediction error achievable by the best linear/logistic classifier.

%f data with binary  the label  basic result, which applies to an We explore several different settings, and , and extend beyond the Gaussian setting and apply to distributions with unknown (and non-isotropic) covariance.  % Our empirical results, both on synthetic data and on data from a natural language processing dataset(in which high-dimensional featurizations are common), demonstrate that our estimation algorithms are quite robust to deviations in the modeling assumptions. 
Throughout, we focus on \emph{linear} models and classifiers, and our assumptions on the data generating distribution are very specific for our binary classification results.  Because some of our results apply when the covariance matrix of the distribution is non-isotropic (and non-Gaussian), the results extend to the many non-linear models that can be represented as a linear function applied to a non-linear embedding of the data, for example settings where the label is a noisy polynomial function of the features.

%setting of binary classification relationships, an our results apply We begin by considering the stringent setting where In this work, we focus on \emph{linear} relationships, and assume throughout that the data consists of a linear signal with independent noise.  While these assumptions are very stringent, they seem to be a natural place to start our investigation of this fundamental and practically relevant question.  Additionally, because our results apply when the covariance matrix of the distribution is non-isotropic (and non-Gaussian), the results extend to the many non-linear models that can be represented as a linear function applied to a non-linear embedding of the data, for example settings where the label is a noisy polynomial function of the features.  

%Our empirical results, presented in Section~\ref{empirical}, demonstrate 

Still, our estimation algorithms do not apply to all relevant settings; for example, they do not encompass binary classification settings where the two classes do not occur with equal probabilities.  We are optimistic that our techniques may be extended to address that setting, and other practically relevant settings that are not encompassed by the models we consider.  We discuss some of these possibilities, and several other shortcomings of this work and potential directions for future work, in Section~\ref{future}.

%Additionally, because our results apply when the covariance matrix of the distribution is non-isotropic, the results extend to the many non-linear models that can be represented as a linear function applied to a non-linear embedding of the data, for example settings where the label is a noisy polynomial function of the features.    Our results do not directly apply to settings where the model will be a logistic regression, though we are optimistic that similar approaches can be brought to bear on that practically relevant setting.  We discuss this possibility

%This result implies that, even in the regime in which there is too little data to learn a good predictor, it is possible to distinguish whether the labels can be accurately predicted 

%Maybe add paragraph explaining that we focus on linear relationships 1) natural place to start 2) many more general classes can be thought of as linear model in non-linear embedding.   Should also talk about binary classification (rather than regression).

\subsection{Motivating Application: Estimating the value of data and dataset selection}
In some data-analysis settings, the ultimate goal is to quantify the signal and noise---namely understand how much of the variation in the quantity of interest can be explained via some set of explanatory variables.  For example, in some medical settings, the goal is to understand how much disease risk is associated with genomic factors (versus random luck, or environmental factors, etc.).  In other settings, the goal is to accurately predict a quantity of interest.  The key question then becomes ``what data should we collect---what features or variables should we try to measure?"   The traditional pipeline is to collect a lot of data, train a model, and then evaluate the value of the data based on the performance (or improvement in performance) of the model.  

Our results demonstrate the possibility of evaluating the explanatory utility of additional features, even in the regime in which too few data points have been collected to leverage these data points to learn a model.  For example, suppose we wish to build a predictor for whether or not someone will get a certain disease.  We could begin by collecting a modest amount of genetic data (e.g. for a few hundred patients, record the presence of genetic abnormalities for each of the ~20k genes), and a modest amount of epigenetic data.  Even if we have data for too few patients to learn a good predictor, we can at least evaluate how much the model would improve if we were to collect more genetic data, versus collecting more epigenetic data.   

This ability to \emph{explore} the potential of different features with less data than would be required to \emph{exploit} those features seems extremely relevant to the many industry and research settings where it is expensive or difficult to gather data.  

Alternately, these techniques could be leveraged by data providers  in the context of a ``verify then buy'' model:  Suppose I have a large dataset of customer behaviors that I think will be useful for your goal of predicting customer clicks/purchases.  Before you purchase access to my dataset, I could give you a tiny sample of the data---too little to be useful to you, but sufficient for you to verify the utility of the dataset.

%In a similar vein these techniques are also relevant to the question of determining whether multiple datasets correspond to a similar model, given too little data to learn any of the models in question.  For example, suppose we are given three datasets (perhaps genomic data corresponding to three ethnic groups, together with a label such as disease incidence for each individual).  Suppose we do not have enough data to accurately learn a model to predict the disease incidence, but we determine (using, for example, the techniques outlined in this paper) that some of the incidence can be explained by genetic factors.  By estimating whether the amount of variance that can be explained decreases when pairs of the dataset are combined, we could determine whether the three groups all have a similar mapping from genome to disease incidence, versus having rather different models---again, even in the regime in which we have too little data to learn any such model!

\iffalse

exhibit a similar might be fruitfully combined.  Given labeled data

Should we discuss the relevance to secure data analysis/privacy preserving data aggregation? %Even more speculatively, these techniques may also be pertinent to \emph{secure} multi-party data analysis, and privacy-preserving data analysis.  

\fi 

\subsection{Summary of Results}\label{sec:res}
%\subsubsection{Linear Model}
Our first result applies to the setting where the data is drawn according to a $d$ dimensional distribution with identity covariance (or, equivalently, a known covariance matrix), and the labels are noisy linear functions.  This result generalizes the results of Dicker~\cite{dicker2014variance} and Verzelen and Gassiat~\cite{verzelen2018adaptive} beyond the Gaussian setting.  Provided there are more than $O(\sqrt{d})$ datapoints, the magnitude of the noise can be accurately determined:
%Provided we have more than $O(\sqrt{d})$ datapoints, we can accurately determine the magnitude of the noise:

%\begin{proposition}\label{prop:1main}
%Suppose we are given $n$ labeled examples, $(\x_1,y_1),\ldots,(\x_n,y_n)$, with $x_i$ drawn independently from a $d$-dimension distribution of mean zero,  identity covariance, and fourth moments bounded by $C$.  Assuming that each label $y_i = \x_i \beta + \eta$, where the noise $\eta$ is drawn independently from an (unknown) distribution $E$ with mean $0$ variance $\delta^2$, and the labels have been normalized to have unit variance.  There is an estimator $\hat{\delta^2}$, that with probability $1-\tau$, approximates $\delta^2$ with additive error $O(C\frac{\sqrt{d+n}}{\tau n})$.%(\sqrt{C}\|\beta\|^2+\delta^2))$
%\end{proposition}
\begin{proposition}\label{prop:1main} [Slight generalization of Lemma 2 in~\cite{dicker2014variance} and Corollary 2.2 in~\cite{verzelen2018adaptive}]
Suppose we are given $n$ labeled examples, $(\x_1,y_1),\ldots,(\x_n,y_n)$, with $x_i$ drawn independently from a $d$-dimension distribution of mean zero,  identity covariance, and fourth moments bounded by $C$.  Assuming that each label $y_i = \x_i \beta + \eta$, where the noise $\eta$ is drawn independently from an (unknown) distribution $E$ with mean $0$ variance $\delta^2$, and the labels have been normalized to have unit variance.  There is an estimator $\hat{\delta^2}$, that with probability $1-\tau$, approximates $\delta^2$ with additive error $O(C\frac{\sqrt{d+n}}{\tau n})$.%(\sqrt{C}\|\beta\|^2+\delta^2))$
\end{proposition}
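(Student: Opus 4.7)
The plan is to adapt Dicker's moment-based estimator to the non-Gaussian setting under the bounded fourth-moment assumption. Since $\x$ has identity covariance, $\eta$ is independent of $\x$ with mean zero, and $\Var[y]=1$, we have $\|\beta\|^2 + \delta^2 = 1$, so it suffices to estimate $\|\beta\|^2$. First I would define the degree-two U-statistic
\[
\hat\alpha \;:=\; \frac{1}{n(n-1)} \sum_{i\neq j} y_i\,y_j\,\langle \x_i, \x_j\rangle,
\qquad
\hat{\delta^2} \;:=\; 1 - \hat\alpha.
\]
Using independence across samples together with $\EB[\eta_i]=0$ and $\EB[\x_i\x_i^T]=\I$, a short calculation gives $\EB[y_iy_j\langle\x_i,\x_j\rangle] = \beta^T\EB[\x_j\x_j^T]\beta = \|\beta\|^2$ for $i\neq j$, so $\hat\alpha$ is unbiased.

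Next I would bound $\Var[\hat\alpha]$ by a Hoeffding decomposition. Writing $h_{ij} := y_iy_j\langle\x_i,\x_j\rangle$, the first-order projection is $g_i := \EB[h_{ij}\mid\x_i,y_i] - \|\beta\|^2 = y_i\langle\x_i,\beta\rangle - \|\beta\|^2$, with $\Var[g_i] = O(C)$ by the bounded fourth moment hypothesis together with $\|\beta\|^2+\delta^2=1$. The residual $\tilde h_{ij} := h_{ij} - g_i - g_j - \|\beta\|^2$ is fully degenerate (mean zero conditional on either endpoint alone), so cross-covariances vanish and
\[
\Var[\hat\alpha] \;=\; \frac{4\,\Var(g_1)}{n} \;+\; \frac{\EB[\tilde h_{12}^2]}{n(n-1)}.
\]
The degenerate second moment $\EB[\tilde h_{12}^2]\leq\EB[h_{12}^2]$ expands, after conditioning on $(\x_i,\x_j)$ and using $\EB[y^2\mid\x]=(\x^T\beta)^2+\delta^2$, into a bounded linear combination of $\EB[(\x_i^T\x_j)^2]=d$, $\EB[(\x_i^T\beta)^2(\x_i^T\x_j)^2]$, and $\EB[(\x_i^T\beta)^2(\x_j^T\beta)^2(\x_i^T\x_j)^2]$. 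Each of these can be shown to be $O(Cd)$ by expanding the inner products in coordinates, using that $\EB[x_kx_l]=\delta_{kl}$ kills all off-diagonal pairings, and invoking the coordinatewise fourth moment bound for the diagonal terms (together with one Cauchy--Schwarz step). Combining yields $\Var[\hat\alpha] = O(C(n+d)/n^2)$, and Markov's inequality applied to $|\hat\alpha - \|\beta\|^2|$, bounded in mean by $\sqrt{\Var[\hat\alpha]}$, gives the stated $O(C\sqrt{n+d}/(n\tau))$ deviation bound with probability at least $1-\tau$. The identity $|\hat{\delta^2}-\delta^2| = |\hat\alpha - \|\beta\|^2|$ transfers this to $\hat{\delta^2}$.

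The main obstacle is the second moment bound: without Gaussianity, Dicker's explicit Wick-style moment formulas are unavailable and must be replaced by coordinate expansions controlled by the hypothesis $\EB[x_k^4]\le C$. The trickiest term is $\EB[(\x_i^T\beta)^2(\x_j^T\beta)^2(\x_i^T\x_j)^2]$, a degree-six polynomial in the coordinates that mixes $\beta$ with both samples and the inner product. The point is that, under the identity covariance, only pairings in which all indices are matched inside a single sample survive, reducing the answer to $O(1)$-many monomials whose expectations factor over $i$ and $j$ and are controlled by either pure second moments (each $=1$) or isolated fourth moments (each $\le C$), yielding the required $O(Cd)$ bound.
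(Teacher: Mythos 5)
Your estimator $\hat\alpha$ is the same U-statistic the paper uses ($\frac{1}{n(n-1)}\sum_{i\ne j}y_iy_j\langle \x_i,\x_j\rangle$ is just $\frac{y^TGy}{\binom{n}{2}}$ written out), and your Hoeffding decomposition is exactly the paper's case analysis in different clothing: the ``three distinct indices'' case of the paper is your first-order projection $g_i$, and the ``two distinct indices'' case is your degenerate kernel $\tilde h_{12}$. The unbiasedness computation, the two variance contributions, and the Chebyshev/Markov finish are all the same. So this is not a genuinely different route.

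There is, however, a gap in the step that bounds the degenerate second moment. You claim $\E[\tilde h_{12}^2]=O(Cd)$ by expanding in coordinates, ``using that $\E[x_kx_l]=\delta_{kl}$ kills all off-diagonal pairings, and invoking the coordinatewise fourth moment bound.'' That Wick-style cancellation is valid only when a given sample appears quadratically --- for example, it correctly collapses $\E_{\x_j}[(\x_i^T\x_j)^2]=\|\x_i\|^2$ in the middle term. But in the hardest term $\E[(\x_i^T\beta)^2(\x_j^T\beta)^2(\x_i^T\x_j)^2]$, both $\x_i$ and $\x_j$ appear to the fourth power, so $\E[(\x_j)_c(\x_j)_d(\x_j)_e(\x_j)_f]$ does \emph{not} reduce to Kronecker deltas in the absence of Gaussianity or coordinate independence; more fundamentally, the coordinatewise bound $\E[x_k^4]\le C$ alone does not control directional fourth moments such as $\E[(\x^T\beta)^4]$. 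The hypothesis you must actually invoke is the \emph{directional} fourth moment condition $\E[(\x^T\u)^2(\x^T\v)^2]\le C\E[(\x^T\u)^2]\E[(\x^T\v)^2]$, and the clean way to use it is to condition on one sample: $\E_{\x_i}[(\x_i^T\beta)^2(\x_i^T\x_j)^2\mid\x_j]\le C\|\beta\|^2\|\x_j\|^2$, then $\E_{\x_j}[(\x_j^T\beta)^2\|\x_j\|^2]=\sum_c\E[(\x_j^T\beta)^2(\x_j^T\e_c)^2]\le Cd\|\beta\|^2$. This is exactly the paper's maneuver, and it gives $O(C^2d)$, not $O(Cd)$, for this term (and hence $\Var[\hat\alpha]=O(C^2(d+n)/n^2)$). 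The weakened constant does not hurt the claimed error $O(C\sqrt{d+n}/(\tau n))$, but as written your coordinate-expansion-plus-pairing argument does not go through for the sixth-moment term, and your stated $O(C(d+n)/n^2)$ variance is stronger than what the hypothesis delivers. (Also minor: the second-term denominator in your Hoeffding variance identity should be $\binom{n}{2}$, not $n(n-1)$, but this is a benign factor of two.)
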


The fourth moment condition of the above proposition is formally defined as follows: for all vectors $\u,\v \in \R^d$, $\E[(\x^T\u)^2(\x^T\v)^2]\le C\E[(\x^T\u)^2]\E[(\x^T\v)^2]$. In the case that the data distribution is an isotropic Gaussian, this fourth moment bound is satisfied with $C = 3$.

We stress that in the above setting, it is information theoretically impossible to approximate $\beta$, or accurately predict the $y_i$'s without a sample size that is \emph{linear} in the dimension, $d$.  The above result is also optimal, to constant factors, in the constant-error regime.  No algorithm can distinguish the case that the label is pure noise, from the case that the label has a significant signal, using $o(\sqrt{d})$ datapoints (see e.g. Proposition 4.2 in~\cite{verzelen2010goodness}):
\begin{proposition}\label{prop:lrlbid}[Corollary of Proposition 4.2 in~\cite{verzelen2010goodness}]
In the setting of Proposition~\ref{prop:1main}, there is a constant $c$ such that no algorithm can distinguish the case that the signal is pure noise (i.e. $\|\beta\|=0$ and $\delta = 1$) versus almost no noise (i.e. $\delta = 0.01$ and $\beta$ is chosen to be a random vector s.t. $\|\beta\|=\sqrt{0.99}$), using fewer than $c \sqrt{d}$ datapoints with probability of success greater than $2/3$.
\end{proposition}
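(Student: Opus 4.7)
The plan is a standard two-point Ingster--Suslina argument via chi-squared divergence. Take the null $P_0$ to correspond to $\|\beta\|=0$, $\delta=1$, so the $(\x_i, y_i)$ are independent with $\x_i \sim N(0, I_d)$ and $y_i \sim N(0,1)$; this satisfies the fourth-moment condition of Proposition~\ref{prop:1main} with $C=3$. Form the composite alternative $P_\beta$ by drawing $\beta$ uniformly from the sphere of squared radius $\|\beta\|^2 = 0.99$, with Gaussian noise of matching variance, and let $P_1 = \EB_\beta[P_\beta]$ be the induced mixture over $n$ i.i.d.\ samples. Under either hypothesis the mean and covariance of $(\x_i, y_i)$ agree, so no marginal statistic separates them. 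By the Ingster--Suslina identity,
\[
1 + \chi^2(P_1 \| P_0) \;=\; \EB_{\beta, \beta'}\!\Bigl[\bigl(\EB_{P_0}[L_\beta(\x,y)\, L_{\beta'}(\x,y)]\bigr)^n\Bigr],
\]
where $L_\beta$ is the single-sample likelihood ratio and $\beta, \beta'$ are independent prior draws. Le Cam's lemma gives $\mathrm{TV}(P_0, P_1) \le \tfrac{1}{2}\sqrt{\chi^2(P_1\|P_0)}$, so bounding the right-hand side by a small constant obstructs any test of confidence exceeding $2/3$.

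The first task is to evaluate the per-sample cross moment exactly. Stacking $z=(\x^T, y)^T$, the alternative is $N(0, \Sigma_\beta)$ with $\Sigma_\beta = \begin{pmatrix} I_d & \beta \\ \beta^T & 1 \end{pmatrix}$, and a Schur complement gives $\det(\Sigma_\beta) = 1-\|\beta\|^2 = \delta^2$ together with the rank-one representation $\Sigma_\beta^{-1} = \mathrm{diag}(I_d, 0) + \delta^{-2}\, u_\beta u_\beta^T$ where $u_\beta = (\beta^T, -1)^T$. Plugging into the standard Gaussian integral
\[
\EB_{P_0}[L_\beta L_{\beta'}] \;=\; \bigl(\det(\Sigma_\beta)\det(\Sigma_{\beta'})\det(\Sigma_\beta^{-1}+\Sigma_{\beta'}^{-1}-I_{d+1})\bigr)^{-1/2}
\]
and applying the matrix-determinant lemma to the rank-$2$ inner determinant (against the easily inverted $\mathrm{diag}(I_d, -1)$) collapses the algebra to the clean identity $\EB_{P_0}[L_\beta L_{\beta'}] = (1-\langle\beta, \beta'\rangle)^{-1}$.

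The second task is to bound $\EB_{\beta,\beta'}[(1-\langle\beta,\beta'\rangle)^{-n}]$. Writing $\langle \beta, \beta'\rangle = 0.99\, W$ with $W$ the inner product of two independent uniform unit vectors in $\R^d$, rotation invariance kills odd moments and $W^2 \sim \mathrm{Beta}(1/2, (d-1)/2)$ yields $\EB[W^{2m}] \le (2m-1)!!/d^m$. Expanding $(1-0.99\,W)^{-n} = \sum_k \binom{n+k-1}{k}(0.99W)^k$, only the even terms $k=2m$ survive, and combining $\binom{n+2m-1}{2m} \le (n+2m)^{2m}/(2m)!$ with $(2m-1)!!/(2m)! = 1/(2^m m!)$ telescopes the dominant part of the sum into $\sum_m (0.99^2 n^2/(2d))^m/m! = \exp(0.99^2 n^2/(2d))$; the tail past $m \gtrsim n$ is geometrically suppressed by the extra factors $0.99^{2m}/d^m$. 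Hence $\chi^2(P_1\|P_0) \le \exp(n^2/(2d))-1$, which falls below any target constant (say $1/9$) as soon as $n \le c\sqrt{d}$ for a sufficiently small absolute constant $c$.

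The heart of the argument, and the only delicate step, is the per-sample identity $\EB_{P_0}[L_\beta L_{\beta'}] = (1-\langle\beta,\beta'\rangle)^{-1}$; the determinant/inverse manipulation is mechanical but easy to miscompute by a sign or an index. This identity is exactly what forces the $\sqrt{d}$ barrier: the typical inner product of two random vectors of constant norm in $\R^d$ is $\Theta(1/\sqrt{d})$, so the per-sample chi-squared is only $O(1/\sqrt{d})$ and one needs $n=\Omega(\sqrt{d})$ samples to accumulate constant divergence. As the statement notes, this reduction is essentially the content of Proposition~4.2 of~\cite{verzelen2010goodness}, so the only additional verification is that the specific parameters $\delta = 0.01$ and $\|\beta\|^2 = 0.99$ sit inside the admissible range of that result.
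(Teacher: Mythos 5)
Your overall strategy is the same as the paper's: a two-point (Ingster--Suslina) second-moment argument, reducing the chi-squared divergence between the null and the mixture over random $\beta$ to a per-sample cross-correlation, followed by an expectation over the angle between two independent prior draws. Your per-sample identity
$\EB_{P_0}[L_\beta L_{\beta'}] = (1-\langle\beta,\beta'\rangle)^{-1}$
is correct (one can verify the determinant via the rank-two update against $\mathrm{diag}(I_d,-1)$, which gives $|I_2 + \delta^{-2}U^T\mathrm{diag}(I_d,-1)U| = -(1-\langle\beta,\beta'\rangle)^2/\delta^4$), and it is exactly the identity the paper obtains, though the paper derives it by integrating the one-dimensional pairwise correlation $\chi^2_{N(0,1)}(N(by,1-b^2),N(v^Tv'by,1-(v^Tv')^2b^2))$ against the law of $y$. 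Your route through the joint $(d+1)$-dimensional covariance and the matrix-determinant lemma is arguably cleaner.

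Where the proofs genuinely diverge is in bounding $\EB_{\beta,\beta'}[(1-\langle\beta,\beta'\rangle)^{-n}]$. The paper first applies the elementary inequality $\frac{1}{1-x} \le 1 + \frac{x}{1-b^2}$ for $0 \le x \le b^2$ (and bounds the integral by $1/2$ on the hemisphere $\cos\theta \le 0$), which reduces the problem to the \emph{finite} binomial expansion $(1+\tfrac{b^2}{1-b^2}\cos\theta)^n$ with $n+1$ terms, each controllable via Beta-function moments. You instead expand $(1-0.99W)^{-n}$ in the infinite negative-binomial series, and this is where there is a gap. Your claim that the series ``telescopes into $\sum_m (0.99^2 n^2/(2d))^m/m!$'' requires replacing $(n+2m)^{2m}$ by $n^{2m}$, which is off by a multiplicative factor that grows with $m$; at best you obtain something like $\sum_{m \le n/2}(c\,n^2/d)^m/m!$ for a constant $c > 1/2$. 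More seriously, for $m \gtrsim d$ the bound $\EB[W^{2m}] \le (2m-1)!!/d^m$ is vacuous (its right-hand side exceeds $1$, while the left-hand side actually decays like $m^{-(d-1)/2}$), so the asserted ``geometric suppression of the tail by $0.99^{2m}/d^m$'' does not follow from the inequalities you wrote down. The tail is in fact controlled---because the density of $W$ near $\pm 1$ vanishes like $(1-W^2)^{(d-3)/2}$---but this requires a separate argument. The paper's preliminary inequality is precisely the device that avoids this: it trades a worse constant (the factor $\tfrac{b^2}{1-b^2}=99$ appears in the sample-size threshold, forcing $n \le \sqrt{d}\,\tfrac{1-b^2}{2b^2}$) for a finite sum with no tail to control. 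Your argument is repairable---either by truncating the negative-binomial series at $m = O(n)$ and handling the remainder with a crude union bound over the event $|W| > 1/2$, or by simply adopting the paper's inequality---but as written the final bound $\chi^2 \le \exp(n^2/(2d))-1$ is not justified.

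One cosmetic point: you work directly at accuracy threshold $2/3$, whereas the paper's explicit arithmetic only rules out accuracy greater than about $0.77$ and then invokes a boosting/repetition argument to push the constant down to $2/3$; a correct version of your bound could avoid that detour.
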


Our estimation machinery extends beyond the isotropic setting, and we provide an analog of Proposition~\ref{prop:1main} to the setting where the datapoints, $\x_i$ are drawn from a $d$ dimensional distribution with (unknown) non-isotropic covariance.  This setting is considerably more challenging than the isotropic setting, since a significant portion of the signal could be accounted for by directions in which the distribution has extremely small variance. 
%Specifically, if we look at the signal vector $\beta$ in the basis of the eigenvectors of the covariance (the principle axes of the distribution), $\beta$ could have a significant component in directions with little variance.  Even distinguishing whether there are a significant number of directions of small variance, versus the case that those directions actually have zero variance, requires significantly more than $O(\sqrt{d})$ samples.  
Though our results are weaker than in the isotropic setting, we still establish accurate estimation of the unexplained variance in the sublinear regime, though require a sample size $O_{\eps}(d^{1-\sqrt{\eps}})$ to obtain an estimate within error $O(\epsilon)$. In the case where the covariance matrix is well conditioned, the sample size can be reduced to $n=O_{\epsilon}(d^{1-\frac{1}{\log{1/\eps}}})$.  We show that both of these sample complexities are optimal.

Our  results in the non-isotropic setting apply to the following standard model of non-isotropic distributions: the distribution is specified by an arbitrary $d \times d$ real-valued matrix, $S$, and a univariate random variable $Z$ with mean 0, variance 1, and bounded fourth moment.  Each sample $\x \in \R^d$ is then obtained by computing $\x = S \z$ where $\z \in \R^d$ has entries drawn independently according to $Z$.  In this model, the covariance of $\x$ will be $S S^T$.  This model is fairly general (by taking $Z$ to be a standard Gaussian this model can represent any $d$-dimensional Gaussian distribution, and it can also represent any rotated and scaled hypercube, etc), and is widely considered in the statistics literature (see e.g.~\cite{yin1983limit,bai1988limiting}).  While our theoretical results rely on this modeling assumption, our algorithm is not tailored to this specific model, and likely performs well in more general settings.

\begin{theorem}\label{thm:gen}
Suppose we are given $n<d$ labeled examples, $(\x_1,y_1),\ldots,(\x_n,y_n)$, with $\x_i=S\z_i$ where $S$ is an unknown arbitrary $d \times d$ real matrix and each entry of $\z_i$ is drawn independently from a one dimensional distribution with mean zero, variance $1$, and constant fourth moment.  Assuming that each label $y_i = \x_i \beta + \eta$, where the noise $\eta$ is drawn independently from an unknown distribution $E$ with mean 0 and variance $\delta^2$, and the labels have been normalized to have unit variance. There is an algorithm that takes $n$ labeled samples, parameter $k$, ${\sigma_{max}}$,${\sigma_{min}}$ which satisfies $\sigma_{max}I\succeq S^TS\succeq {\sigma_{min}I}$, and with probability $1-\tau$, outputs an estimate $\hat{\delta}^2$ with additive error $|\hat{\delta}^2-\delta^2|\le \min(\frac{2}{k^2},2e^{-(k-1)\sqrt{\frac{{\sigma_{min}}}{{\sigma_{max}}}}})){\sigma_{max}}\|\beta\|^2+\frac{f(k)}{\tau}\sum_{i=2}^k \frac{d^{i/2-1/2}}{n^{i/2}},$ where $f(k)=k^{O(k)}$.
\end{theorem}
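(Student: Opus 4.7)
The plan is to reduce estimation of $\delta^2$ to estimation of $\beta^\top SS^\top\beta$, since the unit-variance normalization of $y$ forces $\beta^\top SS^\top\beta + \delta^2 = 1$. I would attack $\beta^\top SS^\top\beta$ by constructing unbiased estimators for the related quantities $\beta^\top (SS^\top)^j \beta$ for $j=2,\dots,k$, and then linearly combining them through a polynomial approximation to the identity function on the spectrum of $SS^\top$.

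For each $j\ge 2$, the natural unbiased $U$-statistic is
\[
U_j \;=\; \frac{(n-j)!}{n!}\sum_{i_1 \ne i_2 \ne \cdots \ne i_j} y_{i_1} y_{i_j}\, \langle \x_{i_1},\x_{i_2}\rangle \langle \x_{i_2},\x_{i_3}\rangle\cdots \langle \x_{i_{j-1}},\x_{i_j}\rangle.
\]
Independence of the samples together with the mean-zero noise forces every term involving an $\eta_{i_1}$ or $\eta_{i_j}$ to drop out in expectation, and a short calculation gives $\E[U_j] = \beta^\top (SS^\top)^j \beta$. The $j=2$ case recovers precisely the type of estimator already used in the isotropic setting of Proposition~\ref{prop:1main}, but for non-isotropic $S$ it returns $\beta^\top(SS^\top)^2\beta$ rather than the target $\beta^\top SS^\top\beta$, which is exactly why higher powers are required.

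Diagonalizing $SS^\top = \sum_\ell \lambda_\ell v_\ell v_\ell^\top$ and letting $c_\ell = \langle v_\ell,\beta\rangle$, the combination $\sum_{j=2}^k a_j U_j$ has expectation $\sum_\ell c_\ell^2 p(\lambda_\ell)$, where $p(\lambda)=\sum_{j=2}^k a_j \lambda^j$, and we want to match $\sum_\ell c_\ell^2 \lambda_\ell$. The bias is therefore bounded by $\|\beta\|^2 \sup_{\lambda\in[\sigma_{\min},\sigma_{\max}]} |\lambda - p(\lambda)|$. After pulling $\lambda^2$ out of the residual, this becomes the classical problem of approximating $1/\lambda$ on $[\sigma_{\min},\sigma_{\max}]$ by a polynomial of degree $k-2$; standard Chebyshev bounds give uniform error $\min(O(1/k^2),O(e^{-(k-1)\sqrt{\sigma_{\min}/\sigma_{\max}}}))\sigma_{\max}$ and coefficients of size $|a_j|\le k^{O(k)}$. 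This produces the first (bias) term of the claimed error bound.

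The main obstacle is controlling the variance of each $U_j$. Its summands consist of a chain of $j-1$ inner products of independent $d$-dimensional vectors, so each term has standard deviation of order $d^{(j-1)/2}$ (under the product model $\x = S\z$ with bounded fourth moment on the coordinates of $\z$), while the number of terms is about $n^j$. Because $U_j$ is highly degenerate, I would bound $\Var(U_j)$ by expanding each inner product as $\z_i^\top S^\top S \z_{i'}$, writing the variance as a sum over two $j$-tuples of samples, and then partitioning the resulting $\z$-entries by coincidence pattern; this reduces the combinatorial sum to products of traces of powers of $SS^\top$, which are controlled by $\sigma_{\max}$ and $d$. Careful bookkeeping of the matching counts at each level yields $\Var(U_j) \le k^{O(k)} d^{j-1}/n^j$, and it is precisely here that the product structure $\x=S\z$ together with the bounded fourth moment of the $\z$-coordinates is used. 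Finally, Markov's inequality applied to $\sum_j a_j U_j$, combined with the coefficient bound $|a_j|\le k^{O(k)}$, produces the variance term of the error bound, and the estimator $\hat\delta^2 = 1 - \sum_j a_j U_j$ satisfies the claimed inequality.
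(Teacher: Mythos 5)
Your proposal follows essentially the same route as the paper: unbiased $U$-statistic estimators for $\beta^\top\Sigma^j\beta$ for $j=2,\dots,k$ (the paper writes these compactly as $\y^\top G^{j-1}\y/\binom{n}{j}$ with $G$ the strictly upper-triangular part of $XX^\top$), a degree-$k$ polynomial approximation of the identity with no constant or linear term on $[\sigma_{\min}/\sigma_{\max},1]$ to control the bias by $\sigma_{\max}\|\beta\|^2$ times the uniform error, and a graph/partition-based variance analysis under the product model $\x=S\z$ giving $\Var(U_j)\le k^{O(k)}d^{j-1}/n^j$, concluded with a second-moment tail bound. The only small slip is invoking Markov's inequality on $\sum_j a_j U_j$, which can be negative; Chebyshev's inequality (as in the paper) is the right tool there, and the heavy technical content you defer to ``careful bookkeeping'' is indeed where most of the paper's proof effort lies.
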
 

Setting $k=1/\sqrt{\eps}$ in the $\sigma_{min}=0$ case and $k = \log(1/\eps)$ in the $\sigma_{min}>0$ case yields the following corollary:

%, and considering the case when $\|\beta\|,{\sigma_{max}}$ are constants,
\begin{cor}\label{cor:gen}
In the setting of Theorem~\ref{thm:gen}, with constant $\|\beta\|$ and ${\sigma_{max}}$, the noise can be approximated to error $O(\eps)$ with $n=O(poly(1/\eps)d^{1-\sqrt{\eps}})$. With the additional assumption that $\sigma_{min}$ is a constant greater than $0$, the noise can be approximated to error $O(\eps)$ with $n=O(poly(\log(1/\eps))d^{1-\frac{1}{\log{1/\eps}}})$.
\end{cor}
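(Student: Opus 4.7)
The plan is a direct substitution into Theorem~\ref{thm:gen}: plug in the specified $k$, check that both the bias term (the coefficient of $\sigma_{max}\|\beta\|^2$) and the variance term (the sum over $i$) are $O(\eps)$, and read off the resulting sample size. The two regimes differ only in which branch of the $\min$ controls the bias; the variance analysis is identical in both.

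\textbf{Variance term.} The summands $d^{(i-1)/2}/n^{i/2}$ have consecutive ratio $\sqrt{d/n}$, which exceeds $1$ in our sublinear regime $n<d$, so the sum is bounded above by $k$ times its largest element $d^{(k-1)/2}/n^{k/2}$. Requiring $\tfrac{f(k)}{\tau}\cdot k\cdot d^{(k-1)/2}/n^{k/2}\le \eps$ and solving gives
\[
n \;\ge\; \bigl(kf(k)/(\tau\eps)\bigr)^{2/k}\cdot d^{\,1-1/k}.
\]
Since $f(k)=k^{O(k)}$, we have $f(k)^{2/k}=k^{O(1)}$, and a direct computation shows that $k^{2/k}$ and $\eps^{-2/k}$ both evaluate to $\Theta(1)$ at $k=1/\sqrt{\eps}$ and at $k=\log(1/\eps)$. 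Hence the prefactor simplifies to $\text{poly}(1/\eps)$ in the first case and $\text{poly}(\log(1/\eps))$ in the second, treating $\tau$ as a constant.

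\textbf{Bias term.} In the unconditioned case $\sigma_{min}=0$, set $k=1/\sqrt{\eps}$; then the $\min$ reduces to $2/k^2=2\eps$, so the bias is $2\eps\cdot\sigma_{max}\|\beta\|^2=O(\eps)$. Combined with the variance bound this yields $n=O(\text{poly}(1/\eps)\cdot d^{1-\sqrt{\eps}})$, as claimed. In the well-conditioned case, take $k=c'\log(1/\eps)$ for a sufficiently large constant $c'$ depending on $\sigma_{min}/\sigma_{max}$; then the second branch of the $\min$, namely $2\exp(-(k-1)\sqrt{\sigma_{min}/\sigma_{max}})$, is $O(\eps)$, and substituting $k=\log(1/\eps)$ into the variance bound yields $n=O(\text{poly}(\log(1/\eps))\cdot d^{1-1/\log(1/\eps)})$.

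There is no real obstacle: the corollary is a straightforward calibration of Theorem~\ref{thm:gen}. The one calculation worth doing carefully is verifying $f(k)^{2/k}=k^{O(1)}$, and translating this into $\text{poly}(1/\eps)$ at $k=1/\sqrt{\eps}$ versus $\text{poly}(\log(1/\eps))$ at $k=\log(1/\eps)$; after that, the two choices of $k$ are simply the ones that balance the $1/k^2$ (respectively $e^{-\Omega(k)}$) bias against the $d^{-1/k}$ savings in the variance exponent.
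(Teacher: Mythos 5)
Your proposal is correct and takes essentially the same approach as the paper, which derives the corollary by substituting $k=1/\sqrt{\eps}$ (respectively $k=\log(1/\eps)$) into Theorem~\ref{thm:gen} and reading off the sample complexity; you have simply spelled out the arithmetic (geometric sum bound, $f(k)^{2/k}=k^{O(1)}$, $\eps^{-2/k}=\Theta(1)$ at these choices) that the paper leaves implicit. The only small wrinkle—needing $k=c'\log(1/\eps)$ with $c'$ depending on $\sigma_{\min}/\sigma_{\max}$ to make the bias $O(\eps)$, while the stated exponent in the corollary drops the $c'$—is present in the paper itself and your treatment matches its level of precision.
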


%We note that our proof of Theorem~\ref{thm:gen} actually establishes a slightly tighter bound, in which the expression $\frac{\hat{\sigma_1}}{k^2}\|\beta\|^2$ in the error of Theorem~\ref{thm:gen} can by replaced by the quantity $\beta^T \overline{\Sigma} \beta \le \frac{\hat{\sigma_1}}{k^2}\|\beta\|^2$, where $\overline{\Sigma}$ is the covariance matrix, but with all singular values greater than $\hat{\sigma_1}/k^2$ truncated to be $\hat{\sigma_1}/k^2$, where $k$ is the parameter in the statement of Theorem~\ref{thm:gen}.
The estimation accuracy of $\delta^2$ in Corollary~\ref{cor:gen} is optimal up to a constant factor in both singular and non-singular $\Sigma$ cases, even in the setting where $\x$ is drawn from a multivariate Gaussian distribution, as formalized in the following lower bound.
\begin{theorem}\label{thm:lb-all-cond}
The sample complexities of Corollary~\ref{cor:gen} are optimal.  Specifically, given samples $(\x , y)$ with $\x$ drawn from a $d$-dimensional Gaussian with mean 0 and covariance $\Sigma$, and $y = \x \beta+\eta$, for a vector $\beta$ satisfying $\| \beta \| \le 1,$ and independent noise $\eta$ with mean 0 and variance $\delta^2,$ then the following lower bounds apply, in the respective settings where $\Sigma$ is well-conditioned, and where $\Sigma$ is not well-conditioned.  In both settings, we assume that $\E[y^2] = \beta^T \Sigma \beta + \delta^2 = 1$, and the goal is to estimate the unexplained variance, $\delta^2$. 
\begin{itemize} 
    \item If $I \succeq \Sigma\succeq \frac{1}{2}I$, there exist a function $f$ of $1/\eps$ only, such that given $f(1/\eps)d^{1-\frac{1}{\log(1/\eps)}}$ samples $(\x_1,y_1),\ldots,(\x_n,y_n)$, no algorithm can estimate $\delta^2$ with error less than $\eps$ with probability better than $3/5$.
    \item If $\|\Sigma\|\le 1$, there exist a function $f$ of $1/\eps$ only and a constant $c_2$ such that given $f(1/\eps)d^{1-\sqrt{\eps}}$ samples no algorithm can estimate $\delta^2$ with error less than $c_2 \eps$ with probability better than $3/5$.
\end{itemize} 
\end{theorem}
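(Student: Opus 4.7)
The plan is to apply Le Cam's two-point method: in each case I construct an explicit covariance $\Sigma$ (satisfying the case's spectral constraint) together with two prior distributions $\pi_0,\pi_1$ on $\beta$ under which the unexplained variance $\delta^2$ differs by at least $2\eps$, while the induced marginal distributions on $n$ labeled samples have total variation distance less than $1/5$. Concretely, I take Gaussian priors $\beta\sim\NM(0,\Lambda_b)$ with $\tr(\Lambda_b\Sigma)=c_b$ (which equals the mean of $\|\beta\|_\Sigma^2$ under that prior) and set $\delta_b^2=1-c_b$, so that $\EB[y^2]=1$ in both worlds and the target parameter differs by exactly $c_1-c_0\ge 2\eps$.

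The core is a chi-squared bound on the induced marginals. Conditional on the design matrix $X$, each distribution of $y$ is a zero-mean Gaussian with covariance $A_b(X)=X\Lambda_b X^\top+\delta_b^2 I_n$, and the $\chi^2$ divergence between two zero-mean Gaussians has a standard determinantal form. Taylor-expanding $\log\det(A_1 A_0^{-1})$ in powers of $\Delta=\Lambda_1-\Lambda_0$ and of $\delta_1^2-\delta_0^2$, then taking expectation over the Wishart-type distribution of $X^\top X$, produces a series whose $j$-th term is of order $(n/d)^j$ times a symmetric polynomial in the eigenvalues of $\Lambda_b\Sigma$. The moment-matching step is to tune $\Lambda_0,\Lambda_1$ so that these symmetric polynomials agree for $j<k$; the first non-vanishing contribution is then at order $(n/d)^k$, of magnitude at most $f(k)\cdot\tr((\Delta\Sigma)^k)\cdot(n/d)^k$ with $f(k)=k^{O(k)}$. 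Requiring this to be $O(1)$ yields a threshold of the form $n\le g(\eps)\,d^{1-1/k}$ for a function $g$ of $\eps$ alone.

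Calibrating $k$ in terms of $\eps$ recovers the two claimed rates. The relevant extremal problem is: how large can the gap $c_1-c_0=\tr(\Delta\Sigma)$ be, subject to matching $k-1$ higher symmetric polynomial identities on the eigenvalues of $\Lambda_b\Sigma$ in the admissible spectral window? This is dual to a polynomial-approximation problem for the identity function on the eigenvalue support of $\Sigma$. In the well-conditioned case $I\succeq\Sigma\succeq I/2$, the eigenvalue support lies in $[1/2,1]$ and Chebyshev approximation error decays exponentially, so the achievable gap is $\Omega(2^{-k})$; setting $k=\Theta(\log(1/\eps))$ produces gap $\Omega(\eps)$ and the threshold $n\ge f(1/\eps)\,d^{1-1/\log(1/\eps)}$. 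In the general case $\|\Sigma\|\le 1$ (the support can reach down to $0$), polynomial approximation on $[0,1]$ under the same constraints has error only $\Theta(1/k^2)$ (a Chebyshev/Markov-type bound near the zero endpoint), so the gap is only $\Omega(1/k^2)$, and setting $k=\Theta(1/\sqrt\eps)$ yields $n\ge f(1/\eps)\,d^{1-\sqrt\eps}$.

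The main obstacle will be the combinatorial book-keeping for the Gaussian-Wishart trace expansion: after moment matching cancels the lower-order terms, one must show that the leading residual term is bounded by $k^{O(k)}\tr((\Delta\Sigma)^k)(n/d)^k$, which requires enumerating contributions from non-crossing partitions in the $k$-fold Wishart trace and controlling the $k^{O(k)}$ prefactor. A secondary obstacle is that Gaussian priors only give approximate concentration of $\|\beta\|_\Sigma^2$ around $c_b$, with fluctuations of order $\|\Lambda_b\Sigma\|_F/\tr(\Lambda_b\Sigma)$; for the $\Lambda_b$ chosen above this fluctuation is dominated by $\eps$, but if needed one can truncate the prior to enforce exact equality, at a small additional TV cost absorbed into $f(1/\eps)$.
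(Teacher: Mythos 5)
Your proposal shares the key structural ideas with the paper---moment matching, the duality with approximating $f(x)=x$ by higher monomials on the eigenvalue range of $\Sigma$, and calibrating the number of matched moments against $\eps$ to get the two rates. The final polynomial-approximation estimates ($1/k^2$ on $[0,1]$, $e^{-\Omega(k)}$ on $[1/2,1]$) and the calibration of $k$ as $1/\sqrt{\eps}$ or $\log(1/\eps)$ match the paper exactly.

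However, there is a genuine gap at the center of the argument: you assert that after moment matching the chi-squared/Wishart expansion leaves a leading residual of order $k^{O(k)}\,\tr((\Delta\Sigma)^k)(n/d)^k$, and flag proving this as "the main obstacle," but you do not establish it, and this is precisely where the paper does not work from scratch. The paper uses a specifically engineered \emph{joint} prior on $(\Sigma,\beta)$---it draws $r$ vectors $\v_i$ from a determinant-tilted Gaussian and sets $\Sigma=(I+\sum_i\alpha_i\v_i\v_i^T)^{-1}$, $\beta=\sum_i\gamma_i\v_i$, so that $\beta$ is supported exactly on the low-rank perturbation of $\Sigma$---and then invokes a closed-form TV bound from Verzelen--Gassiat (restated as Corollary~\ref{cor:lr-lb-gen}) together with a Curto--Fialkow extension to atomic measures. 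Your construction fixes $\Sigma$ and randomizes only $\beta\sim N(0,\Lambda_b)$; conditionally on $X$ the label covariance is $X\Lambda_bX^T+\delta_b^2 I$, but the determinantal expansion of its chi-squared with the alternative surfaces mixed traces of the form $\tr\bigl(M_0^{a_1}\Delta M_0^{a_2}\Delta\cdots\bigr)$ with $M_b=\Sigma^{1/2}\Lambda_b\Sigma^{1/2}$, not the clean single-parameter family $\tr(\Lambda_b\Sigma^j)$ that you match (these coincide only in special configurations); and your characterization of the well-conditioned case, that "the eigenvalue support lies in $[1/2,1]$" for $\Lambda_b\Sigma$, is not right as stated---only the eigenvalues of $\Sigma$ lie there, and what is supported there is the $\Lambda_b$-weighted spectral measure of $\Sigma$. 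These issues are repairable, but closing them amounts to re-deriving the Verzelen--Gassiat TV bound for a different, uncoupled prior, and is considerably more than bookkeeping; the paper's coupled low-rank construction was chosen exactly to avoid this.
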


Finally, we establish the following lower bound, demonstrating that, without any assumptions on $\|\Sigma\|$ or $\|\beta\|,$ no sublinear sample estimation is possible. 
\begin{theorem}\label{thm:genLB}
Without any assumptions on the covariance of the data distribution, or bound on $\|\Sigma\|\cdot\|\beta\|$, it is impossible to distinguish the case that the labels are linear functions of the data (zero noise) from the case that the labels are pure noise with probability better than $2/3$ using $c\cdot d$ samples, for some constant $c$.
\end{theorem}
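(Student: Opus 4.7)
The plan is to construct a pair of distributions on $(\x, y)$ that cannot be distinguished using $n \le cd$ samples, exploiting the freedom to place $\beta$ along a near-null direction of $\Sigma$. Fix a small parameter $\mu > 0$ (to be chosen, e.g., $\mu = 1/d$), draw a hidden direction $v \sim \mathrm{Unif}(S^{d-1})$, and set $\Sigma_v = I - (1-\mu)vv^T$ and $\beta_v = v/\sqrt{\mu}$. Then $\beta_v^T \Sigma_v \beta_v = 1$, while $\|\Sigma_v\| = 1$ and $\|\beta_v\| = 1/\sqrt{\mu}$, so the product $\|\Sigma_v\|\cdot\|\beta_v\|$ is unbounded as $\mu \to 0$. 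Under $H_1$ (zero noise), $\x_i \sim N(0, \Sigma_v)$ and $y_i = \langle \x_i, \beta_v\rangle$; under $H_0$ (pure noise), $\x_i$ has the same marginal and $y_i \sim N(0,1)$ is independent. Since both hypotheses induce the same marginals on $X$ and on $y$ separately, any discriminating test must detect the coupling $y_i = \langle \x_i, v\rangle/\sqrt{\mu}$ along the hidden direction $v$.

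Before the formal argument, I would check that the natural test statistics fail for $\mu$ sufficiently small. The cross-covariance $T = \tfrac{1}{n}\sum_i y_i \x_i$ has conditional mean $\Sigma_v\beta_v = \sqrt{\mu}\, v$ under $H_1$ and zero under $H_0$, with typical fluctuation $\sqrt{\tr(\Sigma_v)/n} \sim \sqrt{d/n}$, so detection requires $\mu \gtrsim d/n$. The quadratic form $y^T(XX^T)^{-1} y$ equals $\|P_{\mathrm{row}(X)} \beta_v\|^2 = (1/\mu)\|P_{\mathrm{row}(X)} v\|^2$ under $H_1$, and a Sherman--Morrison computation in the basis adapted to $v$ yields $\|P_{\mathrm{row}(X)} v\|^2 \approx \mu n/d$, leaving the statistic at $\Theta(n/d)$ in both hypotheses. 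Higher-order moments such as $\E[y^2 \x \x^T]$ differ only by $O(\mu)$ via a Wick expansion. Choosing $\mu = 1/d$ (or smaller) pushes every such threshold well beyond $cd$, so no elementary statistic discriminates.

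To rule out all tests, the plan is to apply a Bayesian second-moment method. Let $P_0 = \E_v p_0^v$ and $P_1 = \E_v p_1^v$ denote the marginals of the planted model over the prior on $v$; then $\chi^2(P_1, P_0) + 1 = \E_{v, v'}\bigl[\int p_1^v(X,y) p_1^{v'}(X,y)/p_0(X,y) \, dX\, dy\bigr]$, where the integrand factorizes into a product of $n$ single-sample overlap kernels $K(v, v')$. A direct Gaussian calculation expresses $K$ as an analytic function of the inner product $\alpha = \langle v, v'\rangle$ and of $\mu$; since $\alpha$ concentrates at $O(1/\sqrt{d})$ for independent uniform $v, v'$, the expected $n$-th power remains $O(1)$ provided $n\mu/d \lesssim 1$. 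For $\mu = 1/d$ and $n = cd$ with $c$ sufficiently small, this gives $\chi^2 = O(1)$ and hence $\mathrm{TV}(P_0, P_1)$ is bounded away from $1$, preventing any test from achieving success probability greater than $2/3$.

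The main obstacle will be the degeneracy of $H_1$: since $y \mid X, v$ is a point mass, the kernel $K$ diverges in the zero-noise limit, and the chi-squared calculation requires care. I would handle this either by regularizing $H_1$ with vanishing noise $\eta \sim N(0, \sigma^2)$ and arguing that the bound carries to $\sigma = 0$ by monotonicity, or by computing $p_1(y \mid X)$ directly via the coarea formula as an integral of the uniform prior over the $(d-n-1)$-dimensional sphere $\{v \in S^{d-1}: Xv = \sqrt{\mu}\, y\}$, restricted to the region $\{y : \mu\|X^+y\|^2 \le 1\}$. The second route gives an explicit density for $P_1$ and reduces the $\mathrm{TV}$ bound to a calculation about the intersection of two spheres, avoiding the auxiliary noise entirely but requiring more care when $\mu\|X^+y\|^2$ is close to $1$.
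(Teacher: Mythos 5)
Your construction (a planted near-null direction $v$, $\Sigma_v = I - (1-\mu)vv^T$, $\beta_v = v/\sqrt{\mu}$, with $\mu\sim 1/d$) captures the right intuition, but it takes a genuinely different and substantially heavier route than the paper, and the route as sketched has a real gap. The paper's proof is three sentences: it cites the known fact (Proposition 7.1 of Diakonikolas--Kane--Stewart) that with $n=cd$ samples one cannot distinguish $N(0,I_d)$ from a uniformly randomly rotated rank-$(d-1)$ Gaussian with $d-1$ unit singular values, and then observes that under the rank-deficient alternative the first coordinate is almost surely a \emph{deterministic} linear function of the remaining $d-1$ coordinates, while under $N(0,I_d)$ it is independent of them. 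Taking the first coordinate to be $y$ and the rest to be $\x$, this is exactly the distinguishing task, and no $\chi^2$ computation is needed because the hard work is outsourced to the cited result. Your construction is morally the $\mu\to 0$ limit of this, but you are re-deriving the hardness from scratch.

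The gap is in the core step: the second-moment identity you invoke,
\[
\chi^2(P_1,P_0) + 1 \;=\; \EB_{v,v'}\!\left[\int \frac{p_1^v(X,y)\,p_1^{v'}(X,y)}{p_0(X,y)}\,dX\,dy\right],
\]
requires each component $p_1^v$ to be absolutely continuous with respect to $p_0$. In the zero-noise model it is not: for fixed $v$, the conditional law of $y$ given $X$ is a point mass on $X\beta_v$, so the product $p_1^v p_1^{v'}$ is a product of mutually singular delta measures and the displayed integral is not defined. (The \emph{mixture} $P_1 = \EB_v[p_1^v]$ does have a density in $y$ for generic $X$, since $v\mapsto Xv/\sqrt{\mu}$ has full rank $n$, so $\chi^2(P_1,P_0)$ itself is finite, but the pairwise-overlap decomposition is not available.) You flag this, and propose two repairs. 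The monotonicity argument does not go through as stated: showing $\mathrm{TV}(P_1^{\sigma},P_0)\le 1/3$ for every $\sigma>0$ does not by itself give $\mathrm{TV}(P_1^0,P_0)\le 1/3$, because $\mathrm{TV}$ is only lower semicontinuous under weak convergence and there is no clean data-processing step matching the two sides (adding $N(0,\sigma^2)$ noise to $y$ changes $P_0$ from $N(0,1)$ to $N(0,1+\sigma^2)$). The coarea route is the right one --- compute the density of $P_1$ directly by integrating the uniform prior on $S^{d-1}$ over the fiber $\{v : Xv=\sqrt{\mu}\,y\}$ and then bound $\chi^2(P_1,P_0)$ or $\mathrm{TV}$ --- but this is exactly the nontrivial calculation that the paper avoids by citation, and you would need to carry it out, including the boundary region where $\mu\|X^+y\|^2$ approaches $1$, before this proposal constitutes a proof.
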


%Proofs are provided in a self-contained fashion in the supplementary material.

%Beyond these theoretical results, we also experimentally evaluate our approaches on synthetic data in both the isotropic and non-isotropic covariance settings, and on a natural language processing (NLP) dataset.  NLP is a natural fit for our techniques, as the dimensionality of natural featurizations tends to be quite high.  Beyond the features given by considering word embeddings, we also investigate quadratic embeddings of such features, which corresponding to embedding pairs of words.  The practical performance is quite promising, with the results becoming more impressive as the dimensionality of the data increases---as would be expected given the ``sublinear'' asymptotic nature of our theoretical results.  We describe these results in Section~\ref{sec:empirical}.

\subsubsection{Estimating Unexplained Variance in the Agnostic Setting}

Our algorithms and techniques do not rely on the assumption that the labels consist of a linear function plus independent noise, and our results partially extend to the agnostic setting.  Formally, assuming that the label, $y$, can have \emph{any} joint distribution with $x$, we show that our algorithms will accurately estimate the fraction of the variance in $y$ that can be explained via (the best) linear function of $x$, namely the quantity $\min_{\beta}\E[({\beta}^T\x-y)^2]$. The analog of Proposition~\ref{prop:1main} in the agnostic setting is the following:
\begin{theorem}\label{thm:agnostic}
Suppose we are given $n$ labeled examples, $(\x_1,y_1),\ldots,(\x_n,y_n)$, with $(\x_i,y_i)$ drawn independently from a $d+1$-dimensional distribution where $\x_i$ has mean zero and identity covariance, and $y_i$ has mean zero and variance $1$, and the fourth moments of the joint distribution $(x,y)$ is bounded by $C$. There is an estimator $\hat{\delta^2}$, that with probability $1-\tau$, approximates $\min_{\beta}\E[({\beta}^T\x-y)^2]$ with additive error $O(C\frac{\sqrt{d+n}}{\tau n})$.%(\sqrt{C}\|\beta\|^2+\delta^2))$
\end{theorem}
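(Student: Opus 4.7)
My plan is to reuse the U-statistic estimator behind Proposition~\ref{prop:1main} and reinterpret what it estimates under the weaker joint-moment assumption.  Since $\x$ has identity covariance and both $\x,y$ are centered, for any $\beta\in\R^d$,
\[
\E[(\beta^T\x - y)^2] \;=\; \|\beta\|^2 - 2\beta^T\mu + \E[y^2], \qquad \mu \;:=\; \E[y\x],
\]
so the minimizer is $\beta^\star = \mu$ and the agnostic residual variance is $\E[y^2] - \|\mu\|^2 = 1 - \|\mu\|^2$.  Thus estimating the target reduces to estimating $\|\mu\|^2$, exactly as in the well-specified case --- the only thing that changes is the interpretation of $\mu$ (from the ``true regression coefficient vector'' to the ``agnostic best-linear-predictor coefficient vector'').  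In particular $\|\mu\|^2 \le \E[y^2] = 1$, since the residual variance is nonnegative.

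The natural estimator is the U-statistic
\[
T \;=\; \frac{1}{n(n-1)}\sum_{i\neq j} y_i y_j \,\x_i^T\x_j, \qquad \hat\delta^2 \;=\; 1 - T.
\]
By independence across samples, $\E[T] = \mu^T\mu = \|\mu\|^2$, so $\hat\delta^2$ is unbiased.  The bulk of the work will be showing $\Var[T] = O(C^2(d+n)/n^2)$; once this is in hand, Markov's inequality applied to $|T - \E T|$, together with $\E|T-\E T| \le \sqrt{\Var[T]}$, yields the stated $O(C\sqrt{d+n}/(\tau n))$ bound with probability $1-\tau$.

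To bound $\Var[T]$ I would write $h_{ij} = y_iy_j\,\x_i^T\x_j$ and split the double sum by index overlap.  The terms with $\{i,j\}=\{k,l\}$ contribute $\Var[h_{ij}]$ with multiplicity $2n(n-1)$, and the one-shared-index terms contribute $\cov(h_{12},h_{13})$ with multiplicity $4n(n-1)(n-2)$.  For the diagonal block, conditioning on $(\x_i,y_i)$ and using independence gives $\E[h_{ij}^2] = \E[y_i^2\,\x_i^T M\,\x_i]$ with $M = \E[y^2\,\x\x^T]$; diagonalising $M = \sum_k \lambda_k v_k v_k^T$ and applying the joint fourth-moment bound to each pair of directions (one selecting $y$, the other selecting $\x^T v_k$) gives $\E[y^2(\x^Tv_k)^2]\le C$, so $\E[h_{ij}^2] \le C\,\tr(M) = C\,\E[y^2\|\x\|^2] \le C^2 d$.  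For the one-overlap block, integrating out $(\x_j,y_j)$ and $(\x_k,y_k)$ collapses the expression to $\E[y_1^2(\x_1^T\mu)^2] - \|\mu\|^4 \le C\|\mu\|^2 \le C$, again using the joint fourth-moment bound plus $\|\mu\|^2 \le 1$.  Plugging the multiplicities in yields $\Var[T] \le 2C^2 d/(n(n-1)) + 4C(n-2)/(n(n-1)) = O(C^2(d+n)/n^2)$.

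The main obstacle --- and the only real departure from the well-specified proof of Proposition~\ref{prop:1main} --- is that the noise $y - \mu^T\x$ need no longer be independent of $\x$, so none of the cross-moment manipulations can invoke an independent-noise decomposition.  Every quadratic-in-$y$, quadratic-in-$\x$ expectation that appears (namely $\E[y^2(\x^T v)^2]$, $\tr(\E[y^2\x\x^T])$, and $\E[y^2(\x^T\mu)^2]$) must instead be bounded by a single application of the $\R^{d+1}$ fourth-moment hypothesis on $(\x,y)$ to a judicious choice of directions.  Once this translation is made explicit, the rest is the same U-statistic second-moment calculation that underlies the isotropic case, and the stated error bound follows.
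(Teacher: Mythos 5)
Your proposal matches the paper's argument: the paper also reduces to the U-statistic $\frac{\y^T G\y}{\binom{n}{2}}$ (which equals your $T$), identifies $\beta=\E[y\x]$ as the agnostic minimizer with $\|\beta\|^2\le 1$, bounds the variance via the same none/one/both index-overlap cases (arriving at $O(C^2 d)$ and $O(C\|\beta\|^2)$ per term), and concludes with a concentration step. Your diagonalization of $M=\E[y^2\x\x^T]$ is just a different way of applying the fourth-moment bound twice in the fully-overlapping case, and your use of $L^1$-Markov in place of Chebyshev's inequality gives the same $O(1/\tau)$ dependence, so the two proofs are essentially identical.
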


The fourth moment condition of the above theorem is analogous to that of Proposition~\ref{prop:1main}: namely, the fourth moments of the joint distribution are bounded by a constant $C$ if, for all vectors $\u,\v \in \R^d$, $\E[(\x^T\u)^2(\x^T\v)^2]\le C\E[(\x^T\u)^2]\E[(\x^T\v)^2]$ and $\E[(\x^T\u)^2y^2]\le C\E[(\x^T\u)^2]\E[y^2]$. As in Proposition~\ref{prop:1main}, in the case that the data distribution is an isotropic Gaussian, and the label is a linear function of the data plus independent noise, this fourth moment bound is satisfied with $C = 3$. 

If the covariance of $\x$ is close to isotropic, i.e. $(1-\epsilon)I \preceq \E[\x\x^T]\preceq (1+\epsilon)I$, the algorithm still applies, and performs analogously to the isotropic case described by Theorem~\ref{thm:agnostic}, except with an additional $O(\epsilon)$ error term, as outlined in Corollary~\ref{cor:approxid}, below.   Hence, if one has a sufficient amount of \emph{unlabeled} data to generate an estimate $\hat{\Sigma}$ for the covariance of $\x$, which has spectral error $\epsilon$, namely $(1-\eps)\Sigma \preceq \hat{\Sigma}\preceq (1+\eps)\Sigma,$ then by scaling $\x$ by $\hat{\Sigma}^{-1/2}$ the resulting distribution will be at most $\epsilon$ far from isotropic and our result on estimating learnability will apply.   If $\x$ is drawn from a sub-gaussian distribution, then $O(d/\eps^2)$ unlabeled examples suffice for the empirical covariance $\hat{\Sigma}$ to be an $\epsilon$ accurate spectral approximation of $\Sigma$.  If $\x$ satisfies the weaker condition of bounded fourth moments, then $O(d\log d/\eps^2)$ unlabeled examples suffice (see Corollary 5.50, Corollary 5.52 of~\cite{vershynin2010introduction}).  

\begin{cor}\label{cor:approxid}
Suppose we are given $n$ labeled examples, $(\x_1,y_1),\ldots,(\x_n,y_n)$, with $(\x_i,y_i)$ drawn independently from a $d+1$-dimensional distribution where $\x_i$ has mean zero and covariance $\Sigma$ which satisfies $(1-\epsilon)I \preceq \Sigma\preceq (1+\epsilon)I$, and $y_i$ has mean zero and variance $1$, and the fourth moments of the joint distribution $(x,y)$ is bounded by $C$. There is an estimator $\hat{\delta^2}$, that with probability $1-\tau$, approximates $\min_{\beta}\E[({\beta}^T\x-y)^2]$ with additive error $O(C\frac{\sqrt{d+n}}{\tau n}+\epsilon)$.
\end{cor}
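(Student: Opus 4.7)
\textbf{Proof proposal for Corollary~\ref{cor:approxid}.} My plan is to apply the same estimator as in Theorem~\ref{thm:agnostic} directly to the data and to show that the only cost of the approximate (rather than exact) isotropy is an additive $O(\eps)$ bias in the target quantity; the concentration analysis itself carries over essentially unchanged.

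First I would verify that the estimator from Theorem~\ref{thm:agnostic} has the form of a $U$-statistic whose expectation is $\E[y^2] - \|\E[y\x]\|^2$ regardless of the covariance $\Sigma$. A natural choice is $\hat{\delta}^2 = \frac{1}{n}\sum_i y_i^2 - \frac{1}{n(n-1)}\sum_{i\neq j} y_i y_j \x_i^T \x_j$: the first term is unbiased for $\E[y^2]$, while the off-diagonal second term is unbiased for $(\E[y\x])^T \E[y\x] = \|\E[y\x]\|^2$. Neither piece of the expectation calculation needs $\Sigma = I$; both only use independence across samples. Thus the expected value of $\hat{\delta}^2$ does not change when passing from the isotropic to the approximately-isotropic case.

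Second, the concentration argument in Theorem~\ref{thm:agnostic} invokes only the fourth-moment hypotheses $\E[(\x^T\u)^2 (\x^T\v)^2] \le C\, \E[(\x^T\u)^2]\E[(\x^T\v)^2]$ and $\E[(\x^T\u)^2 y^2] \le C\, \E[(\x^T\u)^2]\E[y^2]$, both of which are assumed in the corollary. Therefore the same variance bound yields $\bigl|\hat{\delta}^2 - (\E[y^2] - \|\E[y\x]\|^2)\bigr| = O\!\left(C \sqrt{d+n}/(\tau n)\right)$ with probability $1-\tau$. Finally, the gap between the estimator's mean and the target $\min_\beta \E[(\beta^T \x - y)^2] = \E[y^2] - \E[y\x]^T \Sigma^{-1} \E[y\x]$ is
$$\bigl| \E[y\x]^T (\Sigma^{-1} - I) \E[y\x] \bigr| \;\le\; \|\Sigma^{-1} - I\|_{\text{op}} \cdot \|\E[y\x]\|^2.$$
The hypothesis $(1-\eps)I \preceq \Sigma \preceq (1+\eps)I$ gives $\|\Sigma^{-1}-I\|_{\text{op}} \le \eps/(1-\eps) = O(\eps)$; and Cauchy--Schwarz gives $\|\E[y\x]\|^2 = \max_{\|\u\|=1} (\E[y\, \u^T\x])^2 \le \E[y^2]\, \max_{\|\u\|=1} \E[(\u^T\x)^2] \le (1+\eps)$, which is $O(1)$. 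Hence this bias is $O(\eps)$. Combining with the triangle inequality yields the claimed bound $O(C\sqrt{d+n}/(\tau n) + \eps)$.

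\textbf{Main obstacle.} The only real subtlety is to confirm that the concentration step from Theorem~\ref{thm:agnostic} does not implicitly use $\Sigma = I$ anywhere beyond the stated fourth-moment hypotheses, for example when simplifying second-moment expressions of the $U$-statistic. If any such reliance appears, the bound degrades only by a multiplicative $1+O(\eps)$ factor in those specific terms, which can be absorbed into constants. This is a bookkeeping verification rather than a conceptual obstacle; the conceptual content of the corollary is entirely the $O(\eps)$ target-shift bound proved above.
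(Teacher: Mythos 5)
Your proposal is correct and follows essentially the same route as the paper: the paper also uses the estimator $\frac{\y^T\y}{n}-\frac{\y^TG\y}{\binom n2}$, observes that its expectation (namely $\E[y^2]-\beta^T\Sigma^2\beta=\E[y^2]-\|\E[y\x]\|^2$) differs from the target $\E[y^2]-\beta^T\Sigma\beta=\E[y^2]-\E[y\x]^T\Sigma^{-1}\E[y\x]$ by an $O(\eps)$ spectral-perturbation term, and then reuses the variance bound of Proposition~\ref{prop:bs2b}. Your parameterization via $\E[y\x]$ and $\Sigma^{-1}-I$ is just an algebraic restatement of the paper's via $\beta$ and $\Sigma-I$, and your flagged concern about implicit isotropy in the variance proof is real but, as you say, only costs $1+O(\eps)$ multiplicative factors.
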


In the setting where the distribution of $x$ is non-isotropic (and the covariance is unknown), the algorithm to which Theorem~\ref{thm:gen} applies still extends to this agnostic setting.  While the estimate of the unexplained variance is still accurate in expectation, some additional assumptions on the (joint) distribution of $(x,y)$ would be required to bound the variance of the estimator in the agnostic and non-isotropic setting.  Such conditions are likely to be satisfied in many practical settings, though a fully general agnostic and non-isotropic analog of Theorem~\ref{thm:gen} likely does not hold.

\subsubsection{The Binary Classification Setting}

Our approaches and techniques for the linear regression setting also can be applied to the important setting of binary classification---namely estimating the performance of the best linear classifier, in the regime in which there is insufficient data to learn any accurate classifier.  As an initial step along these lines, we obtain strong results in a restricted model of Gaussian data with labels corresponding to the latent variable interpretation of logistic regression.  Specifically, we consider labeled data pairs $(\x,y)$ where $\x \in \R^d$ is drawn from a Gaussian distribution, with arbitrary unknown covariance, and $y \in \{-1,1\}$ is a label that takes value $1$ with probability $g(\beta^T\x_i)$ and $-1$ with probability $1-g(\beta^T\x_i)$ where $g(x)=\frac{1}{1+e^{-x}}$ is the sigmoid function, and $\beta \in \R^d$ is the unknown model parameter.

\begin{theorem}\label{thm:bin-gen}
Suppose we are given $n<d$ labeled examples, $(\x_1,y_1),\ldots,(\x_n,y_n)$, with $\x_i$ drawn independently from a Gaussian distribution with mean $0$ and covariance $\Sigma$ where $\Sigma$ is an unknown arbitrary $d$ by $d$ real matrix.  Assuming that each label $y_i$ takes value $1$ with probability $g(\beta^T\x_i)$ and $-1$ with probability $1-g(\beta^T\x_i)$, where $g(x)=\frac{1}{1+e^{-x}}$ is the sigmoid function. There is an algorithm that takes $n$ labeled samples, parameter $k$, ${\sigma_{max}}$ and ${\sigma_{min}}$ which satisfies $\sigma_{max}I\succeq S^TS\succeq {\sigma_{min}I}$, and with probability $1-\tau$, outputs an estimate $\widehat{err_{opt}}$ with additive error $|\widehat{err_{opt}}-err_{opt}|\le c\Big(\sqrt{\min(\frac{1}{k^2},e^{-(k-1)\sqrt{\frac{\sigma_{min}}{\sigma_{max}}}})\sigma_{max}\|\beta\|^2+\frac{f(k)}{\tau}\sum_{i=2}^k \frac{d^{i/2-1/2}}{n^{i/2}}}\Big),$ where $err_{opt}$ is the classification error of the best linear classifier, $f(k)=k^{O(k)}$ and $c$ is an absolute constant.
\end{theorem}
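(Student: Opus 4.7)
The plan is a reduction to the L2-regression machinery of Theorem~\ref{thm:gen}, followed by a deterministic conversion step. The key structural observation is that under the Gaussian--logistic model, both the best L2 linear predictor of $y\in\{-1,+1\}$ from $\x$ and the Bayes-optimal (which here coincides with the best linear) classifier depend on $(\beta,\Sigma)$ only through the scalar signal strength $s := \beta^T\Sigma\beta$. Applying Stein's lemma to $\EB[\x y] = \EB[\x(2g(\beta^T\x)-1)] = 2\,\EB[g'(\beta^T\x)]\,\Sigma\beta$ gives $\beta_{\mathrm{lin}} = 2\,\EB[g'(Z_s)]\,\beta$, a scalar multiple of $\beta$, so the minimum L2 loss is the explicit function $\delta^2_{\mathrm{opt}}(s) = 1 - 4\,\EB[g'(Z_s)]^2\,s$ with $Z_s\sim\NM(0,s)$. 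Likewise, since the Bayes rule $\mathrm{sign}(\beta^T\x)$ is itself linear, $\mathrm{err}_{\mathrm{opt}}(s) = \EB[\min(g(Z_s),1-g(Z_s))]$ is also a function of $s$ alone.

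With these identifications, the algorithm has three steps. First, feed the data $(\x_i,y_i)$ into the estimator of Theorem~\ref{thm:gen}, treating the $\pm 1$ labels as real-valued targets; the moment hypotheses hold trivially since $|y_i|\le 1$ and $\x_i$ is Gaussian, and inspecting that proof shows it extends to the present agnostic setting, producing $\hat\delta^2$ with $|\hat\delta^2 - \delta^2_{\mathrm{opt}}|\le B$, where $B$ is the additive-error expression appearing in Theorem~\ref{thm:gen}. Second, invert the monotone map $s\mapsto\delta^2_{\mathrm{opt}}(s)$ on the interval $[0,\sigma_{\max}\|\beta\|^2]$ to recover $\hat s$. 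Third, output $\widehat{\mathrm{err}_{\mathrm{opt}}} := \mathrm{err}_{\mathrm{opt}}(\hat s)$.

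The deterministic error analysis amounts to two derivative bounds. Differentiating under the expectation gives $\mathrm{err}_{\mathrm{opt}}{}'(s) = -\tfrac{1}{2\sqrt s}\,\EB_{u\sim\NM(0,1)}\!\bigl[g'(\sqrt s\,|u|)\,|u|\bigr]$, whose $1/\sqrt s$ singularity at $s=0$ reflects the fact that near zero signal, classification error scales like $\sqrt s$ while L2 loss scales like $s$. Integrating this derivative yields the H\"older-$\tfrac12$ estimate $|\mathrm{err}_{\mathrm{opt}}(s_1) - \mathrm{err}_{\mathrm{opt}}(s_2)|\le c_1|\sqrt{s_1}-\sqrt{s_2}| \le c_1\sqrt{|s_1-s_2|}$. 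A complementary short calculation shows that $|\delta^2_{\mathrm{opt}}{}'(s)|$ is bounded below by a positive constant on the range of interest, so that $|\hat s - s|\lesssim B$. Chaining these gives $|\widehat{\mathrm{err}_{\mathrm{opt}}} - \mathrm{err}_{\mathrm{opt}}|\le c\sqrt{B}$, matching the stated bound (the factor of $2$ in the minimum and other absolute constants are absorbed into $c$).

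The main obstacle is making the lower bound on $|\delta^2_{\mathrm{opt}}{}'(s)|$ uniform across the full range of $s$, since this derivative tends to zero as $s\to\infty$ (where $\mathrm{err}_{\mathrm{opt}}$ also flattens). One resolves this either by exploiting the \emph{a priori} upper bound $s\le\sigma_{\max}\|\beta\|^2$ and showing the derivative is bounded below on $[0,\sigma_{\max}\|\beta\|^2]$, or, more cleanly, by analyzing $\mathrm{err}_{\mathrm{opt}}$ directly as a function of $\delta^2_{\mathrm{opt}}$ and establishing $|\mathrm{err}_{\mathrm{opt}}(s_1)-\mathrm{err}_{\mathrm{opt}}(s_2)|\le c\sqrt{|\delta^2_{\mathrm{opt}}(s_1)-\delta^2_{\mathrm{opt}}(s_2)|}$ uniformly; the latter compares the two integrals $\int \mathrm{err}_{\mathrm{opt}}{}'$ and $\int \delta^2_{\mathrm{opt}}{}'$ after an appropriate $\sqrt{s}$ change of variable. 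Everything else --- the Stein's-lemma identification of $\beta_{\mathrm{lin}}$, monotonicity of $\delta^2_{\mathrm{opt}}$ and $\mathrm{err}_{\mathrm{opt}}$ in $s$, and applicability of Theorem~\ref{thm:gen} to bounded labels --- is mechanical.
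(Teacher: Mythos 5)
The reduction you describe is natural, and the algebraic identifications are correct: by Stein's lemma (or, as the paper does it, by conditioning on $\beta^T\x$), the best L2 linear predictor is a scalar multiple of $\beta$, the quantity $\beta_{\mathrm{lin}}^T\Sigma\beta_{\mathrm{lin}}$ equals $4\,\EB_{u\sim N(0,1)}\bigl[(g(\sqrt{s}\,u)-\tfrac12)u\bigr]^2$ with $s=\beta^T\Sigma\beta$, and both $\delta^2_{\mathrm{opt}}$ and $\mathrm{err}_{\mathrm{opt}}$ are monotone functions of $s$ alone. Your conversion step (inverting $s\mapsto\delta^2_{\mathrm{opt}}(s)$, then applying $\mathrm{err}_{\mathrm{opt}}(\cdot)$, and proving a H\"older-$\tfrac12$ bound on the composite map) is a reparameterization of the paper's $F_g$ mapping and its Lipschitz analysis in Proposition~\ref{prop:bin-mapping}; the $\sqrt{\cdot}$ loss is indeed intrinsic.

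The genuine gap is the sentence ``inspecting that proof shows it extends to the present agnostic setting.'' It does not. Theorem~\ref{thm:gen}'s variance bound (Proposition~\ref{prop:gen-var}) is proved by writing $y_i=\beta^T\x_i+\eta_i$ with $\eta_i$ independent of $\x_i$, then separating the sum into terms that contain no $\eta$ factors and terms that do, handling the latter via $\EB[\eta]=0$, independence, and the explicit factorization of the resulting expectations. That decomposition is unavailable when $y\in\{-1,+1\}$ is a logistic function of $\beta^T\x$: the ``noise'' $y-2g(\beta^T\x)+1$ has a variance depending on $\x$ and is correlated with every polynomial in $\beta^T\x$. Indeed, the paper itself flags exactly this obstruction immediately after Corollary~\ref{cor:approxid}, stating that unbiasedness survives the agnostic non-isotropic extension but the variance control does not, and that a fully general agnostic analogue of Theorem~\ref{thm:gen} likely fails. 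This is why the actual proof of Theorem~\ref{thm:bin-gen} establishes a fresh variance bound (Proposition~\ref{prop:bin-varbound}, proved in Section~\ref{sec:bin_varbound}) whose argument is Gaussian-specific: it decomposes each $\x_i$ into its component along $\Sigma\beta$ and the orthogonal component that is \emph{independent} of $\beta^T\x_i$, and re-runs the PQ-graph counting machinery on the six resulting case structures. Without that new variance bound, your reduction stalls at the Chebyshev step; you would need to either reproduce the Gaussian conditioning argument, or impose extra joint-moment assumptions on $(\x,y)$ strong enough to replay Proposition~\ref{prop:gen-var}'s cases, and the latter do not obviously hold for the logistic model.
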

As in the regression setting, when $\|\beta\|$ is constant, the above theorem shows that the error of the estimate satisfies $|\widehat{err_{opt}}-err_{opt}|\le O(\eps)$ with $n=poly(1/\eps) d^{1-\eps}$ samples when $\sigma_{min}=0$, and  $|\widehat{err_{opt}}-err_{opt}|\le O(\eps)$ with $n=poly(\log(1/\eps)) d^{1-\frac{1}{\log(1/\eps)}}$ samples when $\sigma_{min}>0$.

In the setting where the distribution of $\x$ is an isotropic Gaussian, we obtain the simpler result that the classification error of the best linear classifier can be accurately estimated with $O(\sqrt{d})$ samples.  This is information theoretically optimal, as we show in Section~\ref{sec:bin-lowerbound} in the appendix.

\begin{cor}\label{cor:bin-iso}
Suppose we are given $n$ labeled examples, $(\x_1,y_1),\ldots,(\x_n,y_n)$, with $x_i$ drawn independently from a $d$-dimension isotropic Gaussian distribution $N(0,I)$. Assuming that each label $y_i$ takes value $1$ with probability $g(\beta^T\x_i)$ and $-1$ with probability $1-g(\beta^T\x_i)$, where $g(x)=\frac{1}{1+e^{-x}}$ is the sigmoid function. There is an algorithm that takes $n$ labeled samples, and with probability $1-\tau$, outputs an estimate $\widehat{err_{opt}}$ with additive error $|\widehat{err_{opt}}-err_{opt}|\le c(\frac{\sqrt{d}}{n})^{1/2},$ where $err_{opt}$ is the classification error of the best linear classifier and $c$ is an absolute constant.
\end{cor}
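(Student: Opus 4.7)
The plan is to reduce the problem to an agnostic linear-regression problem, apply Theorem~\ref{thm:agnostic} as a black box, and then convert the resulting estimate of unexplained variance into one for the Bayes classification error, at the cost of a square-root blow-up. First I would exploit the rotational invariance of the isotropic Gaussian: letting $z\sim N(0,1)$ denote the component of $\x$ along $\beta/\|\beta\|$, the Bayes-optimal classifier is $\mathrm{sign}(\beta^T\x)$ (which is already linear, so it coincides with the best linear classifier), and its error depends only on $\|\beta\|$:
\[
err_{opt} \;=\; \E_{z\sim N(0,1)}\!\bigl[\min\!\bigl(g(\|\beta\|z),\,1-g(\|\beta\|z)\bigr)\bigr] \;=:\; E(\|\beta\|).
\]
It therefore suffices to estimate the scalar $\|\beta\|$.

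Next, I would invoke Theorem~\ref{thm:agnostic} on the pair $(\x,y)$. Its hypotheses are easily verified in this model: $\E[y^2]=1$ since $y\in\{\pm 1\}$; $\E[y]=0$ because $g(-t)=1-g(t)$ combined with the symmetry of $\x\sim N(0,I)$ forces cancellation; and the joint fourth-moment bound collapses to the Gaussian fourth-moment bound for $\x$ alone because $y^2\equiv 1$, giving $C=3$. Theorem~\ref{thm:agnostic} then returns, with probability $1-\tau$, an estimate $\widehat{\delta^2}$ of $\min_{\beta'}\E[(\beta'^T\x-y)^2]=1-\|\beta^\star\|^2$ with additive error $\eps_1=O(\sqrt{d}/n)$, absorbing the $\tau$-dependence into the constant. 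A short calculation via Stein's identity $\E[zf(z)]=\E[f'(z)]$ applied to $\beta^\star=\E[y\x]$ gives $\beta^\star=\alpha(\|\beta\|)\,\beta$, where $\alpha(s):=2\,\E_{z\sim N(0,1)}[g'(sz)]$, so $\widehat{\delta^2}$ is in fact an estimate of $1-F(\|\beta\|)$ for $F(s):=\alpha(s)^2 s^2$.

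Finally, my algorithm would invert $F$ to obtain $\widehat{\|\beta\|}$ and output $E(\widehat{\|\beta\|})$. The chief obstacle---and the reason the final rate is $(\sqrt{d}/n)^{1/2}$ rather than $\sqrt{d}/n$---is that $\alpha(0)=1/2$ makes $F(s)\sim s^2/4$ near the origin, so $F'(0)=0$ and inverting an $\eps_1$-additive estimate of $F$ inherently loses a square root in the worst regime: $|\widehat{\|\beta\|}-\|\beta\||=O(\sqrt{\eps_1})$. On the other hand, $E$ is analytic with $E'(0)\neq 0$ and $|E'|$ uniformly bounded, so $E\circ F^{-1}$ is $\tfrac12$-H\"older at $s=0$ and Lipschitz away from it, giving the claimed bound $|\widehat{err_{opt}}-err_{opt}|=O(\sqrt{\eps_1})=O\bigl((\sqrt{d}/n)^{1/2}\bigr)$. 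The remaining work is limited to verifying strict monotonicity of $F$ on $[0,\infty)$ (so that $F^{-1}$ is well-defined on its range) and producing a uniform H\"older constant via elementary sigmoid-derivative estimates.
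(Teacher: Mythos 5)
Your proposal is, at bottom, the same algorithm and the same analysis as the paper's Algorithm~\ref{alg:bin} specialized to $\Sigma=I$: the statistic you compute, $\frac{y^TGy}{\binom{n}{2}}$ (via Algorithm~\ref{alg:iso}, since $\frac{y^Ty}{n}\equiv 1$), is exactly $4t^2$ in the paper's notation, and your map $E\circ F^{-1}$ is precisely the paper's $F_g$, just factored through the intermediate $\|\beta\|$. The reduction via rotational invariance, the Stein-identity computation giving $\beta^\star=\alpha(\|\beta\|)\beta$, and the verification of the fourth-moment hypothesis of Theorem~\ref{thm:agnostic} are all correct, and the observation that $F'(0)=0$ forces the square-root loss is the right intuition.

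There is, however, one step that would not survive scrutiny as written: the claim that $E\circ F^{-1}$ is ``$\tfrac12$-H\"older at $s=0$ and Lipschitz away from it.'' The origin is not the only flat point of $F$. Since $q(s)=\E_z[(g(sz)-\tfrac12)z]\to\tfrac12\sqrt{2/\pi}$ as $s\to\infty$, $F(s)=4q(s)^2$ saturates at $2/\pi$ and $F'(s)\to 0$ polynomially; consequently $(F^{-1})'$ blows up near the right endpoint of $F$'s range, and $|\widehat{\|\beta\|}-\|\beta\||$ is not controlled by $O(\sqrt{\eps_1})$ there (indeed it can be unbounded if the estimate of $F(\|\beta\|)$ overshoots $2/\pi$). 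What rescues the final bound in that regime is that $E'(s)$ also decays polynomially in $s$, so the ratio $E'\big/F'$ stays controlled relative to the error level, and the composite map remains uniformly $\tfrac12$-H\"older. Establishing this uniformly across both the small-$\|\beta\|$ and large-$\|\beta\|$ regimes is exactly what the paper's Proposition~\ref{prop:bin-mapping} does, via the derivative bound $-\partial q/\partial p \ge p/16$ (which packages the two-sided compensation into a single inequality). So the work you defer to ``elementary sigmoid-derivative estimates'' and ``verifying strict monotonicity'' is not an afterthought; it is where the real technical content of the isotropic corollary lives, and your stated justification for why the bound holds away from the origin is incorrect even though the conclusion is right.
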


Despite the strong assumptions on the data-generating distribution in the above theorem and corollary, the algorithm to which they apply seems to perform quite well on real-world data, and is capable of accurately estimating the classification error of the best linear predictor, even in the data regime where it is impossible to learn any good predictor.  One partial explanation is that our approach can be easily adapted to a wide class of ``link functions,'' beyond just the sigmoid function addressed by the above results.  Additionally, for many smooth, monotonic functions, the resulting algorithm is almost identical to the algorithm corresponding to the sigmoid link function.

%\begin{theorem}\label{thm:bin-lb-gen}
%Without any assumptions on the covariance of the data distribution, or bound on $\|\Sigma\|\cdot\|\beta\|$, it is impossible to distinguish the case that the label is pure noise, meaning the label of each data point is uniformly randomly drawn from $\{+1,-1\}$, independent from the data, versus no noise, where there is an underlying hyperplane represented as vector $\beta$ such that the label is $sgn(\beta^Tx)$, with probability better than $2/3$ using $c\cdot d$ samples, for some constant $c$.
%\end{theorem}

%Let $f(x) = 2g(x)-1$ denote the expectation of $y$.
\subsection{Related Work}\label{sec:relatedWork} 

There is a huge body of work, spanning information theory, statistics, and computer science, devoted to understanding what can be accurately inferred about a distribution, given access to relatively few samples---too few samples to learn the distribution in question.  This area was launched with the early work of R.A. Fisher~\cite{Fisher} and Alan Turing and I.J. Good~\cite{Turing} to estimate properties of the \emph{unobserved} portion of a distribution (e.g. estimating the ``missing mass'', namely the probability that a new sample will be a previously unobserved domain element).  More recently, there has been a surge of results establishing that many distribution properties, including support size, entropy, distances between distributions, and general classes of functionals of distributions, can be estimated in the sublinear sample regime in which most of the support of the distribution is unobserved (see e.g.~\cite{independence,Ronitt-2000,AK01,valiant2011estimating,valiant2011power,ADJOPS12,AJOS13a,CDVV14,wu2016minimax,jiao2015minimax}).  The majority of  work in this vein has focused on properties of distributions  that are supported on some discrete (and unstructured) alphabet, or structured (e.g. unimodal) distributions over $\R$ (e.g.~\cite{birge1,birge2,mon1,monotonicity,daskalakis2013testing}).

There is also a line of relevant work, mainly from the statistics community, investigating properties of high-dimensional distributions (over $\R^d$).  One of the fundamental questions in this domain is to estimate properties of the spectrum (i.e. singular values) of the covariance matrix of a distribution, in the regime in which the covariance cannot be accurately estimated~\cite{karoui2008,Bai10,LW12,donoho2013optimal,ledoit2013spectrum,li14}.  This line of work includes the very recent work~\cite{kong2017spectrum} demonstrating that the full spectrum can be estimated given a sample size that is sublinear in the dimensionality of the data---given too little data to accurately estimate any principal components, you can accurately estimate how many directions have large variance, small variance, etc.   We leverage some techniques from this work in our analysis of our estimator for the non-isotropic setting.

%Broadly speaking, this area seeks to understand what properties can be accurately estimated given too little data to learn the underlying distribution
For the specific question of estimating the signal to variance ratio (or signal to noise), also referred to as the ``unexplained variance'', there are many classic and more recent estimators that perform well in the linear and super-linear data regime.  These estimators apply to the most restrictive setting we consider, where each label $y = \beta^T\x  + \eta$ is given as a linear function of $\x$ plus independent noise $\eta$ of variance $\delta^2$.  Two common estimators for $\delta^2$ involve first computing the parameter vector $\hat{\beta}$ that minimizes the squared error on the $n$ datapoints.  These estimators are 1) the ``naive estimator'' or the ``maximum likelihood'' estimator: $(\y- \X\hat{\beta})^T(\y- \X\hat{\beta})/n$, and 2) the ``unbiased'' estimator $(\y- \X\hat{\beta})^T(\y- \X\hat{\beta})/(n-d)$, where $\y$ refers to the vector of $n$ labels, and $X$ is the $n \times d$ matrix whose rows represent the $n$ datapoints.  Verifying that the latter estimator is unbiased is a straightforward exercise.  Of course, both of these estimators are zero (or undefined) in the regime where $n \le d$, as the prediction error $(\y- \X\hat{\beta})$ is identically zero in this regime. Additionally, the variance of the unbiased estimator increases as $n$ approaches $d$, as is evident in our empirical experiments where we compare our estimators with this unbiased estimator.

\iffalse We will prove that $(\y-X\hat{\beta})^T(\y-X\hat{\beta})/(n-d)$ is an unbiased estimator of $\delta^2$.
\begin{fact}
$\E[(\y-X\hat{\beta})^T(\y-X\hat{\beta})/(n-d)] = \delta^2$
\end{fact}
\begin{proof}
 \begin{align*}
\E[(\y-X\hat{\beta})^T(\y-X\hat{\beta})/(n-d)] \\
= \E[\y^T(I_n-X(X^TX)^{-1}X^T)\y/(n-d)]\\
=\E[\beta^T X^T(I_n-X(X^TX)^{-1}X^T)X\beta/(n-d)] \\
+ tr(I_n-X(X^TX)^{-1}X^T)\delta^2/(n-d)\\
= (tr(I_n)-tr(I_d))\delta^2/(n-d) = \delta^2
\end{align*}
\end{proof}\fi

In the regime where $n<d$, variants of these estimators might still be applied but where $\hat{\beta}$ is computed as the solution to a regularized regression (see, e.g.~\cite{wencheko2000estimation}); however, such approaches seem unlikely to apply in the sublinear regime where $n=o(d)$, as the recovered parameter vector $\hat{\beta}$ is not significantly correlated with the true $\beta$ in this regime, unless strong assumptions are made on $\beta$.

%A similar idea was investigated by Wencheko~\cite{wencheko2000estimation}, who also studied the problem of estimating the "signal to noise ratio"(i.e. $\frac{\|\beta\|^2}{\delta^2}$) by estimating $\delta^2$ according to the classical unbiased estimator described above, and estimating $\beta$ via various regularized regression algorithms, including ridge regression, ``shrunken regression'', regression applied after projecting onto the top principal components, and several other heuristics.  Due to the hardness of obtaining closed form expressions for the error of these approaches, the performance of each estimator is only measured empirically. Because the estimator for $\delta^2$ only applies when $n>d$, this work is limited to the linear and super-linear data regime.

Indeed, there has been a line of work on estimating the noise level $\delta^2$ assuming that $\beta$ is sparse~\cite{guo2017optimal,fan2012variance,sun2012scaled,stadler2010,bayati2013estimating}.  These works give consistent estimates of $\delta^2$ even in the regime where $n=o(d)$.   More generally, there is an enormous body of work on the related problem of \emph{feature selection}.%  Most of this work considers settings where the signal is concentrated in some subset of the features, or extraneous features have been added, and the goal is to identify the relevant features.  To the best of our knowledge, these approaches only apply in the regime in which there is sufficient data to train models that have significant predictive abilities (i.e. the linear data regime, in our setting).  
The basis dependent nature of this question (i.e. identifying which features are relevant) and the setting of sparse $\beta$, are quite different from the setting we consider where the signal may be a dense vector.

%However, for many practical settings, their underlying sparsity assumptions may be untenable. 

There have been recent results on estimating the variance of the noise, without assumptions on $\beta$, in the $n<d$ regime.  In the case where $n<d$ but $n/d$ approaches a constant $c \le 1$, Janson et al. proposed the EigenPrism~\cite{janson2017eigenprism} to estimate the noise level.  Their results rely on the  assumptions that the data $\x$ is drawn from an isotropic Gaussian distribution, and that the label is a linear function plus independent noise, and the performance bounds become trivial if $n/d\rightarrow 0.$

Perhaps the most similar work to our paper is the work of Dicker~\cite{dicker2014variance}, which proposed an estimator of $\delta^2$ with error rate $O(\frac{\sqrt{d}}{n})$ in the setting where the data $\x$ is drawn from an isotropic Gaussian distribution, and the label is a linear function plus independent Gaussian.  Their estimator is fairly similar to ours in the identity covariances setting and gives the same error rate.  However, our result is more general in the following senses:  1) Our estimator and analysis do not rely on Gaussianity assumptions; 2) Our results apply beyond the setting where label $y$ is a linear function of $\x$ plus independent noise, and estimates the fraction of the variance that can be explained via a linear function (the ``agnostic'' setting); and 3) our approach extends to the unknown non-isotropic covariance setting.

In case the sparsity of $\beta$ is unknown, Verzelen and Gassiat~\cite{verzelen2018adaptive} introduced a hybrid approach which combines Dicker's result in the dense regime and Lasso in the sparse regime to achieve consistent estimation of $\delta^2$ using $\min(k\log(d),\sqrt{d\log(d)})$ samples in the isotropic covariance setting where $k$ is the unknown sparsity of $\beta$, and they showed the optimality of the algorithm. In the unknown covariance, dense $\beta$ setting, they conjectured consistent estimation of $\delta^2$ is not possible with $o(d)$ samples; our Theorem~\ref{thm:gen} shows that this conjecture is false.
%introduced a refitted cross validation method that estimates $\delta^2$ assuming sparsity and a model selection procedure that misses none of the correct variables. \cite{} introduced the scaled Lasso for estimating $\delta^2$ using an iterative procedure that includes the Lasso. \cite{} also use an $\ell_1$ penalty to estimate the noise level, but in a finite mixture of regressions model. \cite{} use the Lasso and Stein's unbiased risk estimate to produce an estimator for $\delta^2$

This problem of estimating the signal-noise ratio has been considered in the \textit{random effect model} with the assumption that $y=\beta^T\x+\eta$ where $\beta$ is drawn from $N(0,\sigma^2I_d)$ with some unknown $\sigma$. The classical maximum likelihood estimator of $\delta^2$ in this setting is widely used in many practical applications, particularly genomics. 
For example, genome-based restricted maximum likelihood (GREML) in genome-wide complex trait analysis (GCTA)\cite{yang2011gcta} utilizes this estimator to quantify the total additive contribution of a particular subset of genetic variants to a trait's heritability. On the method of moments side, the classical Haseman--Elston regression\cite{haseman1972investigation} is a method for estimating the signal-to-noise ratio in the random effect model, which is generalized in \cite{golan2014measuring} for the heritability estimation problem. Our algorithm in the isotropic covariance setting is essentially  equivalent to Haseman--Elston regression, after an appropriate scaling. 
Though practically popular, very little is known about the theoretical guarantee of these estimators when applied to the \textit{fixed-effect model} that we consider, where no distributional assumptions are made on $\beta$. Dicker and Erdogdu~\cite{dicker2016maximum} showed that the maximum likelihood estimator for the random effect model is consistent and asymptotically normal in the fixed effect model assuming that $\x\sim N(0,I), \eta\sim N(0,\delta^2)$ and each label $y$ is a linear function of $\x$ plus independent Gaussian noise: $y = \beta^T\x+\eta$.

\medskip

Finally, there is a body of work from the theoretical computer science community on ``testing'' whether a function belongs to a certain class, including work on testing \emph{linearity}~\cite{bellare1996linearity,ben2003randomness} generally over finite fields rather than $\R^d$, and testing \emph{monotonicity} of functions over the Boolean hypercube~\cite{goldreich1998testing,chen2014new}.  Most of this work is in the ``query model'' where the algorithm can (adaptively) choose a point, $x$, and obtain its label $\ell(x)$.  The goal is determine whether the labeling function belongs to the desired class using as few queries as possible.  This ability to query points seems to significantly alter the problem, although it corresponds to the setting of ``active learning'' in the setting where there is an exponential amount of unlabeled data.  More recent works on ``active testing''~\cite{balcan2012active,blum2017active} considers this problem of distinguishing whether a labeling function belongs in a specified class, versus is ``far'' from the class, given a modest amount of unlabeled data, and the ability to query labels for a subset of that data.  These works consider several function classes, including unions of intervals (over the domain $[0,1]$), and linear threshold functions.  In~\cite{balcan2012active}, they show that for isotropic $d$-dimensional Gaussian data, given $O(\sqrt{d \log d})$ samples, one can distinguish the case that the data is linearly separable, versus the case that all linear classifiers have constant error (bounded away from 0).  We note that for this restrictive setting, a trivial modification of our approach yields the tighter bound of $O(\sqrt{d})$ for this problem.

From the standpoint of computational hardness, in the binary classification setting, without any assumption on the distribution of $X$, the best linear classifier (also known as halfspace) is PAC-learnable in the presence of random classification noise~\cite{blum1998polynomial}. However, with adversarial classification noise, or equivalently, when the binary label $y$ is an arbitrary noisy function of the datapoint $X$, it is NP-hard even to distinguish between the case where the best linear classifier has accuracy 99\% versus no linear classifier is able to achieve accuracy 51\%~\cite{feldman2006new,guruswami2009hardness}. Regarding the problem of estimating learnability, this rules out the possibility of obtaining results in the classification setting of similar strength to the regression setting. Given the hardness result, there has been a line of work~\cite{kalai2008agnostically, awasthi2014power} studying the agnostic linear classification problem in the setting where $X$ is drawn from some ``nice'' distribution (e.g., Gaussian, log-concave etc.), and where the goal is to learn a prediction model that achieves close to optimal classification accuracy.

\subsection{Future Directions and Shortcomings of Present Work}\label{future}
This work demonstrates---both theoretically and empirically---a surprising ability to  estimate the performance of the best model in basic model classes (linear functions, and linear classifiers) in the regime in which there is too little data to learn any such model.  That said, there are several significant caveats to the applicability of these results, which we now discuss.  Some of these shortcomings seem intrinsically necessary, while others can likely be tackled via extensions of our approaches.

\noindent \textbf{More General Model Classes, and Loss Functions:} Perhaps the most obvious direction for future work is to tackle more general model classes, under more general classes of loss function, in more general settings.  While our results on linear regression extend to function classes (such as polynomials) that can be obtained via a linear function applied to a nonlinear embedding of the data, the results are all in terms of estimating unexplained variance, namely estimating ($\ell_2$ error).  Our techniques do leverage the geometry of the $\ell_2$ loss, and it is not immediately clear how they could be extended to more general loss functions.  

Our results for binary classification are restricted to the  specific model of Gaussian data (with arbitrary covariance) and with label assigned to be $\pm 1$ with probabilities according to the latent variable interpretation of logistic regression, namely 1 with probability $g(\beta^T x)$ and $-1$ with probability $1-g(\beta^T x)$, where $\beta$ is the vector of hidden parameters, and the function $g$ is the sigmoid function.  Our techniques are not specific to the sigmoid function, and can yield analogous results for other monotonic ``link'' functions. Similarly, the Gaussian assumption can likely be relaxed.  Still, it seems that any strong theoretical results for the binary classification setting would need to rely on fairly stringent assumptions on the structure of the data and labels in question. 

%hinge loss, logistic loss, etc.  It seems unlikely that analogous results could be obtained without making further assumptions on the structure of the data in question.  One concrete though unrealistic starting place might be to consider the setting where the positive and negative datapoints are linearly separable, but with labels flipped independently with probability $p$.  The goal then would be to estimate the parameter $p$, corresponding to the classification error achievable by the best classifier. % The assumption that the labels are flipped independently is quite unrealistic---points further from the decision boundary might be flipped with smaller probability---though techniques developed for this idealized setting might still perform well in practice.\footnote{Although our model assumes independent noise, our empirical results on NLP data do not seem to exhibit bias due to a lack of such independence.}   

\medskip

\noindent \textbf{Heavy-tailed covariance spectra:} 
One of the practical limitations of our techniques is that they are unable to accurately capture portions of the signal that depend on directions in which the underlying distribution has extremely small variance.  As our lowerbounds show, this is unavoidable.  That said, many real-world distributions exhibit a power-law like spectrum, with a large number of directions having variance that is orders of magnitude smaller than the directions of larger variance, and a significant amount of signal is often contained in these directions.  

From a practical perspective, this issue can be addressed by partially ``whitening'' the data so as to make the covariance more isotropic.  Such a re-projection requires an estimate of the covariance of the distribution, which would require either specialized domain knowledge, or a (unlabeled) dataset of size at least linear in the dimension. In some settings it might be possible to easily collect a surrogate (unlabeled) dataset from which the re-projection matrix could be computed.  For example, for NLP settings, a generic language dataset such as the Wikipedia corpus could be used to compute the reprojection.

\medskip

\noindent \textbf{Data aggregation, federated learning, and secure ``proofs of value'':}  There are many tantalizing directions (both theoretical and empirical) for future work on downstream applications of the approaches explored in this work.  The approaches of this work could be re-purposed to explore the extent to which two or more labeled datasets have the same (or similar) labeling function, even in the regime in which there is too little data to learn such a function---for example, by applying these techniques to the aggregate of the datasets versus individually and seeing whether the signal to noise ratio degrades upon aggregation.  Such a primitive might have fruitful applications in realm of ``federated learning'', and other settings where there are a large number of heterogeneous entities, each supplying a modest amount of data that might be too small to train an accurate model in isolation.  One of the key questions in such settings is how to decide which entities have similar models, and hence which subsets of entities might benefit from training a model on their combined data.

Finally, a more speculative line of future work might explore the possibility of creating \emph{secure} or \emph{privacy preserving} ``proofs of value'' of a dataset.  The idea would be to publicly release either a portion of a dataset, or some object derived from the dataset, that would ``prove'' the value of the dataset while preventing others from exploiting the dataset, or while preserving various notions of security or privacy of the database).  The approaches of this work might be a first step towards those directions, though such directions would need to begin with a formal specification of the desired security/privacy notions, etc.

\section{The Estimators, Regression Setting}\label{sec:estimators}

Before describing our estimators, we first provide an intuition for why it is possible to estimate the ``learnability'' in the sublinear data regime.

\subsection{Intuition for Sublinear Estimation}\label{sec:intuition}
We begin by describing one intuition for why it is possible to estimate the magnitude of the noise using only $O(\sqrt{d})$ samples, in the isotropic setting.  Suppose we are given data $\x_1,\ldots,\x_n$ drawn i.i.d. from $N(0,I_d),$ and let $y_1,\ldots,y_n$ represent the labels, with $y_i = \beta^T\x_i + \eta$ for a random vector $\beta \in \R^d$ and $\eta$ drawn independently from $N(0,\delta^2)$.  Fix $\beta$, and consider partitioning the datapoints into two sets, according to whether the label is positive or negative.  In the case where the labels are complete noise ($\delta^2 = 1$), the expected 
value of a positively labeled point is the same as that of a negatively labeled point and is $\overrightarrow{0}$.  In the case where there is little noise, the expected value $\mu_+$ of a positive point will be different than that of a negative point, $\mu_-$, and the distance between these points corresponds to the distance between the mean of the `top' half of a Gaussian and the `bottom' half of a Gaussian.  Furthermore, this distance between the expected means will smoothly vary between $0$ and $2 \sqrt{2/\pi}$ as the variance of the noise, $\delta^2$, varies between $1$ and $0$.

The crux of the intuition for the ability to estimate $\delta^2$ in the regime where $n=O(\sqrt{d})$ is the following observation: while the empirical means of the positive and negative points have high variance in the $n=o(d)$ regime, it is possible to accurately estimate the \emph{distance} between $\mu_+$ and $\mu_-$ from these empirical means! At a high level, this is because the empirical means consists of $d$ coordinates, each of which has a significant amount of noise. However, their squared distance is just a single number which is a sum of $d$ quantities, and we can leverage concentration in the amount of noise contributed by these $d$ summands to save a $\sqrt{d}$ factor. This closely mirrors the folklore result that it requires $O(d)$ samples to accurately estimate the mean of an identity covariance Gaussian with unknown mean, $N(\mu,I_d)$, though the norm of the mean $\|\mu\|$ can be estimated to error $\eps$ using only $n=O(\sqrt{d}/\eps)$.

Our actual estimators, even in the isotropic case,  do not directly correspond to the intuitive argument sketched in this section.  In particular, there is no partitioning of the data according to the sign of the label, and the unbiased estimator that we construct does not rely on any Gaussianity assumption.    

\subsection{The Estimators}\label{sec:est}

The basic idea of our proposed estimator is as follows. Given a joint distribution over $(\x,y)$ where $\x$ has mean $0$ and variance $\Sigma$, the classical least square estimator which minimizes the unexplained variance takes the form $\beta = \E[\x\x^T]^{-1}\E[y\x]=\Sigma^{-1}\E[y\x]$, and the corresponding value of the unexplained variance is $\E[(y-\beta^T\x)^2] = \E[y^2]-\beta^T\Sigma\beta$. Notice that the least square estimator is exactly the model parameter $\beta$ in the linear model setting, and we use the same notation to denote them. The variance of the labels, $y$ can be estimated up to $1/\sqrt{n}$ error with $n$ samples, after which the problem reduces to estimating $\beta^T\Sigma\beta$. While we do not have an unbiased estimator of $\beta^T\Sigma\beta$, as we show, we can construct an unbiased estimator for $\beta^T\Sigma^k\beta$ for any integer $k\ge 2$.

%We start from the linear model setting where $y=\beta^T\x+\eta$ and $\eta$ has mean $0$, variance $\delta^2$. In this setting, the unexplained variance is exactly $\delta^2$ which is the quantity. 

To see the utility of estimating these ``higher moments'', assume for simplicity that $\Sigma$ is a diagonal matrix.  Consider the distribution over $\R$ consisting of $d$ point masses with the $i$th point mass located at $\Sigma_{i,i}$ with probability mass $\beta_i^2/\|\beta\|^2$.  The problem of estimating $\beta^T\Sigma\beta$ is now precisely the problem of approximating the first moment of this distribution, and we are claiming that we can compute unbiased (and low variance) estimates of $\beta^T\Sigma^k\beta$ for $k=2,3,\ldots$, which exactly correspond to the 2nd, 3rd, etc. moments of this distribution of point masses.  Our main theorem follows from the following two components: 1) There is an unbiased estimator that can estimate the $k$th ($k\ge2$) moment of the distribution using only $O(d^{1-1/k})$ samples. 2) Given accurate estimates of the 2nd, 3rd,\ldots,$k$th moments, one can approximate the first moment with error $O(1/k^2)$, and the error can be improved to $O(e^{-k}$ if the covariance matrix $\Sigma$ is well conditioned. The main technical challenge is the first component---constructing and analyzing the unbiased estimators for the higher moments; the second component of our approach amounts to showing that the function $f(x)=x$ can be accurately approximated via the polynomials $f_2(x)=x^2,$ $f_3(x)=x^3, \ldots, f_k(x)=x^k$, and is a straightforward exercise in real analysis.  The final estimator for $\beta^T\Sigma\beta$ in the non-identity covariance setting will be the linear combination of the unbiased estimates of $\beta^T\Sigma^2\beta,$ $\beta^T\Sigma^3\beta,\ldots$, where the coefficients correspond to those of the polynomial approximation of $f(x)=x$ via $f_2,f_3,\ldots.$   The following proposition (proved in the supplementary material) summarizes the quality of this polynomial interpolation:

\begin{proposition}
For any integer $k$ and real value $1\ge b\ge 0$, there is a degree $k$ polynomial $p_k(x)$ with no linear or constant terms, satisfying $|p_k(x)-x|<\min(2/k^2,2e^{-(k-1)\sqrt{b}})$ for all $x\in [b,1]$. 
\end{proposition}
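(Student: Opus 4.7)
The plan is to reduce the proposition to a classical extremal problem for Chebyshev polynomials. Writing the desired polynomial as $p_k(x) = x\cdot h(x)$ with $h$ of degree $k-1$ and $h(0)=0$ (which forces the constant and linear terms of $p_k$ to vanish), we have $|p_k(x)-x| = x\cdot|1-h(x)|$. Setting $g(x):=1-h(x)$, the task becomes: find $g$ of degree at most $k-1$ with $g(0)=1$ such that $x|g(x)|$ is uniformly small on $[b,1]$. I would produce one $g$ giving the $2/k^2$ bound and a different $g$ giving the exponential bound, and then take $p_k$ to be whichever yields the tighter estimate for the given $(b,k)$.

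For the $2/k^2$ bound I would use the Chebyshev polynomial $T_k$ shifted to $[0,1]$. Define $r_A(x) := \bigl(T_k(2x-1) - (-1)^k\bigr)/\bigl(2\,T_k'(-1)\bigr)$; using $T_k'(y) = k U_{k-1}(y)$ and $U_{k-1}(-1) = (-1)^{k-1}k$, we get $r_A(0)=0$, $r_A'(0)=1$, and $|r_A(x)|\le 1/k^2$ on $[0,1]$ since $|T_k|\le 1$. Factoring $r_A(x) = x\,g_A(x)$ produces the required $g_A$, and $p_k^A := x(1-g_A)$ obeys $|p_k^A(x)-x|\le 1/k^2$ on all of $[0,1]\supseteq[b,1]$.

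For the exponential bound I would use the Chebyshev polynomial scaled to $[b,1]$. Let $L(x) := (2x-(1+b))/(1-b)$ and $g_B(x) := T_{k-1}(L(x))/T_{k-1}(L(0))$, so that $g_B(0)=1$ and $\max_{[b,1]}|g_B(x)| = 1/T_{k-1}((1+b)/(1-b))$. The main computation is a lower bound on this denominator: using $T_n(y) \ge \tfrac{1}{2}(y+\sqrt{y^2-1})^n$ for $y\ge 1$ together with the identity $y+\sqrt{y^2-1} = (1+\sqrt{b})/(1-\sqrt{b})$ at $y=(1+b)/(1-b)$ (transparent under $y=\cosh\theta$, which gives $e^{\theta}=(1+\sqrt{b})/(1-\sqrt{b})$), and the elementary inequality $(1-\sqrt{b})/(1+\sqrt{b})\le e^{-\sqrt{b}}$, one obtains $\max_{[b,1]}|g_B(x)|\le 2e^{-(k-1)\sqrt{b}}$. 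Hence $p_k^B := x(1-g_B)$ satisfies $|p_k^B(x)-x|\le 2e^{-(k-1)\sqrt{b}}$ on $[b,1]$.

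Finally, to match the \emph{min} in the statement with a single polynomial, I would set $p_k := p_k^A$ when $1/k^2 \le 2e^{-(k-1)\sqrt{b}}$ and $p_k := p_k^B$ otherwise. In the first case the achieved error $1/k^2$ is strictly less than $2/k^2$ and at most the exponential target; in the second case the achieved error $2e^{-(k-1)\sqrt{b}}$ is, by the case assumption, strictly less than $1/k^2$ and hence also strictly less than $2/k^2$. The only real technical step is the Chebyshev lower bound for $T_{k-1}((1+b)/(1-b))$ together with the algebraic simplification of $y+\sqrt{y^2-1}$; everything else is routine, and the factor-of-two slack between $1/k^2$ and $2/k^2$ conveniently absorbs any concerns about strict-versus-non-strict inequality.
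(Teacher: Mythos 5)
Your proof is correct and takes a genuinely different, more self-contained route than the paper's. The paper obtains the $2/k^2$ bound by invoking the M\"untz-polynomial error estimate (Theorem~5.5 of DeVore--Lorentz with exponents $\lambda_j=j$) and the exponential bound by invoking Lemma~7.8 of Orecchia et al., treating both as black boxes. You instead reduce the task to bounding $x|g(x)|$ on $[b,1]$ for a degree-$(k-1)$ polynomial $g$ with $g(0)=1$, and produce both candidate $g$'s explicitly from Chebyshev polynomials: shift $T_k$ to $[0,1]$ and normalize by $T_k'(-1)=(-1)^{k-1}k^2$ for the $1/k^2$ bound, and scale $T_{k-1}$ to $[b,1]$ for the exponential bound, using $T_n(y)\ge\tfrac12\bigl(y+\sqrt{y^2-1}\bigr)^n$ together with the identity $y+\sqrt{y^2-1}=(1+\sqrt{b})/(1-\sqrt{b})$ at $y=(1+b)/(1-b)$. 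What your argument buys is a fully explicit construction with no external dependencies, and in fact a spare factor of two in the exponent since $(1-\sqrt{b})/(1+\sqrt{b})\le e^{-2\sqrt{b}}$; what the paper's citations buy is brevity. The only blemish is that your achieved errors are nonstrict ($\le 1/k^2$ and $\le 2e^{-(k-1)\sqrt{b}}$), so the stated strict inequality could in principle be met with equality on a measure-zero set of $(k,b)$; as you observe, the slack between $1/k^2$ and $2/k^2$ makes this harmless for the downstream application (and indeed the paper's own appendix restatement uses $\le$ rather than $<$).
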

The above proposition follows easily from Theorem 5.5 in~\cite{devore1993constructive} and Theorem 7.8 in~\cite{orecchia2012approximating}, and we include the short proof in the appendix (see Proposition~\ref{prop:poly-approx}).

\medskip

\noindent\textbf{Identity Covariance Setting:} In the setting where the data distribution has identity covariance, $\beta^T\Sigma^2\beta = \beta^T\Sigma \beta$ simply because $1^2 =1,$ and hence we \emph{do} have a simple unbiased estimator, summarized in the following algorithm for the isotropic setting, to which Proposition~\ref{prop:1main} applies:

%In the setting where the data distribution is non-isotropic, we approximate $\beta^T\Sigma\beta$ via a linear combination of unbiased estimators of $\beta^T\Sigma^2\beta,$ $\beta^T\Sigma^3\beta,\ldots$ using the mechanism developed from second fact.

\begin{algorithm}[H]
\begin{algorithmic}
\STATE \textbf{Input}: $X = 
\begin{bmatrix} 
\x_1\\
\vdots\\
\x_n
\end{bmatrix}, \quad
y = 
\begin{bmatrix} 
y_1\\
\vdots\\
y_n
\end{bmatrix}
$ 
\begin{itemize}
\item Set $A = XX^T$, and let $G = A_{up}$ be the matrix A with the diagonal and lower triangular entries set to zero.
\end{itemize}
\STATE \textbf{Output}: $\frac{y^Ty}{n} - \frac{y^TGy}{\binom{n}{2}}$
\caption{Estimating Linearity, Identity covariance}\label{alg:iso}
\end{algorithmic}
\end{algorithm}

To see why the second term of the output corresponds to an unbiased estimator for $\beta^T\Sigma^2\beta$ (and hence for $\beta^T\Sigma\beta$ in the isotropic case), consider drawing two independent samples $(\x_1,y_1),(\x_2,y_2)$. Indeed, $y_1y_2\x_1^T\x_2$ is an unbiased estimator of $\beta^T\Sigma^2\beta$, because $\E[y_1y_2\x_1^T\x_2] = \E[y_1\x_1^T]\E[y_2\x_2] =\beta^T\Sigma^2\beta$. Given $n$ samples, by linearity of expectation, a natural unbiased estimate is hence to compute this quantity for each pair (of distinct) samples, and take the average of these $\binom{n}{2}$ quantities. This is precisely what Algorithm 1 computes, since $\E[y^T Gy] = \E[\sum_{i< j}y_iy_j\x_i^T\x_j]$.

Given that the estimator in the isotropic case is unbiased, Proposition~\ref{prop:1main} will follow provided we adequately bound its variance:

\begin{proposition}\label{prop:bs2b}
$\Var[y^TGy] = O(C^2n^2(d+n)),$ where $C$ is the bound on the fourth moments.
\end{proposition}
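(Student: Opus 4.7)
The plan is to expand the variance as a double sum of covariances, observe that most terms vanish by independence, and bound the surviving terms using the fourth-moment hypothesis.

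First I would write $y^T G y = \sum_{i<j} T_{ij}$ with $T_{ij} := y_i y_j\, \x_i^T \x_j$, so that
\[
\Var[y^T G y] \;=\; \sum_{i<j,\; k<l} \mathrm{Cov}(T_{ij}, T_{kl}).
\]
I would then partition the pair-pairs by the overlap size $r := |\{i,j\} \cap \{k,l\}|$. When $r = 0$, the samples involved in $T_{ij}$ are disjoint from those in $T_{kl}$, so the two are independent and the covariance vanishes. This leaves only the $r=1$ pair-pairs (involving three distinct sample indices, so $O(n^3)$ of them) and the $r=2$ pair-pairs (forcing $(i,j)=(k,l)$, so $\binom{n}{2}$ of them).

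For $r=1$, after relabeling it suffices to consider $i=k$ and $j\ne l$. Integrating out $j$ and $l$ first---using their mutual independence, their independence from $i$, and $\E[y_m \x_m] = \beta$ in the isotropic setting---I obtain
\[
\E[T_{ij} T_{il}] \;=\; \E\!\left[y_i^2\, \x_i^T \beta \beta^T \x_i\right] \;=\; \E\!\left[y_i^2\, (\beta^T \x_i)^2\right].
\]
The fourth-moment hypothesis $\E[(\x^T u)^2 y^2] \le C\,\E[(\x^T u)^2]\,\E[y^2]$ applied with $u = \beta$, combined with $\|\beta\|^2 \le \E[y^2] = 1$, bounds this by $C$. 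Since $|\E[T_{ij}]\E[T_{il}]| = \|\beta\|^4 \le 1$, each $r=1$ covariance is $O(C)$, contributing $O(Cn^3)$ in total.

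For $r=2$, I would bound $\E[T_{ij}^2] = \E[y_i^2 y_j^2 (\x_i^T \x_j)^2]$ by two applications of the fourth-moment bound. Conditioning on $\x_i$ and treating it as a fixed vector, the bound gives $\E[y_j^2 (\x_i^T \x_j)^2 \mid \x_i] \le C\|\x_i\|^2$. Expanding $\|\x_i\|^2 = \sum_m (e_m^T \x_i)^2$ and reapplying the bound coordinate by coordinate yields $\E[y_i^2 \|\x_i\|^2] \le C d$. Chaining these gives $\E[T_{ij}^2] \le C^2 d$, so this case contributes $O(C^2 n^2 d)$. Summing the two surviving cases yields $\Var[y^T G y] = O(Cn^3 + C^2 n^2 d) = O(C^2 n^2 (n+d))$, as claimed.

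There is no single hard step; the main technical point is really bookkeeping: identifying that disjoint pair-pairs contribute zero to the variance, and recognizing that the fourth-moment hypothesis---applied in its ``$y$-involving'' form---is exactly strong enough both for the single integration in the $r=1$ case and for the two stages of conditioning in the $r=2$ case, without introducing any dependence on $d$ beyond the necessary linear factor in the final step.
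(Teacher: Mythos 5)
Your proposal is correct and takes essentially the same approach as the paper's proof: both expand the variance as a double sum, classify terms by the overlap size of the index pairs (0, 1, or 2 shared samples), observe that disjoint pairs contribute zero, and bound the two surviving cases via the fourth-moment hypothesis (once for the single-overlap case, and twice---via a coordinate expansion of $\|\x_i\|^2$---for the fully-overlapping case). The only small omission is that you implicitly use $C \ge 1$ to absorb the $O(Cn^3)$ term into $O(C^2 n^2(n+d))$, which the paper makes explicit; this holds because taking $\u=\v$ in the fourth-moment condition and applying Jensen's inequality forces $C\ge 1$.
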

\begin{proof}
The variance can be expressed as the following summation:
$
\Var[y^TGy] = \Var[\sum_{i< j}y_iy_j\x_i^T\x_j] =
\sum_{i< j, i'< j'} (\E[y_iy_jy_{i'}y_{j'}\x_i^T\x_j \x_{i'}^T \x_{j'}] - \E[y_iy_j\x_i^T\x_j]\E[y_{i'}y_{j'}\x_{i'}^T\x_{j'}]).
$ For each term in the summation, we classify it into one of the $3$ different cases according to $i,j,i',j'$:
\begin{enumerate}
\item If $i,j,i',j'$ all take different values, the term is $0$.
\item If $i,j,i',j'$ take $3$ different values, WLOG  assume $i=i'$. The term can then be expressed as:
$\E[y_i^2y_jy_{j'}\x_i^T \x_j \x_i^T \x_{j'}] - (\beta^T\beta)^2\\
= \E[y_i^2(\beta^T \x_i)^2]- (\beta^T\beta)^2$. By our fourth moment assumption, $\E[y_i^2(\beta^T \x_i)^2]\le C \|\beta\|^2$, and we conclude that $C\|\beta\|^2-\|\beta\|^4$ is an upperbound.
\item If $i,j,i',j'$ take $2$ different values, the term is:
$
\E[y_i^2y_j^2(\x_i^T \x_j)^2] - (\beta^T\beta)^2.
$
First taking the expectation over the $j$th sample, we get the following upper bound 
$
\E[Cy_i^2(\x_i^T\x_i)]-(\beta^T\beta)^2.
$
Notice that $\x_i^T\x_i=\sum_{j=1}^d(\e_j^T\x_i)^2$. Taking the expectation over the $i$th sample and applying the fourth moment condition, we get the following bound:
$
dC^2-\|\beta\|^4.
$
\end{enumerate}
The final step is to sum the contributions of these $3$ cases. Case $2$ has $4\binom{n}{3}$ different quadruples $(i,j,i',j')$. Case $3$ has $\binom{n}{2}$ different quadruples $(i,j,i',j')$. Combining the resulting bounds yields:
$
\Var[\sum_{i< j}y_iy_j\x_i^T\x_j]= O(n^2dC^2+n^3C\|\beta\|^2).
$
Since $\E[(y-\beta^T\x)^2]= 1 - \|\beta\|^2\ge 0$, it must be that $\|\beta\|\le 1$. Further by the fact that $C\ge 1$, the proposition statement follows.
\end{proof}

Having shown that the estimator is unbiased and has variance bounded according to the above proposition, Proposition~\ref{prop:1main} now follows immediately from Chebyshev's inequality.

\iffalse
Then taking the expectation over the $i$th samples, we get
$$
(C\beta^T\beta+\delta^2)\E[(\beta^T\x_i+\eps_i)^2(\x_i^T\x_i)]-(\beta^T\beta)^2.
$$
\fi

\iffalse
To illustrate the second fact, imagine  there is a distribution $D$ consists of $d$ point masses with the $i$th point mass located at $\Sigma_{i,i}$ with weight $\beta_i^2$. For positive integer $k$, the $k$th moment of distribution $D$, defined as $\int_{-\infty}^\infty x^kD(x)dx$ is precisely $\beta^T \Sigma^k\beta$. Notice that if we have a degree $k\ge 2$ polynomial $p(x) = a_0x^2+a_1x^3+\ldots,+a_{k-2} x^k$ satisfies $|p(x)-x|\le \eps $ for each $x$ in the support of $D$. We will use $a_0\beta^T\Sigma^2\beta + a_2\beta^T\Sigma^3\beta+\ldots+a_{k-2}\beta^T\Sigma^k\beta$ to approximate the first moment $\beta^T\Sigma\beta$. The approximation error is $|a_0\beta^T\Sigma^2\beta + a_2\beta^T\Sigma^3\beta+\ldots+a_{k-2}\beta^T\Sigma^k\beta - \beta^T\Sigma\beta| =  |\int_{\infty}^\infty p(x)D(x)dx- \int_{\infty}^\infty xD(x)dx| \le \int_{\infty}^\infty |p(x)-x|D(x)dx \le \eps\|\beta\|^2.$ What remains is to find the degree $k$ polynomial $p(x)$ that approximates linear function well. We show that $\eps$ can be $O(1/k)$ in Theorem XXX in the supplementary material. This give us all the ingredient of the Algorithm 2, for the general covariance setting. \fi
\medskip
\noindent \textbf{Non-Identity Covariance:} Algorithm 2, to which Theorem~\ref{thm:gen} applies, describes our estimator in the general setting where the data has a non-isotropic covariance matrix.

\begin{algorithm}[H]
\begin{algorithmic}
\STATE \textbf{Input}: $X = 
\begin{bmatrix} 
\x_1\\
\vdots\\
\x_n
\end{bmatrix}, \quad 
\y = 
\begin{bmatrix} 
y_1\\
\vdots\\
y_n
\end{bmatrix}
,$ and degree $k$ polynomial $p(x)=\sum_{i=0}^{k-2} a_i x^{i+2}$ that approximates the function $f(x)=x$ for all $x\in [\sigma_{\min},\sigma_{\max}],$ where $\sigma_{\min}$ and $\sigma_{\max}$ are the minimum and maximum singular values of the covariance of the distribution from which the $\x_i$'s are drawn. 
\begin{itemize}
%\item Find a degree $k-1$ polynomial $p(x)=\sum_{i=0}^{k-1} a_i x^i $ that satisfies $\max_{x\in [s_l,s_r]} |x^2p(x)-x|\le \alpha$.
\item Set $A = XX^T$, and let $G = A_{up}$ be the matrix A with the diagonal and lower triangular entries set to zero.
\end{itemize}
\STATE \textbf{Output}: $\frac{y^Ty}{n} - \sum_{i=0}^{k-1} a_i\frac{y^TG^{i+1}y}{\binom{n}{i+2}}$
\caption{Estimating Linearity, General covariance}\label{alg:gen}
\end{algorithmic}
\end{algorithm}

The general form of the unbiased estimators of $\beta^T\Sigma^k\beta$ for $k\ge 2$ closely mirrors the case discussed for $k=2$, and the proof that these are unbiased is analogous to that for the $k=2$ setting explained above.  The analysis of the variance, however, becomes quite complicated, as a significant amount of machinery needs to be developed to deal with the combinatorial number of ``cases'' that are analogous to the 3 cases discussed in the variance bound for the $k=2$ setting of Proposition~\ref{prop:bs2b}.  Fortunately, we are able to borrow some of the approaches of the work~\cite{kong2017spectrum}, which also bounds similar looking moments (with the rather different goal of recovering the covariance spectrum).

The proof of correctness of Algorithm 2, establishing Theorem~\ref{thm:gen} is given in a self-contained form in the appendix.

\section{The Binary Classification Setting}\label{sec:estimators_bin}
In the binary classification setting, we assume that we have $n$ independent labeled samples $(\x_1,y_1),\ldots,(\x_n,y_n)$ where each $\x_i$ is drawn from a Gaussian distribution $\x_i\sim N(0,\Sigma)$. There is an underlying link function $g:\mathcal{R}\to [0,1]$ which is monotonically increasing and satisfies $g(0)=1/2$, and an underlying weight vector $\beta$, such that each label $y_i$ takes value $1$ with probability $g(\beta^T\x_i)$ and $-1$ with probability $1-g(\beta^T\x_i)$. 
Under this assumption, the goal of our algorithm is to predict the classification error of the best linear classifier. In this setting, the best linear classifier is simply the linear threshold function $sgn(\beta^T\x)$ whose classification error is $\frac{1}{2}-\E[|g(\beta^T\x)-\frac{1}{2}|]$. 

The core of the estimators in the binary classification setting is the following observation: given two independent samples $(\x_1,y_1),(\x_2,y_2)$ drawn from the linear classification model described above, $y_1y_2\x_1^T\x_2$ is an unbiased estimator of $4{\E[(g(\beta^T\x)-\frac{1}{2})\frac{\beta^T\x}{\beta^T\Sigma\beta}]}^2\beta^T\Sigma^2\beta$, simply because $\E[y\x]$ is an unbiased estimator of $\E[y\x] = 2\E[(g(\beta^T\x)-\frac{1}{2})\frac{\beta^T\x}{\beta^T\Sigma\beta}]\Sigma\beta$, as we show below in Proposition~\ref{prop:bin-unbiased}. We argue that such an estimator is sufficient for the setting where $\Sigma=I$ and the function $g(x)$ is known. To see that, taking the square root of the estimator yields an estimate of $\E[(g(\beta^T\x)-\frac{1}{2})\frac{\beta^T\x}{\|\beta\|}] = \E_{x\sim N(0,1)}[(g(\|\beta\|x)-\frac{1}{2})x]$, which is monotonically increasing in $\|\beta\|$, and hence can be used to determine $\|\beta\|$. The classification error, $\frac{1}{2}-\E_{x\sim N(0,1)}[|g(\|\beta\|x)-\frac{1}{2}|]$, can then be calculated as a function of the estimate of $\|\beta\|$. The following proposition proves the unbiasedness property of our estimator.
\begin{proposition}\label{prop:bin-unbiased}
$\E[y\x] = 2\E[(g(\beta^T\x)-\frac{1}{2})\frac{\beta^T\x}{\beta^T\Sigma\beta}]\Sigma\beta$.

\end{proposition}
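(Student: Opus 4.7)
The plan is to compute $\E[y\x]$ by conditioning first on $\x$, then on the scalar projection $\beta^T\x$, using the Gaussianity of $\x$.

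First I would compute the conditional expectation of the label: since $y\in\{-1,+1\}$ with $\Pr(y=1\mid\x)=g(\beta^T\x)$, we have $\E[y\mid\x]=2g(\beta^T\x)-1$. By the tower property,
\begin{equation*}
\E[y\x] \;=\; \E\bigl[\x\,\E[y\mid\x]\bigr] \;=\; 2\E[\x\, g(\beta^T\x)] - \E[\x].
\end{equation*}
Since $\x\sim N(0,\Sigma)$ has mean zero, the last term vanishes, and we may also freely subtract $1/2$ inside (giving $0$ by the same reasoning), obtaining $\E[y\x]=2\E\bigl[\x\,(g(\beta^T\x)-\tfrac{1}{2})\bigr]$.

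Next I would reduce the $d$-dimensional expectation to a one-dimensional one by conditioning on the scalar $u:=\beta^T\x$. Because $\x$ is Gaussian, $(\x,u)$ is jointly Gaussian with $\mathrm{Cov}(\x,u)=\Sigma\beta$ and $\mathrm{Var}(u)=\beta^T\Sigma\beta$, so the standard Gaussian conditional mean formula gives
\begin{equation*}
\E[\x \mid \beta^T\x] \;=\; \frac{\Sigma\beta}{\beta^T\Sigma\beta}\,\beta^T\x.
\end{equation*}
Applying the tower property once more,
\begin{equation*}
\E\!\left[\x\bigl(g(\beta^T\x)-\tfrac{1}{2}\bigr)\right]
\;=\;\E\!\left[\E[\x\mid\beta^T\x]\bigl(g(\beta^T\x)-\tfrac{1}{2}\bigr)\right]
\;=\;\E\!\left[\bigl(g(\beta^T\x)-\tfrac{1}{2}\bigr)\frac{\beta^T\x}{\beta^T\Sigma\beta}\right]\Sigma\beta,
\end{equation*}
and multiplying by $2$ yields exactly the claimed identity.

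The proof is essentially a two-line calculation once one recognizes the right tool; there is no real obstacle beyond identifying the Gaussian regression formula $\E[\x\mid\beta^T\x]=(\Sigma\beta/\beta^T\Sigma\beta)\,\beta^T\x$ (a Stein-type identity would also work, but the conditioning approach is the most direct route to the specific form appearing in the proposition statement, with the denominator $\beta^T\Sigma\beta$ and the factor $\beta^T\x$ arising explicitly rather than through a derivative of $g$).
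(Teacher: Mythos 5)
Your proof is correct and uses essentially the same idea as the paper: the paper decomposes $\x$ into $\frac{\beta^T\x}{\beta^T\Sigma\beta}\Sigma\beta$ plus an orthogonal remainder that is independent of $\beta^T\x$ by Gaussianity, which is precisely the content of your conditional-mean identity $\E[\x\mid\beta^T\x]=\frac{\Sigma\beta}{\beta^T\Sigma\beta}\beta^T\x$. Your version just makes the intermediate step $\E[y\mid\x]=2g(\beta^T\x)-1$ explicit, which the paper leaves implicit.
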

\begin{proof}
First we decompose $\x$ into the sum of two parts: $\frac{\beta^T\x}{\beta^T\Sigma\beta}\Sigma\beta$ and $\x-\frac{\beta^T\x}{\beta^T\Sigma\beta}\Sigma\beta$, where the second part  $\x-\frac{\beta^T\x}{\beta^T\Sigma\beta}\Sigma\beta$ is independent of $\beta^T\x$. Since $\E[(2g(\beta^T\x)-1)(\x-\frac{\beta^T\x}{\beta^T\Sigma\beta}\Sigma\beta)] = \E[2g(\beta^T\x)-1]\E[\x-\frac{\beta^T\x}{\beta^T\Sigma\beta}\Sigma\beta]=0$, we have that $\E[(2g(\beta^T\x)-1)\x] = \E[(2g(\beta^T\x)-1)\frac{\beta^T\x}{\beta^T\Sigma\beta}]\Sigma\beta = 2\E[(g(\beta^T\x)-\frac{1}{2})\frac{\beta^T\x}{\beta^T\Sigma\beta}]\Sigma\beta.$
\end{proof}

For the setting where $\x$ is drawn from a non-isotropic Gaussian with unknown covariance $\Sigma$, we apply a similar approach as in the linear regression case.  First, we obtain a series of unbiased estimators for $\E[(g(\beta^T\x)-\frac{1}{2})\frac{\beta^T\x}{\beta^T\Sigma\beta}]^2\beta\Sigma^k\beta$ with $k=2,3,\ldots$. Then, we find a linear combination of those estimates to yield an estimate of $\E[(g(\beta^T\x)-\frac{1}{2})\frac{\beta^T\x}{\beta^T\Sigma\beta}]^2\beta^T\Sigma\beta = \E[(g(\beta^T\x)-\frac{1}{2})\frac{\beta^T\x}{\|\beta^T\Sigma^{1/2}\|}]^2 = \E_{x\sim N(0,1)}[(g(\|\beta^T\Sigma^{1/2}\|x)-\frac{1}{2})x]$.  This latter expression can then be used to determine $\|\beta^T\Sigma^{1/2}\|$, after which the value of $\frac{1}{2}-\E_{x\sim N(0,1)}[|g(\|\beta^T\Sigma^{1/2}\|x)-\frac{1}{2}|]$ can be determined.

Our general covariance algorithm for estimating the classification error of the best linear predictor, to which Theorem~\ref{thm:bin-gen} applies, is the following:
 
\begin{algorithm}[H]
\begin{algorithmic}
\STATE \textbf{Input}: $X = 
\begin{bmatrix} 
\x_1\\
\vdots\\
\x_n
\end{bmatrix}, \quad 
\y = 
\begin{bmatrix} 
y_1\\
\vdots\\
y_n
\end{bmatrix}
,$ degree $k-1$ polynomial $p(x)=\sum_{i=0}^{k-1} a_i x^i$ that approximates the function $f(x)=x$ for all $x\in [\sigma_{\min},\sigma_{\max}],$ where $\sigma_{\min}$ and $\sigma_{\max}$ are the minimum and maximum singular values of the covariance of the distribution from which the $\x_i$'s are drawn, and function $F_g$ which maps $\E_{x\sim N(0,1)}[(g(\|\beta^T\Sigma^{1/2}\|x)-\frac{1}{2})x]$ to $\frac{1}{2}-\E_{x\sim N(0,1)}[(g(\|\beta^T\Sigma^{1/2}\|x)-\frac{1}{2})]$.  (See Figure~\ref{fig:sig} for a plot of such a function in the case that $g$ is the sigmoid function.)
\begin{itemize}
%\item Find a degree $k-1$ polynomial $p(x)=\sum_{i=0}^{k-1} a_i x^i $ that satisfies $\max_{x\in [s_l,s_r]} |x^2p(x)-x|\le \alpha$.
\item Set $A = XX^T$, and let $G = A_{up}$ be the matrix A with the diagonal and lower triangular entries set to zero.
\item Let $t = \frac{\sqrt{\sum_{i=0}^{k-1} a_i\frac{y^TG^{i+1}y}{\binom{n}{i+2}}}}{2}$, which is an estimate of $\E_{x\sim N(0,1)}[(g(\|\beta^T\Sigma^{1/2}\|x)-\frac{1}{2})x].$
\end{itemize}
\STATE \textbf{Output}: $F_g(t)$
\caption{Estimating Classification Error, General Covariance\label{alg:bin}}
\end{algorithmic}
\end{algorithm}

\begin{figure}[h]
\centering
    \includegraphics[width=0.25\linewidth]{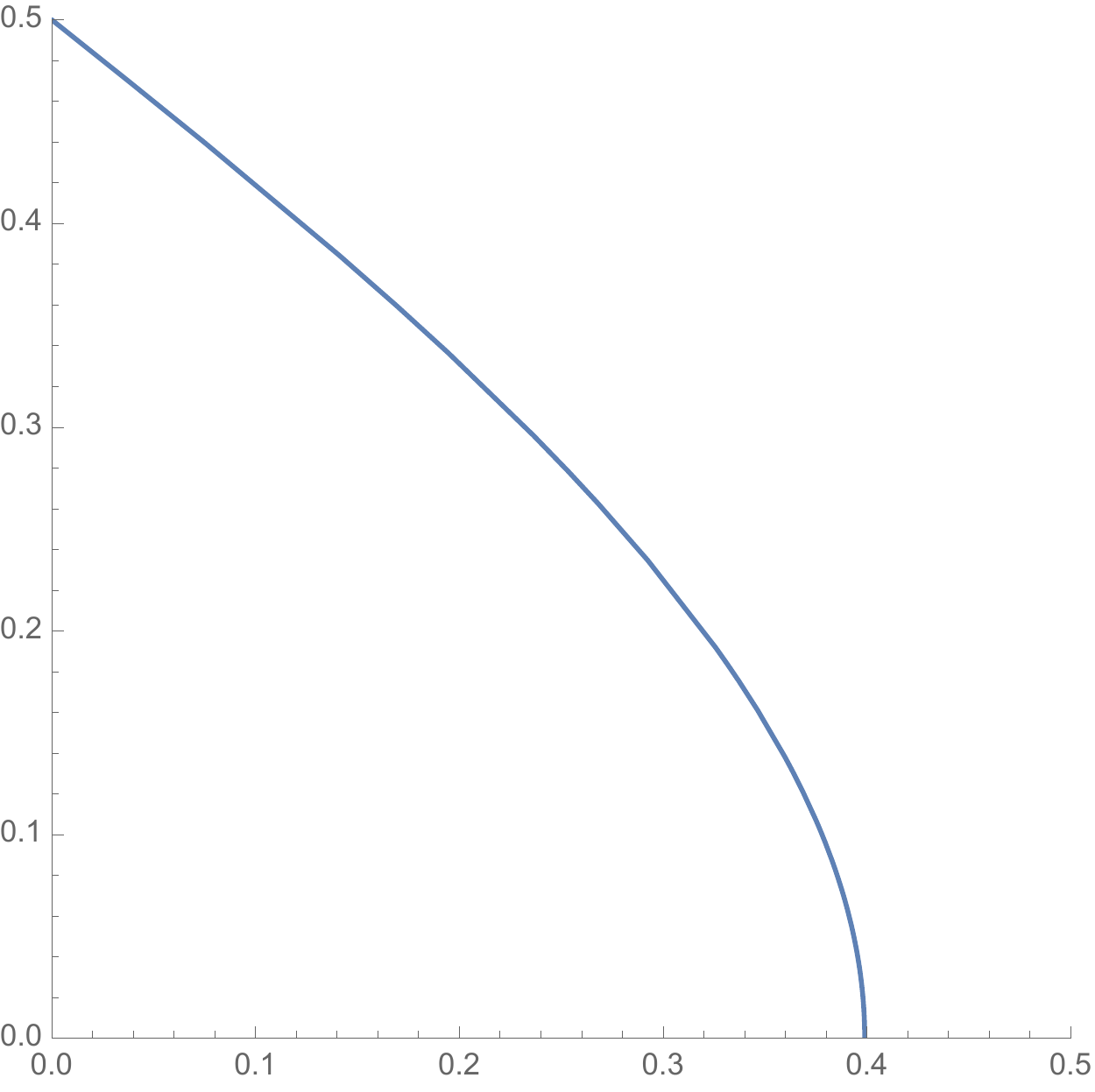}
    \caption{A plot of the function $F_g$ in the case where $g$ is the sigmoid function, showing the one-to-one relationship between the quantity $\E_{x\sim N(0,1)}[(g(\|\beta^T\Sigma^{1/2}\|x)-\frac{1}{2})x]$(x axis) which we estimate directly, and the quantity $\frac{1}{2}-\E_{x\sim N(0,1)}[(g(\|\beta^T\Sigma^{1/2}\|x)-\frac{1}{2})]$(y axis) which is the classification error of the best linear classifier.  As we show in Proposition~\ref{prop:bin-mapping}, an $\eps$-accurate approximation of the former can be mapped to a $\sqrt{\eps}$-accurate approximation of the latter.  As is evident from the figure, the derivative is bounded in magnitude provided the optimal error (y axis) is bounded away from 0, and hence in this regime the dependence improves from $\sqrt{\eps}$ to $O(\eps)$.\label{fig:sig}}    
\end{figure}

For convenience, we restate our main theorem for estimating the classification error of the best linear model:\\

\medskip
\vspace{.2cm}\noindent \textbf{Theorem~\ref{thm:bin-gen}.} \emph{ Suppose we are given $n<d$ labeled examples, $(\x_1,y_1),\ldots,(\x_n,y_n)$, with $\x_i$ drawn independently from a Gaussian distribution with mean $0$ and covariance $\Sigma$ where $\Sigma$ is an unknown arbitrary $d$ by $d$ real matrix.  Assuming that each label $y_i$ takes value $1$ with probability $g(\beta^T\x_i)$ and $-1$ with probability $1-g(\beta^T\x_i)$, where $g(x)=\frac{1}{1+e^{-x}}$ is the sigmoid function. There is an algorithm that takes $n$ labeled samples, parameter $k$, ${\sigma_{max}}$ and ${\sigma_{min}}$ which satisfies $\sigma_{max}I\succeq S^TS\succeq {\sigma_{min}I}$, and with probability $1-\tau$, outputs an estimate $\widehat{err_{opt}}$ with additive error $|\widehat{err_{opt}}-err_{opt}|\le c\Big(\sqrt{\min(\frac{1}{k^2},e^{-(k-1)\sqrt{\frac{\sigma_{min}}{\sigma_{max}}}})\sigma_{max}\|\beta\|^2+\frac{f(k)}{\tau}\sum_{i=2}^k \frac{d^{i/2-1/2}}{n^{i/2}}}\Big),$ where $err_{opt}$ is the classification error of the best linear classifier, $f(k)=k^{O(k)}$ and $c$ is an absolute constant.}
\medskip

We provide the proof of Theorem~\ref{thm:bin-gen} in Appendix~\ref{ap:bin-gen}.   As in the linear regression setting, the main technical challenge is bounding the variance of our estimators for each of the ``higher moments'', in this case for the estimators for the expressions $\E[(g(\beta^T\x)-\frac{1}{2})\frac{\beta^T\x}{\beta^T\Sigma\beta}]^2\beta\Sigma^k\beta$ for $k \ge 2$.  Our proof that these quantities can be accurately estimated in the sublinear data regime does leverage the Gaussianity assumption on $\x$, though does not rely on the assumption that the ``link function'' $g$ is the sigmoid.  The only portion of our algorithm and proof that leverages the assumption that $g$ is  the sigmoid function is in the definition and analysis of the function $F_g$ (of Algorithm~\ref{alg:bin}), which provides the invertible mapping between the quantity we estimate directly, $\E_{x\sim N(0,1)}[(g(\|\beta^T\Sigma^{1/2}\|x)-\frac{1}{2})x]$, and the classification error of the best predictor, $\frac{1}{2}-\E_{x\sim N(0,1)}[(g(\|\beta^T\Sigma^{1/2}\|x)-\frac{1}{2})]$.   Analogous results to Theorem~\ref{thm:bin-gen} can likely be obtained easily for other choices of link function, by characterizing the corresponding mapping $F_g$.

\section{Empirical Results}\label{sec:empirical}

We evaluated the performance of our estimators on several synthetic datasets, and on a natural language processing regression task.  In both cases, we explored the performance across a large range of dimensionalities. In both the synthetic and NLP setting, we compared our estimators with the ``naive'' unbiased estimator, $(\y-X\hat{\beta})^T(\y-X\hat{\beta})/(n-d)$, discussed in Section~\ref{sec:relatedWork}, which is only applicable in the regime where the sample size is at least the dimension.   In general, the results seem quite promising, with the estimators of Algorithms 1 and 2 yielding consistently accurate estimates of the proportion of the variance in the label that cannot be explained via a linear model.  As expected, the performance becomes more impressive as the dimension of the data increases.  All experiments were run in Matlab v2016b running on a MacBook Pro laptop, and the code will be available from our websites.

\subsection{Implementation Details}
Algorithms 1 and 2 were implemented as described in Section~\ref{sec:estimators}.  The only hitherto unspecified portion of the estimators is the choice of the coefficients $a_0, a_1,\ldots,a_k$ in the polynomial interpolation portion of Algorithm 2, which is necessary for our ``moment''-based approach to the non-isotropic setting.  Recall that the algorithm takes, as input, an upper and lower bound, $s_l, s_r$ on the singular values of the data distribution, and then approximates the linear function $f(x)=x$ in the interval $[s_l,s_r]$ via a polynomial of the form $a_0 x^2 + a_1 x^3 +\ldots+a_k x^{k+2}.$  The $\ell_{\infty}$ error of this polynomial approximation corresponds to an upper bound on the bias of the resulting estimator, and the variance of the estimator will increase according to the magnitudes of the coefficients $a_i$.   To compute these coefficients, we proceed via two small linear programs.  The variables of the LPs correspond to the $k$ coefficients, and the objective function of the first LP corresponds to minimizing the $\ell_{\infty}$ error of approximation, estimated based on a fine discretization of the range $[s_l,s_r]$ into $1000$ evenly spaced points.  Specifically, the function $f(x)=x$ is represented as a vector $(x_1,\ldots,x_{1000})$ with $x_1=s_l$ and $x_{1000}=s_r$ as are the basis functions $x^2,x^3,\ldots,x^k$.  The first LP computes the optimal $\ell_{\infty}$ approximation error given $s_l,s_r$ and the number of moments, $k$.  The second LP then computes coefficients that minimize the sum of the magnitudes of the coefficients (with the magnitude of the $i$th coefficient weighted by $2^i$ to account for the higher variance of these moments), subject to incurring an $\ell_{\infty}$ error that is not too much larger (at most a factor of $3/2$ larger) than the optimal one computed via the first LP.  We did not explore alternate weightings, and the results are similar if the factor of $3/2$ is replaced by any value in the range $[1.1, 2].$

\subsection{Synthetic Data Experiments}\label{sec:syn}
\noindent \textbf{Isotropic Covariance:} 
Our first experiments evaluate Algorithm 1 on data drawn from an isotropic Gaussian distribution.  In this experiment, $n$  datapoints $\x_1,\ldots,\x_n \in \R^d$ are drawn from an isotropic Gaussian, $N(0,I_d)$.  The labels $y_1,\ldots,y_n$ are computed by first selecting a uniformly random vector, $\beta$, with norm $\|\beta\| = \sqrt{1-\delta^2},$ and then setting each $y_i = \beta^T \x_i + \eta$ where $\eta$ is drawn independently from $N(0,\delta^2).$  The $y_i$'s are then scaled according to their empirical variance (simulating the setting where we do not know, a priori, that the labels have variance 1), and the magnitude of the fraction of this (unit) variance that is unexplained via a linear model is computed via Algorithm 1.   Figure~\ref{fig:iso_all} depicts the mean and standard deviation (over 50 trials) of the estimated value of unexplainable variance, $\delta^2$,  for three choices of the dimension, $d=$1,000, $d=$10,000, and $d=$50,000, and a range of choices of $n$ for each $d$.  We compare our estimator with the classic ``unbiased'' estimator in the settings when $n>d$.  We also include the test and training performance of the Bayes-optimal linear predictor, which corresponds to solving the $\ell_2$ regularized regression with optimal regularization parameter chosen as a function of the true variance of the noise.  As expected our estimator demonstrates an ability to accurately recover $\delta^2$ even in the sublinear data regime in which it is not possible to learn an accurate model, and the ``unbiased'' estimator has a variance that increases when $n$ is not much larger than $d$.   Figure~\ref{fig:iso_all} portrays the setting where $\delta^2 = 1/3$, and the results for other choices of $\delta^2 \in [0,1]$ are similar.

\noindent \textbf{Non-Isotropic Covariance:} 
We also evaluated Algorithm 2 on synthetic data that does not have identity covariance.   In this experiment, $n$  datapoints $x_1,\ldots,x_n \in \R^d$ are drawn from a uniformly randomly rotated Gaussian $G$ with covariance with singular values $1/d,2/d,3/d,\ldots,1$.  As above, the labels are computed by selecting $\beta$ uniformly and then scaling $\beta$ such $\Var[\beta^T x] = 1-\delta^2.$ The labels are assigned as $y_i=\beta^Tx_i +\eta$, and are then scaled according to their empirical variance.  We then applied Algorithm 2 with $k=1,2,3,4$ moments. Figure~\ref{fig:unif_all} depicts the mean and standard deviation (over 50 trials) of the recovered estimates of $\delta^2$ for the same parameter settings as in the isotropic case ($d=$1,000, $d=$10,000, and $d=$50,000, evaluated for a range of sample sizes, $n$).  For clarity, we only plot the results corresponding to using 2 and 3 moments; as expected, 2-moment estimator is significantly biased, whereas the for 3 (and higher) moments, the bias is negligible compared to the variance.  Again, the results are more impressive for larger $n$, and demonstrate the ability of Algorithm 2 to perform well in the sublinear sample setting where $n<d$.
\begin{figure}[h!]
\centering
    \includegraphics[width=1\linewidth]{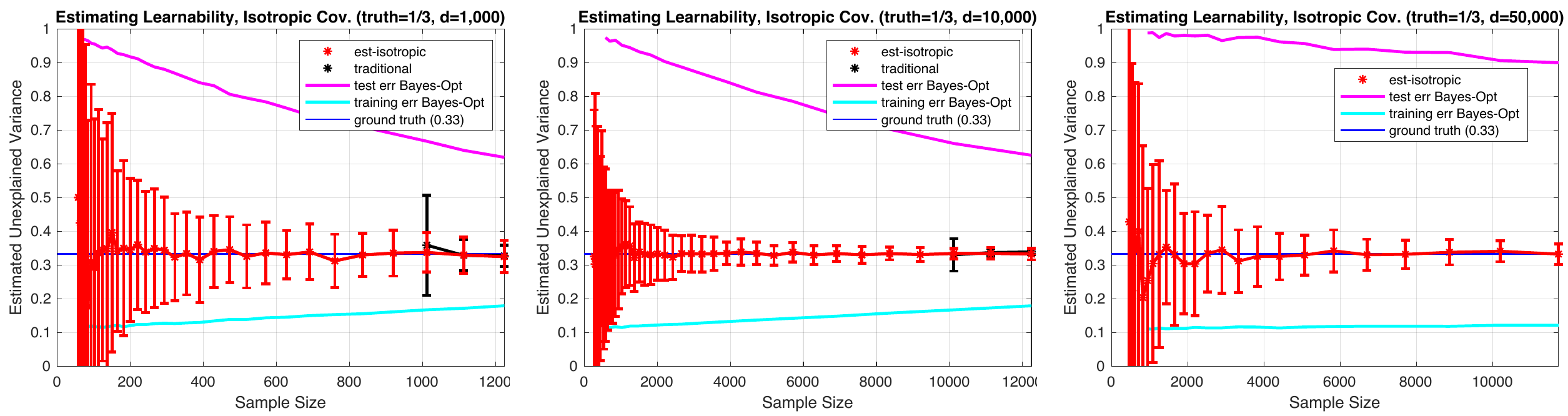}
    \caption{Evaluation of Algorithm 1 (est-isotropic) and the classic ``unbiased'' estimator (tradition) on synthetic identity-covariance data.  Plots depict the mean and standard deviation (based on 50 trials) of the estimate of the fraction of the label variance that cannot be explained via a linear model, in a variety of parameter regimes.   For comparison, we also included the test and training performance of the Bayes-optimal predictor (corresponding to $\ell_2$ regularized regression with optimal regularization parameter chosen as a function of the true variance of the noise).  See Section~\ref{sec:syn} for a complete description of the experimental setting.\label{fig:iso_all}}    
\end{figure}
\begin{figure}[h!]
\centering
    \includegraphics[width=1\linewidth]{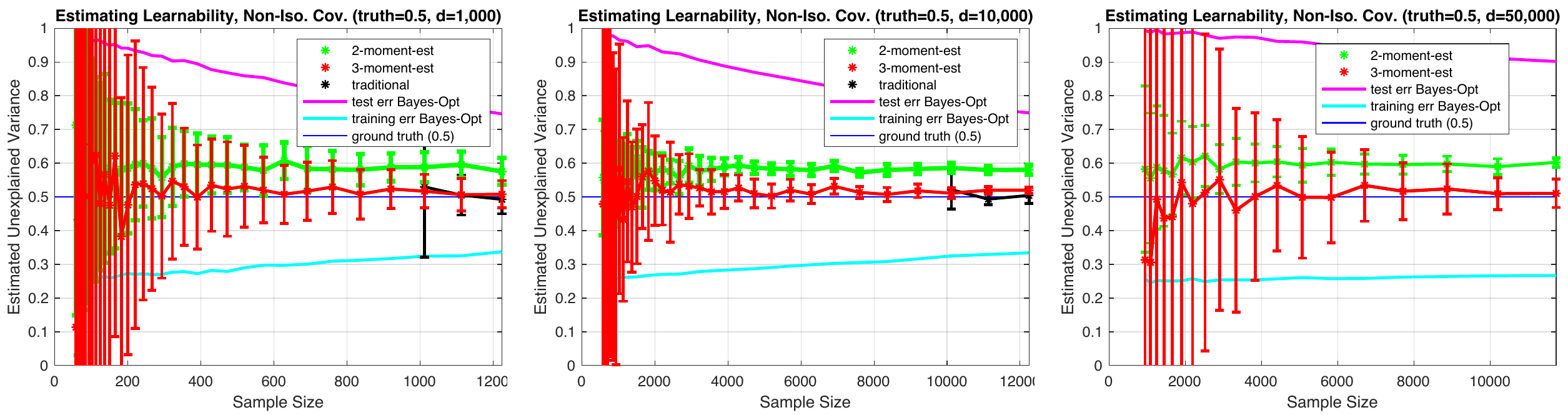}
    \caption{Evaluation of Algorithm 2 (using 2 and 3 moments) and the classic ``unbiased'' estimator (tradition) on synthetic data with covariance spectrum uniformly distributed between 0 and 1.  Plots depict mean and standard deviation (based on 50 trials). As expected, the 2-moment estimator has a significant bias. See Section~\ref{sec:syn} for a complete description of the experimental setting.  The test and training performance of the Bayes-optimal predictor are also shown, for comparison. \label{fig:unif_all}}    
\end{figure}
\subsection{NLP Experiments}\label{sec:nlp}
We also evaluated our approach on an amusing natural language processing dataset: predicting the ``point score'' of a wine (the scale used by \emph{Wine Spectator} to quantify the quality of a wine), based on a description of the tasting notes of the wine.  This data is from Kaggle's  Wine-Reviews dataset, originally scraped from Wine Spectator.   The dataset contained data on 150,000 highly-rated wines, each of which had an integral point score in the range $[80,100]$.  The tasting notes  consisted of several sentences, with each entry having a mean and median length of 40.1 and 39 words---95\% of the tasting notes contained between 20 and 70 words.  The following is a typical tasting note (corresponding to a 96 point wine): \emph{Ripe aromas of fig, blackberry and cassis are softened and sweetened by a slathering of oaky chocolate and vanilla. This is full, layered, intense and cushioned on the palate, with rich flavors of chocolaty black fruits and baking spices\ldots.} 

Our goal was to estimate the ability of a linear model (over various featurizations of the tasting notes) to predict the corresponding point value of the wine.  This dataset was well-suited for our setting because 1) the NLP setting presents a variety of natural high-dimensional featurizations, and 2) the 150k datapoints were sufficient to accurately estimate a ``ground truth'' prediction error, allowing us to approximate the residual variance in the point value that cannot be captured via a linear model over the specified features.  

We considered two featurizations of the tasting notes, both based on the publicly available 100-dimensional GloVe word vectors~\cite{pennington2014glove}.  The first, very naive featurization, consisted of concatenating the vectors corresponding to the first 20 words of each tasting note (this was capable of explaining $\approx 30\%$ of the variance of held-out points---for comparison, using the average of all the word vectors of each note explained $\approx 34\%$ of the variance).  We also considered a much higher-dimensional embedding, yielded by computing the $100^2$-dimensional outerproduct of vectors corresponding to each pair of words appearing in a tasting note, and then averaging these.  This was capable of explaining $\approx 53\%$ of the variance in the point scores.   In both settings, we leveraged the (unlabeled) large dataset to partially ``whiten'' the covariance, by reprojecting the data so as to have covariance with singular values in the range $[1/2,1],$ and removing the $5\%$ or $10\%$ of dimensions with smallest variance, yielding datasets with dimension 1,950 and 9,000, respectively.  The results of applying Algorithm 2 to these datasets are depicted in Figure~\ref{fig:wine_preds}.  The results are promising, and are consistent with the synthetic experiments.  We also note that the classic ``unbiased'' estimator is significantly biased when $n$ is close to $d$---this is likely due to the lack of independence between the ``noise'' in the point score, and the tasting note, and would be explained by the presence of sets of datapoints with similar point values and similar tasting notes.  Perhaps surprisingly, our estimator did not seem to suffer this bias.

\begin{figure}[h!]
\centering
    \includegraphics[width=1\linewidth]{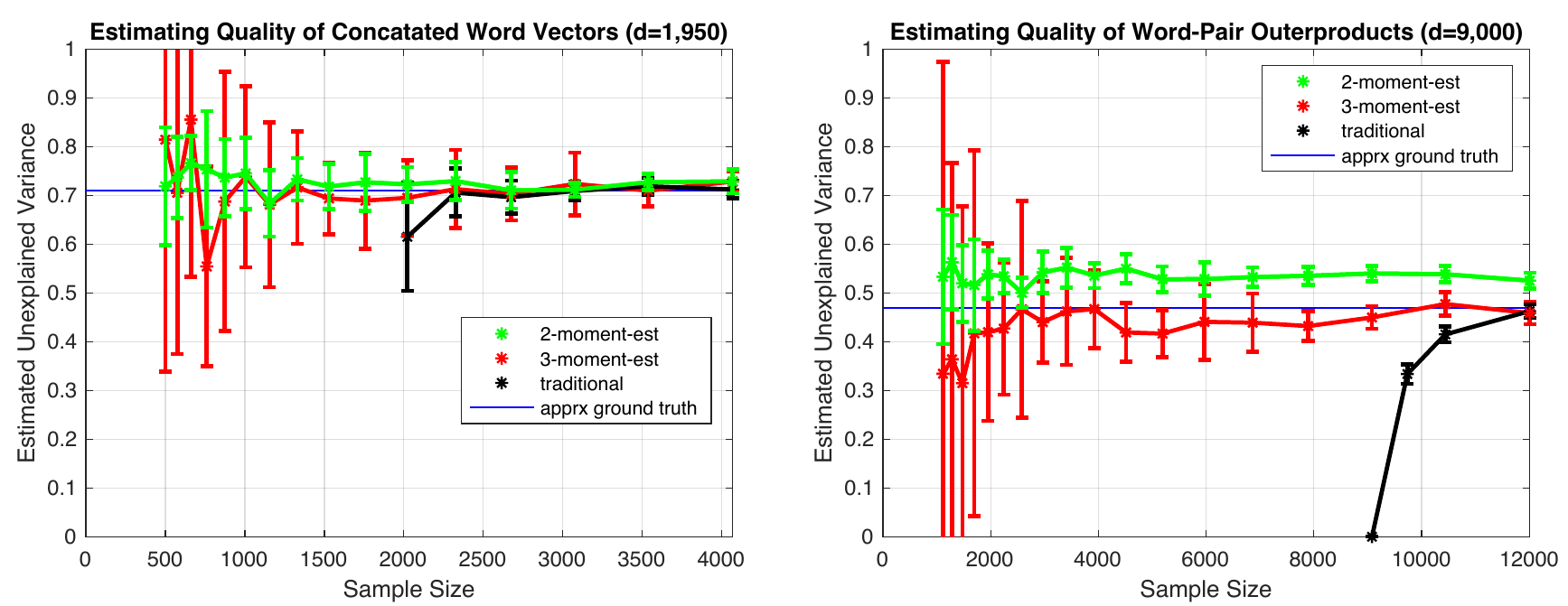}
    \caption{Evaluation of Algorithm 2 (using 2 and 3 moments) and the classic ``unbiased'' estimator (tradition) to predict the ``point value' of a wine, based on a $\approx 40$ word ``tasting note''. Ground truth is estimated based on 150k datapoints. All data is from the Kaggle ``Wine Reviews'' dataset.  The left plot depicts a naive featurization with $d=$1,950, and the right plot depicts a quadratic embedding of pairs of words, with $d=$9,000, which can explain more of the variance in the point scores.  See Section~\ref{sec:nlp} for a further discussion of these results.\label{fig:wine_preds}} 
\end{figure}
\subsection{Binary Classification Experiments}

We evaluated our estimator for the prediction accuracy of the best linear classifier on 1) synthetic data (with non-isotropic covariance) that was drawn according to the specific model to which our theoretical results apply, and 2) the MNIST hand-written digit image classification dataset.  Our algorithm performed well in both settings---perhaps suggesting that the theoretical performance characterization of our algorithm might continue to hold in significantly more general settings beyond those assumed in Theorem~\ref{thm:bin-gen}.

\subsubsection{Synthetic Data Experiments}\label{sec:bin-syn}

We evaluated Algorithm~\ref{alg:bin} on synthetic data with non-isotropic covariance. In this experiment, $n$  datapoints $x_1,\ldots,x_n \in \R^d$ are drawn from a uniformly randomly rotated Gaussian $G$ with covariance with singular values $1/d,2/d,3/d,\ldots,1$. Model parameter $\beta$ is a $d$-dimensional vector with $\|\beta\|=2$ that points in an uniformly random direction. Each label $y_1,\ldots,y_n$ is assigned by setting $y_i$ to be $1$ with probability $g(\beta^Tx_i)$ and $-1$ with probability $1-g(\beta^Tx_i)$, where $g(x)$ is the sigmoid function. We then applied Algorithm 3 with $k=3$ moments. Figure~\ref{fig:bin-syn} depicts the mean and standard deviation (over 50 trials) of the recovered estimates of the classification error of the best linear classifier.  We considered dimension $d=$1,000, and $d=$10,000, and evaluated each setting for a range of sample sizes, $n$.  For context, we also plotted the test and training accuracy of the logistic regression algorithm with $\ell_2$ regularization parameter $1/n$. Again, the performance of our algorithm seems more impressive for larger $d$, and demonstrates the ability of Algorithm~\ref{alg:bin} to perform well in the sublinear sample setting where $n<d$ and the (regularized) logistic regression algorithm can not recover an accurate classifier.
\begin{figure}[h]
\centering
    \includegraphics[width=1\linewidth]{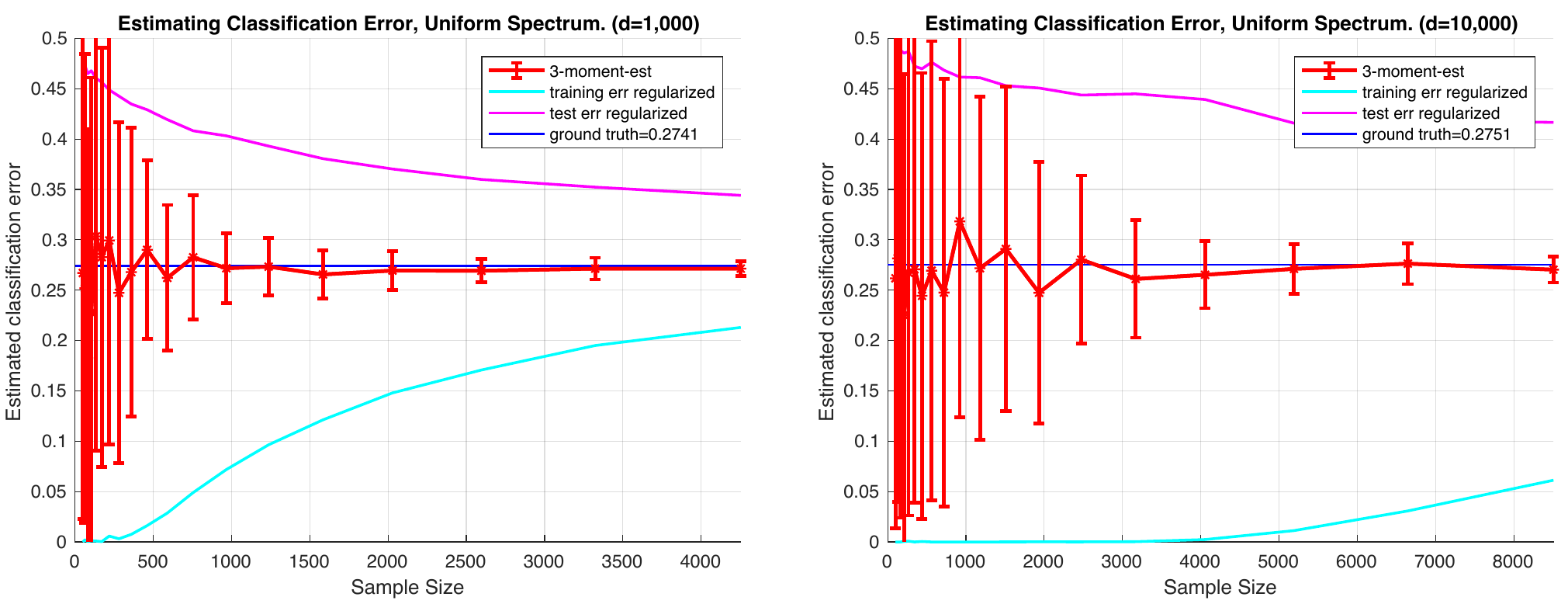}
    \caption{Evaluation of Algorithm \ref{alg:bin} using $3$ moments (3-moment-est) and the $\ell_2$ regularized logistic regression estimator (training err regularized, test err regularized) on synthetic data with covariance spectrum uniformly distributed between 0 and 1.  Plots depict mean and standard deviation (based on 50 trials). See Section~\ref{sec:bin-syn} for a complete description of the experimental setting.\label{fig:bin-syn}}    
\end{figure}

\subsubsection{MNIST Image Classification}\label{sec:bin-MNIST}
We also evaluated our algorithm for predicting the classification error on the MNIST dataset. The MNIST dataset of handwritten digits has a training set with 60,000 grey-scale images. Each image is a handwritten digit with $28\times28$ resolution, and each grey-scale pixel is represented as an integer between 0 and 255. Our goal is to estimate the ability of a linear classifier to predict the label (digit) given the image. Since we are only considering binary linear classifier in this work, we take digits ``0'',``1'',``2'',``3'',``4'' as positive examples and ``5'',``6'',``7'',``8'',``9'' as negative examples, and the task is to determine which group an image belongs to. Since our algorithm requires the dataset to be balanced in terms of positive and negative examples, we subsample from the majority class to obtain a balanced dataset with $58808$ total training examples (20404 each class). Each image is unrolled to a $28^2= 784$ dimensional real vector ($d=784$). All the data are $0$ centered and scaled so the largest singular value of the sample covariance matrix is $1$. For comparison, we implemented logistic regression with no regularization and with  $\ell_2$ regularization with parameter $\lambda=1/n$. We also use the simpler (and more robust) function $F_g(t) = 0.5-t$, which is the linear approximation to the $F_g$ that corresponds to the sigmoid under the Gaussian distribution. Algorithm 3 is applied with $k=3$ and $k=8$ moments. For each sample size $n$, we randomly select $n$ samples from the set of size 58,808. To evaluate the test performance of logistic regression, we use the remaining examples as a ``test'' set.  For each algorith, we repeat 50 times, reselecting the $n$ samples, etc.  Figure~\ref{fig:bin-MNIST} depicts the mean and standard deviation (over 50 trials) of the recovered estimate of the classification error of the best linear classifier. 

As shown in the plot, even with $1,500\approx 2d$ samples, the training error of the unregularized logistic regression is still $0$, meaning the data is perfectly separable, and the learned classifier does not generalize. Although the conditions of Theorem~\ref{thm:bin-gen} obviously do not hold for the MNIST dataset, our algorithm still provides a reasonable estimate even with less than $400\approx d/2$ samples. One interesting phenomenon in the MNIST experiment compared to the synthetic dataset is that the high order moments are smaller both in terms of the value and standard deviation, hence using more moments does not introduce significantly more variance, and still decreases the bias. In the experiments we found that the estimates computed with Algorithm~\ref{alg:bin} are stable even with 12 moments. 

\subsubsection*{Acknowledgments}
This work was supported by NSF awards AF-1813049 and CCF-1704417, ONR Young Investigator Award N00014-18-1-2295, and a Google Faculty Fellowship.

\begin{figure}[h!]
\centering
    \includegraphics[width=.5\linewidth]{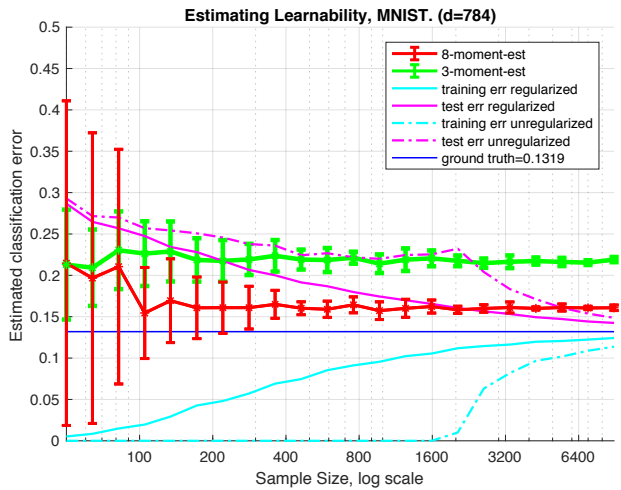}
    \caption{Evaluation of Algorithm~\ref{alg:bin} using $3$ and $8$ moments to predict the classification error of the best linear classifier on the MNIST dataset (to distinguish the class of digits $\{0,1,2,3,4\}$ versus $\{5, 6, 7, 8, 9\}$.  For comparison, we plot the test and training error for unregularized logistic regression and $\ell_2$-regularized logistic regression. The ground truth is the average of the training and testing classification error based on training on 50k datapoints and testing on the remaining datapoints. Plots depict mean and standard deviation (based on 50 trials). See Section~\ref{sec:bin-MNIST} for a complete description of the experimental setting.\label{fig:bin-MNIST}}    
\end{figure}
\newpage
\bibliography{reference}
\bibliographystyle{plain}

\newpage
\newpage

\iffalse
\begin{figure}
\centering
    \includegraphics[width=0.7\linewidth]{d1kIsotropic}
    \caption{XX}
    \label{fig:sub1}
\end{figure}
\begin{figure}
    \includegraphics[width=0.7\linewidth]{d1kUnif}
\end{figure}
\begin{figure}
    \includegraphics[width=0.7\linewidth]{d10kIsotropic}
\end{figure}
\begin{figure}
    \includegraphics[width=0.7\linewidth]{d10kUnif}
\end{figure}
\fi
\onecolumn

\appendix
In this appendix, we give self-contained proofs of the theoretical results.

\section{Identity Covariance}

Suppose we are given $n$ labeled examples, $(\x_1,y_1),\ldots,(\x_n,y_n)$, with $(\x_i,y_i)$ drawn independently from a $d+1$-dimension distribution where $\x_i$ has mean zero and identity covariance, and $y_i$ has mean zero and variance $1$, and the fourth moments of the joint distribution are bounded by a constant $C$, namely for all vectors $\u,\v \in \R^d$, $\E[(\x^T\u)^2(\x^T\v)^2]\le C\E[(\x^T\u)^2]\E[(\x^T\v)^2]$ and $\E[(\x^T\u)^2y^2]\le C\E[(\x^T\u)^2]\E[y^2]$. We have the following theorem which guarantees the accuracy of the estimate provided by Algorithm~\ref{alg:iso}.

\vspace{.2cm}\noindent \textbf{Theorem~\ref{thm:agnostic}.} \emph{  
With probability $1-\tau$, Algorithm~\ref{alg:iso} outputs $\hat{\delta}^2$, which is an estimate of $\delta^2 : = \min_{\beta'}\E[({\beta'}^T\x-y)^2]$ that satisfies $|\hat{\delta}^2-\delta^2|\le O(C\frac{\sqrt{d+n}}{\tau n})$ 
}

The proof will follow from combining the fact that $\frac{\y^T G\y}{\binom{n}{2}}$ and  $\frac{\y^T\y}{n}$ are unbiased estimators of $\|\beta\|^2$ and $\E[y^2]$, respectively, and Proposition~\ref{prop:bs2b} which bounds the variances as $\Var[\frac{\y^T G\y}{\binom{n}{2}}] = O(\frac{C^2(d+n)}{n^2})$ and $\Var[\frac{\y^T\y}{n}] = \frac{\E[y^4]}{n} = O(\frac{1}{n})$. Through a simple Chebyshev's inequality argument, we achieve the desired error bound.

\begin{proof}[Proof of Theorem~\ref{thm:agnostic}] 
By Proposition~\ref{prop:bs2b}, $\Var[\frac{\y^TG\y}{\binom{n}{2}}] \le O( \frac{C^2(d+n)}{n^2}).$ Hence by Chebyshev's inequality, with probability $1-\tau/2$, $|\frac{\y^TG\y}{\binom{n}{2}}-\|\beta\|^2|\le O(C\frac{\sqrt{d+n}}{\tau n})$. Again by Chebyshev's inequality, with probability at least $1-\tau/2$, $|\frac{\y^T\y}{n}-\E[y^2]|\le O\big(\frac{1}{\sqrt{n}}\big)$. Thus we have $|\frac{y^Ty}{n} - \frac{y^TGy}{\binom{n}{2}}-\min_{\beta'}\E[({\beta'}^T\x-y)^2]|= |\frac{y^Ty}{n} - \frac{y^TGy}{\binom{n}{2}}-(\E[y^2]-\|\beta\|^2)]|\le O\big(C\frac{\sqrt{d+n}}{\tau n}\big)$ with probability $1-\tau$. 
\end{proof}

In case where the covariance of $\x$ is close to identity, the estimation error of Algorithm~\ref{alg:iso} is formalized in the following corollary. 

\vspace{.2cm}\noindent \textbf{Corollary~\ref{cor:approxid}.} \emph{Suppose we are given $n$ labeled examples, $(\x_1,y_1),\ldots,(\x_n,y_n)$, with $(\x_i,y_i)$ drawn independently from a $d+1$-dimensional distribution where $\x_i$ has mean zero and covariance $\Sigma$ which satisfies $(1-\epsilon)I \preceq \Sigma\preceq (1+\epsilon)I$, and $y_i$ has mean zero and variance $1$, and the fourth moments of the joint distribution $(x,y)$ is bounded by $C$. There is an estimator $\hat{\delta^2}$, that with probability $1-\tau$, approximates $\min_{\beta}\E[({\beta}^T\x-y)^2]$ with additive error $O(C\frac{\sqrt{d+n}}{\tau n}+\epsilon)$.
}
\begin{proof}
For arbitrary $\Sigma$, $\frac{\y^T G\y}{\binom{n}{2}}$ is an unbiased estimator of $\beta\Sigma^2\beta$ with variance $O(\frac{C^2(d+n)}{n^2})$. In the setting where $(1-\eps)I\preceq\Sigma\preceq (1+\eps)I$, we have $\beta^T\Sigma^2\beta-\beta^T\Sigma\beta =O(\eps)$. Following the same argument of Theorem~\ref{thm:agnostic}, we have $|\frac{y^Ty}{n} - \frac{y^TGy}{\binom{n}{2}}-\min_{\beta'}\E[({\beta'}^T\x-y)^2]|= |\frac{y^Ty}{n} - \frac{y^TGy}{\binom{n}{2}}-(\E[y^2]-\beta^T\Sigma\beta)]| = |\frac{y^Ty}{n} - (\frac{y^TGy}{\binom{n}{2}}-\beta^T\Sigma^2\beta)-\big(\E[y^2]-(\beta^T\Sigma\beta-\beta^T\Sigma^2\beta)\big)]|\le O\big(C\frac{\sqrt{d+n}}{\tau n}+\eps\big)$ with probability $1-\tau$.
\end{proof}

\section{General Covariance}
Recall that in the general covariance setting, each sample $(\x,y)$ is drawn independently as follows:
First $x = S\z$ is drawn where $S$ is a $d\times d$ real matrix and $\z$ is a $d$-dimensional vector whose entries are i.i.d distributed with $\E[z_i]=0,\E[z_i^2]=1,\E[z_i^4]\le C$. Then let $y=\beta^T\x+\eta$ where $\eta$ is draw from a distribution $E$ with mean $0$ and variance $\delta^2$. The covariance of $\x$ is $\Sigma = SS^T$ and the variance of $y$ is $\beta^T\Sigma\beta+\delta^2$ which is assumed to be $1$ for the rest of the section. Notation $SS^T$ is always equivalent to $\Sigma$, and both are used throughout this section. 

We restate the main theorem of this section for convenience.

\vspace{.2cm}\noindent \textbf{Theorem~\ref{thm:gen}.} \emph{
Suppose we are given $n<d$ labeled examples, $(\x_1,y_1),\ldots,(\x_n,y_n)$, with $\x_i=S\z_i$ where $S$ is an unknown arbitrary $d \times d$ real matrix and each entry of $\z_i$ is drawn independently from a one dimensional distribution with mean zero, variance $1$, and constant fourth moment.  Assuming that each label $y_i = \x_i \beta + \eta$, where the noise $\eta$ is drawn independently from an unknown distribution $E$ with mean 0 and variance $\delta^2$, and the labels have been normalized to have unit variance. There is an algorithm that takes $n$ labeled samples, parameter $k$, ${\sigma_{max}}$,${\sigma_{min}}$ which satisfies $\sigma_{max}I\succeq S^TS\succeq {\sigma_{min}I}$, and with probability $1-\tau$, outputs an estimate $\hat{\delta}^2$ with additive error $|\hat{\delta}^2-\delta^2|\le \min(\frac{2}{k^2},2e^{-(k-1)\sqrt{\frac{{\sigma_{min}}}{{\sigma_{max}}}}})){\sigma_{max}}\|\beta\|^2+\frac{f(k)}{\tau}\sum_{i=2}^k \frac{d^{i/2-1/2}}{n^{i/2}},$ where $f(k)=k^{O(k)}$.}

Recall that the variance of $y$ is $\beta^T\Sigma\beta+\delta^2$ and can be estimated up to error $\frac{1}{\sqrt{n}}$. Hence the only missing part for estimating $\delta^2$ is an estimator of $\beta^T\Sigma\beta$. The proof of our main theorem relies on Propositions~\ref{prop:gen-unbiased}, ~\ref{prop:gen-var}, and~\ref{prop:poly-approx}.   Proposition~\ref{prop:gen-unbiased} shows that there is a series of unbiased estimators of $\beta^T\Sigma^k\beta$ for each $k\ge 2$. Proposition~\ref{prop:gen-var} gives a variance bound for the series estimators, which yields accuracy guarantees when combined with Chebyshev's inequality.  Finally, Proposition~\ref{prop:poly-approx} provides a series of polynomials that approximates $f(x)=x$. Combining these estimates of $\beta^T\Sigma^k\beta$ for $k=1,2,\ldots$ and the coefficients provided by Proposition~\ref{prop:poly-approx}, we obtain an accurate estimate of $\beta^T\Sigma\beta$.

\begin{proposition}\label{prop:gen-unbiased}
$\E[\y^TG^k\y] = \binom{n}{k+1}\beta^T\Sigma^{k+1}\beta$.
\end{proposition}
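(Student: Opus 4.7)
The plan is to expand $\y^T G^k \y$ as a sum over index tuples and use the strict upper triangularity of $G$ to collapse the sum to strictly increasing tuples. Concretely, since $G_{ij} = \x_i^T\x_j$ when $i<j$ and $0$ otherwise, the standard matrix-power expansion gives
\[
\y^T G^k \y \;=\; \sum_{i_0,i_1,\ldots,i_k} y_{i_0} y_{i_k} \, G_{i_0 i_1} G_{i_1 i_2} \cdots G_{i_{k-1} i_k},
\]
and only tuples with $i_0 < i_1 < \cdots < i_k$ contribute, each with weight $y_{i_0} y_{i_k} \, \x_{i_0}^T \x_{i_1} \x_{i_1}^T \x_{i_2} \cdots \x_{i_{k-1}}^T \x_{i_k}$.

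Next I would take the expectation of a single such term using the independence of the samples $(\x_{i_j}, y_{i_j})$. The expectation factors across the indices: the two endpoint factors yield $\E[y_{i_0}\x_{i_0}] = \Sigma\beta$ and $\E[y_{i_k}\x_{i_k}] = \Sigma\beta$ (since $y = \beta^T\x + \eta$ with $\eta$ mean-zero and independent of $\x$), while each interior factor yields $\E[\x_{i_j}\x_{i_j}^T] = \Sigma$ for $1 \le j \le k-1$. Reading the resulting matrix product from left to right gives
\[
(\Sigma\beta)^T \, \Sigma^{k-1} \, (\Sigma\beta) \;=\; \beta^T \Sigma^{k+1} \beta,
\]
independent of which particular increasing tuple was chosen.

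Finally, summing over all $\binom{n}{k+1}$ strictly increasing tuples $i_0 < \cdots < i_k$ in $\{1,\dots,n\}$ yields the claimed identity $\E[\y^T G^k \y] = \binom{n}{k+1}\, \beta^T \Sigma^{k+1}\beta$. There is no real obstacle here; the only thing to be careful about is the bookkeeping at the two ``boundary'' vertices (which carry a $y$ factor and therefore produce $\Sigma\beta$ rather than $\Sigma$), versus the interior vertices (which produce the covariance matrix $\Sigma$). The argument uses only mean-zero noise and independence of the samples, and does not require the structural assumption $\x = S\z$, so the identity also holds in the more general setting of Proposition~\ref{prop:1main}.
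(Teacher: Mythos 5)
Your proof is correct and follows essentially the same route as the paper's: expand $\y^T G^k \y$ via the strict upper-triangularity of $G$ to restrict to increasing index tuples, factor the expectation across independent samples to get $\beta^T\Sigma^{k+1}\beta$ per tuple, and count $\binom{n}{k+1}$ tuples. The extra detail you give about the endpoint versus interior factors is just an elaboration of the paper's one-line ``taking the expectation'' step, and your closing remark about the generality of the argument is accurate but does not change the approach.
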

\begin{proof}
Expanding $\y^TG^k\y$ we get the following summation:
$$\sum_{i_1,i_2,\ldots,i_{k+1}}y_{i_1}G_{i_1,i_2}\ldots G_{i_k,i_{k+1}}y_{i_{k+1}}.$$
Because $G$ is a strictly upper triangular matrix, this is equivalent to: 
$$\sum_{i_1<i_2<\ldots<i_{k+1}}y_{i_1}G_{i_1,i_2}\ldots G_{i_k,i_{k+1}}y_{i_{k+1}}.$$
By the definition of $G$, the formula is further equal to
\begin{equation*}
\sum_{i_1<i_2<\ldots<i_{k+1}}y_{i_1}\x_{i_1}^T\x_{i_2}\ldots \x_{i_k}^T\x_{i_{k+1}}y_{i_{k+1}}.
\end{equation*}
Finally, taking the expectation we get $\beta^T \Sigma^{k+1}\beta$.
\end{proof}
\begin{proposition}\label{prop:gen-var}
$\Var[\frac{\y^TG^k\y}{\binom{n}{k+1}}]\le f(k)\max(\frac{d^k}{n^{k+1}},\frac{1}{n})$, where $f(k)=2^{12(k+2)}(k+1)^{6(k+1)}C^{k+1}\sigma_1^{2k}$.
\end{proposition}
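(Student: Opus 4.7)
The plan is to expand $\Var[y^T G^k y]$ as a double sum of covariances indexed by pairs of ordered $(k+1)$-tuples of sample indices, then bound contributions grouped by the overlap structure of the two tuples. Writing
$$y^T G^k y = \sum_I T_I, \qquad T_I := y_{i_1} y_{i_{k+1}} \prod_{s=1}^{k} \x_{i_s}^T \x_{i_{s+1}},$$
where $I = (i_1 < \cdots < i_{k+1})$ ranges over ordered $(k+1)$-tuples in $[n]$, we have $\Var[y^T G^k y] = \sum_{I,J} \mathrm{Cov}(T_I, T_J)$. When $I \cap J = \emptyset$, the variables in $T_I$ and $T_J$ are disjoint and independent, so $\mathrm{Cov}(T_I, T_J) = 0$. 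It therefore suffices to partition the remaining pairs by the size $s := |I \cap J| \in \{1,\dots,k+1\}$ and to bound each class separately.

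The combinatorial count is straightforward: the number of pairs with $|I \cap J| = s$ is at most a purely $k$-dependent factor $c_k \le (k+1)^{O(k)}$ times $n^{2(k+1)-s}$. To bound $|\mathbb{E}[T_I T_J]|$ for a fixed overlap pattern, first substitute $y_i = \beta^T \x_i + \eta_i$ and expand; since the $\eta_i$ are independent of the $\x$'s and mean-zero, only terms in which each $\eta_i$ appears an even number of times survive, and each such $\eta$-pair contributes a bounded factor of $\delta^2 \le 1$. One then conditions on the shared variables $\{\x_i : i \in I \cap J\}$ and integrates out the non-shared $\x_i$'s: each non-shared middle index contributes a factor of $\Sigma$ (operator norm $\le \sigma_1$), and each non-shared endpoint (paired with $\beta^T \x_i$) contributes $\Sigma\beta$. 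What remains is a scalar in the $s$ shared variables, built from two chains of inner products interleaved with $\Sigma$'s and $\beta$'s.

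Taking the final expectation over the shared variables via $\x_i = S\z_i$ and applying the bounded fourth-moment assumption on the coordinates of $\z_i$ produces a product of factors of two types: $\beta^T \Sigma^a \beta \le \sigma_1^{a-1}$ (using the normalization $\beta^T\Sigma\beta \le 1$) and $\mathrm{tr}(\Sigma^b) \le d\,\sigma_1^b$. A careful count of the possible Wick-style contractions shows that the total power of $\Sigma$'s appearing is at most $2k$ and the number of trace factors produced is at most $s-1$. Hence $|\mathbb{E}[T_I T_J]| \le h(k)\, C^{k+1} \sigma_1^{2k}\, d^{s-1}$ for some $h(k) = k^{O(k)}$. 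Combining with the count:
$$\Var[y^T G^k y] \le \sum_{s=1}^{k+1} c_k\, n^{2(k+1)-s}\cdot h(k)\, C^{k+1} \sigma_1^{2k}\, d^{s-1} \le f(k)\,\bigl(n^{2k+1} + n^{k+1} d^k\bigr),$$
whose two dominant contributions come from $s=1$ (no trace factors, $n^{2k+1}$ pairs) and $s=k+1$ (up to $k$ trace factors, $n^{k+1}$ pairs). Dividing by $\binom{n}{k+1}^2 = \Theta(n^{2k+2}/((k+1)!)^2)$ yields the claimed bound $f(k)\max(d^k/n^{k+1},\,1/n)$.

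The main technical obstacle is the third step: the Wick-style bookkeeping needed to bound $|\mathbb{E}[T_I T_J]|$ for arbitrary interior overlap patterns. Concretely, one must enumerate, for each pattern of shared positions across the two chains, which pairings of $\z$-coordinates survive the expectation, upper-bound the resulting products of $\mathrm{tr}(\Sigma^b)$ and $\beta^T\Sigma^a\beta$ factors, and verify the ``at most $s-1$ traces'' count in every case. This combinatorics is delicate (and is the source of the $k^{O(k)}$ factor in $f(k)$); the argument can be carried out by adapting the contraction-graph bookkeeping used in Theorem~1 of~\cite{kong2017spectrum}, which estimates closely related $\mathrm{tr}(\Sigma^k)$-type moments in the sublinear regime.
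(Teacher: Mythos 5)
Your outline follows the same high-level strategy as the paper: expand $\Var[\y^T G^k \y]$ as a double sum over ordered index tuples, group the summands by the overlap structure of the two tuples, bound each $|\E[T_I T_J]|$ by Wick-style contraction bookkeeping for the coordinates of the $\z$'s, and combine with a count of the pairs at each overlap size. You also correctly identify that the technical core is that bookkeeping and that it should follow the trace-moment techniques of~\cite{kong2017spectrum}; this is exactly what the paper does, building a ``PQ-graph'' formalism (a partition $P$ of the sample indices and a refinement $Q$ of the coordinate indices, with arcs and free cycles in the induced multigraph) to organize the contractions.

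The genuine gap is that the central combinatorial claim of your third step---that the contraction produces at most $s-1$ trace factors, giving $|\E[T_I T_J]|\le h(k)\,C^{k+1}\sigma_1^{2k}\,d^{s-1}$ for $s=|I\cap J|$---is asserted rather than proven, and proving it is essentially the entire content of the paper's argument. The paper establishes it via Lemma~\ref{lem:partitionsize1} (together with the analogues Lemma~\ref{lem:partitionsize2} and Lemma~\ref{lemma:lem:partitionsize3} for the terms where noise survives), which bound the number of arcs and free cycles in the Q-graph, plus Facts~\ref{fact:sumQ}--\ref{fact:sumQ3}, which convert those bounds into the $d^{s-1}$ estimate; the crucial $-1$ in the exponent comes from the arc or cycle passing through the $\beta\beta^T$-edges contributing $\beta^T\Sigma\beta\le 1$ rather than a factor of $d$, and needs the corner case in Corollary~\ref{cor:partitionsize1}. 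Your treatment of the noise is also too coarse: surviving $\eta$-pairs shorten the remaining $\Sigma$-chain and change the achievable $d$-exponent, which is why the paper separately handles the cases where $\pi_1,\pi_{k+1},\pi'_1,\pi'_{k+1}$ take $4$, $3$, or $2$ distinct values. Without the ``$\le s-1$'' count one would only get the weaker bound $d^s$, off by a full factor of $d$, which destroys the claimed rate.
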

The proof of this proposition is quite involved, and Section~\ref{app:gen-var} is devoted to this proof.

\begin{proposition}\label{prop:poly-approx}
For any integer $i \ge 2$, there is a degree $i$ polynomial $p_i(x)$ with no linear or constant terms, satisfying $|p_i(x)-x|\le \frac{2}{i(i+1)}$ for all $x\in [0,1]$. For any integer $i \ge 2$, there is a degree $i$ polynomial $p_i(x)$ with no linear or constant terms, satisfying  $|p(x)-x|\le 2e^{-(i-1)\sqrt{b}}$ for all $x\in [b,1]$.
\end{proposition}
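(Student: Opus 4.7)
The plan is to prove the two bounds separately, each via an explicit polynomial construction based on the Chebyshev polynomial of the first kind, $T_i$.

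For the first bound (degree-$i$ approximation of $f(x)=x$ on $[0,1]$ with uniform error $2/(i(i+1))$), I would use the three standard facts $T_i(1)=1$, $T_i'(1)=i^2$, and $|T_i(t)|\le 1$ for all $t\in[-1,1]$. Define
\[
p_i(x) \;:=\; x \;-\; \frac{1 - T_i(1-2x)}{2i^2}.
\]
This polynomial has degree $i$; $p_i(0) = -(1-T_i(1))/(2i^2) = 0$ (so there is no constant term); $p_i'(0) = 1 - T_i'(1)/i^2 = 0$ (so there is no linear term); and $|p_i(x)-x| = |1-T_i(1-2x)|/(2i^2) \le 1/i^2 \le 2/(i(i+1))$ for every $x\in[0,1]$ and $i\ge 2$.

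For the second bound (degree-$i$ approximation of $f(x)=x$ on $[b,1]$ with error $2e^{-(i-1)\sqrt{b}}$), I would instead build $p_i$ by first approximating $1/x$ on $[b,1]$ with a degree-$(i-2)$ polynomial $q$ and then setting $p_i(x) := x^2\,q(x)$. The required approximation to $1/x$ follows from the classical Chebyshev expansion on $[b,1]$: under the affine change of variables that sends $[b,1]$ to $[-1,1]$, approximating $1/x$ becomes approximating $1/(a+t)$ with $a=(1+b)/(1-b)$, whose Chebyshev coefficients decay at geometric rate $1/z$, where $z=(1+\sqrt{b})/(1-\sqrt{b})$. Since $\ln z \ge 2\sqrt{b}$, truncating the expansion at degree $i-2$ yields a polynomial $q$ with $\sup_{x\in[b,1]}|q(x)-1/x|\le 2e^{-(i-1)\sqrt{b}}$; this is precisely the content of Theorem~7.8 of~\cite{orecchia2012approximating} together with Theorem~5.5 of~\cite{devore1993constructive}. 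Then $p_i(x) = x^2 q(x)$ clearly has degree $i$ with no constant or linear term, and for $x\in[b,1]$,
\[
|p_i(x)-x| \;=\; x^2\,|q(x) - 1/x| \;\le\; 2e^{-(i-1)\sqrt{b}},
\]
using $x\le 1$.

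The main obstacle is obtaining the $1/x$ approximation with the right constant in the exponential rate in the second part; everything else reduces to closed-form properties of Chebyshev polynomials of the first kind together with one line of algebra. Because the excerpt remarks that this approximation is a direct consequence of the two cited references, my write-up would invoke them rather than re-deriving the Chebyshev bounds, and then verify the three defining properties of $p_i$ (degree, vanishing constant and linear coefficients, uniform error) by the computations sketched above.
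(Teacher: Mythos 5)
Your first construction is correct and is a genuinely different route from the paper. The paper invokes Theorem~5.5 of~\cite{devore1993constructive} (M\"untz approximation) to get $|p_i(x)-x|\le \tfrac{2}{i(i+1)}$; you instead write down the explicit polynomial $p_i(x)=x-\tfrac{1-T_i(1-2x)}{2i^2}$ and verify $p_i(0)=p_i'(0)=0$ from $T_i(1)=1$, $T_i'(1)=i^2$. This is cleaner and self-contained, and in fact gives the slightly tighter bound $1/i^2\le \tfrac{2}{i(i+1)}$ for $i\ge 2$. Nothing to fix there.

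The second part has a genuine gap. You claim that truncating the Chebyshev expansion of $1/x$ on $[b,1]$ at degree $i-2$ yields $q$ with $\sup_{x\in[b,1]}|q(x)-1/x|\le 2e^{-(i-1)\sqrt{b}}$. That \emph{uniform absolute} error bound is false for small $b$: after rescaling to $[-1,1]$ one is approximating $\tfrac{2}{1-b}\cdot\tfrac{1}{a+t}$, and the prefactor $\tfrac{1}{\sqrt{a^2-1}}=\tfrac{1-b}{2\sqrt{b}}$ in the Chebyshev coefficients leaves a residual $\Theta(1/\sqrt{b})$ (or $1/b$ for the tail sum) in the absolute error on $1/x$. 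Indeed, no degree-$(i-2)$ polynomial can approximate $1/x$ on $[b,1]$ to absolute error $2e^{-(i-1)\sqrt b}$ uniformly, since $1/x$ is of size $1/b$ near the left endpoint; the best achievable absolute error is $\Theta\!\left(\tfrac{1}{\sqrt b}e^{-2(i-2)\sqrt b}\right)$, which is \emph{not} $\le 2e^{-(i-1)\sqrt b}$ when $b$ is small. Consequently the final step ``$x^2|q(x)-1/x|\le 2e^{-(i-1)\sqrt b}$ using $x\le 1$'' does not go through, because you are bounding $x^2\le 1$ and then appealing to the false uniform bound on $|q(x)-1/x|$.

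The correct quantity to control is the \emph{relative} error $|x\,q(x)-1|$, equivalently $|r(x)-1|$ where $r(x):=x\,q(x)$ is a degree-$(i-1)$ polynomial with $r(0)=0$. Taking $r$ to be the shifted-and-normalized Chebyshev polynomial $r(x)=1-T_{i-1}\!\big(\tfrac{2x-1-b}{1-b}\big)/T_{i-1}\!\big(\tfrac{1+b}{1-b}\big)$ (which vanishes at $x=0$) gives $\sup_{x\in[b,1]}|r(x)-1|=1/T_{i-1}\!\big(\tfrac{1+b}{1-b}\big)\le 2e^{-2(i-1)\sqrt b}\le 2e^{-(i-1)\sqrt b}$ with no $1/b$ factor, and then $|p_i(x)-x|=x\,|r(x)-1|\le 2e^{-(i-1)\sqrt b}$. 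This is precisely what the paper means by ``multiplying by $x$ on both sides'' of the bound in Lemma~7.8 of~\cite{orecchia2012approximating}: the lemma controls $\sup|r(x)-1|$ for a polynomial with $r(0)=0$, not the pointwise error against $1/x$. Your construction $p_i=x^2q$ with $q=r/x$ is fine; the error in your write-up is in where the exponential decay lives.
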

\begin{proof}
To prove the first statement of the proposition, we use Muntz polynomials to approximate the monomial $x$. Applying Theorem 5.5 in~\cite{devore1993constructive} with $\lambda_1 = 2,\lambda_2 = 3,\ldots,\lambda_{i-1} = i$ yields that there exists a polynomial $p(x)=\sum_{j=2}^i a_jx^j $ such that $|p(x)-x|\le \frac{2}{i(i+1)}$ for $\x\in [0,1]$. The second statement of the proposition (inverse exponential error) is a restatement of Lemma 7.8 in~\cite{orecchia2012approximating} after multiplying by $x$ on the both sides of the inequality.
\end{proof}
\medskip

We now complete the proof of Theorem~\ref{thm:gen}.
\begin{proof}[Proof of Theorem~\ref{thm:gen}]
We first divide the each sample $\x_i$ by  $\sigma_{max}$ and run Algorithm~\ref{alg:gen} with parameter $k$ and the polynomial constructed from Proposition~\ref{prop:poly-approx}. We denote the weight vector and covariance matrix after scaling as $\beta'$ and $\Sigma'$ which are simply $\sqrt{{\sigma_{max}}}\beta$ and $\frac{\Sigma}{{\sigma_{max}}}$(i.e. $\frac{S^TS}{{\sigma_{max}}}$). Notice that this step does not change the signal ratio. Observe that by using the polynomial coefficient from Proposition~\ref{prop:poly-approx}, we have $|{\beta'}^T\Sigma'\beta' - \sum_{i=0}^{k-2} a_i{\beta'}^T {\Sigma'}^{i+2}\beta|\le \min(\frac{2}{k^2},2e^{-(i-1)\sqrt{\frac{\sigma_{min}}{\sigma_{max}}}})\sigma_{max}\|\beta\|^2$. By Proposition~\ref{prop:gen-unbiased}, we have a series of unbiased estimator of ${\beta'}^T{\Sigma'}^{k}{\beta'}$ for all $k\ge 2$. Further, by Chebyshev's inequality and Proposition~\ref{prop:gen-var}, we have that with probability $1-\tau$, we have an estimate for each ${\beta'}^T{\Sigma'}^k\beta'$ with additive error less than $\frac{f(k)}{\tau}\max(\frac{d^{k/2-1/2}}{n^{k/2}},\frac{1}{\sqrt{n}})$. Note that we use $f(k)$ to denote various functions that only depends on $k$. It is not hard to verify that all the coefficients of the degree $k$ polynomial provided in Proposition~\ref{prop:poly-approx} are less than $k^{O(k)}$. Thus, altogether, we obtain an estimate of $\beta^T\Sigma\beta$ with additive error less than $\frac{f(k)}{\tau}\sum_{i=2}^k \frac{d^{i/2-1/2}}{n^{i/2}}$, which gives the claimed estimation accuracy.
\end{proof}
%We include the straightforward proof of Proposition~\ref{prop:gen-unbiased} and Proposition~\ref{prop:poly-approx} as follows. The proof of Proposition~\ref{prop:gen-var} is rather involved where section~\ref{app:gen-var} is dedicated for it.

%\begin{proof}[Proof of Proposition~\ref{prop:gen-unbiased}]

%\begin{proof}[Proof of Proposition~\ref{prop:poly-approx}]
%The degree $1$ term of the $2i+1$th Chebyshev polynomial of the first kind has coefficient $(-1)^i(2i+1)$ and the constant term is $0$. Since the range of the Chebyshev polynomial is $[-1,1]$ in $[-1,1]$, let $p(x) = -\frac{T_{2i+1}}{(-1)^i(2i+1)}+x$, we have $\max_{x\in[-1,1]}|p(x)-x| = \max_{x\in[-1,1]}|-\frac{T_{2i+1}(x)}{(-1)^i(2i+1)}| \le \frac{1}{2i+1}$.

\subsection{Proof of Proposition~\ref{prop:gen-var}}\label{app:gen-var}
Recall that in the proof of Proposition~\ref{prop:bs2b}, which can be viewed as a special case of Proposition~\ref{prop:gen-var} when $k=1$ and $\Sigma=I$, we expressed the variance as the summation of the product terms where each product is classified into one of the $3$ different cases according to the configuration of $i,j$. As a higher order analogy, it is natural to consider the same strategy. However, naive categorization will result in a combinatorial number of cases for large $k$. Hence we will need to develop a graph theoretical categorization mechanism to simplify the analysis of the cases. 

\vspace{.2cm}\noindent \textbf{Proposition~\ref{prop:gen-var}.}\emph{
$\Var[\frac{\y^TG^k\y}{\binom{n}{k+1}}]\le f(k)\max(\frac{d^k}{n^{k+1}},\frac{1}{n})$, where $f(k)=2^{12(k+2)}(k+1)^{6(k+1)}C^{k+1}\sigma_1^{2k}$.}

The remainder of this section is devoted to the proof.  To begin, observe that the term $\Var[\frac{\y^TG^k\y}{\binom{n}{k+1}}]$ can be expressed as:
$$
\frac{1}{\binom{n}{k+1}^2}\sum_{\pi_1<\pi_2<\ldots<\pi_{k+1},\pi'_1<\pi'_2<\ldots<\pi'_{k+1}} \Big(\E[y_{\pi_1}\x_{\pi_1}^T\x_{\pi_2}\ldots \x_{\pi_k}^T\x_{\pi_{k+1}}y_{\pi_{k+1}}y_{\pi'_1}\x_{\pi'_1}^T\x_{\pi'_2}\ldots \x_{\pi'_k}^T\x_{\pi'_{k+1}}y_{\pi'_{k+1}}]-(\beta^T\Sigma^{k+1}\beta)^2\Big)
$$
For ease of notation, we use $\pi$ to denote the set of indices $\pi_1,\pi_2,\ldots,\pi_{k+1}$ and similarly for $\pi'$. First we bound the expectation of the sum of the products that does not involve $\eta$. Pick a term with index $\pi_1<\pi_2<\ldots<\pi_{k+1},\pi'_1<\pi'_2<\ldots<\pi'_{k+1}$ in the summation and pull out the terms that does not involve $\eta$, we get 
\begin{align*}
\E[\beta^T\x_{\pi_1}\x_{\pi_1}^T\x_{\pi_2}\ldots \x_{\pi_k}^T\x_{\pi_{k+1}}\x_{\pi_{k+1}}^T\beta \beta^T\x_{\pi'_1}\x_{\pi'_1}^T\x_{\pi'_2}\ldots \x_{\pi'_k}^T\x_{\pi'_{k+1}}\x^T_{\pi'_{k+1}}\beta]\\
 = \E[\Big(\z_{\pi_1}^T S^T S\z_{\pi_2}\ldots \z_{\pi_k}^T S^T S\z_{\pi_{k+1}}\z_{\pi_{k+1}}^T S^T\beta\beta^T S \z_{\pi_1}\Big) 
 \Big(\z_{\pi'_1}^T S^T S\z_{\pi'_2}\ldots \z_{\pi'_k}^T S^T S\z_{\pi'_{k+1}}\z_{\pi'_{k+1}}^T S^T\beta\beta^T S \z_{\pi'_1}\Big)]\\
 = \E[\sum_{\gamma,\delta,\gamma',\delta'}\Big(z_{\pi_{k+1},\gamma_{k+1}} {(S^T\beta\beta^T S)}_{\gamma_{k+1},\delta_1} z_{\pi_1,\delta_1} \Big)
 \Big(z_{\pi'_{k+1},\gamma'_{k+1}} {(S^T\beta\beta^T S)}_{\gamma'_{k+1},\delta'_1} z_{\pi_1,\delta_1}\Big)\\
 \prod_{j=1}^k\Big(z_{\pi_j,\gamma_j} {(S^T S)}_{\gamma_j,\delta_{j+1}}z_{\pi_{j+1},\delta_{j+1}}\Big)
 \Big(z_{\pi'_j,\gamma'_j} {(S^T S)}_{\gamma'_j,\delta'_{j+1}}z_{\pi'_{j+1},\delta'_{j+1}}\Big)]\\
\end{align*}
Notice that the only random variables here are the $z$'s, hence a natural idea is to group the terms together according to the expectation of the $z$ variables (i.e. $\E[\prod_{i=1}^{k+1} z_{\pi_i,\delta_i}z_{\pi_i,\gamma_i}z_{\pi'_i,\delta'_i}z_{\pi'_i,\gamma'_i}]$) before carrying out the summation. $\pi_1,\pi_2,\ldots,\pi_{k+1},\pi'_1,\pi'_2,\ldots,\pi'_{k+1}$ naturally defines a partition $P$ which groups the variables that take the same value together. Notice that each set of $P$ has size at most $2$. A partition $P$ of variables $\pi_1,\pi_2,\ldots,\pi_{k+1},\pi'_1,\pi'_2,\ldots,\pi'_{k+1}$  defines a partition $P^*$ as follows: for each set $\{\pi_i,\pi'_j\}\in P$, create a set $\{\delta_i,\gamma_i,\delta'_i,\gamma'_i\}$ in $P^*$ . Given a realization of variables $\delta_1,\gamma_1,\delta'_1,\gamma'_1,\ldots,\delta_{k+1},\gamma_{k+1},\delta'_{k+1},\gamma'_{k+1}$, we can define a refinement of $P^*$, called $Q$, by further partitioning each set in $P^*$ according to the values these variables take. Through this construction procedure, each realization of variables $\pi,\pi',\delta,\gamma,\delta',\gamma'$ uniquely defines a pair of partitions $(P,Q)$. We say that the variables $\pi,\pi',\delta,\delta',\gamma,\gamma'$ respects $(P,Q)$  (we denote this as $\pi,\pi',\delta,\delta',\gamma,\gamma'|P,Q$ for shorthand). With the above definition, we claim that any two variable realizations that respect the same $(P,Q)$ has the same expectation of $z$s:
\begin{fact}
Given a partition of variables $P,Q$, $\E[\prod_{i=1}^{k+1} z_{\pi_i,\delta_i}z_{\pi_i,\gamma_i}z_{\pi'_i,\delta'_i}z_{\pi'_i,\gamma'_i}]$ is the same for all realizations that respect $P,Q$.
\end{fact}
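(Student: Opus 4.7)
The plan is to exploit two independence/i.i.d.\ structures in succession: (i) across distinct sample indices, the vectors $\z_{\pi_1}, \z_{\pi_2}, \ldots$ are mutually independent, and (ii) within a single sample $\z_s$, the coordinates $z_{s,1}, \ldots, z_{s,d}$ are i.i.d.\ draws from the underlying one-dimensional distribution. Since $P$ records exactly which $\pi$'s (and $\pi'$'s) are equal, and $Q$ refines $P^*$ by recording which second indices coincide within each block, these are precisely the two pieces of combinatorial data needed to pin down the expectation.

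First I would factor across the sample index. Partition the $4(k+1)$ $z$-factors in the product according to the first index: two factors $z_{\pi_a,\cdot}$ and $z_{\pi_b,\cdot}$ live in the same sample iff $\pi_a = \pi_b$, which is exactly what $P$ records. By independence across distinct samples, the full expectation splits into a product, one factor per block $B$ of $P$, each of the form $\E\!\left[\prod_{\ell \in B^*} z_{s,\ell}\right]$, where $s$ is the common sample index of the block and $B^*$ denotes the associated block of $P^*$ collecting the corresponding second indices drawn from $\{\delta_i,\gamma_i,\delta'_j,\gamma'_j\}$.

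Next I would expand each per-block moment using the i.i.d.\ structure of the coordinates of $\z_s$. Within $B^*$, the refinement $Q$ groups together those second indices that take the same coordinate value; two $z$-factors indexed by members of the same sub-block are literally the same random variable, while $z$-factors in different sub-blocks of $Q$ are independent. Hence
$$\E\!\left[\prod_{\ell \in B^*} z_{s,\ell}\right] \;=\; \prod_{\substack{C \in Q \\ C \subseteq B^*}} \mu_{|C|}, \qquad \text{where } \mu_m := \E[z^m].$$
Multiplying over blocks $B$ of $P$ yields
$$\E\!\left[\prod_{i=1}^{k+1} z_{\pi_i,\delta_i}\,z_{\pi_i,\gamma_i}\,z_{\pi'_i,\delta'_i}\,z_{\pi'_i,\gamma'_i}\right] \;=\; \prod_{C \in Q} \mu_{|C|},$$
which manifestly depends only on the multiset of block sizes of $Q$, and in particular not on the specific values taken by $\pi,\pi',\delta,\gamma,\delta',\gamma'$ that respect $(P,Q)$.

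There is no really difficult step here; the argument is essentially bookkeeping against the definitions of $P$, $P^*$, and $Q$. The only thing to be careful about is ensuring the $z$-factors are correctly routed: every factor must be assigned to the block of $P$ determined by its first index and then to the sub-block of $Q$ determined by the value of its second index. Once that is checked, the factorization into a product of univariate moments $\mu_{|C|}$ is immediate, and the claim follows.
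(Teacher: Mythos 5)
Your proof is correct. The paper states this as a \emph{Fact} without supplying any argument, so there is no official proof to compare against; your two-stage factorization (first across distinct samples by independence, then within a sample across distinct coordinates by the i.i.d.\ assumption on the entries of $\z$) is precisely the bookkeeping the authors are implicitly relying on, and the resulting formula $\E[\prod_i z_{\pi_i,\delta_i}z_{\pi_i,\gamma_i}z_{\pi'_i,\delta'_i}z_{\pi'_i,\gamma'_i}] = \prod_{C\in Q}\mu_{|C|}$ manifestly depends only on the block structure of $Q$, establishing the claim.
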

Notice that if a set in $Q$ has cardinality $1$, the expectation $\E[\prod_{i=1}^{k+1} z_{\pi_i,\delta_i}z_{\pi_i,\gamma_i}z_{\pi'_i,\delta'_i}z_{\pi'_i,\gamma'_i}]$ will be $0$, hence we can restrict our attention to assume that the cardinality of all the sets in $Q$ are either $2$ or $4$.

To facilitate the computation of the summation of $${(S^T\beta\beta^T S)}_{\gamma_{k+1},\delta_1}{(S^T\beta\beta^T S)}_{\gamma'_{k+1},\delta'_1} \prod_{j=1}^k {(S^T S)}_{\gamma_j,\delta_{j+1}}  {(S^T S)}_{\gamma'_j,\delta'_{j+1}}$$ over all variable realizations that respects $P,Q$, we define $PQ-Graph$ as follows:
\begin{definition}
Given $P=\{P_1,\ldots,P_m\}$ and $Q = \{Q_1,\ldots,Q_w\}$, the corresponding multigraph $PQ$-Graph is created as follows. We create a $P$-node for each set $P_i\in P$ and create a $Q$-node for each set $Q_i\in Q$. For each $i$, an $Q$-edge is created between the two $Q$-node that contains $\delta_i$ and $\gamma_{i+1}$ respectively. For each $i$, an $PQ$-edge is created between a $P$ node and a $Q$ node that contains $\pi_i,\delta_i$ respectively or contains $\pi_i,\gamma_i$ respectively. We create edges for $\pi', \delta',\gamma'$ analogously. 
\end{definition}
Notice that since each $Q_i$ has cardinality $2$ or $4$, every node has degree $2$ or $4$ in the subgraph induced by the $Q$-nodes which we called \textit{Q-Graph}. On the subgraph induced by the  $Q$-nodes, a \textit{free cycle} is defined to be a cycle that only contains nodes with degree $2$. An \textit{arc} is defined to be a simple path that connects nodes with degree $2$ except that the starting node and ending node have degree $4$. The induced graph can be uniquely decomposed into disjoint sets of free cycles and arcs. We have the following lemma regarding the maximum number of free cycles and arcs an induced subgraph can have. 
\begin{lemma}\label{lem:partitionsize1}
Given $P,Q$ such that $\prod_{i=1}^{k+1} z_{\pi_i,\delta_i}z_{\pi_i,\gamma_i}$ is not independent of $\prod_{i=1}^{k+1}z_{\pi'_i,\delta'_i}z_{\pi'_i,\gamma'_i}$ for $\pi,\delta,\gamma,\pi',\delta',\gamma'$ respecting $P,Q$: the number of arcs $\xi$ and the number of free cycles $\eta$ satisfies $\xi/2+\eta\le 2(k+1)-m$, which is the number of degree $4$ P-nodes in the PQ-graph. 
\end{lemma}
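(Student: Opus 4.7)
The plan is to reduce the inequality $\xi/2+\eta\le 2(k+1)-m$ to a combinatorial statement about the $Q$-graph and verify it by tracking an invariant through the $Q$-refinement process.  First, by a direct double count of arc-endpoints, I would show $\xi = 2V_4$, where $V_4$ is the number of degree-$4$ $Q$-nodes (each arc has two endpoints, both degree-$4$ nodes, and each degree-$4$ node contributes four edge-ends that partition into four arc-incidences).  This reduces the target to $V_4+\eta\le p_4$, where $p_4:=2(k+1)-m$ is the number of degree-$4$ $P$-nodes.  Next I would argue that it suffices to consider $Q$-blocks of size $2$ or $4$: odd-size blocks produce a factor $\E[z^{\mathrm{odd}}]$ that is either $0$ (killing the term) or absorbable into the higher-order tail.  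Under this reduction each degree-$4$ $P^*$-set either remains whole (contributing $1$ to $V_4$) or is split into two size-$2$ blocks (contributing $0$), so if $q$ denotes the number of split degree-$4$ $P^*$-sets then $V_4=p_4-q$ and the lemma is equivalent to the clean statement $\eta\le q$.

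To prove $\eta\le q$ under the non-independence hypothesis, I would maintain this inequality as an invariant along the sequence of $Q$-refinements that split degree-$4$ $P^*$-blocks one at a time, starting from the $P^*$-graph itself.  The base case $q=0$ is the claim that the $P^*$-graph has no free cycle.  The only candidate free cycles there are the two ``side cycles'' $\pi_1\to\pi_2\to\cdots\to\pi_{k+1}\to\pi_1$ on the primed and unprimed sides, each formed by degree-$2$ vertices from unpaired $P$-blocks.  Non-independence of the primed and unprimed $z$-products forces some unprimed $\pi_i$ to be paired with some primed $\pi'_j$; the corresponding $P$-block produces a degree-$4$ $P^*$-vertex lying on both candidate side cycles simultaneously, so both are interrupted and no free cycle survives.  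For the inductive step, splitting a degree-$4$ $Q$-vertex $v$ into two degree-$2$ vertices either leaves its host component connected (its degree-$4$ count drops by one, and a new free cycle is born only if $v$ was the last degree-$4$ vertex, yielding $\Delta\eta\le 1$), or disconnects the component; in the latter case one sub-component typically still contains a residual degree-$4$ vertex, giving $\Delta\eta\le 1$ as well.

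The main obstacle is the delicate case of a disconnecting split on a component whose only degree-$4$ vertex is the one being split, where naively $\Delta\eta$ can equal $2$.  I would resolve this with an amortization argument: such a configuration requires the four edges incident to $v$ to form two pairs of parallel edges to degree-$2$ neighbours, and each such neighbour must itself have been produced by an earlier mixed split of some other degree-$4$ $P^*$-set that, at the time it was performed, produced no new free cycle and therefore contributed $\Delta(q-\eta)=+1$ to the potential $\phi:=q-\eta$.  Thus the excess $\Delta\eta=2$ of the current split is paid for by previously banked credit, preserving $\phi\ge 0$.  Equivalently, one can argue via per-component cyclomatic accounting: each component has cyclomatic number $v_4^{\mathrm{comp}}+1$, so free cycles contribute exactly $1$ to the total while components with a residual degree-$4$ vertex contribute at least $2$; tracking how this total evolves under the splits (which decrease the cyclomatic number by $0$ or $1$) yields the invariant $\eta\le q$, and combining with $V_4=p_4-q$ gives the lemma.
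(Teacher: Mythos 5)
Your reduction of the lemma to the clean inequality $\eta\le q$ — via the double count $\xi=2V_4$ and $V_4=p_4-q$, where $q$ is the number of split degree-$4$ $P^*$-blocks — is correct and is a nice reformulation of what needs to be shown. The paper reaches the same target by a different route: it first contracts away every $Q$-node coming from a degree-$2$ $P$-node (arguing that no self-loop can arise during the contractions, using the non-independence hypothesis), and then simply counts: the survivors are $l_2=V_4$ degree-$4$ nodes and $l_1=2q$ degree-$2$ nodes, so $\xi=2l_2$ and, because every free cycle in the contracted graph has at least two of the $l_1$ survivors, $\eta\le l_1/2$. Your base case $q=0$ is also handled correctly.

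The gap is in your proof that $\eta\le q$. For the amortization: when a non-mixing split of $v$ disconnects a component whose only degree-$4$ vertex is $v$, the two lobes at $v$ can have arbitrary length, so the claim that the four edges at $v$ must form ``two pairs of parallel edges to degree-$2$ neighbours'' is false in general; and even when they are, those neighbours need not be halves of earlier mixing splits — they may be $Q$-nodes of unpaired $P$-indices or non-mixing split halves — so the ``banked credit'' you invoke may never have been deposited. For the cyclomatic accounting: $\Phi=\sum_c(v_4^{\text{comp}}+1)=V_4+c=(p_4-q)+c$, and each split decreases $\Phi$ by $0$ or $1$, so from a connected $P^*$-graph one only gets $c\le q+1$ and hence $\eta\le c\le q+1$ — off by one, and the slack cannot be removed without invoking the hypothesis again. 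The missing ingredient, which the paper extracts from non-independence, is precisely that every free cycle must contain at least two of the $2q$ split halves: a free cycle with zero split halves would be an entire unprimed (or primed) walk, forcing $p_4=0$; one with a single mixing half would change the unprimed/primed edge colour an odd number of times around the cycle; one with a single non-mixing half would force $p_4=1$ together with an all-pure $Q$-refinement. Each of these contradicts the non-independence hypothesis, and this case analysis (or the paper's no-self-loop-after-contraction argument) is the step your induction still owes.
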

\begin{proof}
We prove the lemma by a counting argument. Observe that the $PQ$-graph consists of two closed walks, one corresponds to $\pi,\delta,\gamma$ and one corresponds to $\pi',\delta',\gamma'$. The following operation will be done in the Q-Graph. For each $P$ node with degree $2$, we remove the $Q$ node that is incident to it and connect the two neighboring $Q$ nodes. We argue that if a node $Q$ has degree $2$, it must not be incident to a self-loop. Suppose the Q-node is only incident to a selp-loop, the two PQ-edges of $Q$ must belong to the same closed walk, since otherwise the walk can not be closed. If that is the case, the two closed walks have no parallel edges, which implies the two products are independent and hence yields a contradiction. Notice that this operation does not change the number of arcs or free cycles. After removing these $Q$ nodes, we are left with two kinds of $Q$ nodes, the first kind has degree $2$ and belongs to a free cycle, the second kind has degree $4$ and belongs to $4$ arcs. Suppose there are $l_1$  $Q$ nodes of the first kind, and $l_2$ of the second kind. There will be at most $l_1/2$ free cycles and $2l_2$ arcs. Notice that we are left with $2(k+1)-m$ P-nodes and each P-node is connected to either $2$ Q-nodes of the first kind or $1$ Q-nodes of the second kinds which implies $l_1/2+l_2=2(k+1)-m$. Finally we have $\xi/2+\eta\le l_2+l_1/2=2(k+1)-m$, as desired.
\end{proof}
\begin{cor}\label{cor:partitionsize1}
If there exists an arc that consists of edges from two different walks, $\xi/2+\eta\le 2(k+1)-m-1/2$.
\end{cor}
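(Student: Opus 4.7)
The plan is to sharpen the counting argument of Lemma~\ref{lem:partitionsize1} by labeling each Q-edge with the walk (unprimed or primed) that generated it. After the simplification used in the proof of the Lemma, every remaining degree-$2$ Q-node comes from a size-$2$ P-set $\{\pi_a,\pi'_b\}$ whose $P^*$-set $\{\delta_a,\gamma_a,\delta'_b,\gamma'_b\}$ is split by $Q$ into two size-$2$ Q-sets. A direct case check on the three possible splits shows that the ``pure'' split $\{\delta_a,\gamma_a\},\{\delta'_b,\gamma'_b\}$ produces two Q-nodes whose incident Q-edges share a walk label (one node is purely walk-$1$, the other purely walk-$2$), while either ``cross-walk'' split produces two \emph{mixed} Q-nodes, each carrying one walk-$1$ and one walk-$2$ incident Q-edge. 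Consequently, along any arc the walk label of consecutive Q-edges can change only at a mixed interior Q-node, so an arc with edges from both walks must pass through at least one mixed Q-node $w$; moreover mixed Q-nodes come in pairs, and the partner $w'$ of $w$ is the second Q-node produced by the same cross-walk split.

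The crux of the proof is to leverage this pairing to locate an extra unit of slack in the bound $\xi+2\eta \le 2l_2 + l_1 = 4(k+1)-2m$. I would track the walk-$1$ edges separately: walk $1$ contributes $k+1-r_1$ edges after simplification, each mixed Q-node has walk-$1$ degree exactly $1$, and by the resulting parity the walk-$1$ edge at $w$ must either run to its partner $w'$, run to some other mixed Q-node, or terminate at a degree-$4$ Q-node. In each scenario at least one of the following configurations is forced: (i) a self-loop arc at a degree-$4$ Q-node, which consumes two of its four incident edges without producing two fresh arc-endpoints and drops $\xi$ below $2l_2$; (ii) a pair of parallel arcs between the same two degree-$4$ endpoints, which reduces $\xi$ by merging two arc-endpoint slots; or (iii) a free cycle containing more than two degree-$2$ Q-nodes, which reduces $\eta$ below $l_1/2$. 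Each option reduces $\xi+2\eta$ by at least $1$, and since $\xi+2\eta$ is an integer this yields $\xi/2+\eta \le 2(k+1)-m-1/2$.

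The main obstacle is a clean case analysis tying every possible distribution of the pair $(w,w')$ across arcs and free cycles to one of the three slack-producing configurations above. My plan is to first verify the argument on the small case $k=1$ with one unsplit size-$2$ P-set and one cross-split size-$2$ P-set; there the only degree-$4$ Q-node is joined to each of the two mixed Q-nodes by one walk-$1$ and one walk-$2$ edge, producing two self-loop mixed arcs with $\xi/2+\eta=1 < 2 = 2(k+1)-m$. The general case then reduces to bookkeeping that exploits the walk-$1$ degree parity at mixed Q-nodes together with the fact that walk-$1$ Q-edges only connect walk-$1$ variables, constraints which I expect propagate cleanly through all configurations of $(w,w')$.
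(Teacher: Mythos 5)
You correctly identify the key structural observation underlying the paper's proof: for the walk label to change along an arc, it must do so at an interior (hence degree-$2$) Q-node with one incident Q-edge from each walk, and such a ``mixed'' node arises from a cross-walk split of a size-$2$ P-set and therefore survives the removal procedure. The paper uses precisely this fact. However, everything after this point in your plan is unnecessary, and wrong in its details. The corollary follows in one line once you notice that this mixed node is a first-kind (degree-$2$, surviving) Q-node that sits on an arc and therefore on \emph{no} free cycle: free cycles consist only of first-kind Q-nodes with at least two of them apiece, so with one first-kind node spent on the arc the pool for free cycles shrinks to at most $l_1-1$, giving $\eta \le (l_1-1)/2$; together with $\xi/2 \le l_2$ from the Lemma's counting this yields $\xi/2+\eta \le 2(k+1)-m-1/2$.

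Your proposed ``slack-producing configurations'' (i) and (ii) are not sources of slack. Since free cycles touch no degree-$4$ node, every Q-edge incident to a degree-$4$ Q-node is a terminal edge of an arc, and each arc contributes exactly two terminal-edge incidences (possibly both at the same node in the case of a self-loop arc). Hence $2\xi = 4 l_2$ and $\xi = 2 l_2$ unconditionally; self-loop arcs and parallel arcs change nothing. Your option (iii) at least aims at the right quantity, $\eta$, but misidentifies the mechanism: the deficit in $\eta$ comes from the mixed node being absent from the free-cycle pool, not from some free cycle being unusually long. The walk-$1$-degree parity considerations, the partner $w'$, and the three-way case analysis are all extraneous; the proof already ends at the observation with which you began.
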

\begin{proof}
There must be a Q-node in the arc that has degree 2 and is not removed in the procedure described in the proof of Lemma~\ref{lem:partitionsize1}. This Q-node is a first kind Q-node and does not belong to a free cycle. Hence we need to subtract $1/2$ free cycle from our counting argument, thus the Corollary is proved.
\end{proof}
Recall that we say that a realization of $\delta,\gamma,\delta',\gamma'$ respects $Q$ ($\delta,\gamma,\delta',\gamma'|Q$ for shorthand) if for each set $Q_i\in Q$, the variables in the set take the same value. The following key fact establish an upperbound for the summation of ${(S^T\beta\beta^T S)}_{\gamma_{k+1},\delta_1}{(S^T\beta\beta^T S)}_{\gamma'_{k+1},\delta'_1} \prod_{j=1}^k {(S^T S)}_{\gamma_j,\delta_{j+1}}  {(S^T S)}_{\gamma'_j,\delta'_{j+1}}$
\begin{fact} \label{fact:sumQ}
Remove all the arcs and free cycles that contains edge $(Q(\gamma_{k+1}),Q(\delta_1))$ or $(Q(\gamma'_{k+1}),Q(\delta'_1))$ from the Q-graph and denote the number of removed edges as $l$. Suppose we are left with $\xi$ arcs with lengths $l_1,\ldots,l_\xi$ and $\eta$ free cycles with lengths $p_1,\ldots,p_\eta$. We have  $\sum_{\delta,\delta',\gamma,\gamma'|Q}{(S^T\beta\beta^T S)}_{\gamma_{k+1},\delta_1}{(S^T\beta\beta^T S)}_{\gamma'_{k+1},\delta'_1}\\
\prod_{j=1}^k {(S^T S)}_{\gamma_j,\delta_{j+1}}  {(S^T S)}_{\gamma'_j,\delta'_{j+1}} \le \prod_{i=1}^\xi tr((S^T S)^{2l_i})^{1/2}\prod_{j=1}^\eta tr((S^T S)^{p_j}) (\beta^TSS^T\beta)^2\sigma_1^{l-2} \le \sigma_1^{2k}d^{2(k+1)-m-1}$, where $\sigma_1$ denotes the largest eigenvalue of $S^TS$ or equivalently of $SS^T$.
\end{fact}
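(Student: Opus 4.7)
The plan is to view the restricted sum as a tensor-network contraction along the Q-graph, evaluate each piece of the decomposition into arcs and free cycles separately, and combine the pieces by iterated Cauchy--Schwarz at the degree-4 Q-nodes.

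First, I would observe that once $Q$ is fixed, each index $\delta_j,\gamma_j,\delta'_j,\gamma'_j$ is tied to a single free variable (the value of its Q-set). The Q-edges of the Q-graph index exactly the matrix entries appearing in the product, so the sum $\sum_{\delta,\gamma,\delta',\gamma'|Q}$ factors across the edge-connected pieces of the Q-graph once the values at the degree-4 Q-nodes are frozen. The pieces are exactly the $\beta$-containing removed structures, the remaining arcs, and the free cycles.

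Next I would evaluate each piece. A free cycle of length $p_j$ has all internal indices fully tied by degree-2 Q-nodes, and contracting around it gives $\tr((S^TS)^{p_j})$. An arc of length $l_i$ with fixed degree-4 endpoints $a_i,b_i$ reduces, after contracting the interior degree-2 indices, to $((S^TS)^{l_i})_{a_i,b_i}$. For each $\beta$-containing removed structure I would use $(S^T\beta\beta^T S)_{u,v} = (S^T\beta)_u(\beta^T S)_v$ to split the piece at every $\beta$-edge; each resulting sub-segment becomes a factor $\beta^T(SS^T)^{a+1}\beta$, which I bound via the spectral decomposition of $SS^T$ by $\sigma_1^{a}\,\beta^T SS^T\beta$. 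Whether the two $\beta$-edges lie in one piece (Case A) or two (Case B), multiplying out the factors and matching the non-$\beta$ edges gives exactly $(\beta^T SS^T\beta)^2\sigma_1^{l-2}$ (the $-2$ accounting for the two $\beta$-edges among the $l$ removed edges).

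It remains to sum the arc contributions over the degree-4 endpoint values. The arcs form a multigraph $H$ on the degree-4 Q-nodes in which every vertex has degree exactly $4$; the required sum is $\sum_{\text{labels}}\prod_{i=1}^\xi ((S^TS)^{l_i})_{a_i,b_i}$, and I would bound it by $\prod_{i=1}^\xi \|(S^TS)^{l_i}\|_F = \prod_{i=1}^\xi \tr((S^TS)^{2l_i})^{1/2}$ via iterated Cauchy--Schwarz: at every degree-4 vertex, split its four incident matrix entries into two pairs in a globally consistent way and apply the inequality $|\sum_{a,b} A_{ab} f(a)g(b)|\le \|A\|_F\|f\|_2\|g\|_2$ on each split. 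This yields the first inequality of the fact. For the second inequality I would apply the crude estimates $\tr((S^TS)^{p}) \le d\,\sigma_1^{p}$ and $\tr((S^TS)^{2l})^{1/2}\le\sqrt{d}\,\sigma_1^{l}$, together with $\beta^T SS^T\beta\le 1$ (from $\beta^T\Sigma\beta+\delta^2=1$), obtaining a bound of the form $d^{\xi/2+\eta}\sigma_1^{\sum l_i+\sum p_j+l-2}$. The edge-count identity $\sum l_i+\sum p_j+l=2(k+1)$ makes the $\sigma_1$-exponent exactly $2k$, while combining Lemma~\ref{lem:partitionsize1} with Corollary~\ref{cor:partitionsize1} (to account for the drop in $\xi/2+\eta$ upon removing at least one $\beta$-containing piece, together with the extra $1/2$ whenever both $\beta$-edges lie on a single mixed-walk arc) yields $\xi/2+\eta\le 2(k+1)-m-1$.

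The main obstacle is the iterated Cauchy--Schwarz step: extracting the clean Frobenius-norm product requires choosing, at every degree-4 Q-node, a consistent pairing of its four incident arcs so that each arc participates in exactly one pair globally. This amounts to a 2-factorization of the 4-regular multigraph $H$, which always exists because 4-regular multigraphs are Eulerian and decompose into two edge-disjoint 2-regular subgraphs; once such a pairing is fixed, the rest is a routine iterated application of the bilinear Cauchy--Schwarz inequality pushed through the graph.
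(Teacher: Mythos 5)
Your plan reconstructs from scratch a bound that the paper actually delegates to Lemma~1 of~\cite{kong2017spectrum} for all arcs and free cycles not touching the $\beta$-edges; the paper's written proof only adds the treatment of the two removed $\beta$-containing pieces on top of that external lemma. So your route is more self-contained, and your 2-factorization/iterated-Cauchy--Schwarz sketch for the remaining arcs is a reasonable reconstruction of what that external lemma does. But there is a genuine gap in your handling of the $\beta$-containing pieces themselves.

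You write that splitting a removed piece at a $\beta$-edge via $(S^T\beta\beta^T S)_{u,v} = (S^T\beta)_u(\beta^TS)_v$ makes ``each resulting sub-segment'' a scalar factor $\beta^T(SS^T)^{a+1}\beta$. That is only correct when the removed piece is a \emph{free cycle}: cutting a cycle at one edge produces a single path whose two ends are both tied to $\beta$, and contracting the interior indeed yields a scalar $\beta^T(SS^T)^{a+1}\beta$. If instead the $\beta$-edge sits on an \emph{arc}, the two endpoints of that arc are degree-4 Q-nodes whose labels are still summed over together with the rest of the degree-4 structure, so cutting at the $\beta$-edge leaves two \emph{open} sub-segments, each a vector indexed by a degree-4 endpoint; they are not scalars of the form $\beta^T(SS^T)^{a+1}\beta$, and your bound $\sigma_1^{a}\,\beta^TSS^T\beta$ per sub-segment does not apply. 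This is exactly where the paper inserts a Cauchy--Schwarz step, bounding the $\beta$-containing arc's contribution by $\tr\bigl(\beta^T(SS^T)^{2(l-i)-1}\beta\,\beta^T(SS^T)^{2i+1}\beta\bigr)^{1/2}\le \sigma_1^{\,l-1}\,\beta^TSS^T\beta$, i.e.\ the $\beta$-arc gets the same ``square-root'' treatment as an ordinary arc before being merged with the degree-4 Cauchy--Schwarz. Without that step your derivation of the $(\beta^TSS^T\beta)^2\sigma_1^{\,l-2}$ factor, and hence the first displayed inequality of the Fact, does not go through in the arc case. Relatedly, the bookkeeping that upgrades Lemma~\ref{lem:partitionsize1} to the exponent $2(k+1)-m-1$ also depends on how many arcs versus free cycles are removed and on the mixed-walk case of Corollary~\ref{cor:partitionsize1}; you gesture at this but the arc sub-case is precisely the one your factorization drops, so the accounting needs to be redone once the open sub-segments are handled correctly.
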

\begin{proof}
The proof is very similar to the proof of Lemma 1 of~\cite{kong2017spectrum}, we only need a special treatment of the arcs or free cycles that contains $\gamma_{k+1},\delta_1$ and $\gamma'_{k+1},\delta'_1$. If $\delta_{k+1},\gamma_1$ is the $i$th edge of an arc with length $l$, the corresponding summation becomes $tr(\beta^T (SS^T)^{2(l-i)-1}\beta\beta^T(SS^T)^{2i+1}\beta)^{1/2} \le \beta^TSS^T\beta \sigma_1^{(l-1)}$. If $\delta_{k+1},\gamma_1$ belongs to a free cycle of length $l$, the corresponding trace term $tr(T^{l})$ becomes $tr(\beta^T (SS^T)^{l}\beta) \le \beta^TSS^T\beta \sigma_1^{(l-1)}$. Since for any $t$ we have $tr((S^TS)^t)\le d\sigma_1^t$. By Lemma~\ref{lem:partitionsize1} and Corollary~\ref{cor:partitionsize1} and the assumption that $\beta^TSS^T\beta\le 1$, the upperbound holds.
\end{proof}

\iffalse

Given a partition $P$, what's the worst case bound we can obtain from Fact~\ref{fact:sumQ}? 
\begin{lemma}
Given $P=\{P_1,\ldots,P_m\}, Q = \{Q_1,\ldots,Q_w\}$, we have $\sum_{\delta,\delta',\gamma,\gamma'|Q}{(S^T\beta\beta^T S)}_{\gamma_{k+1},\delta_1}{(S^T\beta\beta^T S)}_{\gamma'_{k+1},\delta'_1}\\
\prod_{j=1}^k {(S^T S)}_{\gamma_j,\delta_{j+1}}  {(S^T S)}_{\gamma'_j,\delta'_{j+1}} \le$
\end{lemma}
\begin{proof}
Since for any $t$ we have $tr((S^TS)^t)\le d\sigma_1^t$. By Lemma~\ref{lem:partitionsize1} and Corollary~\ref{cor:partitionsize1} and the assumption that $\beta^TSS^T\beta\le 1$, the upperbound holds.
\end{proof}

\begin{fact}\label{fact:auxineq}
\begin{align*}
\| S^TS\|^a_{2a} \| S^TS\|^b_{2b} &\geq \| S^TS\|_{a+b}^{a+b},\\
\| S^TS\|^a_{a} \| S^TS\|^b_{b} &\geq \| S^TS\|^{a-1}_{a-1} \| S^TS\|^{b+1}_{b+1}, \qquad \text{provided }a>b\\
\| S^TS\|^a_a\| S^TS\|^b_b &\geq \| S^TS\|_{a+b}^{a+b},\\
\sigma_1\|S^TS\|_{b-1}^{b-1}&\ge \|S^TS\|_b^b\\
\|S^TS\|_a \ge \sigma_1
\end{align*}
\end{fact}
\fi
Finally we are ready to conclude the proof of this case.
\begin{lemma}\label{lem:app:gen-case1}
\begin{align*}
\frac{1}{\binom{n}{k+1}^2}\sum_{\pi,\pi'} \Big(\E[\beta^T\x_{\pi_1}\x_{\pi_1}^T\x_{\pi_2}\ldots \x_{\pi_k}^T\x_{\pi_{k+1}}\x_{\pi_{k+1}}^T\beta
\beta^T\x_{\pi'_1}\x_{\pi'_1}^T\x_{\pi'_2}\ldots \x_{\pi'_k}^T\x_{\pi'_{k+1}}\x^T_{\pi'_{k+1}}\beta]-(\beta^T\Sigma^{k+1}\beta)^2\Big)\\
\le 2^{12(k+1)}(k+1)^{6(k+1)}C^{k+1}\sigma_1^{2k}\max(\frac{d^{k}}{n^{k+1}},\frac{1}{n}).
\end{align*}
\end{lemma}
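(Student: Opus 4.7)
The plan is to reduce the sum to one indexed by the pairs $(P,Q)$ introduced immediately before the lemma, apply Fact~\ref{fact:sumQ} and the fourth-moment bound on the $z_{i,j}$'s on each piece, and finally count how many index tuples and partitions contribute at each level. First I would expand $\x_i=S\z_i$ and write each inner product as a sum over coordinate indices $\delta_i,\gamma_i,\delta'_i,\gamma'_i$, so that every realization $(\pi,\pi',\delta,\gamma,\delta',\gamma')$ uniquely determines a partition $P$ of the $2(k+1)$ position labels and a refinement $Q$ of the induced partition $P^*$. Swapping the order of summation, the expression becomes $\binom{n}{k+1}^{-2}\sum_{P,Q} N(P)\,E(Q)\,T(P,Q)$, where $N(P)$ counts the index tuples respecting $P$, $E(Q)$ is the joint moment of the $z$'s (depending only on $Q$), and $T(P,Q)$ is the coefficient sum bounded in Fact~\ref{fact:sumQ}.

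The crucial observation is that when $P$ has the maximum number of classes $m=2(k+1)$, the two index sequences $\pi,\pi'$ are disjoint, the corresponding $\x$'s are independent, and the expectation factorizes as $(\beta^T\Sigma^{k+1}\beta)^2$, exactly canceling the subtracted term. Hence only $m\le 2k+1$ contributes. For each such level, Fact~\ref{fact:sumQ} yields $T(P,Q)\le \sigma_1^{2k}\,d^{2(k+1)-m-1}$; because each class of $P^*$ has size at most $4$, every class of $Q$ has size at most $4$, and classes of odd size contribute $0$ to $E(Q)$, so the surviving refinements satisfy $E(Q)\le C^{k+1}$ by the fourth-moment hypothesis. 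The number of valid $Q$'s is bounded by $B_4^{2(k+1)}\le 15^{2(k+1)}$; the number of $P$'s with $m$ classes, each containing at most one $\pi$ and one $\pi'$ index, is at most $(2(k+1))^{2(k+1)}$; and $N(P)\le\binom{n}{m}\le n^m$ because the orderings $\pi_1<\cdots<\pi_{k+1}$ and $\pi'_1<\cdots<\pi'_{k+1}$ leave at most one assignment of $m$ distinct values to the classes.

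Combining and using $\binom{n}{k+1}^{-2}\le (2(k+1)/n)^{2(k+1)}$ leaves, at level $m$, a term of the form $(k+1)^{O(k)}C^{k+1}\sigma_1^{2k}\,(d/n)^{2(k+1)-m-1}\,n^{-1}$. Writing $j=2(k+1)-m\in[1,k+1]$, the factor $(d/n)^{j-1}/n$ is monotone in $j$, so its maximum over this range equals $\max(d^k/n^{k+1},1/n)$; summing the at most $k+1$ levels and absorbing the polynomial-in-$k$ and combinatorial factors into the prefactor $2^{12(k+2)}(k+1)^{6(k+1)}C^{k+1}\sigma_1^{2k}$ gives the claimed bound. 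The main obstacle will be the combinatorial bookkeeping: verifying that the $m=2(k+1)$ case cancels cleanly with the $(\beta^T\Sigma^{k+1}\beta)^2$ subtraction, confirming that every $Q$-class sits inside one $P^*_i$ so that only the assumed fourth moments of $z$ are needed, and checking that the partition-counting factors fit inside the stated prefactor uniformly in $m$.
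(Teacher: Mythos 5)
Your proposal follows essentially the same route as the paper: expand $\x_i=S\z_i$, reindex the resulting sum of products of $z$'s by the partition pair $(P,Q)$, apply Fact~\ref{fact:sumQ} to bound the deterministic coefficient sum at each level, bound the $z$-moment by $C^{k+1}$ via the fourth-moment hypothesis, count the partition pairs by $k^{O(k)}$, and optimize over $m$ to extract $\max(d^k/n^{k+1},1/n)$. The paper delegates the leading-order cancellation and the $\binom{n}{m}/\binom{n}{k+1}^2$ accounting to Lemma~5 of~\cite{kong2017spectrum}, whereas you argue the $m=2(k+1)$ cancellation against $(\beta^T\Sigma^{k+1}\beta)^2$ directly, which is a clean and valid substitute. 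One small bookkeeping slip: your claim that the two chain orderings leave ``at most one assignment'' of $m$ values to the classes of $P$ is not quite right (e.g.\ for the discrete partition there are $\binom{2(k+1)}{k+1}$ consistent interleavings), so $N(P)$ carries an extra $2^{O(k)}$ factor; this is harmless here since it is absorbed into the stated prefactor, and indeed the paper's $2^{2(k+1)}$ plays exactly this role.
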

\begin{proof}
$\frac{1}{\binom{n}{k+1}^2}\sum_{\pi,\pi',\delta,\delta',\gamma,\gamma'|P,Q}{(S^T\beta\beta^T S)}_{\gamma_{k+1},\delta_1}{(S^T\beta\beta^T S)}_{\gamma'_{k+1},\delta'_1}
\prod_{j=1}^k {(S^T S)}_{\gamma_j,\delta_{j+1}}  {(S^T S)}_{\gamma'_j,\delta'_{j+1}}\\ \le 2^{2(k+1)}\frac{\binom{n}{m}}{\binom{n}{k+1}^2}\sigma_1^{2k}d^{2(k+1)-m-1}$, where we have applied Lemma 5 of~\cite{kong2017spectrum}. This quantity is monotically decreasing for $m$ when $n<d$. Since $m\ge k+1$, we conclude that $2^{2k}\sigma_1^{2k}\frac{d^{k}}{n^{k+1}}$ is an upperbound. In the setting where $n\ge d$, $2^{2k}(\beta^TSS^T\beta)^2\sigma_1^{2k}\frac{1}{n}$ is an upperbound. Now what remains is to bound the expectation of the product of $z$'s and count the number of distinct $(P,Q)$. By the $4$th moment condition, $\E[\prod_{i=1}^{k+1} z_{\pi_i,\delta_i}z_{\pi_i,\gamma_i}z_{\pi'_i,\delta'_i}z_{\pi'_i,\gamma'_i}]\le C^{k+1}$. The number of distinct $P,Q$ is bounded by ${2(k+1)}^{2(k+1)}{4(k+1)}^{4(k+1)}$. Hence we conclude the proof. 
\end{proof}

Next, we classify the products that involve $\eta$.
\begin{enumerate}
\item If $\pi_1,\pi_{k+1},\pi'_1,\pi'_{k+1}$ take $4$ different values. All the terms involving $\eta$ have expectation $0$
\item If $\pi_1,\pi_{k+1},\pi'_1,\pi'_{k+1}$ take $3$ different values. WLOG assume $\pi_1=\pi'_1$. The terms that does not involves $\eta_{\pi_{k+1}}$ and $\eta_{\pi'_{k+1}}$ may have non-zero expectation. Pick $\pi,\pi'$, the contribution to the variance (omitting the $\delta^2$ term) is expressed as:
\begin{align*}
\E[\x_{\pi_1}^T\x_{\pi_2}\ldots \x_{\pi_k}^T\x_{\pi_{k+1}}\x_{\pi_{k+1}}^T\beta \beta^T\x_{\pi'_{k+1}}\x_{\pi'_{k+1}}^T\ldots \x_{\pi'_2}^T\x_{\pi'_{2}}\x^T_{\pi_1}]
\end{align*}
For the convenience of the analysis, we redefine $\pi = \{\pi_1,\pi_2,\ldots,\pi_{k+1},\pi'_{k+1},\pi'_{k},\ldots,\pi'_2\}$. With the new definition, the above formula can be expressed as
$$
\E[\sum_{\delta,\gamma}\prod_{i=1,i\ne k+1}^{2k+1} z_{\pi_i,\delta_i}z_{\pi_i,\gamma_i}(S^TS)_{\gamma_i,\delta_{i+1}} (S^T\beta \beta^T S)_{\gamma_{k+1},\delta_{k+2}}].
$$
Again we can define PQ-graph based on a realization of the variables $\pi,\delta,\gamma$. We have the following Lemma regarding the maximum number of free cycles and arcs of the Q-Graph.
\begin{lemma}\label{lem:partitionsize2}
For $\pi,\delta,\gamma$ respects $P,Q$. The number of arcs $\xi$ and the number of free cycles $\eta$ satisfies: if $\xi=0$, $\eta\le 2(k+1)-m$, otherwise $\xi/2+\eta\le 2k+1-m$.
\end{lemma}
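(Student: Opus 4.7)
The strategy is to adapt the Eulerian-walk counting argument of Lemma~\ref{lem:partitionsize1}. The two structural changes to absorb are: (i) the identification $\pi_1=\pi'_1$ merges the two closed walks of Case~1 into a single closed walk, and (ii) the total number of $P$-variables drops from $2(k+1)$ to $2k+1$. As before, since $\mathbb{E}[z_i]=0$, every $P$-set must have size $1$ or $2$ and every $Q$-set size $2$ or $4$, or the expectation vanishes. Writing $m_1,m_2$ for the numbers of $P$-sets of those sizes we have $m_1+m_2=m$ and $m_1+2m_2=2k+1$, so the number of degree-$4$ $P$-nodes is $m_2=2k+1-m$.

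The plan is to first perform the same reduction as in Lemma~\ref{lem:partitionsize1}: delete each $Q$-node incident to a degree-$2$ $P$-node and splice its two $Q$-edges. This preserves $\xi$ and $\eta$ and leaves $m_2$ degree-$4$ $P$-nodes, and if $l_1,l_2$ denote the numbers of remaining degree-$2$ and degree-$4$ $Q$-nodes, the handshake identity gives $l_1+2l_2=2m_2$. In the ``otherwise'' regime $\xi\ge 1$, one still has $\eta\le l_1/2$ and $\xi\le 2l_2$, so
\[
\tfrac{\xi}{2}+\eta \;\le\; \tfrac{l_1}{2}+l_2 \;=\; m_2 \;=\; 2k+1-m,
\]
which is exactly the claimed bound. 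The one point that must be checked is that a degree-$2$ $Q$-node arising from a self-loop can still be ruled out when at least one arc is present: the plan is to argue that any such self-loop would disconnect the walk into a closed sub-walk at the self-loop together with the rest, contradicting the fact that our single closed walk is an Eulerian circuit of a connected multigraph whenever $\xi\ge 1$.

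The main obstacle is the $\xi=0$ case, because with only one closed walk the self-loop exclusion used in Lemma~\ref{lem:partitionsize1} (which relied on the existence of two disjoint walks) no longer applies, and a self-loop $Q$-node can contribute a length-one free cycle that is not paired against a second degree-$2$ $Q$-node. Consequently the sharper bound $\eta\le 2k+1-m$ is not available and one must settle for $\eta\le 2(k+1)-m$. I would prove this weaker bound by directly tracing the single closed walk through the $4m_2$ $PQ$-edges attached to the remaining degree-$4$ $P$-nodes, and showing that each maximal ``$Q$-detour''---a portion of the walk that leaves and returns to the same $Q$-node without visiting another $Q$-node of degree $4$---is in one-to-one correspondence with a free cycle of the $Q$-graph. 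Since the walk has a single merge point $\pi_1=\pi'_1$ and otherwise behaves like the two-walk configuration, the count of such detours is at most $m_2+1=2k+2-m=2(k+1)-m$, the extra $+1$ being precisely the slack introduced by the merge.

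Once Lemma~\ref{lem:partitionsize2} is in place, the remainder of the Case~2 analysis proceeds in parallel with Case~1: it feeds into the analog of Fact~\ref{fact:sumQ} (now with a single $S^T\beta\beta^T S$ factor on the merged walk), giving a per-$(P,Q)$ contribution of order $\sigma_1^{2k}\,d^{\,\xi/2+\eta}\,\delta^2\|\beta\|^2$, which---after multiplying by the combinatorial factor $\binom{n}{m}/\binom{n}{k+1}^2$, the fourth-moment bound $\mathbb{E}[\prod z]\le C^{k+1}$, and summing over the finitely many $(P,Q)$ pairs---contributes the claimed order to the overall variance bound of Proposition~\ref{prop:gen-var}.
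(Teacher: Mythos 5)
Your identity $m_2 = 2k+1-m$ and the target inequality $\xi/2 + \eta \le l_1/2 + l_2 = m_2$ are both correct, but the justification you give for $\eta \le l_1/2$ in the $\xi \ge 1$ regime is flawed, and this is the crux of the lemma. You claim that after the phase-one reduction a degree-$2$ $Q$-node carrying a self-loop ``would disconnect the walk,'' hence cannot occur when $\xi\ge 1$. This is false. After you delete the $Q$-nodes attached to degree-$2$ $P$-nodes and splice, the remaining degree-$2$ $Q$-nodes are attached to degree-\emph{four} $P$-nodes. Such a $Q$-node can perfectly well acquire a self-loop from the splicing: it is then an isolated vertex of the $Q$-graph, i.e.\ a length-one free cycle, but it is still connected to the rest of the $PQ$-graph through its degree-$4$ $P$-node, which also touches a second $Q$-node. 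A concrete instance: three degree-$4$ $P$-nodes $P_1,P_2,P_3$ with $P_1$ incident to a single size-$4$ $Q$-set $Q_0$ and $P_2,P_3$ each incident to two size-$2$ $Q$-sets; let $Q_0$ have two parallel edges to each of the ``outer'' $Q$-nodes of $P_2$ and $P_3$, and let the ``inner'' $Q$-nodes of $P_2,P_3$ carry self-loops. The $PQ$-graph is connected, $\xi=2\ge 1$, and $\eta=2$ with both free cycles being self-loops. Your proposed exclusion fails, and with it the step $\eta\le l_1/2$ loses its stated justification (each self-loop free cycle contributes only one node to $l_1$, not two).

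The inequality $\eta\le l_1/2$ does actually hold here, but it requires a genuinely different argument. The paper does not try to rule self-loops out at all: it runs a \emph{second} cleanup phase in which a degree-$4$ $P$-node whose $Q$-neighbour carries a self-loop is removed together with both of its $Q$-neighbours, and tracks that each such removal decrements $\eta$ and the number of degree-$4$ $P$-nodes by exactly one, so that $\eta$ plus the final arc count stays bounded by the original $m_2$ (with the $\xi=0$ case falling out when this process terminates at two self-loops). Alternatively one can argue via connectivity of the \emph{contracted} graph (contract each $Q$-component to a supernode; each degree-$4$ $P$-node with two size-$2$ $Q$-sets supplies at most two links, there are at least $\eta+1$ $Q$-components when $\xi\ge 1$, and connectivity forces $l_1/2 \ge \eta$) --- but this is neither what you wrote nor what the paper does. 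Your treatment of the $\xi=0$ case is likewise only a sketch: the one-to-one correspondence between ``$Q$-detours'' and free cycles, and the assertion that the merge contributes exactly $+1$ to the count, are stated without proof and are precisely the parts of the lemma that need to be established. As written, the proposal has a real gap at the central step.
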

\begin{proof}
Similarly to the proof of Lemma~\ref{lem:partitionsize1}, we prove the lemma by a counting argument. The following operation will be done in the PQ-graph. First, for each P-node with degree $2$, we remove the P-node, it's neighboring Q-node, called $Q_i$ and connect the Q-nodes neighboring $Q_i$. Notice that if we ever encounter a case where $Q_i$ is incident to a self-loop, that means the Q-Graph has only $1$ free cycle and nothing else. In this case, the original PQ-graph has no degree $4$ P-node and hence $m=2k+1$ which satisfies the lemma statement. Otherwise we are left with a PQ-graph whose P-nodes are all degree $4$. For each P-node who is connected to two Q-nodes, $Q_1$,$Q_2$, if one of the Q-nodes, say $Q_1$, is incident to a self-loop, we remove the P-node and $Q_1, Q_2$ and connect the two neighbors of $Q_2$. Every time this procedure is done, the number of free cycles and the number of P-nodes each decreases by $1$. Notice that if $Q_2$ is also incident to a self-loop, that means the PQ-graph consists of two free cycles and nothing else. If this is the case, the original PQ-graph has $\eta-1$ P-nodes with degree $4$ which means $m=2k+2-\xi$ and satisfies the lemma statement. Otherwise we are left with arcs only, whose total number satisfies $\xi$ equals $2$ times the remains number of degree $4$ P nodes. Hence we have that $\xi/2+\eta$ is equal to the total number of degree $4$ P nodes which is equal to $2k+1-m$ which also satisfies the lemma statement.
\end{proof}
\begin{fact} \label{fact:sumQ2}
Remove all the arcs and free cycles that contains edge $(Q(\gamma_{k+1}),Q(\delta_{k+2}))$ from the Q-graph and denote the number of removed edges as $l$. Suppose we are left with $\xi$ arcs with length $l_1,\ldots,l_\xi$ and $\eta$ free cycles with length $p_1,\ldots,p_\eta$. We have  $\sum_{\delta,\gamma|Q}{(S^T\beta\beta^T S)}_{\gamma_{k+1},\delta_{k+2}} \prod_{j=1,j\ne k+1}^{2k+1} {(S^T S)}_{\gamma_j,\delta_{j+1}} \le \prod_{i=1}^\xi tr((S^T S)^{2l_i})^{1/2}\prod_{j=1}^\eta tr((S^T S)^{p_j}) (\beta^TSS^T\beta)\sigma_1^{l-1}\le \sigma_1^{2k} d^{2(k+1)-m-1}$, where $\sigma_1$ denotes the largest eigenvalue of $S^TS$ or equivalently of $SS^T$.
\end{fact}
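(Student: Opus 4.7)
\textbf{Proof proposal for Fact~\ref{fact:sumQ2}.}

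The plan is to mirror the structure of Fact~\ref{fact:sumQ} but for the (single-walk) product structure that arises in Case 2. The first step is to factorize the sum over indices respecting $Q$ along the connected components of the $Q$-graph. Once the $Q$-partition is fixed, indices in the same $Q$-block are tied to a common value, so the sum splits into one factor per connected component. Each free cycle of length $p_j$ that does not contain the special edge $(Q(\gamma_{k+1}),Q(\delta_{k+2}))$ evaluates to $\mathrm{tr}((S^TS)^{p_j})$. Each arc of length $l_i$ not containing the special edge has two free endpoints (the degree-$4$ $Q$-nodes at its ends) that remain summed over all of $[d]$; by the same Cauchy--Schwarz step used in Lemma~1 of~\cite{kong2017spectrum} this partial trace is bounded by $\mathrm{tr}((S^TS)^{2l_i})^{1/2}$.

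The next step is the special treatment of the unique component containing the edge $(Q(\gamma_{k+1}),Q(\delta_{k+2}))$, which carries the $S^T\beta\beta^TS$ factor of length $l$. If that component is a free cycle of length $l$, contracting along the cycle yields $\beta^T(SS^T)^l\beta$; using $SS^T \preceq \sigma_1 I$ gives the bound $(\beta^TSS^T\beta)\sigma_1^{l-1}$. If instead it is an arc of length $l$ with free ends, contracting along the arc around the $\beta\beta^T$ insertion gives $\beta^T(SS^T)^a\beta\,\beta^T(SS^T)^b\beta$ with $a+b=l-1$, summed against $(S^TS)$-powers at the two free ends; a Cauchy--Schwarz split at the $\beta\beta^T$ insertion and then the operator-norm bound $\|SS^T\|\le\sigma_1$ again dominate it by $(\beta^TSS^T\beta)\sigma_1^{l-1}$. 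This gives the first inequality of the fact.

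For the second inequality I apply $\mathrm{tr}((S^TS)^t)\le d\sigma_1^t$ to each remaining trace: the free-cycle factors contribute $\prod d\sigma_1^{p_j}$ and the arc factors contribute $\prod d^{1/2}\sigma_1^{l_i}$. The exponent of $\sigma_1$ coming out of everything is $\sum l_i + \sum p_j + (l-1) = 2k$, because the total number of $Q$-edges in the walk is exactly $2k+1$ (one special plus $2k$ ordinary $S^TS$-edges); the exponent of $d$ is $\eta+\xi/2$. Using $\beta^TSS^T\beta\le 1$ (which follows from $\mathrm{Var}(y)$ being normalized), it remains to show $\eta+\xi/2\le 2(k+1)-m-1$. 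This is where Lemma~\ref{lem:partitionsize2} enters: that lemma bounds the total count $\xi/2+\eta$ for the whole $Q$-graph (before removing the special component) by either $2(k+1)-m$ (when $\xi=0$) or $2k+1-m$ (when $\xi>0$); removing the component containing the $\beta$-edge subtracts either one free cycle or one arc from that count, yielding in both sub-cases the desired $\eta+\xi/2\le 2(k+1)-m-1$.

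The main obstacle is the careful bookkeeping at the end: one must check that in every sub-case (special edge on a free cycle versus on an arc, and $\xi=0$ versus $\xi>0$) removing the $\beta$-carrying component subtracts exactly one unit from the bound of Lemma~\ref{lem:partitionsize2}, so that the single ``lost'' factor of $d$ is precisely compensated by the factor $\beta^TSS^T\beta\le 1$ that replaces the $d$ from the trace of that component. A secondary nuisance is the arc sub-case for the $\beta$-containing component, where the $\beta\beta^T$ insertion can sit anywhere along the arc; the Cauchy--Schwarz step must be applied after splitting at the insertion point so that each resulting factor is a legitimate quadratic form in $\beta$, rather than a mixed term that would require an extra application of the operator-norm inequality.
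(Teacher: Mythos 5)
Your proposal is correct and matches the paper's intended argument: the paper's own proof of Fact~\ref{fact:sumQ2} is simply the remark that it is ``analogous to Fact~\ref{fact:sumQ},'' and you have filled in exactly the expected details — decompose over connected components, treat the $\beta\beta^T$-carrying component specially (yielding $(\beta^T SS^T\beta)\sigma_1^{l-1}$ whether it sits on a cycle or on an arc), bound the remaining traces by $d\sigma_1^t$, and invoke Lemma~\ref{lem:partitionsize2} together with $\beta^T SS^T\beta\le 1$ to get the exponent $2(k+1)-m-1$. One minor imprecision: after the Cauchy--Schwarz step for an arc of length $l$ containing the $\beta$-edge, the two resulting quadratic forms have $SS^T$-exponents summing to $2l$ (not $l-1$) \emph{before} taking the square root, which is what makes the net contribution $(\beta^TSS^T\beta)\sigma_1^{l-1}$ as you claim; also ``subtracts exactly one unit'' is loose for the arc sub-case (it subtracts $1/2$), though the bound still goes through because Lemma~\ref{lem:partitionsize2} already gives $2(k+1)-m-1$ whenever $\xi>0$.
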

The proof is analogous to that of Fact~\ref{fact:sumQ}. 
\iffalse
Given a partition $P$, what's the worst case bound we can obtain from Fact~\ref{fact:sumQ2}? 

\begin{lemma}
Given $P=\{P_1,\ldots,P_m\}$, $Q = \{Q_1,\ldots,Q_w\}$, we have $\sum_{\delta,\gamma|Q}{(S^T\beta\beta^T S)}_{\gamma_{k+1},\delta_{k+2}} \prod_{j=1,j\ne k+1}^{2k+1} {(S^T S)}_{\gamma_j,\delta_{j+1}} \le \sigma_1^{2m-2(k+1)}\|S^TS\|_2^{4(k+1)-2m-2}(\beta^TSS^T\beta)$
\end{lemma}
\begin{proof}
By Fact~\ref{fact:auxineq}, Lemma~\ref{lem:partitionsize2}, the configuration that achieves the maximum is  $l=2m-2(k+1)+1$,$\eta=2(k+1)-m-1$ and all the free cycles have size $2$ which gives $\sigma_1^{2m-2(k+1)}\|S^TS\|_2^{4(k+1)-2m-2}(\beta^TSS^T\beta)$.
\end{proof}
\fi
Now we are ready to conclude the proof of this case.
\begin{lemma}\label{lem:app:gen-case2}
\begin{align*}
\frac{1}{\binom{n}{k+1}^2}\sum_{\pi,\pi'} \Big(\E[\x_{\pi_1}^T\x_{\pi_2}\ldots \x_{\pi_k}^T\x_{\pi_{k+1}}\x_{\pi_{k+1}}^T\beta\beta^T\x_{\pi'_{k+1}}\x_{\pi'_{k+1}}^T\x_{\pi'_{k}}\ldots \x_{\pi'_2}^T\x_{\pi_1}\eta^2_{\pi_1}]\Big)\\
\le 2^{12(k+1)}(k+1)^{6(k+1)}C^{k+1}\sigma_1^{2k}\max(\frac{d^{k-1}}{n^{k}},\frac{1}{n})
\end{align*}
\end{lemma}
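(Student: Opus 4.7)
The plan is to mirror the proof of Lemma~\ref{lem:app:gen-case1}, adapting it to the single-closed-walk structure of case~2 and, crucially, exploiting the constraint $\pi_{k+1}\neq\pi'_{k+1}$ to shave an extra factor of $d/n$ off the dominant contribution. Since $\eta_{\pi_1}$ is independent of everything else and $\E[\eta_{\pi_j}]=0$ for all $j$, I would first pull $\E[\eta_{\pi_1}^2]=\delta^2\le 1$ out of every term; any contribution involving $\eta_{\pi_{k+1}}$ or $\eta_{\pi'_{k+1}}$ vanishes by independence and zero mean, leaving an expectation purely in the $\x$'s. Expanding $\x_{\pi_i}=S\z_{\pi_i}$ produces a sum over $\z$-indices of $2k+1$ scalar factors---$2k$ regular edges carrying $S^TS$ and one special edge carrying $S^T\beta\beta^T S$ (from $\x_{\pi_{k+1}}^T\beta\beta^T\x_{\pi'_{k+1}}$)---which together form a single closed walk on the $2k+1$ positions $\pi_1,\pi_2,\ldots,\pi_{k+1},\pi'_{k+1},\ldots,\pi'_2$. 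As in case~1, I would then group the terms by the pair of partitions $(P,Q)$ of positions and $\z$-indices, so that all realizations respecting the same $(P,Q)$ share the same $\E[\prod z]$.

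For each admissible $(P,Q)$, the bound is assembled from three ingredients: Fact~\ref{fact:sumQ2} gives $\sum_{\delta,\gamma\mid Q}[\cdots]\le \sigma_1^{2k}d^{2(k+1)-m-1}$ with $m=|P|$; the fourth-moment condition gives $\E[\prod z]\le C^{k+1}$; and the number of $(\pi,\pi')$ consistent with $P$ is at most $\binom{n}{m}$ times a combinatorial factor depending only on $k$. Summing over the at most $k^{O(k)}$ distinct $(P,Q)$ configurations and normalizing by $\binom{n}{k+1}^2$ yields an overall bound proportional to $\sigma_1^{2k}\cdot n^{m-2(k+1)}\cdot d^{2(k+1)-m-1}$, which must then be maximized over the admissible range of $m$.

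The main obstacle, and the step responsible for the improvement from $d^k/n^{k+1}$ (as in case~1) to $d^{k-1}/n^k$, is showing that the minimum admissible value of $m$ in case~2 is $k+2$, not $k+1$. Suppose for contradiction that $m=k+1$; then $\pi_1$ is forced to be a singleton P-set (since $\pi_1<\pi_i$ for $i\ge 2$ and $\pi_1=\pi'_1<\pi'_j$ for $j\ge 2$ rule out any merger of $\pi_1$ with another position), and the remaining $2k$ positions pair into $k$ cross-pairs $\{\pi_i,\pi'_{\sigma(i)}\}$, so the multisets $\{\pi_1,\ldots,\pi_{k+1}\}$ and $\{\pi'_1,\ldots,\pi'_{k+1}\}$ coincide. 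But both sequences are sorted strictly increasingly, which forces $\pi_i=\pi'_i$ for every $i$, and in particular $\pi_{k+1}=\pi'_{k+1}$, contradicting the defining condition of case~2. Hence $m\ge k+2$, and plugging $m=k+2$ into the formula yields the dominant term $\sigma_1^{2k}d^{k-1}/n^k$, while $m=2k+1$ yields the $\sigma_1^{2k}/n$ term. The remaining technical details are routine: the $\xi=0$ subcase of Lemma~\ref{lem:partitionsize2} (with its slightly weaker $\eta\le 2(k+1)-m$ bound) is absorbed using the extra $\beta^TSS^T\beta\le 1$ factor in Fact~\ref{fact:sumQ2} to trade one power of $d$ for a constant, and the partition-enumeration and fourth-moment constants fit inside the stated $2^{12(k+1)}(k+1)^{6(k+1)}C^{k+1}$ prefactor.
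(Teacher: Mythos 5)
Your proposal is correct and follows the same route as the paper: reduce to a single closed walk on the $2k+1$ positions, apply Fact~\ref{fact:sumQ2} and Lemma~\ref{lem:partitionsize2} for the per-partition bound, sum via the inclusion-exclusion count of realizations respecting $P$, and optimize over $m$. The one place you go beyond the paper is worth noting: the paper simply asserts $m\ge k+2$, whereas you supply the needed combinatorial justification — $\pi_1$ can only be a singleton, so $m=k+1$ would force a perfect cross-matching of $\{\pi_2,\ldots,\pi_{k+1}\}$ with $\{\pi'_2,\ldots,\pi'_{k+1}\}$, which by strict monotonicity collapses to the identity and contradicts $\pi_{k+1}\ne\pi'_{k+1}$. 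That fills a genuine gap in the paper's exposition.
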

\begin{proof}
$\frac{1}{\binom{n}{k+1}}\sum_{\pi,\pi',\delta,\delta',\gamma,\gamma'|P,Q}{(S^T\beta\beta^T S)}_{\gamma_{k+1},\delta_{k+2}} \prod_{j=1,j\ne k+1}^{2k+1} {(S^T S)}_{\gamma_j,\delta_{j+1}}\le 2^{2(k+1)}\sigma_1^{2k}\frac{d^{2(k+1)-m-1}}{n^{2(k+1)-m}}$, where we have applied inclusion-exclusion principle(see Lemma 5 of~\cite{kong2017spectrum}). This quantity is monotonically decreasing for $m$ when $n<d$. Since $m\ge k+2$, we conclude that $2^{2k}\sigma_1^{2k}\frac{d^{k-1}}{n^{k}}$ is an upperbound. In the setting where $n\ge d$,  $2^{2k}\sigma_1^{2k}\frac{1}{n}$ is an upperbound. Now what remains is to bound the expectation of the product of $z$s and count the number of distinct $(P,Q)$. By the $4$th moment condition, $\E[\prod_{i=1}^{k+2} z_{\pi_i,\delta_i}z_{\pi_i,\gamma_i}z_{\pi'_i,\delta'_i}z_{\pi'_i,\gamma'_i}]\le C^{k+1}$. The number of distinct $P,Q$ is bounded by ${2(k+1)}^{2(k+1)}{4(k+1)}^{4(k+1)}$. Hence we conclude the proof. 
\end{proof}

\item If $\pi_1,\pi_{k+1},\pi'_1,\pi'_{k+1}$ takes $2$ different values. All the terms may have non-zero expectation.
Pick $\pi,\pi'$, the contribution to the variance (omitting the $\delta^4$ term) is expressed as:
\begin{align*}
\E[\x_{\pi_1}^T\x_{\pi_2}\ldots \x_{\pi_k}^T\x_{\pi_{k+1}}\x_{\pi_{k+1}}^T\ldots \x_{\pi'_2}^T\x_{\pi'_{2}}\x^T_{\pi_1}]
\end{align*}
For the convenience of the analysis, we redefine $\pi = \{\pi_1,\pi_2,\ldots,\pi_{k+1},\pi'_{k},\ldots,\pi'_2\}$. With the new definition, the above formula can be expressed as
$$
\E[\sum_{\delta,\gamma}\prod_{i=1}^{2k} z_{\pi_i,\delta_i}z_{\pi_i,\gamma_i}(S^TS)_{\gamma_i,\delta_{i+1}} ].
$$
Again we can define the PQ-graph based on a realization of the variables $\pi,\delta,\gamma$. We have the following lemma regarding the maximum number of free cycles and arcs of the Q-Graph.
\begin{lemma}\label{lemma:lem:partitionsize3}
For $\pi,\delta,\gamma$ respecting $P,Q$, the number of arcs $\xi$ and the number of free cycles $\eta$ satisfies: if $\xi=0$, $\eta\le 2k+1-m$, otherwise $\xi/2+\eta\le 2k-m$.
\end{lemma}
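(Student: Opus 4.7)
The plan is to mimic the counting arguments used for Lemmas~\ref{lem:partitionsize1} and~\ref{lem:partitionsize2}, exploiting the fact that in Case 3 the identifications $\pi_1 = \pi'_1$ and $\pi_{k+1} = \pi'_{k+1}$ fuse the two original $\pi$-walks into a single closed walk of length $2k$, and there is no extra ``closing'' $\beta\beta^T$ edge. Since there are $2k$ $\pi$-variables partitioned into $m$ sets, and every set in $P$ has size at most $2$ (otherwise the corresponding $z$-moment would contain an odd power and vanish, since restrictions to sizes $2$ and $4$ were already imposed on $Q$), the number of degree-$4$ P-nodes is exactly $m_4 = 2k - m$. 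The target bounds can then be rephrased as $\xi/2 + \eta \le m_4$ when $\xi > 0$, and $\eta \le m_4 + 1$ when $\xi = 0$.

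First I would apply the analogue of the initial reduction in Lemma~\ref{lem:partitionsize2}: for every degree-$2$ P-node, delete the P-node together with its incident Q-node(s) and splice the surviving Q-neighbors. This preserves both $\eta$ and $\xi$ so long as no self-loop is encountered. If the procedure ever reaches a degree-$2$ P-node whose incident Q-node carries a self-loop, then because the underlying walk here is a single closed walk (rather than the two walks of Case 2), the entire Q-subgraph must collapse to exactly one free cycle; this is the extremal configuration $\xi = 0$, $\eta = 1$, $m = 2k+1$, which saturates $\eta \le 2k+1-m$ and accounts for the $+1$ offset in the $\xi = 0$ case.

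Once all degree-$2$ P-nodes have been eliminated, every surviving P-node has degree $4$. I would then perform the second reduction of Lemma~\ref{lem:partitionsize2}: for any remaining degree-$4$ P-node whose incident Q-node is a self-loop, delete the P-node together with both of its Q-neighbors and splice their remaining edges, which simultaneously decrements the count of degree-$4$ P-nodes and the number of free cycles by one each. The only terminal configuration contributing an additional cycle is one where both Q-neighbors of a single degree-$4$ P-node are self-loops, but this forces $m = 2k$ and $\eta = 2$, still consistent with $\xi/2+\eta \le 2k-m$. When the reductions terminate, the surviving graph is composed solely of degree-$4$ P-nodes connected by arcs, each surviving P-node is incident to exactly two arc-endpoints, so $\xi = 2 \cdot (\text{surviving } m_4)$, giving $\xi/2 + \eta \le m_4 = 2k - m$, as desired.

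The main obstacle will be the careful bookkeeping of the self-loop exceptional cases, in particular identifying precisely when the collapse of the single closed walk into one free cycle occurs (producing the $\eta \le 2k+1-m$ branch) versus when the self-loop is absorbed into the reduction without altering the count (producing the $\xi/2+\eta \le 2k-m$ branch). This is a tighter version of the parity-style argument used in Corollary~\ref{cor:partitionsize1}, and once it is pinned down, the inequalities follow immediately from the identity $m_4 = 2k - m$.
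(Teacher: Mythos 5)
Your approach is the one the paper intends — the paper only says the proof ``is analogous to the proof of Lemma~\ref{lem:partitionsize2}'' — and you correctly identify the key structural difference: Case~3 fuses the two walks into a single closed walk of length $2k$ (not $2k+1$), so the number of degree-$4$ P-nodes is $m_4 = 2k-m$ and the lemma can be rephrased as $\eta \le m_4+1$ when $\xi=0$ and $\xi/2+\eta\le m_4$ otherwise. The two-stage reduction (eliminate degree-$2$ P-nodes, then eliminate degree-$4$ P-nodes adjacent to self-loops) is the right engine. However, you carry the explicit numerical values from Lemma~\ref{lem:partitionsize2} into the two exceptional cases without shifting them down by one, and as stated they contradict the very lemma you are proving.

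In the first exceptional case you assert $\xi=0$, $\eta=1$, $m=2k+1$; but $m\le 2k$ since there are only $2k$ P-variables in Case~3, and with $m=2k+1$ the claimed bound $\eta\le 2k+1-m$ would read $\eta\le 0$, violated by $\eta=1$. The correct value is $m=2k$ (every P-node has degree $2$), giving $\eta=1=2k+1-m$, which saturates. In the second exceptional case you assert $m=2k$, $\eta=2$, $\xi=0$ and call this ``still consistent with $\xi/2+\eta\le 2k-m$''; that would require $2\le 0$. Since there is at least one surviving degree-$4$ P-node, $m_4\ge 1$ so $m\le 2k-1$; and since $\xi=0$, the relevant branch is $\eta\le 2k+1-m$, not the $\xi>0$ branch. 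The correct bookkeeping, mirroring the paper's Lemma~\ref{lem:partitionsize2} argument, is $m_4=\eta-1$, hence $m = 2k+1-\eta$ and the bound is met with equality. A minor side note: your parenthetical ``(rather than the two walks of Case~2)'' is off — the paper's Case~2, i.e.\ Lemma~\ref{lem:partitionsize2}, also fuses the two $\pi$-walks into a single closed walk (there of length $2k+1$); it is Lemma~\ref{lem:partitionsize1} that retains two disjoint closed walks.
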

The proof is analogous to the proof of Lemma~\ref{lem:partitionsize2}.  
The following fact is the analog of  Facts~\ref{fact:sumQ} and~\ref{fact:sumQ2}.
\begin{fact}\label{fact:sumQ3}
Given $P=\{P_1,\ldots,P_m\}$, $Q = \{Q_1,\ldots,Q_w\}$, we have $\sum_{\delta,\gamma|Q}\prod_{j=1}^{2k} {(S^T S)}_{\gamma_j,\delta_{j+1}} \le \sigma_1^{2k}d^{2k-m+1}$.
\end{fact}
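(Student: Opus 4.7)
The plan is to mimic the proof templates of Facts~\ref{fact:sumQ} and~\ref{fact:sumQ2}, but with the simplification that no $\beta$-factors appear among the matrix entries being summed. The starting observation is that $\sum_{\delta,\gamma|Q}\prod_{j=1}^{2k}(S^TS)_{\gamma_j,\delta_{j+1}}$ decomposes as a product over the connected components of the $Q$-subgraph of the $PQ$-graph, because once the $Q$-partition is fixed, the variables in distinct components are summed independently, and each matrix factor $(S^TS)_{\gamma_j,\delta_{j+1}}$ lies entirely within a single component.

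By the standard decomposition of the $Q$-subgraph into $\xi$ arcs (of lengths $l_1,\dots,l_\xi$) and $\eta$ free cycles (of lengths $p_1,\dots,p_\eta$), the total number of matrix factors is $\sum_i l_i + \sum_j p_j = 2k$. The next step is to bound each component's contribution:
\begin{itemize}
\item For a free cycle of length $p$, the component sum collapses to $\mathrm{tr}((S^TS)^p)\le d\,\sigma_1^p$.
\item For an arc of length $l$ joining two degree-$4$ endpoints, one applies Cauchy--Schwarz (as in Fact~\ref{fact:sumQ}) to write the sum as at most $\mathrm{tr}((S^TS)^{2l})^{1/2}\le \sqrt{d}\,\sigma_1^{l}$.
\end{itemize}
Multiplying across all components gives the uniform bound
\[
\sum_{\delta,\gamma|Q}\prod_{j=1}^{2k}(S^TS)_{\gamma_j,\delta_{j+1}}\ \le\ d^{\eta+\xi/2}\,\sigma_1^{\,\sum_i l_i+\sum_j p_j}\ =\ d^{\eta+\xi/2}\,\sigma_1^{2k}.
\]

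The final step is to invoke Lemma~\ref{lemma:lem:partitionsize3}, which gives $\eta+\xi/2\le 2k-m$ whenever $\xi>0$ and $\eta\le 2k+1-m$ when $\xi=0$. In either case the exponent of $d$ is at most $2k-m+1$, yielding the claimed $\sigma_1^{2k}d^{2k-m+1}$. There is no real obstacle here: the combinatorial partitioning work has already been done in Lemma~\ref{lemma:lem:partitionsize3}, and the analytic piece (Cauchy--Schwarz on arcs, trace bound on cycles) is identical to what was used in the two preceding facts. The only thing worth checking carefully is that, because no $\beta$ appears in the product and no edge is singled out for removal (unlike in Facts~\ref{fact:sumQ}--\ref{fact:sumQ2}), the tighter bound in the $\xi>0$ case is not needed and one may simply take the worst-case exponent $2k-m+1$.
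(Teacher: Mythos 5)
Your proposal is correct and follows exactly the route the paper intends: the paper does not spell out a proof of Fact~\ref{fact:sumQ3}, instead calling it the analog of Facts~\ref{fact:sumQ} and~\ref{fact:sumQ2}, and your argument fills that in faithfully---decomposing the $Q$-graph into free cycles (trace bound $\le d\,\sigma_1^p$) and arcs (Cauchy--Schwarz bound $\le \sqrt{d}\,\sigma_1^l$), multiplying to get $d^{\eta+\xi/2}\sigma_1^{2k}$, and then invoking Lemma~\ref{lemma:lem:partitionsize3} to conclude $\eta+\xi/2\le 2k-m+1$. The simplification you note (no $\beta$-edges, so no special-cased arc or cycle, and the weaker worst-case exponent $2k-m+1$ suffices) is precisely why the paper treats this fact as the easiest of the three.
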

Finally we are ready to conclude the proof of this case.
\begin{lemma}\label{lem:bound-partitionsize3}
\begin{align*}
\frac{1}{\binom{n}{k+1}^2}\sum_{\pi,\pi'} \Big(\E[\x_{\pi_1}^T\x_{\pi_2}\ldots \x_{\pi_k}^T\x_{\pi_{k+1}}\x_{\pi_{k+1}}^T\x_{\pi'_k}\ldots \x_{\pi'_2}^T\x_{\pi_1}\eta^2_{\pi_1}\eta^2_{\pi_{k+1}}]\Big)\\
\le 2^{12(k+1)}(k+1)^{6(k+1)}C^{k+1}\sigma_1^{2k}\max(\frac{d^{k}}{n^{k+1}},\frac{1}{n})
\end{align*}
\end{lemma}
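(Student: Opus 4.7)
The plan is to follow the same template already used in the proofs of Lemma~\ref{lem:app:gen-case1} and Lemma~\ref{lem:app:gen-case2}, adapted to the case in which $\pi_1,\pi_{k+1},\pi_1',\pi_{k+1}'$ collapse to only two distinct values. First I would note that $\E[\eta^2_{\pi_1}\eta^2_{\pi_{k+1}}]=\delta^4\le 1$ factors out of the expectation since the $\eta_{\pi_1}$ and $\eta_{\pi_{k+1}}$ are independent of the $\z$'s and of each other, so the problem reduces to bounding the sum over $\pi,\pi'$ of $\E[\x_{\pi_1}^T\x_{\pi_2}\cdots\x_{\pi_k}^T\x_{\pi_{k+1}}\x_{\pi_{k+1}}^T\x_{\pi'_k}\cdots\x_{\pi'_2}^T\x_{\pi_1}]$ against the $2k$-variable index vector $\pi=\{\pi_1,\pi_2,\ldots,\pi_{k+1},\pi'_k,\ldots,\pi'_2\}$.

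Next I would expand each $\x_{\pi_i}=S\z_{\pi_i}$ in coordinates, rewriting the expectation as $\E[\sum_{\delta,\gamma}\prod_{i=1}^{2k}z_{\pi_i,\delta_i}z_{\pi_i,\gamma_i}(S^TS)_{\gamma_i,\delta_{i+1}}]$, and group terms by the pair of partitions $(P,Q)$ of the indices defined earlier in the section. For each $(P,Q)$ with $|P|=m$, I would apply Fact~\ref{fact:sumQ3} to bound the inner summation over $\delta,\gamma\mid Q$ by $\sigma_1^{2k}d^{2k-m+1}$, use the fourth moment assumption to bound the $z$-product expectation by $C^{k+1}$, and apply the inclusion-exclusion bound (Lemma 5 of~\cite{kong2017spectrum}) to control the number of $\pi$'s respecting $P$ by $2^{2(k+1)}\binom{n}{m}$. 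Combining with the prefactor $1/\binom{n}{k+1}^2$ yields a contribution of order $2^{O(k)}C^{k+1}\sigma_1^{2k}\,d^{2k-m+1}/n^{2(k+1)-m}$ for each $(P,Q)$.

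The key step is then to minimize over $m$ (since the ratio is monotonically decreasing in $m$ when $n<d$ and otherwise bounded by $1/n$). Because this case already forces the two identifications $\pi_1=\pi'_1$ and $\pi_{k+1}=\pi'_{k+1}$, the partition $P$ contains at least two distinguished blocks; the remaining $2k-2$ variables $\pi_2,\ldots,\pi_k,\pi'_2,\ldots,\pi'_k$ can be grouped into at most $k-1$ additional blocks of size at most $2$, giving the sharp lower bound $m\ge k+1$. Substituting $m=k+1$ yields the dominant term $d^k/n^{k+1}$ in the regime $n<d$ and $1/n$ in the regime $n\ge d$. Finally I would multiply by the number of distinct $(P,Q)$ pairs, which is at most $(2(k+1))^{2(k+1)}\cdot(4(k+1))^{4(k+1)}$, to obtain the claimed prefactor $2^{12(k+1)}(k+1)^{6(k+1)}C^{k+1}\sigma_1^{2k}$.

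The main obstacle is verifying that $m\ge k+1$ is indeed the correct combinatorial minimum under the constraint that the Q-graph leaves the two walks dependent (so that the expectation does not factor trivially and contribute $0$), and that the degenerate configurations identified in the proof of Lemma~\ref{lemma:lem:partitionsize3} (where the Q-graph degenerates into a pair of free cycles or a single free cycle) are consistent with this lower bound. Aside from that, every other step is a routine transcription of the arguments already carried out for Lemmas~\ref{lem:app:gen-case1} and~\ref{lem:app:gen-case2}, with the only difference being the replacement of the $\beta^T SS^T$ factor by an extra $(S^TS)$ factor along the closed walk, which is precisely what is accounted for in Fact~\ref{fact:sumQ3}.
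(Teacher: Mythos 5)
Your proposal takes essentially the same route as the paper: factor out the $\eta$ terms, expand $\x_{\pi_i}=S\z_{\pi_i}$, group by the partition pair $(P,Q)$, bound the non-random part via Fact~\ref{fact:sumQ3}, apply the inclusion-exclusion bound from Lemma 5 of~\cite{kong2017spectrum}, minimize over $m$ (monotone decreasing in $m$ for $n<d$, so use $m\ge k+1$), and pay the $C^{k+1}$ and $(2(k+1))^{2(k+1)}(4(k+1))^{4(k+1)}$ factors for the $z$-product and the number of $(P,Q)$ pairs. That is exactly the paper's argument. Two small quibbles: your justification that ``the remaining $2k-2$ variables can be grouped into \emph{at most} $k-1$ blocks'' has the inequality backwards (size-$\le 2$ blocks give \emph{at least} $k-1$ of them); the clean way to get $m\ge k+1$ is simply that $\pi_1<\pi_2<\cdots<\pi_{k+1}$ are already $k+1$ distinct sample indices. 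Also, the concern about the two walks being independent is moot in this case: the quantity being bounded has no centering term, and moreover $\pi_1=\pi'_1$ and $\pi_{k+1}=\pi'_{k+1}$ force the two walks to share samples, so there is no degenerate-factorization issue to rule out.
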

\begin{proof}
$\frac{1}{\binom{n}{k+1}}\sum_{\pi,\delta,\gamma|P,Q} \prod_{j=1}^{2k} {(S^T S)}_{\gamma_j,\delta_{j+1}}\le 2^{2(k+1)}\frac{\binom{n}{m}}{\binom{n}{k+1}^2}\sigma_1^{2k}\frac{d^{2k-m+1}}{n^{2(k+1)-m}}$, where we have applied the inclusion-exclusion principle (see Lemma 5 of~\cite{kong2017spectrum}). This quantity is monotonically decreasing for $m$ when $n<d$. Since $m\ge k+1$, we conclude that $2^{2k}\sigma_1^{2k}\frac{d^{k}}{n^{k+1}}$ is an upperbound. In the setting where $n\ge d$, the upperbound becomes $2^{2k}\sigma_1^{2k}\frac{1}{n}$. Now what remains is to bound the expectation of the product of the $z$'s and count the number of distinct $(P,Q)$. By the $4$th moment condition, $\E[\prod_{i=1}^{k+2} z_{\pi_i,\delta_i}z_{\pi_i,\gamma_i}z_{\pi'_i,\delta'_i}z_{\pi'_i,\gamma'_i}]\le C^{k+1}$. The number of distinct $P,Q$ is bounded by ${2(k+1)}^{2(k+1)}{4(k+1)}^{4(k+1)}$. Hence we conclude the proof. 
\end{proof}
Combing these three cases concludes the proof of Proposition~\ref{prop:gen-var}.
\end{enumerate}

\section{Proof of Theorem~\ref{thm:bin-gen}, the Linear Classification Model}\label{ap:bin-gen}
In this section, we prove the main theorem in the linear classification model. We restate the theorem:

\medskip
\vspace{.2cm}\noindent \textbf{Theorem~\ref{thm:bin-gen}.} \emph{ Suppose we are given $n<d$ labeled examples, $(\x_1,y_1),\ldots,(\x_n,y_n)$, with $\x_i$ drawn independently from a Gaussian distribution with mean $0$ and covariance $\Sigma$ where $\Sigma$ is an unknown arbitrary $d$ by $d$ real matrix.  Assuming that each label $y_i$ takes value $1$ with probability $g(\beta^T\x_i)$ and $-1$ with probability $1-g(\beta^T\x_i)$, where $g(x)=\frac{1}{1+e^{-x}}$ is the sigmoid function. There is an algorithm that takes $n$ labeled samples, parameter $k$, ${\sigma_{max}}$ and ${\sigma_{min}}$ which satisfies $\sigma_{max}I\succeq S^TS\succeq {\sigma_{min}I}$, and with probability $1-\tau$, outputs an estimate $\widehat{err_{opt}}$ with additive error $|\widehat{err_{opt}}-err_{opt}|\le c\Big(\sqrt{\min(\frac{2}{k^2},2e^{-(k-1)\sqrt{\frac{\sigma_{min}}{\sigma_{max}}}})\sigma_{max}\|\beta\|^2+\frac{f(k)}{\tau}\sum_{i=2}^k \frac{d^{i/2-1/2}}{n^{i/2}}}\Big),$ where $err_{opt}$ is the classification error of the best linear classifier, $f(k)=k^{O(k)}$ and $c$ is an absolute constant.}
\medskip

We list the ingredients necessary for the proof of Theorem~\ref{thm:bin-gen} here. Recall that the success of our algorithm relies on a series of ``moment'' estimators. Proposition~\ref{prop:bin-unbiased2} and Proposition~\ref{prop:bin-varbound} establish accuracy guarantees for these estimators. Given these estimated ``high order moments'', our algorithm computes a linear combination of them to approximate the quantity $\E_{x\sim N(0,1)}[(g(\|\beta^T\Sigma^{1/2}\|x)-\frac{1}{2})x]$, and such an approximation is accurate via the polynomial approximation bound proved in Proposition~\ref{prop:poly-approx}. Finally our algorithm applies the mapping $F$ to the estimate of $\E_{x\sim N(0,1)}[(g(\|\beta^T\Sigma^{1/2}\|x)-\frac{1}{2})x]$ to obtain the classification error of the best linear classifier. The Lipschitz property of $F$, established in Proposition~\ref{prop:bin-mapping}, determines how the accuracy of estimating $\E_{x\sim N(0,1)}[(g(\|\beta^T\Sigma^{1/2}\|x)-\frac{1}{2})x]$ transfers to the accuracy of the final output (i.e. the classification error of the best linear classifier).
\begin{proof}[Proof of Theorem~\ref{thm:bin-gen}]
We first divide each sample $\x_i$ by  $\sigma_{max}$ and run Algorithm~\ref{alg:bin} with parameter $k$ and the polynomial constructed from Proposition~\ref{prop:poly-approx}. We denote the model parameter vector and covariance matrix after scaling as $\beta'$ and $\Sigma'$ which are simply $\sqrt{{\sigma_{max}}}\beta$ and $\frac{\Sigma}{{\sigma_{max}}}$(i.e. $\frac{S^TS}{{\sigma_{max}}}$). Notice that $\beta^T\Sigma\beta=\beta'^T\Sigma\beta'$, and this step does not change the classification error. 

By Proposition~\ref{prop:bin-unbiased2}, we have a series of unbiased estimators of $4\E_{x\sim N(0,1)}[(g(\|\beta^T\Sigma^{1/2}\|x)-\frac{1}{2})x]^2\frac{{\beta'}^T{\Sigma'}^{k}{\beta'}}{\beta'\Sigma'\beta'}$ for all $k\ge 2$. Further, by Chebyshev's inequality, Proposition~\ref{prop:bin-varbound} and the fact that $\E_{x\sim N(0,1)}[(g(\|\beta^T\Sigma^{1/2}\|x)-\frac{1}{2})x]<1/2, \E_{x\sim N(0,1)}[(g(\|\beta^T\Sigma^{1/2}\|x)-\frac{1}{2})x^3]<1$, we have that with probability $1-\tau$, there is an estimate for each $4\E_{x\sim N(0,1)}[(g(\|\beta^T\Sigma^{1/2}\|x)-\frac{1}{2})x]^2\frac{{\beta'}^T{\Sigma'}^{k}{\beta'}}{\beta'\Sigma'\beta'}$ with additive error less than $\frac{f(k)}{\tau}\max(\frac{d^{k/2-1/2}}{n^{k/2}},\frac{1}{\sqrt{n}})$. Note that we use $f(k)$ to denote different functions that only depends on $k$. Hence 
\begin{align}
|\sum_{i=0}^{k-1} a_i\frac{y^TG^{i+1}y}{\binom{n}{i+2}} - \sum_{i=2}^{k+1} 4a_i\E_{x\sim N(0,1)}[(g(\|\beta^T\Sigma^{1/2}\|x)-\frac{1}{2})x]^2\frac{{\beta'}^T{\Sigma'}^{i}{\beta'}}{\beta'\Sigma'\beta'}| ] \le \frac{f(k)}{\tau}\sum_{i=2}^k \frac{d^{i/2-1/2}}{n^{i/2}}.\label{eqn:approx-mom}
\end{align}

Observe that by using the polynomial coefficients from Proposition~\ref{prop:poly-approx}, we have $|{\beta'}^T\Sigma'\beta' - \sum_{i=0}^{k-2} a_i{\beta'}^T {\Sigma'}^{i+2}\beta|\le \min(\frac{2}{k^2},2e^{-(k-1)\sqrt{\frac{{\sigma_{min}}}{{\sigma_{max}}}}})){\sigma_{max}}\|\beta\|^2$. Hence 
\begin{align}
4\E_{x\sim N(0,1)}[(g(\|\beta^T\Sigma^{1/2}\|x)-\frac{1}{2})x]^2\Big(\sum_{i=0}^{k-2}a_i\frac{{\beta'}^T{\Sigma'}^{k}{\beta'}}{\beta'\Sigma'\beta'} - \frac{{\beta'}^T{\Sigma'}{\beta'}}{{\beta'}^T\Sigma'\beta'}\Big)\nonumber\\
\le \frac{4\E_{x\sim N(0,1)}[(g(\|\beta^T\Sigma^{1/2}\|x)-\frac{1}{2})x]^2}{{\beta'}^T\Sigma\beta'} \min(\frac{2}{k^2},2e^{-(k-1)\sqrt{\frac{{\sigma_{min}}}{{\sigma_{max}}}}})){\sigma_{max}}\|\beta\|^2\\
\le \min(\frac{2}{k^2},2e^{-(k-1)\sqrt{\frac{{\sigma_{min}}}{{\sigma_{max}}}}})){\sigma_{max}}\|\beta\|^2\label{eqn:approx-target}
\end{align}
by Proposition~\ref{prop:bin-qbyb}. Thus we combine Equation~\ref{eqn:approx-mom}, ~\ref{eqn:approx-target} and get
\begin{align*}
|\frac{\sum_{i=0}^{k-1} a_i\frac{y^TG^{i+1}y}{\binom{n}{i+2}}}{4} - \E_{x\sim N(0,1)}[(g(\|\beta^T\Sigma^{1/2}\|x)-\frac{1}{2})x]^2| \\
\le \frac{1}{4}\Big(\min(\frac{2}{k^2},2e^{-(k-1)\sqrt{\frac{{\sigma_{min}}}{{\sigma_{max}}}}})){\sigma_{max}}\|\beta\|^2+\frac{f(k)}{\tau}\sum_{i=2}^k \frac{d^{i/2-1/2}}{n^{i/2}}\Big).
%\Rightarrow |\frac{\sqrt{\sum_{i=0}^{k-1} a_i\frac{y^TG^{i+1}y}{\binom{n}{i+2}}}}{2} - \E_{x\sim N(0,1)}[(g(\|\beta^T\Sigma^{1/2}\|x)-\frac{1}{2})x]| \le \frac{1}{2}\Big(\frac{\hat{\sigma_1}}{k}\|\beta\|^2+\frac{f(k)}{\tau}\sum_{i=2}^k \frac{d^{i/2-1/2}}{n^{i/2}}\Big)^{1/2}\\
\end{align*}
To simplify notation, similarly to the definition in the proof of Proposition~\ref{prop:bin-mapping}, we define $q_1 = \frac{\sqrt{\sum_{i=0}^{k-1} a_i\frac{y^TG^{i+1}y}{\binom{n}{i+2}}}}{2}$, $q_2 = \E_{x\sim N(0,1)}[(g(\|\beta^T\Sigma^{1/2}\|x)-\frac{1}{2})x]$ and accordingly $p_1 = F_g(q_1)$, $p_2 = F_g(q_2) = (\frac{1}{2}-\E_{x\sim N(0,1)}[|g(\|\beta^T\Sigma^{1/2}\|x)-\frac{1}{2}|])$ and $l = \frac{1}{4}\Big(\min(\frac{2}{k^2},2e^{-(k-1)\sqrt{\frac{{\sigma_{min}}}{{\sigma_{max}}}}})){\sigma_{max}}\|\beta\|^2+\frac{f(k)}{\tau}\sum_{i=2}^k \frac{d^{i/2-1/2}}{n^{i/2}}\Big)$. Under this notation, we can apply Proposition~\ref{prop:bin-mapping} and get $|p_1^2-p_2^2|\le |q_1-q_2|$. Together with inequality above that $|q_1^2-q_2^2|\le l$, the following inequality holds: $|p_1-p_2|\le \min(\frac{l}{(p_1+p_2)(q_1+q_2)},\frac{q_1+q_2}{p_1+p_2},p_1+p_2)$. Notice that if $(p_1+p_2)<1/10$, it is easy to verify that $q_1+q_2>1/3$ (which can be seen from Figure~\ref{fig:sig}), hence $|p_1-p_2|\le \min(\frac{3}{(p_1+p_2)},p_1+p_2) \le O(\sqrt{l})$. If $(q_1+q_2)<1/10$, it's easy to very that $p_1+p_2>1/3$ as well, hence $|p_1-p_2|\le \min(\frac{3l}{(q_1+q_2)},3(q_1+q_2))\le O(\sqrt{l})$. Finally we conclude that the output of Algorithm~\ref{alg:bin} satisfies
$$
|F(t)-(\frac{1}{2}-\E_{x\sim N(0,1)}[|g(\|\beta^T\Sigma^{1/2}\|x)-\frac{1}{2}|])|\le O\Big(\sqrt{\min(\frac{1}{k^2},e^{-(k-1)\sqrt{\frac{{\sigma_{min}}}{{\sigma_{max}}}}})){\sigma_{max}}\|\beta\|^2+\frac{f(k)}{\tau}\sum_{i=2}^k \frac{d^{i/2-1/2}}{n^{i/2}}}\Big).
$$
%$\sum_{i=0}^{k-1} a_i\frac{y^TG^{i+1}y}{\binom{n}{i+2}} - 4\E[(g(\beta^T\x)-\frac{1}{2})\frac{\beta^T\x}{\sqrt{\beta^T\Sigma\beta}}]^2\sum_{a_i}$
%It's well-known fact that all the coefficient of a degree $k$ Chebyshev polynomial are less than $3^k$. Thus, these combined yields an estimate of $\beta^T\Sigma\beta$ with additive error less than $\frac{f(k)}{\tau}\sum_{i=2}^k \frac{d^{i/2-1/2}}{n^{i/2}}$ which gives the claimed estimation accuracy.
\end{proof}

\begin{proposition}\label{prop:bin-unbiased2}
$\E[\frac{\y^T G^k\y}{\binom{n}{k+1}}] = 4\E[(g(\beta^T\x)-\frac{1}{2})\frac{\beta^T\x}{\sqrt{\beta^T\Sigma\beta}}]^2\frac{\beta^T\Sigma^{k+1}\beta}{\beta^T\Sigma\beta}$.
\end{proposition}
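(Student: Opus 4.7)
The plan is to mirror the proof of Proposition~\ref{prop:gen-unbiased}, with the only new ingredient being the use of Proposition~\ref{prop:bin-unbiased} in place of the identity $\E[y\x]=\Sigma\beta$ that held in the linear regression setting.

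First I would expand $\y^T G^k \y$ using the strictly upper triangular structure of $G$, writing
\[
\y^T G^k \y \;=\; \sum_{i_1<i_2<\cdots<i_{k+1}} y_{i_1}\,(\x_{i_1}^T\x_{i_2})(\x_{i_2}^T\x_{i_3})\cdots(\x_{i_k}^T\x_{i_{k+1}})\,y_{i_{k+1}},
\]
so that the sum ranges over $\binom{n}{k+1}$ strictly increasing tuples. By linearity and identical distribution of the samples, it suffices to compute the expectation of a single summand and multiply by $\binom{n}{k+1}$.

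Next, since each inner product is a scalar, I would apply the telescoping identity
\[
(\x_{i_1}^T\x_{i_2})(\x_{i_2}^T\x_{i_3})\cdots(\x_{i_k}^T\x_{i_{k+1}}) \;=\; \x_{i_1}^T\,(\x_{i_2}\x_{i_2}^T)(\x_{i_3}\x_{i_3}^T)\cdots(\x_{i_k}\x_{i_k}^T)\,\x_{i_{k+1}}.
\]
Because the pairs $(\x_i,y_i)$ are independent across $i$, the expectation of a summand factorizes as $\E[y_{i_1}\x_{i_1}]^T\,\E[\x_{i_2}\x_{i_2}^T]\cdots\E[\x_{i_k}\x_{i_k}^T]\,\E[\x_{i_{k+1}}y_{i_{k+1}}]=\E[y\x]^T\,\Sigma^{k-1}\,\E[y\x]$, using $\E[\x\x^T]=\Sigma$ on each of the $k-1$ middle factors.

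Now I would substitute the formula from Proposition~\ref{prop:bin-unbiased}: letting $c:=2\E[(g(\beta^T\x)-\tfrac{1}{2})\tfrac{\beta^T\x}{\beta^T\Sigma\beta}]$, we have $\E[y\x]=c\,\Sigma\beta$, so the expectation of each summand equals $(c\Sigma\beta)^T\Sigma^{k-1}(c\Sigma\beta)=c^2\,\beta^T\Sigma^{k+1}\beta$. A short bookkeeping calculation
\[
c^2 \;=\; \frac{4\,\E\!\left[(g(\beta^T\x)-\tfrac{1}{2})\tfrac{\beta^T\x}{\sqrt{\beta^T\Sigma\beta}}\right]^{\,2}}{\beta^T\Sigma\beta},
\]
obtained by pulling a factor of $\sqrt{\beta^T\Sigma\beta}$ in and out of the expectation, rewrites $c^2\beta^T\Sigma^{k+1}\beta$ as the expression displayed in the proposition, and dividing by $\binom{n}{k+1}$ yields the claim.

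This is essentially a direct computation, so there is no real obstacle beyond keeping the bookkeeping clean. The only step worth pausing on is the constant-matching above, since the proposition expresses the answer using the normalization $\beta^T\x/\sqrt{\beta^T\Sigma\beta}$ whereas the factorization naturally produces $c$ with normalization $\beta^T\x/\beta^T\Sigma\beta$; the $\sqrt{\beta^T\Sigma\beta}$ discrepancy is exactly absorbed by the trailing $1/(\beta^T\Sigma\beta)$ factor in the stated identity.
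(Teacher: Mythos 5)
Your proof is correct and takes essentially the same approach the paper has in mind: it says the proof of Proposition~\ref{prop:bin-unbiased2} is "analogous to that of Proposition~\ref{prop:gen-unbiased}," and your argument is exactly that analogy, with $\E[y\x]=\Sigma\beta$ replaced by the formula from Proposition~\ref{prop:bin-unbiased}. The telescoping/independence factorization and the constant bookkeeping are all handled correctly.
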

The proof is analogous to that of Proposition~\ref{prop:gen-unbiased}.
\begin{proposition}\label{prop:bin-varbound}
$\Var[\frac{\y^T G^k\y}{\binom{n}{k+1}}] \le \Big(\E[(g(\beta^T\x)-\frac{1}{2})\frac{(\beta^T\x)^3}{\sqrt{\beta^T\Sigma\beta}^3}]^2\E[(g(\beta^T\x)-\frac{1}{2})\frac{(\beta^T\x)}{\sqrt{\beta^T\Sigma\beta}}]^2+\E[(g(\beta^T\x)-\frac{1}{2})\frac{(\beta^T\x)}{\sqrt{\beta^T\Sigma\beta}}]^4+1\Big)f(k)\sigma_1^{2k}\max(\frac{d^{k}}{n^{k+1}},\frac{1}{n})$.
\end{proposition}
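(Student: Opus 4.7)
The plan is to follow the proof of Proposition~\ref{prop:gen-var}, substituting the linear-plus-independent-noise decomposition of $y_i$ by the conditional-mean decomposition $y_i = m(\x_i)+\xi_i$, where $m(\x):=2g(\beta^T\x)-1\in[-1,1]$, $\E[\xi_i\mid\x_i]=0$, and $\E[\xi_i^2\mid\x_i]=1-m(\x_i)^2\le 1$. Expanding
\[
\y^T G^k\y \;=\; \sum_{\pi_1<\cdots<\pi_{k+1}} y_{\pi_1}\x_{\pi_1}^T\x_{\pi_2}\cdots\x_{\pi_k}^T\x_{\pi_{k+1}}y_{\pi_{k+1}},
\]
the variance is a double sum over ordered tuples $\pi,\pi'$, which I would split into three cases according to whether the endpoint sets $\{\pi_1,\pi_{k+1}\}$ and $\{\pi'_1,\pi'_{k+1}\}$ share $0$, $1$, or $2$ indices, exactly as in Proposition~\ref{prop:gen-var}. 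Substituting $y_i = m(\x_i)+\xi_i$ into each of the four label factors, the independence of the $\xi$'s across samples together with $\E[\xi_i\mid\x_i]=0$ leaves only configurations in which every $\xi$-factor is paired with another $\xi$-factor at the same sample index.

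For each surviving configuration, the inner expectation reduces to one of two Gaussian primitives. First, for independent $\x_1,\x_2\sim N(0,\Sigma)$ and any fixed matrix $A$,
\[
\E\bigl[m(\x_1)\,\x_1^T A\x_2\, m(\x_2)\bigr]
\;=\; \E[m(\x)\x^T]\,A\,\E[\x m(\x)]
\;=\; \frac{c_1^2}{\beta^T\Sigma\beta}\,\beta^T\Sigma A\Sigma\beta,
\]
with $c_1 := 2\E[(g(\beta^T\x)-\tfrac12)\,\beta^T\x/\sqrt{\beta^T\Sigma\beta}]$ by Proposition~\ref{prop:bin-unbiased}; this supplies the endpoint ``vertex weights'' of the PQ-graph. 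Second, when a single sample index appears twice on one side of the product, one encounters cubic expectations of the shape $\E[m(\x)\,\x\,\x^T A\,\x]$. Since the sigmoid satisfies $g(t)+g(-t)=1$, the function $m$ is odd, so expanding $m(\x)$ in the probabilist Hermite basis of $Z:=\beta^T\x/\sqrt{\beta^T\Sigma\beta}\sim N(0,1)$ yields only odd-degree contributions; applying Wick's theorem to the remaining Gaussian factors then reduces each such expectation to a linear combination of $c_1$ and $c_3 := 2\E[(g(\beta^T\x)-\tfrac12)(\beta^T\x)^3/\sqrt{\beta^T\Sigma\beta}^3]$ (the coefficients of $H_1$ and $H_3$) multiplied by contractions of $\Sigma$ and $\Sigma\beta$. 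Configurations in which paired $\xi^2$-factors replace all label dependence contribute the constant ``$1$'' in the bound, via $|\xi|\le 1$ and standard Gaussian moment estimates on $\E[(\x_i^T\x_j)^{2m}]$.

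With each configuration reduced to these primitives, the combinatorial summation over partitions $(P,Q)$ proceeds exactly as in Proposition~\ref{prop:gen-var}, since the PQ-graph machinery of Lemmas~\ref{lem:partitionsize1}--\ref{lem:bound-partitionsize3} is insensitive to the precise vertex weights and only records how the edge weights $(S^TS)_{\gamma,\delta}$ are chained. Summing over the three intersection cases, applying the same inclusion-exclusion estimate $\binom{n}{m}/\binom{n}{k+1}^2\cdot d^{2(k+1)-m-1}\le\max(d^k/n^{k+1},1/n)$, and using the $(4(k+1))^{4(k+1)}$ bound on the number of partitions $(P,Q)$, yields the stated bound; the three shapes $c_3^2c_1^2$, $c_1^4$, and $1$ arise from the one-overlap, zero-overlap, and two-overlap cases respectively, and $f(k)=k^{O(k)}$. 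The main obstacle is the Hermite/Wick reduction in the repeated-index case: because $m$ is nonlinear, the cubic expectation $\E[m(\x)\,\x\,\x^T A\x]$ does not factor into a product of lower-order moments (in contrast to the linear case of Proposition~\ref{prop:gen-var}), and one must carefully identify the resulting Wick-factorization with the structure of the PQ-graph so that the bound remains compatible with the counting arguments of Lemmas~\ref{lem:partitionsize1}--\ref{lem:bound-partitionsize3}; this is also the step where the Gaussianity assumption on $\x$ is used essentially.
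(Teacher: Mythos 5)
Your approach is sound and would arrive at the same bound, but you organize the computation differently from the paper.  The paper never introduces the conditional-mean decomposition $y=m(\x)+\xi$; instead it works directly with $y\in\{-1,+1\}$, exploiting $y^2=1$ (so that when $\pi_1=\pi'_1$ the factor $y_{\pi_1}y_{\pi'_1}$ simply disappears) and $\E[y\x]=2\E[(g(\beta^T\x)-\tfrac12)\tfrac{\beta^T\x}{\beta^T\Sigma\beta}]\Sigma\beta$ (so that when $\pi_1$ appears only as an endpoint the factor $y_{\pi_1}\x_{\pi_1}$ is replaced by its mean).  The nonlinear ``cubic'' expectations you propose to treat by a Hermite expansion of $m$ in $Z=\beta^T\x/\sqrt{\beta^T\Sigma\beta}$ plus Wick's theorem are handled in the paper by the more elementary Gaussian decomposition $\x=\tfrac{\beta^T\x}{\beta^T\Sigma\beta}\Sigma\beta+(\x-\tfrac{\beta^T\x}{\beta^T\Sigma\beta}\Sigma\beta)$, whose second piece is independent of $\beta^T\x$; this produces exactly the two functionals you call $c_1$ and $c_3$ but by a term-by-term binomial expansion rather than Hermite coefficients.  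Both routes rely on Gaussianity in the same place, and both ultimately feed the resulting ``vertex weights'' into the PQ-graph machinery and Lemma 5 of~\cite{kong2017spectrum} to get the $\max(d^k/n^{k+1},1/n)$ factor, so your plan does match the paper in essence.

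Two points to tighten.  First, the paper actually partitions the tuples into \emph{six} cases, not three: within each of your three endpoint-overlap classes it further distinguishes how many of $\{\pi_1,\pi_{k+1},\pi'_1,\pi'_{k+1}\}$ coincide with \emph{interior} indices of the other walk, because precisely those coincidences create the cubic expectations.  Your sentence about ``when a single sample index appears twice on one side of the product'' correctly identifies the phenomenon, but a clean writeup needs to promote that observation to an explicit sub-case split, as the paper does; and the shape $c_3^2c_1^2$ in the final bound comes from the paper's zero-endpoint-overlap/two-interior-matches case, not the one-overlap case as you attributed.  Second, $|\xi|\le 1$ is false (since $\xi=y-m(\x)\in[-2,2]$); what you actually need, and correctly stated earlier, is $\E[\xi^2\mid\x]=1-m(\x)^2\le 1$, which is precisely the content of the paper's use of $y^2=1$ in its last case.
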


This is the main technical core of the proof, and its involved proof is similar to that of the analogous variance bound in the linear regression setting (Proposition~\ref{prop:gen-var}).  We devote Section~\ref{sec:bin_varbound} to this proof.

The following proposition establishes the Lipschitz property of the mapping $F_g$ used in Algorithm~\ref{alg:bin}, namely $F_g(x)-F_g(y) = O(\sqrt{y-x}).$ 

\begin{proposition}\label{prop:bin-mapping} In the case that $g$ is the sigmoid function $g(x) = \frac{1}{1+e^{-x}}$, for any real numbers $b\ge b'\ge 0$,  
$$
\frac{1}{16}(\E_{x\sim N(0,1)}[|g(bx)-\frac{1}{2}|]^2-\E_{x\sim N(0,1)}[|g(b'x)-\frac{1}{2}|]^2)\le \E_{x\sim N(0,1)}[(g(bx)-\frac{1}{2})x]-\E_{x\sim N(0,1)}[(g(b'x)-\frac{1}{2})x].
$$
\end{proposition}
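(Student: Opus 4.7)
The plan is to rewrite the claim as a monotonicity statement and prove it via a derivative argument. Let
\[
k(b) \;:=\; \E_{X\sim N(0,1)}\bigl[\lvert g(bX)-\tfrac12\rvert\bigr],\qquad
h(b) \;:=\; \E_{X\sim N(0,1)}\bigl[(g(bX)-\tfrac12)X\bigr].
\]
Both $h$ and $k^{2}$ vanish at $b=0$, so the proposition is equivalent to the claim that $\Phi(b):=h(b)-k(b)^{2}/16$ is non-decreasing on $[0,\infty)$. I would begin by exploiting the symmetry of the sigmoid: since $g(0)=1/2$ and $g(-y)=1-g(y)$, the function $g(bx)-1/2$ is odd in $x$ and, for $b>0$, shares the sign of $x$. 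Hence $|g(bX)-1/2|=(g(bX)-1/2)\,\mathrm{sgn}(X)$, and by Gaussian symmetry
\[
k(b)=2\!\int_{0}^{\infty}\!(g(bx)-\tfrac12)\phi(x)\,dx,\qquad
h(b)=2\!\int_{0}^{\infty}\!(g(bx)-\tfrac12)x\,\phi(x)\,dx.
\]

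Differentiating under the integral gives $k'(b)=\E[g'(bX)|X|]\ge 0$ and $h'(b)=\E[g'(bX)X^{2}]\ge 0$, so it suffices to show the pointwise inequality $\Phi'(b)=h'(b)-k(b)k'(b)/8\ge 0$ for every $b\ge 0$. Two auxiliary bounds go into this. First, the concavity of $g$ on $[0,\infty)$ together with $g'(0)=1/4$ gives $|g(by)-\tfrac12|\le b|y|/4$, so $k(b)\le b\sqrt{2/\pi}/4$. Second, Stein's lemma yields $h(b)=b\,\E[g'(bX)]$, hence $\E[g'(bX)]=h(b)/b$, and Cauchy--Schwarz applied to the weighted average $\E[g'(bX)|X|]$ gives
\[
k'(b)\;\le\;\sqrt{\E[g'(bX)]\,\E[g'(bX)X^{2}]}\;=\;\sqrt{h(b)\,h'(b)/b}.
\]
Multiplying these two bounds produces
\[
k(b)k'(b)\;\le\;\frac{1}{4}\sqrt{\tfrac{2b\,h(b)}{\pi}}\cdot\sqrt{h'(b)},
\]
and combining this with the trivial bound $h(b)\le 1/\sqrt{2\pi}$ reduces the inequality $k(b)k'(b)\le 8 h'(b)$ to a form that is easy to verify for $b$ in a bounded range.

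The main obstacle lies in the large-$b$ regime. There $h'(b)$ decays as $b^{-3}$ while $k'(b)$ decays only as $b^{-2}$, so the two-term Cauchy--Schwarz bound above is too loose to conclude directly. To close the argument I would split into two cases, $b\le b_{0}$ and $b\ge b_{0}$, for a suitably large threshold $b_{0}$. In the large-$b$ case the useful improvements are the sharper upper bound $k(b)\le 1/2 - 2\ln 2/(b\sqrt{2\pi}) + O(b^{-3})$ obtained from $1-g(y)\le e^{-y}$ together with the explicit moment identities $\int_{0}^{\infty}y\,g'(y)\,dy=\ln 2$ and $\int_{0}^{\infty}y^{2}g'(y)\,dy=\pi^{2}/6$. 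Using the exponential tail decay of $g'$, one then controls the contribution of $k(b)k'(b)$ directly against $h'(b)$ in this regime, so the generous constant $16$ absorbs the remaining slack. Integrating $\Phi'\ge 0$ over $[b',b]$ completes the proof.
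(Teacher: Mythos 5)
Your reformulation as the monotonicity of $\Phi(b)=h(b)-k(b)^2/16$ is logically exact, and the derivative-based plan is broadly in the same spirit as the paper's (which also proves a pointwise derivative bound and integrates). But the step where you claim ``the generous constant $16$ absorbs the remaining slack'' in the large-$b$ regime is where the argument breaks, and it cannot be repaired: $\Phi'(b)\ge 0$ is in fact \emph{false} for large $b$. As $b\to\infty$, $k(b)\to\tfrac12$, while $k'(b)=\E[g'(bX)|X|]\sim 2\phi(0)\ln 2/b^{2}$ and $h'(b)=\E[g'(bX)X^{2}]\sim \pi^{2}\phi(0)/(3b^{3})$ (here $\phi(0)=1/\sqrt{2\pi}$). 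Thus $k(b)k'(b)/8\sim\phi(0)\ln 2/(8b^{2})$ decays one power of $b$ more slowly than $h'(b)$, and for $b$ larger than roughly $40$ one already has $\Phi'(b)<0$. Your sharper upper bound $k(b)\le \tfrac12-2\ln 2/(b\sqrt{2\pi})+O(b^{-3})$ does not help precisely because $k(b)$ tends to $\tfrac12$, not to $0$. Correspondingly, the displayed proposition is false as typeset: for instance at $b=200,\,b'=100$ one computes $\tfrac{1}{16}(k(b)^{2}-k(b')^{2})\approx 1.7\times 10^{-4}$ while $h(b)-h(b')\approx 5\times 10^{-5}$.

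What is actually true, and what the paper's own proof and its later application both indicate is intended, is the same inequality with $\E[|g(bx)-\tfrac12|]$ replaced by the classification error $p(b)=\tfrac12-\E[|g(bx)-\tfrac12|]$, namely $\tfrac{1}{16}(p(b')^{2}-p(b)^{2})\le q(b)-q(b')$. The paper's proof works directly in $p$-coordinates: it shows the pointwise bound $-\,dq/dp\ge p/16$ by computing $dp/db$, $dq/db$, and $p$ as explicit one-sided integrals, bounding each through the complementary error function, and verifying a rational inequality in $b$; it then integrates in $p$. (The very last inequality in that proof, which tries to pass from $p_1^{2}-p_2^{2}$ to $(\tfrac12-p_2)^{2}-(\tfrac12-p_1)^{2}$, is itself a slip: these two quantities are not comparable in the needed direction when $p_1+p_2<\tfrac12$. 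The $p$-form is what is actually used downstream, in the form ``$|p_1^2-p_2^2|\le |q_1-q_2|$''.) If you aim your plan at the corrected target, namely $\Psi(b):=h(b)+p(b)^{2}/16$ nondecreasing, equivalently $h'(b)\ge\tfrac18\,p(b)\,k'(b)$, then for large $b$ both sides decay like $1/b^{3}$ and the derivative program becomes viable. But both of your auxiliary bounds then need to be rebuilt around $p=\tfrac12-k$ rather than $k$: the concavity bound $k(b)\le b\sqrt{2/\pi}/4$ is vacuous near $b=0$, where instead $p\approx\tfrac12$ is what dominates, and the Stein/Cauchy--Schwarz combination should be redirected at controlling $p(b)k'(b)$ rather than $k(b)k'(b)$.
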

\begin{proof}
Let $p(b) = \frac{1}{2}-\E_{x\sim N(0,1)}[|g(bx)-\frac{1}{2}|]$ and $q(b) = \E_{x\sim N(0,1)}[(g(bx)-\frac{1}{2})x]$. We will prove that $-\frac{\partial q}{\partial p}\ge \frac{1}{16}p$. First assume the condition holds. Let $p(b') = p_1,p(b) = p_2$ and notice that $p_1>p_2$. The right hand side of the statement ${\E_{x\sim N(0,1)}[(g(bx)-\frac{1}{2})x]-\E_{x\sim N(0,1)}[(g(b'x)-\frac{1}{2})x]} = {q(p_2) - q(p_1)}$, can be bounded as ${q(p_2) - q(p_1)}= {\int_{p_2}^{p_1} -\frac{\partial q}{\partial p} dp}\ge \frac{1}{16}{\int_{p_1}^{p_2}pdp} = \frac{1}{16}({p_1^2-p_2^2})\ge \frac{1}{16}(\E_{x\sim N(0,1)}[|g(bx)-\frac{1}{2}|]^2-\E_{x\sim N(0,1)}[|g(b'x)-\frac{1}{2}|]^2).$

To prove $\frac{\partial q}{\partial p}\le -p$, we write down the formula  $\frac{\partial q}{\partial p} = \frac{\partial q}{\partial b}/\frac{\partial p}{\partial b}$, where by definition, 

\begin{align*}-\frac{\partial p}{\partial b} = \frac{\partial}{\partial b}\E_{x\sim N(0,1)}[|g(bx)-\frac{1}{2}|]=\int_{0}^{\infty}\frac{\sqrt{2}x e^{-b x-\frac{x^2}{2}}}{\sqrt{\pi } \left(e^{-b x}+1\right)^2}dx\\
\le \sqrt{\frac{2}{\pi}}\int_{0}^{\infty} x e^{-b x-\frac{x^2}{2}}dx = \sqrt{\frac{2}{\pi }}-b e^{\frac{b^2}{2}} \text{erfc}\left(\frac{b}{\sqrt{2}}\right),
\end{align*}

\begin{align*}
\frac{\partial q}{\partial b} = 2\int_{0}^{\infty}\frac{x^2 e^{-b x-\frac{x^2}{2}}}{\sqrt{2 \pi } \left(e^{-b x}+1\right)^2}dx \ge \frac{1}{2\sqrt{2 \pi }}\int_{0}^{\infty}x^2 e^{-b x-\frac{x^2}{2}}dx = \frac{1}{4} \left(b^2+1\right) e^{\frac{b^2}{2}}
   \text{erfc}\left(\frac{b}{\sqrt{2}}\right)-\frac{b}{2 \sqrt{2 \pi }},
\end{align*}
\begin{align*}
p = \frac{1}{2}-\E_{x\sim N(0,1)}[|g(bx)-\frac{1}{2}|] = \frac{1}{2}-2\int_{0}^{\infty}\frac{e^{-\frac{x^2}{2}} }{\sqrt{2 \pi }}\left(\frac{1}{e^{-b x}+1}-\frac{1}{2}\right) = 2\int_{0}^{\infty}\frac{e^{-\frac{x^2}{2}} }{\sqrt{2 \pi }}\frac{1}{e^{b x}+1}\\
\le 2\int_{0}^{\infty}\frac{e^{-bx-\frac{x^2}{2}} }{\sqrt{2 \pi }} = e^{\frac{b^2}{2}} \text{erfc}\left(\frac{b}{\sqrt{2}}\right).
\end{align*}
 
By the lower and upper bound on the complementary error function, $ \frac{2 e^{-x^2}}{\sqrt{\pi } \left(\sqrt{x^2+2}+x\right)}<\text{erfc}(x)\le\frac{2 e^{-x^2}}{\sqrt{\pi } \left(\sqrt{x^2+\frac{4}{\pi}}+x\right)}$, 

\begin{align*}
\frac{\partial q}{\partial b}\ge \frac{b^2-\sqrt{b^2+4} b+2}{4 \sqrt{2 \pi } \left(\sqrt{b^2+4}+b\right)},\\
-\frac{\partial p}{\partial b} \le \frac{\sqrt{\frac{2}{\pi }} \left(\sqrt{b^2+4}-b\right)}{\sqrt{b^2+4}+b},\\
p\le \frac{2 \sqrt{2}}{\sqrt{\pi  b^2+8}+\sqrt{\pi } b}.
\end{align*}
The above three bounds together imply that 
$-\frac{\partial q}{\partial p}/p\ge \frac{\left(b^2-\sqrt{b^2+4} b+2\right) \left(\sqrt{\pi  b^2+8}+\sqrt{\pi } b\right)}{16
   \sqrt{2} \left(\sqrt{b^2+4}-b\right)}.$ Because $(\sqrt{\pi  b^2+8}+\sqrt{\pi } b)\ge \sqrt{2}(b+2)$,  $$
   \frac{\left(b^2-\sqrt{b^2+4} b+2\right) \left(\sqrt{\pi  b^2+8}+\sqrt{\pi } b\right)}{16 \sqrt{2}
    \left(\sqrt{b^2+4}-b\right)}\ge \frac{\left(b^2-\sqrt{b^2+4} b+2\right) \left(b+2\right)}{16
   \left(\sqrt{b^2+4}-b\right)},
   $$
   and we will show that 
   $$\frac{\left(b^2-\sqrt{b^2+4} b+2\right) \left(b+2\right)}{16
    \left(\sqrt{b^2+4}-b\right)}\ge \frac{1}{16},$$ and thus complete the proof.
The following sequence of equivalent inequalities implies the above inequality:
\begin{align*}
\Leftrightarrow \frac{\left(b^2-\sqrt{b^2+4} b+2\right) \left(b+2\right)}{\left(\sqrt{b^2+4}-b\right)}\ge 1
\Leftrightarrow \Big(\frac{2}{\left(\sqrt{b^2+4}-b\right)}-b\Big)\left(b+2\right)\ge 1\\
\Leftrightarrow \frac{2}{\left(\sqrt{b^2+4}-b\right)}\ge \frac{(b+1)^2}{b+2} 
\Leftrightarrow \left(\sqrt{b^2+4}-b\right)\le 2\frac{b+2}{(b+1)^2}\\
\Leftrightarrow b^2+4 \le (2\frac{b+2}{(b+1)^2}+b)^2
\Leftrightarrow 1 \le \frac{(b+2)^2}{(b+1)^4}+\frac{b(b+2)}{(b+1)^2}\\
\Leftrightarrow 3+2b \ge 0.
\end{align*}
\end{proof}
\begin{proposition}\label{prop:bin-qbyb}  In the case that $g$ is the sigmoid function $g(x) = \frac{1}{1+e^{-x}}$,
$\E_{x\sim N(0,1)}[(g(bx)-\frac{1}{2})x]\le \frac{1}{4}b$.
\end{proposition}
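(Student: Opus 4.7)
The plan is to reduce the statement to a pointwise bound on the derivative of the sigmoid. Set $h(b) := \E_{x\sim N(0,1)}[(g(bx)-\tfrac{1}{2})x]$ and observe that $h(0)=0$ since $g(0)=1/2$. The key step is to differentiate under the integral sign: since $\tfrac{\partial}{\partial b}(g(bx)-\tfrac{1}{2})x = x^2 g'(bx)$, and since $|x^2 g'(bx)|\le x^2/4$ is integrable against the Gaussian density uniformly in $b$, differentiation under the integral is justified and gives
\[
h'(b) \;=\; \E_{x\sim N(0,1)}\bigl[x^2\, g'(bx)\bigr].
\]

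The second step is the standard pointwise inequality $g'(u) = g(u)(1-g(u)) \le 1/4$ for all $u\in\mathbb{R}$, which follows from the fact that the quadratic $t(1-t)$ is maximized at $t=1/2$, and $g$ takes values in $(0,1)$. Plugging this in yields $h'(b) \le \tfrac{1}{4}\E[x^2] = \tfrac{1}{4}$ for every $b$. Since the proposition is used only for $b\ge 0$ (the argument $\|\beta^T\Sigma^{1/2}\|$), fundamental theorem of calculus then gives
\[
h(b) \;=\; \int_0^b h'(t)\,dt \;\le\; \tfrac{1}{4} b,
\]
which is the desired bound.

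The entire argument is elementary and I do not anticipate any real obstacle; the only minor subtlety is justifying differentiation under the integral sign, but the uniform integrable dominator $x^2/4$ (times the Gaussian weight) handles this. An alternative route, equivalent in spirit, is to use the identity $g(u)-\tfrac{1}{2}=\tfrac{1}{2}\tanh(u/2)$ together with $|\tanh(y)|\le |y|$, which yields $x\tanh(bx/2)\le bx^2/2$ for $b\ge 0$ and hence the same conclusion after taking expectations; this is essentially the integrated form of the derivative argument, since $\tfrac{d}{du}\tanh(u/2)=\tfrac{1}{2}\,\mathrm{sech}^2(u/2)\le\tfrac{1}{2}$ is the same pointwise bound as $g'(u)\le 1/4$.
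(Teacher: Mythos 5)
Your proof is correct, and it is noticeably cleaner than the one in the paper. Both arguments share the same skeleton: observe $q(0)=0$, bound $q'(b)\le \tfrac14$, and integrate. Where you differ is in how you bound the derivative. You write $q'(b)=\E[x^2 g'(bx)]$ and invoke the elementary pointwise fact $g'(u)=g(u)(1-g(u))\le\tfrac14$, immediately giving $q'(b)\le\tfrac14\E[x^2]=\tfrac14$. The paper instead restricts the integral to $x>0$ by symmetry, drops the factor $(1+e^{-bx})^{-2}\le 1$ (which only uses $g'(u)\le e^{-|u|}$, a weaker pointwise bound on the right half-line), evaluates the resulting integral exactly in terms of $\mathrm{erfc}$, and then invokes sharp two-sided bounds on $\mathrm{erfc}$ to conclude $q'(b)\le\tfrac14$. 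That route is correct but heavier than necessary; the $\mathrm{erfc}$ machinery is genuinely needed for the neighboring Proposition~\ref{prop:bin-mapping}, and the paper appears to have reused it here out of convenience. Your argument has the additional merit of being manifestly robust: it gives $q'(b)\le L$ for any link with $\sup|g'|\le L$, without any Gaussian-specific computation beyond $\E[x^2]=1$. Your alternative route via $g(u)-\tfrac12=\tfrac12\tanh(u/2)$ and the pointwise inequality $x\tanh(bx/2)\le \tfrac{b}{2}x^2$ (valid for all $x\in\R$ when $b\ge 0$, since $\tanh$ is odd and $|\tanh y|\le|y|$) is equally valid and even bypasses differentiation under the integral entirely.
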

\begin{proof}
As in the proof of Proposition~\ref{prop:bin-mapping}, we define $q(b) = \E_{x\sim N(0,1)}[(g(bx)-\frac{1}{2})x]$ for convenience. The derivative of $q$ satisfies
$$
\frac{\partial q}{\partial b} = 2\int_{0}^{\infty}\frac{x^2 e^{-b x-\frac{x^2}{2}}}{\sqrt{2 \pi } \left(e^{-b x}+1\right)^2}dx \le \frac{\sqrt{2}}{\sqrt{\pi }}\int_{0}^{\infty}x^2 e^{-b x-\frac{x^2}{2}}dx = \frac{1}{2} \left(b^2+1\right) e^{\frac{b^2}{2}}\text{erfc}\left(\frac{b}{\sqrt{2}}\right)-\frac{b}{\sqrt{2 \pi }}.
$$
Applying the upper bound of the Complementary Error Function, $\text{erfc}(x) \le \frac{2 e^{-x^2}}{\sqrt{\pi } \left(\sqrt{x^2+\frac{4}{\pi}}+x\right)}$, we get
$$
\frac{\partial q}{\partial b}\le \frac{b^2-\sqrt{b^2+\frac{8}{\pi }} b+2}{2 \sqrt{2} \left(\sqrt{\pi  b^2+8}+\sqrt{\pi }b\right)} \le \frac{2}{8+2\sqrt{2}{\pi}b} \le \frac{1}{4}.
$$
Since $q(0)=0$, the derivative bound implies $q\le \frac{1}{4}b.$
\end{proof}

\subsection{Proof of the Variance Bound, Proposition~\ref{prop:bin-varbound}}\label{sec:bin_varbound}

Here we bound the variance of our estimates of the ``higher moments'', which is the main technical core of Theorem~\ref{thm:bin-gen}.  We restate the key proposition:

\medskip

\noindent \textbf{Proposition~\ref{prop:bin-varbound}.} 
$\Var[\frac{\y^T G^k\y}{\binom{n}{k+1}}] \le \Big(\E[(g(\beta^T\x)-\frac{1}{2})\frac{(\beta^T\x)^3}{\sqrt{\beta^T\Sigma\beta}^3}]^2\E[(g(\beta^T\x)-\frac{1}{2})\frac{(\beta^T\x)}{\sqrt{\beta^T\Sigma\beta}}]^2+\E[(g(\beta^T\x)-\frac{1}{2})\frac{(\beta^T\x)}{\sqrt{\beta^T\Sigma\beta}}]^4+1\Big)f(k)\sigma_1^{2k}\max(\frac{d^{k}}{n^{k+1}},\frac{1}{n})$.
\medskip

\begin{proof}
As in the proof of Proposition~\ref{prop:gen-var}, the term $\Var[\frac{\y^TG^k\y}{\binom{n}{k+1}}]$ can be expressed as:
\begin{align*}
\frac{1}{\binom{n}{k+1}^2}\sum_{\pi_1<\pi_2<\ldots<\pi_{k+1},\pi'_1<\pi'_2<\ldots<\pi'_{k+1}} \Big(\E[y_{\pi_1}\x_{\pi_1}^T\x_{\pi_2}\ldots \x_{\pi_k}^T\x_{\pi_{k+1}}y_{\pi_{k+1}}y_{\pi'_1}\x_{\pi'_1}^T\x_{\pi'_2}\ldots \x_{\pi'_k}^T\x_{\pi'_{k+1}}y_{\pi'_{k+1}}]-\\
\E[y_{\pi_1}\x_{\pi_1}^T\x_{\pi_2}\ldots \x_{\pi_k}^T\x_{\pi_{k+1}}y_{\pi_{k+1}}]\E[y_{\pi'_1}\x_{\pi'_1}^T\x_{\pi'_2}\ldots \x_{\pi'_k}^T\x_{\pi'_{k+1}}y_{\pi'_{k+1}}]
\Big).
\end{align*}
To carry out the computation, we classify the terms in the summation into several cases according to the realization of $\pi,\pi'$.

\begin{enumerate}
\item The simplest case is $\{\pi_1,\pi_{k+1}\}\cap \pi'=\emptyset$ and $\{\pi'_1,\pi'_{k+1}\}\cap \pi' = \emptyset$. In other words, $\pi_1,\pi_{k+1}$ do not take the same value as any index in $\pi'$ and $\pi_1',\pi_{k+1}'$ do not take the same value as any index in $\pi$. In such a case, the expression can be rewritten as:
$$
16\E[(g(\beta^T\x)-\frac{1}{2})\frac{\beta^T\x}{\|\beta^T\Sigma\beta\|}]^4  \E[\beta^T\Sigma\x_{\pi_2}\ldots \x_{\pi_k}^T\Sigma\beta \beta^T\Sigma\x_{\pi'_2}\ldots \x_{\pi'_k}^T\Sigma\beta].
$$

The case now basically reduces to Lemma~\ref{lem:app:gen-case1} by replacing $\beta$ with $\Sigma\beta$. However notice that since we can no longer assume $\beta^T \Sigma \beta\le 1$, the statement of Fact~\ref{fact:sumQ} needs to be replaced by $\sigma_1^{2k}d^{2(k+1)-m-1}(\beta^T\Sigma\beta)^2$. For all $\pi,\pi'$ which belongs to case $1$ we have
\begin{align*}
\frac{1}{\binom{n}{k+1}^2}\sum_{\pi,\pi'} \E[\beta^T\Sigma\x_{\pi_2}\ldots \x_{\pi_k}^T\Sigma\beta \beta^T\Sigma\x_{\pi'_2}\ldots \x_{\pi'_k}^T\Sigma\beta] \\
\le 2^{12(k+1)}(k+1)^{6(k+1)}C^{k-1}\sigma_1^{2k}(\beta^T\Sigma\beta)^2\max(\frac{d^{k-2}}{n^{k-1}},\frac{1}{n}).
\end{align*}
Notice that we replaced $k+1$ by $k-1$ in the appropriate places because there are only $2(k-1)$ indices in the product. Hence the contribution of this case is bounded by 
$$
f(k) \E[(g(\beta^T\x)-\frac{1}{2})\frac{\beta^T\x}{\sqrt{\beta^T\Sigma\beta}}]^4\sigma_1^{2k}\max(\frac{d^{k-2}}{n^{k-1}},\frac{1}{n}),
$$
where $f=2^{12(k+1)+4}(k+1)^{6(k+1)}C^{k-1}$.
\item Consider the case where $\pi_1,\pi_{k+1}, \pi'_1,\pi'_{k+1}$ takes $4$ different values and only one of them takes the same value as the other indices, meaning $$|\{\pi_1,\pi_{k+1}, \pi'_1,\pi'_{k+1}\}\cap\{\pi_2,\ldots,\pi_{k+1},\pi'_2,\ldots,\pi'_{k+1}\}|=1.$$ WLOG assume $\pi_1 = \pi'_t$, the expectation can be expressed as
\begin{align}\label{eqn:case2}
8\E[(g(\beta^T\x)-\frac{1}{2})\frac{\beta^T\x}{\beta^T\Sigma\beta}]^3 \frac{1}{\binom{n}{k+1}^2}\sum_{\pi,\pi'} \E[y_{\pi_1}\x_{\pi_1}^T\x_{\pi_2}\ldots \x_{\pi_k}^T\Sigma\beta \beta^T\Sigma\x_{\pi'_2}\ldots \x_{\pi_1}\x_{\pi_1}^T\ldots\x_{\pi'_k}^T\Sigma\beta].
\end{align}
Pick one realization of $\pi$ and $\pi'$, and write $\x_{\pi_1} = \frac{\beta^T\x_{\pi_1}}{\beta^T\Sigma\beta}\Sigma\beta + \x_{\pi_1}-\frac{\beta^T\x_{\pi_1}}{\beta^T\Sigma\beta}\Sigma\beta$. We get
\begin{align}
&\E[y_{\pi_1}\x_{\pi_1}^T\x_{\pi_2}\ldots \x_{\pi_k}^T\Sigma\beta \beta^T\Sigma\x_{\pi'_2}\ldots \x_{\pi_1}\x_{\pi_1}^T\ldots\x_{\pi'_k}^T\Sigma\beta]\label{eqn:case2-2}\\
&= 2\E[(g(\beta^T\x)-\frac{1}{2})\frac{(\beta^T\x)^3}{(\beta^T\Sigma\beta)^3}] \E[\beta^T\Sigma\x_{\pi_2}\ldots\x^T_{\pi_k}\Sigma\beta\beta^T\Sigma\x_{\pi_2'}\ldots\x_{\pi'_{i-1}}^T\Sigma\beta\beta^T\Sigma\x_{\pi_{i+1}'}\ldots\x_{\pi_k'}^T\Sigma\beta]\nonumber\\
&+ 2\E[(g(\beta^T\x)-\frac{1}{2})\frac{(\beta^T\x)}{(\beta^T\Sigma\beta)}] \E[\beta^T\Sigma\x_{\pi_2},\ldots,\x^T_{\pi_k}\Sigma\beta\beta^T\Sigma\x_{\pi_2'}\ldots\x_{\pi'_{i-1}}^T(\Sigma -\frac{\Sigma\beta\beta^T\Sigma}{\beta^T\Sigma\beta})\x_{\pi_{i+1}'}\ldots\x_{\pi_k'}^T\Sigma\beta]\nonumber\\
&+2\E[(g(\beta^T\x)-\frac{1}{2})\frac{(\beta^T\x)}{(\beta^T\Sigma\beta)}] \E[\beta^T\Sigma\x_{\pi_{i-1}'}\ldots\x_{\pi'_{2}}^T\Sigma\beta\beta^T\Sigma\x_{\pi_k}\ldots\x^T_{\pi_2}(\Sigma -\frac{\Sigma\beta\beta^T\Sigma}{\beta^T\Sigma\beta})\x_{\pi_{i+1}'}\ldots\x_{\pi_k'}^T\Sigma\beta]\nonumber\\
&+ 2\E[(g(\beta^T\x)-\frac{1}{2})\frac{(\beta^T\x)}{(\beta^T\Sigma\beta)}] \E[\beta^T\Sigma\x_{\pi_2'}\ldots\x_{\pi'_{i-1}}^T(\Sigma -\frac{\Sigma\beta\beta^T\Sigma}{\beta^T\Sigma\beta})\x_{\pi_2}\ldots\x^T_{\pi_k}\Sigma\beta\beta^T\Sigma \x_{\pi_{i+1}'}\ldots\x_{\pi_k'}^T\Sigma\beta].\nonumber
\end{align}
Simplifing the equation yields:
\begin{align*}
= 2\Big(\E[(g(\beta^T\x)-\frac{1}{2})\frac{(\beta^T\x)^3}{(\beta^T\Sigma\beta)^3}]-3\E[(g(\beta^T\x)-\frac{1}{2})\frac{(\beta^T\x)}{(\beta^T\Sigma\beta)^2}]\Big)\\
\E[\beta^T\Sigma\x_{\pi_2},\ldots,\x^T_{\pi_k}\Sigma\beta\beta^T\Sigma\x_{\pi_2'}\ldots\x_{\pi'_{i-1}}^T\Sigma\beta\beta^T\Sigma\x_{\pi_{i+1}'}\ldots\x_{\pi_k'}^T\Sigma\beta]\\
+ 2\E[(g(\beta^T\x)-\frac{1}{2})\frac{(\beta^T\x)}{(\beta^T\Sigma\beta)}] \E[\beta^T\Sigma\x_{\pi_2},\ldots,\x^T_{\pi_k}\Sigma\beta\beta^T\Sigma\x_{\pi_2'}\ldots\x_{\pi'_{i-1}}^T\Sigma\x_{\pi_{i+1}'}\ldots\x_{\pi_k'}^T\Sigma\beta]\\
+2\E[(g(\beta^T\x)-\frac{1}{2})\frac{(\beta^T\x)}{(\beta^T\Sigma\beta)}] \E[\beta^T\Sigma\x_{\pi_2'}\ldots\x_{\pi'_{i-1}}^T\Sigma\beta  \beta^T\Sigma\x_{\pi_k},\ldots,\x^T_{\pi_2}\Sigma\x_{\pi_{i+1}'}\ldots\x_{\pi_k'}^T\Sigma\beta]\\
+ 2\E[(g(\beta^T\x)-\frac{1}{2})\frac{(\beta^T\x)}{(\beta^T\Sigma\beta)}] \E[\beta^T\Sigma\x_{\pi_2'}\ldots\x_{\pi'_{i-1}}^T\Sigma\x_{\pi_2}\x^T_{\pi_2},\ldots,\x^T_{\pi_k}\Sigma\beta\beta^T\Sigma \x_{\pi_{i+1}'}\ldots\x_{\pi_k'}^T\Sigma\beta].
\end{align*}
We analyze one expectation as an example and the rest will follow similarly. Notice that we can apply Lemma~\ref{lem:partitionsize2} to $\E[\beta^T\Sigma\x_{\pi_2},\ldots,\x^T_{\pi_k}\Sigma\beta\beta^T\Sigma\x_{\pi_2'}\ldots\x_{\pi'_{i-1}}^T\Sigma\beta\beta^T\Sigma\x_{\pi_{i+1}'}\ldots\x_{\pi_k'}^T\Sigma\beta]$ and get that if $\xi=0$, $\eta\le2k-2-m$, otherwise $\xi/2+\eta\le 2k-3-m$, simply because there are $2k-3$ variables in the product. Similarly to Fact~\ref{fact:sumQ2}, it's not hard to show that for each (P,Q) the sum of the non-random part can be bounded by $(\beta^T\Sigma\beta)^3\sigma_1^{2k}d^{2k-3-m}$. Similarly, $\E[\beta^T\Sigma\x_{\pi_2'}\ldots\x_{\pi'_{i-1}}^T\Sigma\x_{\pi_2}\x^T_{\pi_2},\ldots,\x^T_{\pi_k}\Sigma\beta\beta^T\Sigma \x_{\pi_{i+1}'}\ldots\x_{\pi_k'}^T\Sigma\beta]$ is bounded by $(\beta^T\Sigma\beta)^2\sigma_1^{2k}d^{2k-3-m}.$ Taking the summation over all $\pi,\pi'$ and picking the worst case $m$ yields:
\begin{align*}
\frac{1}{\binom{n}{k+1}^2}\sum_{\pi,\pi'}\E[\beta^T\Sigma\x_{\pi_2},\ldots,\x^T_{\pi_k}\Sigma\beta\beta^T\Sigma\x_{\pi_2'}\ldots\x_{\pi'_{i-1}}^T\Sigma\beta\beta^T\Sigma\x_{\pi_{i+1}'}\ldots\x_{\pi_k'}^T\Sigma\beta]\\
\le f(k)(\beta^T\Sigma\beta)^3\sigma_1^{2k}\max(\frac{d^{k-2}}{n^{k-1}},\frac{1}{n}),
\end{align*}
and 
\begin{align*}
\frac{1}{\binom{n}{k+1}^2}\sum_{\pi,\pi'}\E[\beta^T\Sigma\x_{\pi_2'}\ldots\x_{\pi'_{i-1}}^T\Sigma\x_{\pi_2}\x^T_{\pi_2},\ldots,\x^T_{\pi_k}\Sigma\beta\beta^T\Sigma \x_{\pi_{i+1}'}\ldots\x_{\pi_k'}^T\Sigma\beta]\\
\le f(k)(\beta^T\Sigma\beta)^2\sigma_1^{2k}\max(\frac{d^{k-2}}{n^{k-1}},\frac{1}{n}),
\end{align*}

where $f(k) = k^{O(k)}$. Plugging in the bound above to Equation~\ref{eqn:case2} bound the contribution of this case by
\begin{align*}
f(k)\E[(g(\beta^T\x)-\frac{1}{2})\frac{\beta^T\x}{\sqrt{\beta^T\Sigma\beta}}]^3  \Big(\E[(g(\beta^T\x)-\frac{1}{2})\frac{(\beta^T\x)^3}{\sqrt{\beta^T\Sigma\beta}^3}]+\E[(g(\beta^T\x)-\frac{1}{2})\frac{(\beta^T\x)}{\sqrt{\beta^T\Sigma\beta}}]\Big)\\
\sigma_1^{2k}\max(\frac{d^{k-2}}{n^{k-1}},\frac{1}{n}),
\end{align*}
where $f(k)=k^{O(k)}$.
%\le 12\E[(g(\beta^T\x)-\frac{1}{2})(\beta^T\x)]\sigma_1^2 \E[\x_{\pi_2},\ldots,\x^T_{\pi_k}\x_{\pi_2'}\ldots\x_{\pi'_{i-1}}^T\x_{\pi_{i+1}'}\ldots\x_{\pi_k'}^T]
%f(k)\frac{\binom{n}{m+4}\sigma_1^{2k} d^{2k-2-m}}{\binom{n}{k+1}^2} \le f(k)\sigma_1^{2k}\max(\frac{d^{k-2}}{n^{k-1}},\frac{1}{n})
%$$

\item Consider the case where $\pi_1,\pi_{k+1}, \pi'_1,\pi'_{k+1}$ takes $4$ different values and two of them take the same value as the other indices, meaning $|\{\pi_1,\pi_{k+1}, \pi'_1,\pi'_{k+1}\}\cap\{\pi_2,\ldots,\pi_{k+1},\pi'_2,\ldots,\pi'_{k+1}\}|=2$. Assume $\pi_1 = \pi'_i$,$\pi_{k+1} = \pi'_{i'}$ the expectation can be expressed as
\begin{align}
4\E[(g(\beta^T\x)-\frac{1}{2})\frac{\beta^T\x}{\|\beta^T\Sigma\beta\|}]^2\frac{1}{\binom{n}{k+1}^2}\nonumber\\
\sum_{\pi,\pi'} \E[y_{\pi_1}\x_{\pi_1}^T\x_{\pi_2}\ldots \x_{\pi_{k+1}}y_{\pi_{k+1}} \beta^T\Sigma\x_{\pi'_2}\ldots \x_{\pi_1}\x_{\pi_1}^T\ldots\x_{\pi_{k+1}}\x_{\pi_{k+1}}^T\ldots\x_{\pi'_k}^T\Sigma\beta].\label{eqn:case3}
\end{align}
Similar to Equation~\ref{eqn:case2-2}, we pick one realization of $\pi,\pi'$ from the summation and expend both $\x_{\pi_1} = \frac{\beta^T\x_{\pi_1}}{\beta^T\Sigma\beta}\Sigma\beta + (\x_{\pi_1}-\frac{\beta^T\x_{\pi_1}}{\beta^T\Sigma\beta}\Sigma\beta)$ and $\x_{\pi_{k+1}} = \frac{\beta^T\x_{\pi_{k+1}}}{\beta^T\Sigma\beta}\Sigma\beta + (\x_{\pi_{k+1}}-\frac{\beta^T\x_{\pi_{k+1}}}{\beta^T\Sigma\beta}\Sigma\beta)$. We list all the binomial expansion as below. The terms with $0$ expectation have been omitted.
\begin{enumerate} 
\item Term $(\frac{\beta^T\x_{\pi_{1}}}{\beta^T\Sigma\beta}\Sigma\beta)^3(\frac{\beta^T\x_{\pi_{1}}}{\beta^T\Sigma\beta}\Sigma\beta)^3$. There is one term in this category,
\begin{align*}
4\E[(g(\beta^T\x)-\frac{1}{2})\frac{(\beta^T\x)^3}{(\beta^T\Sigma\beta)^3}]^2\\
\E[\beta^T\Sigma\x_{\pi_2}\ldots\x^T_{\pi_k}\Sigma\beta\beta^T\Sigma\x_{\pi_2'}\ldots \x_{\pi'_{i-1}}^T\Sigma\beta\beta^T\Sigma\x_{\pi'_{i+1}}\ldots\x_{\pi'_{i'-1}}^T\Sigma\beta\beta^T\Sigma\x_{\pi_{i'+1}'}\ldots\x_{\pi_k'}^T\Sigma\beta].
\end{align*}
Applying Lemma~\ref{lem:partitionsize2} yields that if $\xi=0$, $\eta\le 2k-3-m$, otherwise $\xi/2+\eta\le 2k-4-m$, simply because there are $2k-4$ variables in the product. Similarly to Fact~\ref{fact:sumQ2}, for each (P,Q) the sum of the non-random part can be bounded by $(\beta^T\Sigma\beta)^4\sigma_1^{2k}d^{2k-4-m}$. Taking the summation over all $\pi,\pi'$ and picking the worst case $m$ yields that the contribution of this case is bounded by:
\begin{align*}
f(k)\E[(g(\beta^T\x)-\frac{1}{2})\frac{(\beta^T\x)^3}{(\beta^T\Sigma\beta)^3}]^2(\beta^T\Sigma\beta)^4\sigma_1^{2k}\max(\frac{d^{k-2}}{n^{k}},\frac{1}{n^2}),
\end{align*}
where $f(k)=k^{O(k)}.$
\item  Term $(\frac{\beta^T\x_{\pi_{1}}}{\beta^T\Sigma\beta}\Sigma\beta)^3 (\frac{\beta^T\x_{\pi_{k+1}}}{\beta^T\Sigma\beta}\Sigma\beta)(\x_{\pi_{k+1}}-\frac{\beta^T\x_{\pi_{k+1}}}{\beta^T\Sigma\beta}\Sigma\beta)^2$. There are $3$ terms that fall in this category, we list one below and the rest has the same upper bound.
\begin{align*}
4\E[(g(\beta^T\x)-\frac{1}{2})\frac{(\beta^T\x)^3}{(\beta^T\Sigma\beta)^3}]\E[(g(\beta^T\x)-\frac{1}{2})\frac{(\beta^T\x)}{(\beta^T\Sigma\beta)}]\\
\E[\beta^T\Sigma\x_{\pi_2}\ldots\x^T_{\pi_k}\Sigma\beta\beta^T\Sigma\x_{\pi_2'}\ldots \x_{\pi'_{i-1}}^T\Sigma\beta\beta^T\Sigma\x_{\pi'_{i+1}}\ldots\x_{\pi'_{i'-1}}^T(\Sigma -\frac{\Sigma\beta\beta^T\Sigma}{\beta^T\Sigma\beta})\x_{\pi_{i'+1}'}\ldots\x_{\pi_k'}^T\Sigma\beta]
\end{align*}
The contribution of this case is bounded by:
\begin{align*}
f(k)\E[(g(\beta^T\x)-\frac{1}{2})\frac{(\beta^T\x)^3}{(\beta^T\Sigma\beta)^3}]\E[(g(\beta^T\x)-\frac{1}{2})\frac{(\beta^T\x)}{(\beta^T\Sigma\beta)}](\beta^T\Sigma\beta)^3\sigma_1^{2k}\max(\frac{d^{k-2}}{n^{k}},\frac{1}{n^2}),
\end{align*}
where $f(k)=k^{O(k)}.$
\item Term $(\frac{\beta^T\x_{\pi_{1}}}{\beta^T\Sigma\beta}\Sigma\beta)(\x_{\pi_{1}}-\frac{\beta^T\x_{\pi_{1}}}{\beta^T\Sigma\beta}\Sigma\beta)^2(\frac{\beta^T\x_{\pi_{k+1}}}{\beta^T\Sigma\beta}\Sigma\beta)^3$. This case is identical to case (b).
%\begin{align*}
%4\E[(g(\beta^T\x)-\frac{1}{2})\frac{(\beta^T\x)^3}{(\beta^T\Sigma\beta)^3}]\E[(g(\beta^T\x)-\frac{1}{2})\frac{(\beta^T\x)}{(\beta^T\Sigma\beta)}]\\
%\E[\beta^T\Sigma\x_{\pi_2}\ldots\x^T_{\pi_k}\Sigma\beta\beta^T\Sigma\x_{\pi_2'}\ldots \x_{\pi'_{i-1}}^T(\Sigma -\frac{\Sigma\beta\beta^T\Sigma}{\beta^T\Sigma\beta})\x_{\pi'_{i+1}}\ldots\x_{\pi'_{i'-1}}^T\Sigma\beta\beta^T\Sigma\x_{\pi_{i'+1}'}\ldots\x_{\pi_k'}^T\Sigma\beta]
%\end{align*}
\item Term $(\frac{\beta^T\x_{\pi_{1}}}{\beta^T\Sigma\beta}\Sigma\beta)(\x_{\pi_{1}}-\frac{\beta^T\x_{\pi_{1}}}{\beta^T\Sigma\beta}\Sigma\beta)^2(\frac{\beta^T\x_{\pi_{k+1}}}{\beta^T\Sigma\beta}\Sigma\beta)(\x_{\pi_{k+1}}-\frac{\beta^T\x_{\pi_{k+1}}}{\beta^T\Sigma\beta}\Sigma\beta)^2$. There are 9 cases in this category, and one example is listed below.

\begin{align*}
4\E[(g(\beta^T\x)-\frac{1}{2})\frac{(\beta^T\x)}{(\beta^T\Sigma\beta)}]\E[(g(\beta^T\x)-\frac{1}{2})\frac{(\beta^T\x)}{(\beta^T\Sigma\beta)}]\\
\E[\beta^T\Sigma\x_{\pi_2}\ldots\x^T_{\pi_k}\Sigma\beta\beta^T\Sigma\x_{\pi_2'}\ldots \x_{\pi'_{i-1}}^T(\Sigma -\frac{\Sigma\beta\beta^T\Sigma}{\beta^T\Sigma\beta})\x_{\pi'_{i+1}}\ldots\x_{\pi'_{i'-1}}^T(\Sigma -\frac{\Sigma\beta\beta^T\Sigma}{\beta^T\Sigma\beta})\x_{\pi_{i'+1}'}\ldots\x_{\pi_k'}^T\Sigma\beta].
\end{align*}
The contribution of this case is bounded by:
\begin{align*}
f(k)\E[(g(\beta^T\x)-\frac{1}{2})\frac{(\beta^T\x)}{(\beta^T\Sigma\beta)}]\E[(g(\beta^T\x)-\frac{1}{2})\frac{(\beta^T\x)}{(\beta^T\Sigma\beta)}](\beta^T\Sigma\beta)^2\sigma_1^{2k}\max(\frac{d^{k-2}}{n^{k}},\frac{1}{n^2}),
\end{align*}
where $f(k)=k^{O(k)}$.
\end{enumerate}
To summarize, the contribution of this case is bounded by
\begin{align*}
\Big(\E[(g(\beta^T\x)-\frac{1}{2})\frac{(\beta^T\x)^3}{\sqrt{\beta^T\Sigma\beta}^3}]^2+\E[(g(\beta^T\x)-\frac{1}{2})\frac{(\beta^T\x)^3}{\sqrt{\beta^T\Sigma\beta}^3}]\E[(g(\beta^T\x)-\frac{1}{2})\frac{(\beta^T\x)}{\sqrt{\beta^T\Sigma\beta}}]\\
+\E[(g(\beta^T\x)-\frac{1}{2})\frac{(\beta^T\x)}{\sqrt{\beta^T\Sigma\beta}}]^2\Big)\\
\E[(g(\beta^T\x)-\frac{1}{2})\frac{\beta^T\x}{\sqrt{\beta^T\Sigma\beta}}]^2f(k)\sigma_1^{2k}\max(\frac{d^{k-2}}{n^{k}},\frac{1}{n^2})\\
\le \Big(\E[(g(\beta^T\x)-\frac{1}{2})\frac{(\beta^T\x)^3}{\sqrt{\beta^T\Sigma\beta}^3}]+\E[(g(\beta^T\x)-\frac{1}{2})\frac{(\beta^T\x)}{\sqrt{\beta^T\Sigma\beta}}]\Big)^2\E[(g(\beta^T\x)-\frac{1}{2})\frac{\beta^T\x}{\sqrt{\beta^T\Sigma\beta}}]^2\\
f(k)\sigma_1^{2k}\max(\frac{d^{k-2}}{n^{k}},\frac{1}{n^2}).
\end{align*}

\item Consider the case where $\pi_1,\pi_{k+1}, \pi'_1,\pi'_{k+1}$ takes $3$ different values and further $\{\pi_1,\pi_{k+1}\}\cap \{\pi'_2,\ldots,\pi'_{k}\}=\emptyset$ and $\{\pi'_1,\pi'_{k+1}\}\cap \{\pi_2,\ldots,\pi_{k}\}=\emptyset$. WLOG assume $\pi_1=\pi'_1$, we have that 
\begin{align*}
\E[y_{\pi_1}\x_{\pi_1}^T\x_{\pi_2}\ldots \x_{\pi_k}^T\x_{\pi_{k+1}}y_{\pi_{k+1}}y_{\pi'_1}\x_{\pi'_1}^T\x_{\pi'_2}\ldots \x_{\pi'_k}^T\x_{\pi'_{k+1}}y_{\pi'_{k+1}}]\\
= 4\E[(g(\beta^T\x)-\frac{1}{2})\frac{\beta^T\x}{\beta^T\Sigma\beta}]^2 \E[\x_{\pi_1}^T\x_{\pi_2}\ldots \x_{\pi_k}^T \Sigma\beta\beta^T\Sigma \x_{\pi'_k} \ldots \x_{\pi'_2}^T\x_{\pi_1}],
\end{align*}
which can be handled by applying Lemma~\ref{lem:app:gen-case2}, which yields that the sum of all the terms in this case satisfies
\begin{align*}
4\E[(g(\beta^T\x)-\frac{1}{2})\frac{\beta^T\x}{\beta^T\Sigma\beta}]^2 \sum_{\pi,\pi'} \E[\x_{\pi_1}^T\x_{\pi_2}\ldots \x_{\pi_k}^T \Sigma\beta \x_{\pi_1}^T\x_{\pi'_2}\ldots \x_{\pi'_k}^T\Sigma\beta]\\
\le f(k)\E[(g(\beta^T\x)-\frac{1}{2})\frac{\beta^T\x}{\sqrt{\beta^T\Sigma\beta}}]^2\sigma_1^{2k}\max(\frac{d^{k-2}}{n^{k-1}},\frac{1}{n}),
\end{align*}
where $f(k) = 2^{12(k+1)}(k+1)^{6(k+1)}C^{k}.$
\item Consider the case where $\pi_1,\pi_{k+1}, \pi'_1,\pi'_{k+1}$ takes $3$ different values and either $\{\pi_1,\pi_{k+1}\}\cap |\{\pi'_2,\ldots,\pi'_{k}\}|=1$ or $|\{\pi'_1,\pi'_{k+1}\}\cap \{\pi_2,\ldots,\pi_{k}\}|=1$. WLOG assume $\pi_{k+1}=\pi'_{k+1}$, $\pi_{1}=\pi'_i$, we have that
\begin{align*}
\E[y_{\pi_1}\x_{\pi_1}^T\x_{\pi_2}\ldots \x_{\pi_k}^T\x_{\pi_{k+1}}y_{\pi_{k+1}}y_{\pi'_1}\x_{\pi'_1}^T\x_{\pi'_2}\ldots \x_{\pi'_k}^T\x_{\pi'_{k+1}}y_{\pi'_{k+1}}]\\
= 2\E[(g(\beta^T\x)-\frac{1}{2})\frac{\beta^T\x}{\beta^T\Sigma\beta}] \E[y_{\pi_1}\x_{\pi_1}^T\x_{\pi_2}\ldots \x_{\pi_k}^T \x_{\pi_{k+1}}\x_{\pi_{k+1}}^T \x_{\pi'_k} \ldots \x_{\pi_{1}}\x_{\pi_{1}}^T\ldots \x_{\pi'_2}^T\Sigma\beta].
\end{align*}
As in Equation~\ref{eqn:case2-2}, we expand $\x_{\pi_1} = \frac{\beta^T\x_{\pi_1}}{\beta^T\Sigma\beta}\Sigma\beta + (\x_{\pi_1}-\frac{\beta^T\x_{\pi_1}}{\beta^T\Sigma\beta}\Sigma\beta)$ of $$\E[y_{\pi_1}\x_{\pi_1}^T\x_{\pi_2}\ldots \x_{\pi_k}^T \x_{\pi_{k+1}}\x_{\pi_{k+1}}^T \x_{\pi'_k} \ldots \x_{\pi_{1}}\x_{\pi_{1}}^T\ldots \x_{\pi'_2}^T\Sigma\beta].$$ The expectation is bounded by the summation of the following two cases:
\begin{itemize}
\item $2\E[(g(\beta^T\x)-\frac{1}{2})\frac{(\beta^T\x)^3}{(\beta^T\Sigma\beta)^3}]      \E[\beta^T\Sigma\x_{\pi_2}\ldots\x^T_{\pi_k}\x_{\pi_{k+1}}\x^T_{\pi_{k+1}}\x_{\pi_k'}\ldots\x_{\pi'_{i-1}}^T\Sigma\beta\beta^T\Sigma\x_{\pi_{i+1}'}\ldots\x_{\pi_2'}^T\Sigma\beta].$
\item $6\E[(g(\beta^T\x)-\frac{1}{2})\frac{(\beta^T\x)}{(\beta^T\Sigma\beta)}]      \E[\beta^T\Sigma\x_{\pi_2}\ldots\x^T_{\pi_k}\x_{\pi_{k+1}}\x^T_{\pi_{k+1}}\x_{\pi_k'}\ldots\x_{\pi'_{i-1}}^T(\Sigma -\frac{\Sigma\beta\beta^T\Sigma}{\beta^T\Sigma\beta})\x_{\pi_{i+1}'}\ldots\x_{\pi_2'}^T\Sigma\beta].$
\end{itemize}
To bound the first case, we apply Lemma~\ref{lem:partitionsize2} which yields that if $\xi=0$, $\eta\le 2k-1-m$, otherwise $\xi/2+\eta\le 2k-2-m$, simply because there are $2k-2$ variables in the product. Similarly to Fact~\ref{fact:sumQ2}, for each (P,Q) the sum of the non-random part can be bounded by $(\beta^T\Sigma\beta)^2\sigma_1^{2k}d^{2k-2-m}$. Taking the summation over all $\pi,\pi'$ and picking the worst case $m$ yields that the contribution of this case is bounded by:
\begin{align*}
f(k)\E[(g(\beta^T\x)-\frac{1}{2})\frac{(\beta^T\x)^3}{(\beta^T\Sigma\beta)^3}](\beta^T\Sigma\beta)^2\sigma_1^{2k}\max(\frac{d^{k-2}}{n^{k}},\frac{1}{n^2}),
\end{align*}
where $f(k)= k^{O(k)}$. The second case can be similarly bounded by 

\begin{align*}
f(k)\E[(g(\beta^T\x)-\frac{1}{2})\frac{(\beta^T\x)}{(\beta^T\Sigma\beta)}](\beta^T\Sigma\beta)\sigma_1^{2k}\max(\frac{d^{k-2}}{n^{k}},\frac{1}{n^2}).
\end{align*}
To summarize, the contribution of this case is bounded by 
\begin{align*}
f(k)\Big(\E[(g(\beta^T\x)-\frac{1}{2})\frac{(\beta^T\x)}{\sqrt{\beta^T\Sigma\beta}}]^2+\E[(g(\beta^T\x)-\frac{1}{2})\frac{(\beta^T\x)^3}{\sqrt{\beta^T\Sigma\beta}^3}]\E[(g(\beta^T\x)-\frac{1}{2})\frac{\beta^T\x}{\sqrt{\beta^T\Sigma\beta}}]\Big)\\
\sigma_1^{2k}\max(\frac{d^{k-2}}{n^{k}},\frac{1}{n^2}).
\end{align*}
\item Consider the case where $\pi_1,\pi_{k+1}, \pi'_1,\pi'_{k+1}$ takes $2$ different values, which is equivalent to $\pi_1=\pi'_1, \pi_{k+1}=\pi'_{k+1}$. We have that, 
\begin{align*}
\E[y_{\pi_1}\x_{\pi_1}^T\x_{\pi_2}\ldots \x_{\pi_k}^T\x_{\pi_{k+1}}y_{\pi_{k+1}}y_{\pi'_1}\x_{\pi'_1}^T\x_{\pi'_2}\ldots \x_{\pi'_k}^T\x_{\pi'_{k+1}}y_{\pi'_{k+1}}]\\
= \E[\x_{\pi_1}^T\x_{\pi_2}\ldots \x_{\pi_k}^T\x_{\pi_{k+1}}\x_{\pi_{k+1}}^T\x_{\pi'_k}\ldots   \x_{\pi'_2}^T\x_{\pi_1} ].
\end{align*}
Lemma~\ref{lem:bound-partitionsize3} can be applied to bound the contribution of this case as
$$
\le f(k)\sigma_1^{2k}\max(\frac{d^{k}}{n^{k+1}},\frac{1}{n}),
$$
where $f(k)=2^{12(k+1)}(k+1)^{6(k+1)}C^{k+1}$.
\end{enumerate}
Adding the bounds obtained from analyzing all $6$ cases, we get the following covariance upperbound
\begin{align*}
\Big(\E[(g(\beta^T\x)-\frac{1}{2})\frac{(\beta^T\x)^3}{\sqrt{\beta^T\Sigma\beta}^3}]^2\E[(g(\beta^T\x)-\frac{1}{2})\frac{(\beta^T\x)}{\sqrt{\beta^T\Sigma\beta}}]^2+\E[(g(\beta^T\x)-\frac{1}{2})\frac{(\beta^T\x)}{\sqrt{\beta^T\Sigma\beta}}]^4+1\Big)\\
f(k)\sigma_1^{2k}\max(\frac{d^{k}}{n^{k+1}},\frac{1}{n}),
\end{align*}
where $f(k) = k^{O(k)}.$
\end{proof}

\section{Lowerbounds on Estimating the Variance of the Noise}
\subsection{Identity Covariance Lowerbound}

\vspace{.2cm}\noindent \textbf{Proposition~\ref{prop:lrlbid}.}\emph{
In the setting of Proposition~\ref{prop:1main}, there is a constant $c$ such that no algorithm can distinguish the case that the signal is pure noise (i.e. $\|\beta\|=0$ and $\delta = 1$) versus almost no noise (i.e. $\delta = 0.01$ and $\beta$ is chosen to be a random vector s.t. $\|\beta\|=\sqrt{0.99}$, using fewer than $c \sqrt{d}$ datapoints with probability of success greater than $2/3$.
}

We show our lowerbound by upperbounding the total variational distance between the following two cases:
\begin{enumerate}
\item Draw $n$ independent samples $(\x_1,y_1),\ldots,(\x_n,y_n)$ where $\x_i\sim N(0,I), y_i\sim N(0,1)$.
\item First pick a uniformly random unit vector $v$, then draw $n$ independent samples $(\x_1,y_1),\ldots,(\x_n,y_n)$ where $\x_i\sim N(0,I), y_i = b v^T\x_i+\eta_i,$ where  $\eta_i\sim N(0, 1-b^2)$.
\end{enumerate}
The claim then is that no algorithm can distinguish the two cases with probability more than $2/3$.  Let $Q_n$ denote the joint distribution of $(\x_1,y_1),\ldots,(\x_n,y_n)$ in case $2$. Our goal is to bound the total variance $D_{TV}(Q_n,N(0,I)^{\otimes n})$ which is smaller than $\frac{\sqrt{\chi^2(Q_n,N(0,I)^{\otimes n})}}{2}$ by the properties of chi-square divergence.  In case 2, for a fixed $v$, the conditional distribution $\x|y\sim N(ybv,I-b^2vv^T)$. Let $P_{y,v}$ denote such a conditional distribution. The chi-square divergence can be expressed as:
\begin{align*}
1+ \chi^2(Q_n,N(0,I)^{\otimes n}) = 
\int_{\x_1,y_1}\ldots \int_{\x_n,y_n}\frac{\Big(\int_{v\in \mathcal{S}^d} \prod_{i=1}^n  P_{y_i,v}(\x_i)G(y_i)dv\Big)^2}{\prod_{i=1}^n G(\x_i)G(y_i)} d\x_1dy_1\ldots d\x_ndy_n\\
 = \int_{\x_1,y_1}\ldots \int_{\x_n,y_n}\int_{v\in \mathcal{S}^d} \int_{v'\in \mathcal{S}^d} \prod_{i=1}^n \frac{   P_{y_i,v}(\x_i)  P_{y_i,v'}(\x_i)G(y_i)}{G(\x_i)} dv dv' d\x_1dy_1\ldots d\x_ndy_n\\
 = \int_{v\in \mathcal{S}^d} \int_{v'\in \mathcal{S}^d} \Big(\int_y \int_{\x}\frac{   P_{y,v}(\x)  P_{y,v'}(\x)G(y)}{G(\x)} d\x dy\Big)^n dv dv'\\
 = \int_{v\in \mathcal{S}^d} \int_{v'\in \mathcal{S}^d} \Big(\int_y \Big(\chi^2_{N(0,1)}(P_{y,v}(\x),P_{y,v'})+1\Big)G(y) dy\Big)^n dv dv',
\end{align*}
where $\chi^2_D(D_1,D_2)$ is the pairwise correlation, defined as $\int \frac{D_1(x)D_2(x)}{D(x)}dx-1$ (see Definition 2.9 of~\cite{diakonikolas2016statistical}). The following proposition reduce the high dimensional pairwise correlation to an one dimeional problem.
\begin{proposition}\label{prop:1d}
$\chi^2_{N(0,1)}(P_{y,v}(\x),P_{y,v'}) =  \chi^2_{N(0,1)}(N(by,1-b^2),N(v^Tv'by,1-(v^Tv')^2b^2))$
\end{proposition}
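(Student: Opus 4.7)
The plan is to prove the identity by direct computation: both sides are pairwise $\chi^2$ correlations of Gaussians against a Gaussian reference, and each has a closed form, so it suffices to evaluate each side and observe that they coincide.

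First I would record the general closed form. For positive-definite $\Sigma_1,\Sigma_2$ with $A:=\Sigma_1^{-1}+\Sigma_2^{-1}-I\succ 0$ and $c:=\Sigma_1^{-1}\mu_1+\Sigma_2^{-1}\mu_2$, completing the square in the exponent of $\mathcal{N}(x;\mu_1,\Sigma_1)\mathcal{N}(x;\mu_2,\Sigma_2)/\mathcal{N}(x;0,I)$ and evaluating the resulting Gaussian integral gives
$$1+\chi^2_{N(0,I)}\bigl(N(\mu_1,\Sigma_1),N(\mu_2,\Sigma_2)\bigr)=\frac{1}{\sqrt{|A|\,|\Sigma_1|\,|\Sigma_2|}}\exp\!\left(\tfrac{1}{2}\bigl(c^{T}A^{-1}c-\mu_1^{T}\Sigma_1^{-1}\mu_1-\mu_2^{T}\Sigma_2^{-1}\mu_2\bigr)\right).$$
This same formula specializes to the scalar case.

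Next I would apply the formula to the left-hand side ($d$-dimensional case) with $\mu_i=byv_i$ and $\Sigma_i=I-b^{2}v_iv_i^{T}$. Sherman--Morrison gives $\Sigma_i^{-1}=I+\tfrac{b^{2}}{1-b^{2}}v_iv_i^{T}$ and $|\Sigma_i|=1-b^{2}$. Writing $A=I+\tfrac{b^{2}}{1-b^{2}}[v\ v'][v\ v']^{T}$, the matrix determinant lemma for the rank-$2$ update, together with $v^{T}v'=\alpha$, yields $|A|=(1-\alpha^{2}b^{4})/(1-b^{2})^{2}$. A short computation gives $c=\tfrac{by}{1-b^{2}}(v+v')$ and $\mu_i^{T}\Sigma_i^{-1}\mu_i=(by)^{2}/(1-b^{2})$. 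The Woodbury identity for $A^{-1}$ reduces $c^{T}A^{-1}c$ to a $2\times 2$ inversion, producing $c^{T}A^{-1}c=2(by)^{2}(1+\alpha)/\bigl((1-b^{2})(1+\alpha b^{2})\bigr)$. Assembling these pieces yields
$$1+\chi^{2}_{N(0,I)}(P_{y,v},P_{y,v'})=\frac{1}{\sqrt{1-\alpha^{2}b^{4}}}\exp\!\left(\frac{\alpha(by)^{2}}{1+\alpha b^{2}}\right).$$

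Then I would apply the same formula to the right-hand side with scalar means $by,\alpha by$ and variances $1-b^{2},\,1-\alpha^{2}b^{2}$. All the matrix identities collapse to scalar arithmetic: $A=(1-\alpha^{2}b^{4})/\bigl((1-b^{2})(1-\alpha^{2}b^{2})\bigr)$, using the factorization $1+\alpha-\alpha^{2}b^{2}-\alpha b^{2}=(1+\alpha)(1-\alpha b^{2})$ for $c$, and $1-\alpha^{2}b^{4}=(1-\alpha b^{2})(1+\alpha b^{2})$ in the final simplification of $\tfrac12(c^{2}A^{-1}-\mu_1^{2}/\sigma_1^{2}-\mu_2^{2}/\sigma_2^{2})$. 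These simplifications give exactly the same closed form $\tfrac{1}{\sqrt{1-\alpha^{2}b^{4}}}\exp(\alpha(by)^{2}/(1+\alpha b^{2}))$, completing the proof.

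The main obstacle is the matrix algebra on the $d$-dimensional side, in particular the clean evaluation of $c^{T}A^{-1}c$. The rank-$2$ structure of $A-I$ makes Woodbury and the matrix determinant lemma directly applicable, reducing the computation to manipulating a $2\times 2$ matrix parametrized by $\alpha=v^{T}v'$ and $b$. A conceptually cleaner alternative would be to rotate coordinates so that the $d-2$ directions orthogonal to $\mathrm{span}(v,v')$ contribute trivially (since $P_{y,v}$, $P_{y,v'}$ and $G$ all agree as standard Gaussians there), reducing the left-hand side to a $2$-dimensional integral; however, after this reduction one still needs to carry out essentially the same algebra, so I would present the direct computation.
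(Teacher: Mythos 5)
Your computation is correct and checks out end-to-end (I verified the Sherman--Morrison inverse, the determinant $|A|=(1-\alpha^2 b^4)/(1-b^2)^2$, the Woodbury reduction to $c^T A^{-1}c = 2(by)^2(1+\alpha)/\bigl((1-b^2)(1+\alpha b^2)\bigr)$, and the scalar-side factorizations; both sides give $\tfrac{1}{\sqrt{1-\alpha^2 b^4}}\exp\!\bigl(\alpha(by)^2/(1+\alpha b^2)\bigr)$, which also agrees with the paper's Fact 2 specialized to $\mu_1 = by,\ \sigma_1^2 = 1-b^2,\ \mu_2 = \alpha by,\ \sigma_2^2 = 1-\alpha^2 b^2$). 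The route, however, is genuinely different from the paper's. The paper does not prove this proposition directly but cites the first paragraph of the proof of Lemma 3.4 of~\cite{diakonikolas2016statistical}, which is a dimension-reduction argument: since $P_{y,v}/G$ depends on $x$ only through $v^T x$ (and likewise for $v'$), one writes $1+\chi^2 = \E_{x\sim G}[f(v^Tx)\,g(v'^Tx)]$, and the joint Gaussianity of $(v^Tx, v'^Tx)$ with correlation $\alpha = v^Tv'$ then collapses the $d$-dimensional integral to a low-dimensional one whose marginals are precisely $N(by,1-b^2)$ and $N(\alpha by,1-\alpha^2 b^2)$. That approach explains structurally \emph{why} the high-dimensional pairwise correlation equals a one-dimensional one, whereas your approach evaluates both sides in closed form and observes that they match. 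Your method is more mechanical but fully self-contained, requiring only the general Gaussian chi-squared formula plus Sherman--Morrison and Woodbury; the cited projection argument is shorter and conceptually cleaner but rests on the observation that the likelihood ratios are one-dimensional functions of $v^Tx$ and $v'^Tx$, which you correctly flag as the ``cleaner alternative'' at the end of your write-up. Either proof is acceptable.
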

See the first paragraph of the proof of Lemma 3.4 of~\cite{diakonikolas2016statistical} for the proof. Applying Proposition~\ref{prop:1d}, using Fact~\ref{fact:chi2} and denote $v^Tv'$ as $\cos\theta$  we get
\begin{align*}
\int_{v\in \mathcal{S}^d} \int_{v'\in \mathcal{S}^d} \Big(\int_y \frac{1}{\sqrt{1-b^4\cos^2\theta }}\exp(\frac{b^2\cos\theta}{1+b^2\cos\theta }y^2)G(y) dy\Big)^n dv dv'
= \int_{v\in \mathcal{S}^d} \int_{v'\in \mathcal{S}^d}\frac{1}{(1-b^2\cos\theta)^n}dvdv'\\
\le \int_{0}^{\pi/2} (1+\frac{1}{1-b^2}b^2\cos\theta)^n \sin^{d-2}\theta d\theta/B((d-1)/2,1/2) + \int_{\pi/2}^{\pi}\sin^{d-2}\theta d\theta/B((d-1)/2,1/2)\\
=\int_{0}^{\pi/2} (1+\frac{1}{1-b^2}b^2\cos\theta)^n \sin^{d-2}\theta d\theta/B((d-1)/2,1/2) + 1/2\\
=  \sum_{i=0}^n\binom{n}{i}\int_0^{\pi/2}\frac{(\frac{b^2}{1-b^2})^i \sin^{d-2}\cos^i\theta d\theta}{B((d-1)/2,1/2)}+1/2= \sum_{i=0}^n\binom{n}{i}(\frac{b^2}{1-b^2})^i\frac{ B((d-1)/2,(i+1)/2)}{B((d-1)/2,1/2)}+1/2,
\end{align*}
where we have applied Fact~\ref{fact:betafun} and~\ref{fact:chi2} in the above derivation. Let $b_i = \binom{n}{i}(\frac{b^2}{1-b^2})^i\frac{ B((d-1)/2,(i+1)/2)}{B((d-1)/2,1/2)}$ be the $i$th term in the summation. Notice that $b_{i+2}/b_i = \frac{(n-i)(n-i-1)}{(i+1)(i+2)}(\frac{b^2}{1-b^2})^2\frac{i+1}{(d+i)}\le (\frac{b^2}{1-b^2})^2\frac{n^2}{2d}$, $b_0=1$ and $b_1\le \frac{b^2}{1-b^2}\frac{n}{\sqrt{d}}$. Let $n\le \frac{1}{2}\sqrt{d}\frac{1-b^2}{b^2}$, we have $b_{i+1}/b_i\le 1/8, b_1\le 1/2$ and hence $\sum_{i=0}^nb_i\le (1+1/2)\frac{1}{1-1/8}=\frac{12}{7}$. Thus the chi-square divergence $\chi^2(Q_n,N(0,I)^{\otimes n})\le \frac{17}{14}$, by Fact~\ref{fact:chi2vstv} we conclude that $D_{TV}(Q_n,N(0,I)^{\otimes n})\le 0.55$. Thus there is no algorithm that can distinguish the two cases with probability greater than $0.45/2+0.55>0.77.$ By a standard boosting argument, the statement of Proposition~\ref{prop:lrlbid} holds.

\begin{fact}\label{fact:betafun}
\begin{enumerate} 
\item For $x>-1$,$y>-1$ we have that $2\int_0^{\pi/2} \sin^x\theta \cos^y\theta=B((x+1)/2,(y+1)/2)$
\item For all $x,y\in R$,we have that $B(x,y+1)=\frac{y}{x+y}B(x,y).$
\item If we choose $v$ and $v'$ uniformly at random from $\mathcal{S}_d$, the angle $\theta$ between them is distributed with the probability density function $sin^{d-2}(\theta)/B((d - 1)/2, 1/2)$, where $B(x, y)$ is the Beta function.
\end{enumerate}
\end{fact}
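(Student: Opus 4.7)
The plan is to verify each of the three parts separately; all three are classical computations, and none is the ``technical core'' of the paper. The first two are pure calculus/special function identities, while the third is a standard rotational symmetry argument on the sphere reduced to a one dimensional density.

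For part (1), I would substitute $u=\sin^2\theta$, so that $du = 2\sin\theta\cos\theta\, d\theta$ and $d\theta = \frac{du}{2\sqrt{u(1-u)}}$. Under this change of variable, $\sin^{x}\theta\cos^{y}\theta$ becomes $u^{x/2}(1-u)^{y/2}$ and the factor $\frac{1}{2\sqrt{u(1-u)}}$ combines with the pre-existing $2$ and the $u^{x/2}(1-u)^{y/2}$ to yield $\int_0^{1} u^{(x-1)/2}(1-u)^{(y-1)/2}\,du$, which is exactly $B((x+1)/2,(y+1)/2)$ by the standard integral representation of the Beta function. For part (2), I would simply invoke the identity $B(x,y)=\Gamma(x)\Gamma(y)/\Gamma(x+y)$ and the functional equation $\Gamma(y+1)=y\Gamma(y)$, which together give $B(x,y+1)=\Gamma(x)\Gamma(y+1)/\Gamma(x+y+1) = y\Gamma(x)\Gamma(y)/((x+y)\Gamma(x+y)) = \frac{y}{x+y}B(x,y)$.

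For part (3), the argument is an application of rotational invariance. By the symmetry of the uniform distribution on the unit sphere in $\mathbb{R}^d$, I may condition on $v$ and, after rotating so that $v=e_1$, the angle $\theta$ between $v$ and $v'$ equals $\arccos(v'_1)$, where $v'$ is still uniform on the sphere. A standard computation (for instance, by writing the uniform measure in spherical coordinates, or by integrating out the last $d-1$ coordinates) shows that the marginal density of $v'_1$ is proportional to $(1-x^2)^{(d-3)/2}$ on $[-1,1]$. Changing variables $x=\cos\theta$ contributes an additional $\sin\theta\,d\theta$ and transforms $(1-x^2)^{(d-3)/2}$ into $\sin^{d-3}\theta$, so the density of $\theta$ is proportional to $\sin^{d-2}\theta$ on $[0,\pi]$. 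Finally, I would normalize using part (1): since $\sin^{d-2}\theta$ is symmetric about $\pi/2$, $\int_0^{\pi}\sin^{d-2}\theta\,d\theta = 2\int_0^{\pi/2}\sin^{d-2}\theta\,d\theta = B((d-1)/2,1/2)$, giving the claimed density $\sin^{d-2}(\theta)/B((d-1)/2,1/2)$.

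The only mildly subtle step is making sure the factor of $(1-x^2)^{(d-3)/2}$ in part (3) is correctly derived; I would justify it by noting that the surface area of the $(d-2)$-sphere of radius $\sqrt{1-x^2}$ (the slice of $S^{d-1}\subset\mathbb{R}^d$ at first coordinate $x$) scales as $(1-x^2)^{(d-2)/2}$, and then dividing by $|dx/d(\text{arclength})|=1/\sqrt{1-x^2}$ to pass from arclength to the coordinate $x$. Other than this bookkeeping, no step presents real difficulty, and I would not expect any obstacle in writing out the full proof.
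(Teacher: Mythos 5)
Your proof is correct on all three parts. The paper states this fact without proof (it is a collection of classical identities used only as a tool in the lower-bound calculation), and your arguments supply exactly the standard justifications: the substitution $u=\sin^2\theta$ reducing part (1) to the integral representation $B(a,b)=\int_0^1 t^{a-1}(1-t)^{b-1}\,dt$; the $\Gamma$-function identity for part (2); and the rotational-symmetry reduction to the marginal density of a single coordinate of a uniform point on the sphere for part (3), with the factor $(1-x^2)^{(d-3)/2}$ correctly accounting for both the measure of the slice (a $(d-2)$-sphere of radius $\sqrt{1-x^2}$, area $\propto(1-x^2)^{(d-2)/2}$) and the Jacobian $1/\sqrt{1-x^2}$ between arclength and the coordinate $x$. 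Note only that the paper's $\mathcal{S}^d$ is to be read as the unit sphere in $\mathbb{R}^d$ (i.e.\ $S^{d-1}$), which is the convention your computation uses and which matches the stated density $\sin^{d-2}\theta/B((d-1)/2,1/2)$.
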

\begin{fact}[Pairwise Correlation]\label{fact:chi2}
$$
\chi^2_{N(0,1)}(N(\mu_1, \sigma_1^2),N(\mu_2, \sigma_2^2))=\frac{\exp \left(-\frac{\mu_1^2 \left(\sigma_2^2-1\right)+2 \mu_1 \mu_2+\mu_2^2 \left(\sigma_1^2-1\right)}{2 \sigma_1^2 \left(\sigma_2^2-1\right)-2 \sigma_2^2}\right)}{ \sqrt{\sigma_1^2+\sigma_2^2-\sigma_1^2\sigma_2^2}}-1
$$
\end{fact}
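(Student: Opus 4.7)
The plan is to prove the identity by a direct Gaussian-integral calculation, since the pairwise correlation $\chi^2_{N(0,1)}(N(\mu_1,\sigma_1^2), N(\mu_2,\sigma_2^2))$ is, by definition, $\int \phi_{\mu_1,\sigma_1^2}(x)\phi_{\mu_2,\sigma_2^2}(x)/\phi_{0,1}(x)\,dx - 1$, where $\phi_{\mu,\sigma^2}$ denotes the $N(\mu,\sigma^2)$ density. Substituting the three Gaussian PDFs and pulling out the normalizing constants reduces the claim to evaluating
$$\frac{1}{\sqrt{2\pi}\,\sigma_1\sigma_2}\int_{-\infty}^{\infty}\exp\!\Bigl(-\tfrac{(x-\mu_1)^2}{2\sigma_1^2}-\tfrac{(x-\mu_2)^2}{2\sigma_2^2}+\tfrac{x^2}{2}\Bigr)dx.$$

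Next, I would expand the exponent as a quadratic $-Ax^2+Bx+C$, reading off
$$A=\frac{\sigma_1^2+\sigma_2^2-\sigma_1^2\sigma_2^2}{2\sigma_1^2\sigma_2^2},\qquad B=\frac{\mu_1\sigma_2^2+\mu_2\sigma_1^2}{\sigma_1^2\sigma_2^2},\qquad C=-\frac{\mu_1^2\sigma_2^2+\mu_2^2\sigma_1^2}{2\sigma_1^2\sigma_2^2}.$$
Assuming $A>0$ (which is implicit in the statement, since otherwise $\sqrt{\sigma_1^2+\sigma_2^2-\sigma_1^2\sigma_2^2}$ is undefined and the integral diverges), completing the square gives the standard Gaussian integral $\sqrt{\pi/A}\,e^{B^2/(4A)+C}$. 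Dividing by $\sqrt{2\pi}\,\sigma_1\sigma_2$ and simplifying with the explicit form of $A$ collapses the prefactor to $1/\sqrt{\sigma_1^2+\sigma_2^2-\sigma_1^2\sigma_2^2}$, producing the denominator in the stated formula.

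The remaining step is to verify that $B^2/(4A)+C$ equals the exponent in the claim. After substituting and clearing denominators, the numerator of $B^2/(4A)+C$ becomes
$$(\mu_1\sigma_2^2+\mu_2\sigma_1^2)^2 - (\mu_1^2\sigma_2^2+\mu_2^2\sigma_1^2)(\sigma_1^2+\sigma_2^2-\sigma_1^2\sigma_2^2),$$
which expands and simplifies (the pure $\mu_1^2\sigma_2^4$ and $\mu_2^2\sigma_1^4$ terms cancel) to $\sigma_1^2\sigma_2^2\bigl[\mu_1^2(\sigma_2^2-1)+2\mu_1\mu_2+\mu_2^2(\sigma_1^2-1)\bigr]$, while the denominator is $2\sigma_1^2\sigma_2^2(\sigma_1^2+\sigma_2^2-\sigma_1^2\sigma_2^2)$. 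The common $\sigma_1^2\sigma_2^2$ cancels, and the identity $-(\sigma_1^2+\sigma_2^2-\sigma_1^2\sigma_2^2)=\sigma_1^2(\sigma_2^2-1)-\sigma_2^2$ rewrites the denominator so that the overall sign matches the minus sign shown in the statement. Subtracting $1$ at the end yields the pairwise correlation.

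I do not anticipate a genuine obstacle; the proof is essentially a bookkeeping exercise in completing the square for a ratio of Gaussian densities. The only subtlety to flag is the positivity condition $\sigma_1^2+\sigma_2^2>\sigma_1^2\sigma_2^2$ required for convergence, which is precisely the regime in which the right-hand side of the formula is real and finite.
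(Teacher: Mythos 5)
Your calculation is correct: the paper states this pairwise-correlation formula as a bare Fact with no accompanying proof, so the direct Gaussian-integral computation you give is the natural (and essentially unique) way to verify it. I checked the key steps — the coefficients $A,B,C$ of the quadratic exponent, the collapse of the prefactor to $1/\sqrt{\sigma_1^2+\sigma_2^2-\sigma_1^2\sigma_2^2}$, the cancellation of the $\mu_1^2\sigma_2^4$ and $\mu_2^2\sigma_1^4$ terms in the numerator of $B^2/(4A)+C$, and the sign-rewriting $2\sigma_1^2(\sigma_2^2-1)-2\sigma_2^2=-2(\sigma_1^2+\sigma_2^2-\sigma_1^2\sigma_2^2)$ — and they all go through as you describe; you also correctly flag the convergence condition $\sigma_1^2+\sigma_2^2>\sigma_1^2\sigma_2^2$ needed for the integral (and the square root) to make sense.
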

\begin{fact}\label{fact:chi2vstv}
$P,Q$ are distributions. $D_{TV}(P,Q)\le \frac{\sqrt{\chi^2(P,Q)}}{2}$.
\end{fact}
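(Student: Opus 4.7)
The plan is to reduce this to a one-line application of the Cauchy--Schwarz inequality in $L^2(Q)$. Writing the total variation distance in its integral form $D_{TV}(P,Q) = \frac{1}{2}\int |P(x) - Q(x)|\, dx$, I will split the integrand by inserting a factor of $\sqrt{Q(x)}/\sqrt{Q(x)}$, rewriting the integrand as $\bigl|\tfrac{P(x)-Q(x)}{\sqrt{Q(x)}}\bigr| \cdot \sqrt{Q(x)}$. This is a deliberate pairing: one factor is the ``standardized'' pointwise deviation whose square integrates to $\chi^2(P,Q)$, and the other integrates to $1$ because $Q$ is a probability distribution.

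Applying Cauchy--Schwarz to $f(x) = \tfrac{P(x)-Q(x)}{\sqrt{Q(x)}}$ and $g(x) = \sqrt{Q(x)}$ then gives $\int |fg|\,dx \le \sqrt{\int f^2\,dx} \cdot \sqrt{\int g^2\,dx}$. The first factor equals $\sqrt{\chi^2(P,Q)}$ by definition (using $\chi^2(P,Q) = \int \tfrac{(P-Q)^2}{Q}\,dx$, which one can verify equals $\int \tfrac{P^2}{Q}\,dx - 1$, the pairwise-correlation form used elsewhere in the paper), and the second factor equals $1$. Reinstating the prefactor $\tfrac{1}{2}$ yields $D_{TV}(P,Q) \le \tfrac{1}{2}\sqrt{\chi^2(P,Q)}$ as claimed.

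There is no real obstacle to overcome here; this is the standard derivation and the bound is often attributed to the folklore ``$\chi^2$ dominates $2 D_{TV}^2$'' inequality. The only delicate bookkeeping concerns points where $Q(x) = 0$: if $P(x) > 0$ on such a set of positive measure then $\chi^2(P,Q) = +\infty$ by the usual convention and the inequality is vacuous, while on the set $\{Q = 0, P = 0\}$ the integrand of $D_{TV}$ vanishes, so it suffices to restrict all integrals to the support of $Q$. In the discrete case the same proof goes through verbatim with sums in place of integrals.
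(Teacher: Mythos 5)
Your proof is correct and takes essentially the same route as the paper: factor $|P-Q|$ as the product of the standardized deviation and a square-root density, apply Cauchy--Schwarz, and identify the two resulting integrals with $\chi^2$ and $1$. One small point worth noting: you divide and multiply by $\sqrt{Q(x)}$, which matches the convention $\chi^2(P,Q)=\int (P-Q)^2/Q$ that the paper uses in its actual applications (where the null $N(0,I)^{\otimes n}$ sits in the denominator), whereas the paper's own one-line proof of this fact sets $f=|P-Q|/\sqrt{P}$ and $g=\sqrt{P}$, putting the first argument in the denominator — a harmless slip given symmetry of $D_{TV}$, but your version is the internally consistent one. Your remark about the support of $Q$ and the vacuous case $\chi^2=\infty$ is also a reasonable bit of bookkeeping that the paper omits.
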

\begin{proof}
Let $f(x)=\frac{|P(x)-Q(x)|}{\sqrt{P(x)}}, g(x)= \sqrt{P(x)}$. By the Cauchy-Schwarz inequality, we have $D_{TV}(P,Q) = \frac{1}{2}\int_x |P(x)-Q(x)|dx = \frac{1}{2}\int f(x)g(x)dx \le \frac{1}{2}\sqrt{\int_x f^2(x) dx \int_xg^2(x)dx}=\frac{\sqrt{\chi^2(P,Q)}}{2}  $
\end{proof}

\subsection{General Covariance Lowerbound}
In this section, we first prove Theorem~\ref{thm:lb-all-cond} in the setting where $\|\beta\|$ is bounded, showing that our algorithm achieves optimal error. Then we prove Theorem~\ref{thm:genLB} which implies that without a bound on $\|\beta\|$, learnabilty can not be estimated with a sublinear sample size.

\subsubsection{Bounded $\|\beta\|$}
We restate Theorem~\ref{thm:lb-all-cond} as the following two propositions which correspond to different conditioning of the covariance $\Sigma$. 
\begin{proposition}\label{prop:lb-const-cond}
Assume that $I\succeq \Sigma \succeq \frac{1}{2}I$ and $\E[y^2]=\beta^T\Sigma\beta+\delta^2=1$. There exist a function $f$ of $1/\eps$ only such that given $f(1/\eps)d^{1-\frac{1}{\log(1/\eps)}}$ samples no algorithm can estimate $\delta^2$ with error less than $\eps$ with probability better than $3/5$.
\end{proposition}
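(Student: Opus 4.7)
The strategy is a two-point Bayesian lower bound via Le Cam's method: exhibit two priors $\pi_0,\pi_1$ over $(\Sigma,\beta)$, both supported on the admissible set $\{I\succeq\Sigma\succeq I/2,\ \|\beta\|\le 1\}$, such that (i) the two induced values of $\delta^2 = 1-\beta^\top\Sigma\beta$ concentrate around numbers that are $2\eps$-separated, and (ii) the $n$-sample Bayesian marginals $P_t^n=\int P_\theta^{\otimes n}\,d\pi_t(\theta)$ satisfy $\chi^2(P_0^n,P_1^n)=O(1)$ whenever $n\le f(1/\eps)\cdot d^{1-1/\log(1/\eps)}$. Combined with Fact~\ref{fact:chi2vstv}, Le Cam's inequality, and a standard boosting argument, this yields the claim.

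The priors are constructed via the moment-matching dual of Proposition~\ref{prop:poly-approx}. By LP duality in the Chebyshev moment problem (a technique used, e.g., in~\cite{wu2016minimax,valiant2011power}), the impossibility of $\ell_\infty$-approximating $f(x)=x$ on $[1/2,1]$ to error better than $\Theta(e^{-(k-1)/\sqrt{2}})$ by degree-$k$ polynomials with vanishing constant and linear coefficients is equivalent to the existence of two probability measures $\nu_0,\nu_1$ on $[1/2,1]$ whose $0$-th and $2,3,\ldots,(k-1)$-th power moments all agree, while $\int x\,d\nu_0-\int x\,d\nu_1 = \Theta(e^{-(k-1)/\sqrt{2}})$. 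Choosing $k=\Theta(\log(1/\eps))$ makes this gap exactly $2\eps$. Lift each $\nu_t$ to a prior on parameter space by fixing a diagonal matrix $D_t\succeq I/2$ whose empirical spectrum matches $\nu_t$, and setting $\Sigma_t=D_t$ and $\beta = s\cdot u$, where $s\in(0,1)$ is a constant and $u$ is uniform on the unit sphere. Then $\beta^\top\Sigma_t^j\beta = s^2\,u^\top D_t^j u$ concentrates around $(s^2/d)\,\tr(D_t^j)=s^2\!\int x^j\,d\nu_t$ with $O(1/d)$ fluctuations, so $\delta^2$ concentrates around $1-s^2\!\int x\,d\nu_t$ under $\pi_t$. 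Thus the two priors produce $\delta^2$-values that are $2\eps s^2$-apart with high probability, while matching expected moments $\E[\beta^\top\Sigma^j\beta]$ of orders $2,\ldots,k-1$.

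The crucial step is the bound on $\chi^2(P_0^n,P_1^n)$. Conditional on $(\Sigma,\beta)$ each sample is a centered $(d+1)$-variate Gaussian with covariance $\tilde\Sigma=\bigl(\begin{smallmatrix}\Sigma & \Sigma\beta\\ \beta^\top\Sigma & 1\end{smallmatrix}\bigr)$, so $P_t^n$ is a Gaussian mixture indexed by $u$. Applying the Gaussian $\chi^2$ identity and a Schur-complement block inversion, $1+\chi^2(P_0^n,P_1^n)$ reduces to a double expectation, over independent $u,v$ uniform on the sphere, of an explicit determinant in $D_0,D_1,u,v$. Expanding $\log\det$ as a power series, each order-$j$ term is a polynomial in products of quadratic forms $u^\top D_0^{j_1}u$, $v^\top D_1^{j_2}v$, and trace terms $\tr((D_0 D_1)^{j_3})$, whose expectations over the sphere are (up to lower-order $1/d$ corrections) products of $\int x^{j_1}d\nu_0$ and $\int x^{j_2}d\nu_1$. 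Because $\nu_0$ and $\nu_1$ have matching power moments through order $k-1$, every contribution of total order below $k$ cancels exactly, and the leading surviving term is bounded by $C(\eps)\cdot(n/d^{1-1/k})^{k}$, which is $O(1)$ for $n\le f(1/\eps)\cdot d^{1-1/k} = f(1/\eps)\cdot d^{1-1/\log(1/\eps)}$.

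I expect the chi-squared expansion to be the main technical obstacle, specifically verifying that the Schur-complement reduction cleanly isolates the spectral moments of $\nu_t$ at each order, and that the $k$-th order residual is controlled uniformly over the admissible set. A natural bookkeeping device is the graphical-combinatorial framework developed in~\cite{kong2017spectrum} for the analogous spectrum-estimation lower bound, which was designed precisely to handle such mixed trace/quadratic-form expectations and which enumerates surviving contractions with combinatorial prefactors $k^{O(k)}$, readily absorbed into $f(1/\eps)$. Indeed the present problem can be viewed as a joint $(x,y)$-extension of the spectrum-estimation lower bound of~\cite{kong2017spectrum}, and the key additional verification is that the label information in $y$ cannot help distinguish $\nu_0$ from $\nu_1$ at any order below $k$, which is automatic once one observes that $y$ enters the joint covariance only through the random direction $u$, so averaging over $u$ reduces every additional term to yet another moment-matched quantity.
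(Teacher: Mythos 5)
Your high-level framework is right—the lower bound does go through a Bayesian two-point construction combined with a one-dimensional moment-matching argument (essentially re-deriving the paper's Proposition~\ref{prop:lb-mom-ext} and \ref{prop:lb-mom-spike}), and the parameter choice $k=\Theta(\log(1/\eps))$ to balance the $e^{-\Theta(k)}$ approximation error against $\eps$ is the right calibration. But the lift from the one-dimensional moment problem to the $d$-dimensional parameter space is where your construction breaks, and the break is fatal rather than technical.

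Concretely: you set $\Sigma_t = D_t$, a \emph{fixed} diagonal matrix whose empirical spectrum is (approximately) $\nu_t$, and randomize only over $\beta = s u$. Since $u$ affects only the cross-covariance between $\x$ and $y$, the marginal law of the unlabeled $\x$-samples under $\pi_t$ is exactly $N(0,D_t)^{\otimes n}$, deterministically. But $\nu_0$ and $\nu_1$ are required to have first moments differing by $\Theta(\eps)$, i.e.\ $\tr(D_0)-\tr(D_1)=\Theta(\eps d)$. The statistic $\frac{1}{n}\sum_i\|\x_i\|^2$ estimates $\tr(\Sigma)$ to error $O(\sqrt{d/n})$, which is $o(\eps d)$ for any $n\gg 1/\eps^2$, so the two cases are trivially distinguishable from the unlabeled data alone. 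More quantitatively, $D_{TV}(P_0^n,P_1^n)\geq D_{TV}(N(0,D_0)^{\otimes n}, N(0,D_1)^{\otimes n})$, and since the spectra differ on the full $d$-dimensional space, the per-sample KL is $\Theta(1)$ or worse unless the eigenvalues of $D_0$ and $D_1$ can be aligned pointwise, which moment matching alone does not guarantee. This also explains why your claimed $\chi^2$ cancellation cannot be correct: the $\log\det$ expansion produces cross terms of the form $\tr(D_0^a D_1^b)$ with $a,b\geq 1$, and such quantities depend on how the two spectra are \emph{paired} coordinate-by-coordinate, not merely on the moments of $\nu_0$ and $\nu_1$; they do not telescope.

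The fix, which is what the paper (following Verzelen and Gassiat) does, is structurally different: $\Sigma_t$ is not a full-spectrum deterministic matrix, but a \emph{rank-$r$ random perturbation of the identity}, $\Sigma_t = (I + \sum_{i=1}^r \alpha_{t,i}\v_i\v_i^T)^{-1}$ with $r=\Theta(\log(1/\eps))$ and the $\v_i$ drawn from a carefully tilted Gaussian prior, while $\beta=\sum_i\gamma_{t,i}\v_i$ lies in the random $r$-dimensional span. The moment-matching conditions (Equation~\ref{eqn:lr-lb-gen-allmom}) are imposed on the \emph{signal measure} $\sum_i\gamma_{t,i}^2\delta_{1/(1+\alpha_{t,i})}$---which encodes how $\beta$ projects onto $\Sigma$'s eigendirections---not on the covariance spectrum itself. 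Because $\Sigma_t$ is the identity outside a random $r$-dimensional subspace, the unlabeled $\x$-samples carry essentially no distinguishing information, and Corollary~\ref{cor:lr-lb-gen} (from Lemmas 7.4 and 7.5 of~\cite{verzelen2018adaptive}) gives $D_{TV}(P_0^n,P_1^n)\le C'(r)(n^{1+1/2r}/d)^r$. Your proposal is missing this low-rank-random-subspace idea, which is the key structural ingredient that makes the covariance indistinguishable while still allowing the signal-to-noise ratio to differ.
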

\begin{proposition}\label{prop:lb-zero-cond}
Assume that $\|\Sigma\|\le 1$ and $\E[y^2]=\beta^T\Sigma\beta+\delta^2=1$, $\|\beta\|\le 1$. There exist a function $f$ of $1/\eps$ only and a constant $c_2$ such that given $f(1/\eps)d^{1-\sqrt{\eps}}$ samples no algorithm can estimate $\delta^2$ with error less than $c_2 \eps$ with probability better than $3/5$.
\end{proposition}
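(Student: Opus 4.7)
The plan is a two-point Le~Cam argument with a Bayesian mixture inside each hypothesis. I would construct two priors on the data distribution whose induced $\delta^2$ differ by $\Omega(\eps)$ yet whose marginal $n$-sample laws have total variation at most $2/5$ for $n \le f(1/\eps)\,d^{1-\sqrt{\eps}}$. Following the point-mass correspondence of Section~\ref{sec:est}, a pair $(\Sigma,\beta)$ with $\Sigma$ diagonal is identified with the positive measure $\nu = \sum_j \beta_j^2\,\delta_{\Sigma_{jj}}$ on $[0,1]$, so that $\beta^T\Sigma^a\beta = \int \lambda^a\,d\nu$ and $\delta^2 = 1 - \int \lambda\,d\nu$. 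Setting $K := \lceil 1/\sqrt{\eps}\rceil$, the construction reduces to producing two positive measures $\nu_1,\nu_2$ on $[0,1]$, each of mass at most $1$, with $\int\lambda^k d\nu_1 = \int\lambda^k d\nu_2$ for every $k\in\{2,\dots,K\}$ but first moments differing by $\Omega(1/K^2) = \Omega(\eps)$. Such measures exist by LP/Hahn--Banach duality applied to the $\Omega(1/K^2)$ lower bound on $\min_{p}\|x - p(x)\|_{\infty,[0,1]}$ over polynomials of degree $\le K$ with no constant or linear term, which is the dual of the upper bound in Proposition~\ref{prop:poly-approx}; the extremal signed measure is split into positive and negative parts and padded with a common positive background to yield $\nu_1,\nu_2$.

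I would then realize the measures on $\mathbb{R}^d$ by taking $\Sigma$ diagonal with each of the $O(K)$ support points $\lambda_j$ repeated in a block of size $d_j = \Theta(d/K)$. In hypothesis $H_i$, the signal $\beta$ is drawn randomly by placing $\beta^{(j)}$ uniformly on the sphere of radius $\sqrt{\nu_i(\{\lambda_j\})}$ inside each block $j$, independently across $j$, and conditional on $\beta$ the data is Gaussian $(x,y)$ with $x\sim N(0,\Sigma)$, $y = \beta^Tx + \eta$, $\eta \sim N(0,1-\int\lambda d\nu_i)$. This ensures $\|\Sigma\|\le 1$, $\|\beta\|^2 = \nu_i([0,1]) \le 1$, and $\beta^T\Sigma\beta + \delta^2 = 1$ as required, and moreover $\beta^T\Sigma^a\beta = \int\lambda^a d\nu_i$ almost surely, while for two independent draws $\beta,\beta'$ spherical symmetry makes $\beta^T\Sigma^a\beta'$ concentrate at $0$ with standard deviation $O(\sqrt{\int\lambda^{2a}\,d\nu_i/d_j})$. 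The remaining step is the two-replica chi-squared identity $\chi^2(H_1^{\otimes n}, H_2^{\otimes n}) + 1 = \E_{\beta\sim\pi_1,\beta'\sim\pi_2}[(1+\rho(\beta,\beta'))^n]$, where $\rho(\beta,\beta')$ is the per-sample chi-squared kernel between the Gaussian data laws at $\beta$ and $\beta'$ (an explicit function of $\beta^T\Sigma^a\beta$, $\beta'^T\Sigma^a\beta'$, and $\beta^T\Sigma^a\beta'$ for $a\ge 1$). Expanding $(1+\rho)^n$ as a power series and taking the prior expectation, every term whose ``diagonal'' factors are all of order $a\le K$ cancels between $H_1$ and $H_2$ by moment matching, and every surviving term either contains a factor of order $a\ge K+1$ (bounded by $\|\Sigma\|^{K+1}\|\beta\|^2\le 1$) or at least one cross factor (contributing a $d^{-1/2}$-type concentration factor). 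A term-by-term accounting---analogous to the diagrammatic analysis of Proposition~\ref{prop:gen-var} run in the reverse direction---shows the surviving contribution is bounded by $(n/d^{1-1/K})^{O(K)}$, which is $O(1)$ whenever $n \le f(1/\eps)\,d^{1-\sqrt{\eps}}$ for a suitable $f$. Then $d_{TV}(H_1^{\otimes n}, H_2^{\otimes n}) \le \tfrac{1}{2}\sqrt{\chi^2}\le 2/5$, and Le~Cam's lemma together with the standard boosting argument used in Proposition~\ref{prop:lrlbid} rules out any estimator achieving additive error below a suitable $c_2\eps$ with probability better than $3/5$.

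The main technical obstacle will be the two-replica chi-squared expansion and the diagrammatic bookkeeping it requires: one must show that every ``all-diagonal'' diagram whose edges are of order $\le K$ cancels thanks to matching moments, and that every other diagram is dominated either by a high-order-edge factor or by a sphere-concentration factor that together beat the combinatorial enumeration. The needed calculus is essentially the one developed for Proposition~\ref{prop:gen-var}, where contraction multigraphs with $P$-nodes, $Q$-nodes, arcs, and free cycles already classify exactly the terms appearing here; what is new is certifying which such diagrams are ``protected'' by moment matching versus which are ``protected'' by sphere concentration, and aggregating both bounds. A secondary subtlety is ensuring that the block sizes $d_j$ are positive integers and that the measures $\nu_1,\nu_2$ produced by duality can be realized exactly with these block sizes; since $K = O(1/\sqrt{\eps})$ and the interesting regime has $d$ polynomially large in $1/\eps$, the corresponding rounding errors contribute lower-order perturbations that can be absorbed into the function $f(1/\eps)$.
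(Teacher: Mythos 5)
Your proposal shares the high-level skeleton of the paper's proof — a Le~Cam two-hypothesis argument where each hypothesis is a Bayesian mixture, with the two priors engineered to match the moments $\beta^T\Sigma^k\beta$ for $k=2,\dots,O(1/\sqrt{\eps})$ while having first moments differing by $\Omega(\eps)$, and with the polynomial-approximation lower bound $\Omega(1/K^2)$ plus a moment-extremal-measure / Hankel-rank argument supplying the two priors. The paper uses exactly this moment-matching idea (Proposition~\ref{prop:lb-mom-ext} via~\cite{cai2011testing}, Proposition~\ref{prop:lb-mom-spike} via~\cite{curto1991recursiveness}, and Theorem 2.1 of~\cite{hasson1980comparison} for the $\Omega(1/r^2)$ lower bound). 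Up to that point you and the paper are aligned.

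Where you diverge — and where a genuine gap appears — is in the statistical construction and the total-variation bound. The paper does \emph{not} take a fixed block-diagonal $\Sigma$ with $\beta$ drawn from per-block spheres; it uses the Verzelen--Gassiat prior in which a tuple $\v_1,\dots,\v_r$ is drawn from a density proportional to $|I_d+\sum_i \alpha_i\v_i\v_i^T|^{-n/2}e^{-d\sum_i\|\v_i\|^2/2}$ and \emph{both} $\Sigma=(I+\sum_i\alpha_i\v_i\v_i^T)^{-1}$ and $\beta=\sum_i\gamma_i\v_i$ are functions of the $\v_i$'s. The $|{\cdot}|^{-n/2}$ tilt is chosen precisely so that the determinant factor in the $n$-fold Gaussian likelihood cancels, and this is what makes the total-variation bound tractable; the paper then simply invokes Corollary~\ref{cor:lr-lb-gen} (a packaging of Lemmas 7.4--7.5 of~\cite{verzelen2018adaptive}) and does not re-derive that bound. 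Your construction discards this tilt, which is a real decision, not a cosmetic one: without it you cannot invoke the existing lemma and must prove the information bound from scratch.

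That brings up the second gap: the ``two-replica chi-squared identity'' as you state it, $\chi^2(H_1^{\otimes n},H_2^{\otimes n})+1 = \E_{\beta\sim\pi_1,\beta'\sim\pi_2}[(1+\rho)^n]$, does not hold when \emph{both} sides are mixtures. The clean two-replica expansion is for $\chi^2$ against a fixed reference $q^{\otimes n}$, or for $\int (H_1-H_2)^2/Q$, which expands to $\E_{\pi_1\times\pi_1'}-2\E_{\pi_1\times\pi_2}+\E_{\pi_2\times\pi_2'}$ of $(\int p_\theta p_{\theta'}/q)^n$ and upper-bounds $4\,d_{TV}(H_1,H_2)^2$ by Cauchy--Schwarz. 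Your ``all-diagonal terms of order $\le K$ cancel by moment matching'' claim lives in that signed, three-term expansion, not in a single $\E_{\pi_1\times\pi_2}$; and even once the framework is corrected, the assertion that the surviving contribution is $(n/d^{1-1/K})^{O(K)}$ is exactly the hard combinatorial estimate that the paper outsources. You identify this bookkeeping as the ``main technical obstacle'' and point to Proposition~\ref{prop:gen-var}'s multigraph machinery as the tool, but Proposition~\ref{prop:gen-var} bounds the variance of the paper's \emph{estimator} under a fixed $\beta$; adapting it to bound an information quantity between two priors with partial moment cancellation is a substantial new argument and is not carried out. So the proposal is a genuinely different route from the paper's, but it is incomplete where the paper is complete (by citation), and the incompleteness is not at the periphery — it is the core quantitative step.
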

\begin{proof}[Proof of Proposition~\ref{prop:lb-const-cond} and Proposition~\ref{prop:lb-zero-cond}]
The proof of Proposition~\ref{prop:lb-const-cond} and Proposition~\ref{prop:lb-zero-cond} will be a slight adaption of the proof of Theorem 4.5 in~\cite{verzelen2018adaptive}. Let $r =\lceil 1/\eps\rceil$ and define $r$-dimensional positive vectors $\bm{\alpha_0} = (\alpha_{0,1},\ldots,\alpha_{0,r}),$  $\bm{\gamma_0} = (\gamma_{0,1},\ldots,\gamma_{0,r})$ and $\bm{\alpha_1} = (\alpha_{1,1},\ldots,\alpha_{1,r}),$  $\bm{\gamma_1} = (\gamma_{1,1},\ldots,\gamma_{1,r})$, whose exact values will be fixed later. Consider the following two cases:
\begin{enumerate}
\item \begin{enumerate}
\item A set of $r$ $d$-dimensional vectors $\v_1,\ldots,\v_r$ are drawn with probability proportional to $|I_d + \sum_{i=1}^r \alpha_{0,i} \v_i\v_i^T|^{-n/2}e^{-\sum_{i=1}^r d\|\v_i\|_2^2/2}$. 
\item Let $\beta=\sum_{i=1}^r\gamma_{0,i}\v_i$ and $\Sigma = \left(I_d+\sum_{i=1}^r\alpha_{0,i}\v_i\v_i^T\right)^{-1}$.
\item  Draw $n$ independent samples $(\x_1,y_1),\ldots,(\x_n,y_n)$ where $\x_i\sim N(0,\Sigma), y_i = \beta\x_i+\eta_i, \eta_i\sim N(0, \delta_0^2)$.
\end{enumerate}
\item \begin{enumerate}
\item A set of $r$ $d$-dimensional vectors $\v_1,\ldots,\v_r$ are drawn with probability proportional to $|I_d + \sum_{i=1}^r \alpha_{1,i} \v_i\v_i^T|^{-n/2}e^{-\sum_{i=1}^r d\|\v_i\|_2^2/2}$. 
\item Let $\beta=\sum_{i=1}^r\gamma_{1,i}\v_i$ and $\Sigma = \left(I_d+\sum_{i=1}^r\alpha_{1,i}\v_i\v_i^T\right)^{-1}$. 
\item Draw $n$ independent samples $(\x_1,y_1),\ldots,(\x_n,y_n)$ where $\x_i\sim N(0,\Sigma), y_i = \beta\x_i+\eta_i, \eta_i\sim N(0, \delta_1^2)$.
\end{enumerate}

\end{enumerate}
Let $P_0$ be the joint distribution of the $n$ samples $(\x_1,y_1),\ldots,(\x_n.y_n)$ under case 1, $P_1$ be distribution of case 2. The following corollary is a combination of Lemma 7.4 and Lemma 7.5 in~\cite{verzelen2018adaptive}.
\begin{cor}\label{cor:lr-lb-gen}
If $n\le C(r)d$ for a function $C$ of $r$ only, 
\begin{align}
\sum_{i=1}^r \frac{\gamma_{0,i}^2}{1+\alpha_{0,i}}+\delta_0^2 = \sum_{i=1}^r \frac{\gamma_{1,i}^2}{1+\alpha_{1,i}}+\delta_1^2 = 1\label{eqn:lr-lb-gen-2mom}
\end{align}
and for all $k = 2,3,\ldots,2r$, 
\begin{align}
\sum_{i=1}^r \frac{\gamma_{0,i}^2}{(1+\alpha_{0,i})^k} = \sum_{i=1}^r \frac{\gamma_{1,i}^2}{(1+\alpha_{1,i})^k}\label{eqn:lr-lb-gen-allmom}
\end{align} then, 
$$
D_{TV}(P_0,P_1) \le C'(r) (\frac{n^{1+1/2r}}{d})^r,
$$
where $C'(r)$ is a function of $r$ only.
\end{cor}
Corollary~\ref{cor:lr-lb-gen} serves as a powerful tool to prove the indistinguishability, and the only missing part is to construct $\bm{\alpha_0}, \bm{\gamma_0},\delta_0, \bm{\alpha_1}, \bm{\gamma_1},\delta_1$ such that Equation~\ref{eqn:lr-lb-gen-2mom} and Equation~\ref{eqn:lr-lb-gen-allmom} are satisfied, and $|\delta_0^2-\delta_1^2|$ is maximized. Viewing $\bm{\alpha_0}, \bm{\gamma_0},\delta_0, \bm{\alpha_1}$ as two measures $\sum_{i=1}^r\gamma_{0,i}^2\delta_{1/(1+\alpha_{0,i})}$ and $\sum_{i=1}^r\gamma_{1,i}^2\delta_{1/(1+\alpha_{1,i})}$ where $\delta_x$ is the delta function at $x$, our goal is to maximize the first moment discrepancy subject to the condition that the moments indexed $2,\ldots,2r$ match. The following proposition constructs two measures with the desired property, however with the caveat that these are not guaranteed to be the summations of  delta measures. The proof is essentially identical to Lemma 1 in~\cite{cai2011testing}.
\begin{proposition}\label{prop:lb-mom-ext}
For any $a,b\in R$, there exists two positive measure $\mu_0,\mu_1$ supported on $[a,b]$ such that
\begin{align*}
\int x^k\mu_0(dx) = \int x^k\mu_1(dx) \text{ for all }k=2,3,\ldots,2r;\\
\int x\mu_0(dx) - \int x\mu_1(dx) = E_L[x;[a,b]];\\
\int \mu_0(dx)\le 1; \int \mu_1(dx)\le 1,
\end{align*}
where $E_L[x;[a,b]]$ is the distance in the uniform norm on $[a,b]$ from the function $f(x)=x$ to the space spanned by the monomials with degree $2,3,\ldots,2r$.
\end{proposition}
Although $\mu_0,\mu_1$ from Proposition~\ref{prop:lb-mom-ext} are not guaranteed to be a summation of $r$ delta measures, the following proposition which is shown in Theorem 4.3 in~\cite{curto1991recursiveness} guarantees the existence of such a pair of measures.
\begin{proposition}\label{prop:lb-mom-spike}
Let $\m = (m_0,m_1,\ldots, m_{2k})$, $m_0>0$ and let $r = rank(\m)$. The following are equivalent:
\begin{enumerate}
\item There exists a positive Borel measure with supp $\mu\subset  [a,b]$, such that $\int x^j\mu(dx) = m_j$ for $j=0,\ldots,2r$.
\item There exists a positive measure $\mu$ which is a summation of $r$ delta measures with supp $\mu\subset[a,b]$ and $\int x^j\mu(dx) = m_j$ for $j=0,\ldots,2r$.
\end{enumerate}
$r=rank(\m)$ is defined to be the rank of the Hankel matrix of the moment vector $\m$.
\end{proposition}
Combining Proposition~\ref{prop:lb-mom-ext} and Propostion~\ref{prop:lb-mom-spike}, we obtain parameters $\bm{\alpha_0},\bm{\alpha_0},\bm{\gamma_0},\bm{\alpha_1}, \bm{\gamma_1}$
such that Equation~\ref{eqn:lr-lb-gen-allmom} is satisfied, and $|\delta_0^2-\delta_1^2| = E_L[x;[a,b]]$. As $E_L[x;[0,1]]=\Omega(1/r^2)$ by Theorem 2.1 of~\cite{hasson1980comparison}, we set $r = \frac{1}{2\sqrt{\eps}}$ and together with Corollary~\ref{cor:lr-lb-gen} we have that $D_{TV}(P_0,P_1) \le C'(1/\sqrt{\eps})(\frac{n^{1+\sqrt{\eps}}}{d})^{1/2\sqrt{\eps}}$. Hence, by setting $n = f(1/\eps) d^{1-\sqrt{\eps}}$ for certain function $f$, we get $D_{TV}(P_0,P_1)<\frac{1}{5}$ and $|\delta_0^2-\delta_1^2| = \Omega(\eps)$.
this conclude the proof of Proposition~\ref{prop:lb-zero-cond}.
%\begin{proof}[Proof of Theorem~\ref{thm:lb-const-cond}]
In order to prove Proposition~\ref{prop:lb-const-cond}, we need the following slight variant of Proposition~\ref{prop:lb-mom-ext}.
\begin{proposition}
For any $a,b\in R$, there exists two positive measure $\mu_0,\mu_1$ supported on $[a,b]$ such that
\begin{align*}
\int x^k\mu_0(dx) = \int x^k\mu_1(dx) \text{ for all }k=1,2,\ldots,2r;\\
\int \mu_0(dx) - \int \mu_1(dx) = E_L[1;[a,b]];\\
\int \mu_0(dx)\le 1; \int \mu_1(dx)\le 1,
\end{align*}
where $E_L[1;[a,b]]$ is the distance in the uniform norm on $[a,b]$ from the function $f(x)=1$ to the space spanned by the monomials with degree $1,2,\ldots,2r$.
\end{proposition}
It is shown in Lemma 4 of~\cite{carmon2018analysis} that $E_L[1;[1/2,1]] \ge e^{-2r}$. Hence there is a pair of positive measures $\mu_0, \mu_1$ whose first $2r$ moments match and the total mass differ by at least $e^{-2r}$. Let us define measure $\mu_0'(x) = \mu_0(x)/x, \mu_1'(x) = \mu_1(x)/x$. The measures $\mu_0',\mu_1'$ have the same $2,3,\ldots,2r+1$'s order moments, and their first moment differ by at least $e^{-2r}$. Applying Proposition~\ref{prop:lb-mom-spike} we obtain the parameters $\bm{\alpha_0},\bm{\alpha_0},\bm{\gamma_0},\bm{\alpha_1}, \bm{\gamma_1}$
such that Equation~\ref{eqn:lr-lb-gen-allmom} is satisfied, and $|\delta_0^2-\delta_1^2| \ge e^{-2r}$. Setting $r =  \frac{\log(1/\eps)}{2}$, Corollary~\ref{cor:lr-lb-gen} implies that $D_{TV}(P_0,P_1) \le C'(\log(1/\eps))(\frac{n^{1+\frac{1}{\log(1/\eps)}}}{d})^{\log(1/\eps)/2}$. By setting $n = f(1/\eps) d^{1-\frac{1}{\log(1/\eps)}}$ for some function $f$, we get $D_{TV}(P_0,P_1)<\frac{1}{5}$ and $|\delta_0^2-\delta_1^2|\ge \eps$
\end{proof}

%In Proposition~\ref{prop:lr-lb-gen-const}, we establish the existence result of a pair of measures with the desired first moment discrepancy for the settings where the eigenvalues of $\Sigma$ are bounded away from $0$. In Proposition~\ref{prop:lr-lb-gen-zero}, we show similar result for the setting where $\Sigma$ can be singular.

%\begin{proposition}\label{prop:lr-lb-gen-zero}
%\end{proposition}

\subsubsection{Unbounded $\|\beta\|$}
We prove Theorem~\ref{thm:genLB} in this subsection, showing that for unbounded $\|\beta\|$, learnability can not be accurately estimated with a sublinear sample size.

\begin{proof}[Proof of Theorem~\ref{thm:genLB}]
The proof of Theorem~\ref{thm:genLB} follows immediately from the standard fact that, for some constant $c$, it is impossible to distinguish $n=cd$ samples drawn from $N(0,I_d)$, versus $n$ samples from a randomly rotated rank $d-1$ Gaussian distribution that has $d-1$ singular values equal to 1, and one singular value equal to zero.   (See, e.g. Proposition 7.1 of~\cite{diakonikolas2016statistical}.) 

To see why that fact implies the claimed lowerbound, note that in the former case, the first coefficient of each sample is ``pure noise'', whereas in the second case, with probability 1 over the random rotation, the first coordinate is a linear function of the remaining $d-1$ coordinates, as the distribution only spans a $d-1$ dimensional subspace of $\R^d$.
\end{proof}
\iffalse
\begin{fact}
$\chi^2(\chi^2_k,\chi^2_{k+1}) = $
\end{fact}
\begin{lemma}
Suppose there is an algorithm which learn the $\delta^2$ 
\end{lemma}
\fi
%\bibliographystyle{unsrt}
\section{Lowerbounds on Estimating the Classification Error\label{sec:bin-lowerbound}}
\subsection{Identity Covariance Lowerbound}
The next theorem establishes the lowerbound for the binary classification setting where the covariance of the data generating distribution is the identity.
\begin{theorem}
Given $\frac{\sqrt{d}}{2}$ samples, no algorithm can distinguish the case that the label is pure noise, meaning the label of each data point is uniformly randomly drawn from $\{+1,-1\}$, independent from the data, versus no noise, where there is an underlying hyperplane represented as vector $\beta$ such that the label is $sgn(\beta^Tx)$ with probability greater than $0.77$.
\end{theorem}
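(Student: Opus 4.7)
The plan is to apply the Bayesian two-point method with a uniform prior on the unit sphere $\mathcal{S}^{d-1}$. Let $P_0$ denote the joint distribution of $n$ i.i.d.\ samples under the null hypothesis (each $\x_i \sim N(0, I_d)$ with $y_i$ uniform on $\{\pm 1\}$, independent of $\x_i$), and let $P_1$ denote the mixture obtained by first drawing $\beta$ uniformly on $\mathcal{S}^{d-1}$ and then generating $n$ i.i.d.\ samples with $\x_i \sim N(0, I_d)$ and $y_i = \operatorname{sgn}(\beta^T \x_i)$. Any distinguishing algorithm succeeds with probability at most $\tfrac{1}{2}(1 + D_{TV}(P_0, P_1))$, so it suffices to show $D_{TV}(P_0, P_1) \le 0.54$; by Fact~\ref{fact:chi2vstv} this reduces to bounding $\chi^2(P_1, P_0)$ by a small constant.

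The chi-squared computation would mirror the proof of Proposition~\ref{prop:lrlbid}. Since $P_0$ is a product measure and $P_1$ a mixture of product measures, the pairwise-correlation decomposition gives
\begin{equation*}
1 + \chi^2(P_1, P_0) \;=\; \E_{\beta, \beta'}\!\left[\left(\int \frac{p_\beta(\x,y)\, p_{\beta'}(\x,y)}{p_0(\x,y)}\, d\x\, dy\right)^{\!n}\right],
\end{equation*}
where $\beta, \beta'$ are independent uniform on $\mathcal{S}^{d-1}$, $p_0(\x,y) = \tfrac{1}{2}G(\x)$ with $G$ the standard Gaussian density, and $p_\beta(\x,y) = G(\x)\,\1[y = \operatorname{sgn}(\beta^T \x)]$. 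Summing over $y \in \{\pm 1\}$ and invoking the classical halfspace-agreement formula for an isotropic Gaussian,
\begin{equation*}
\int \frac{p_\beta\, p_{\beta'}}{p_0}\, d\x\, dy \;=\; 2\Pr_{\x \sim N(0,I)}\!\big[\operatorname{sgn}(\beta^T \x) = \operatorname{sgn}(\beta'^T \x)\big] \;=\; 2\!\left(1 - \tfrac{\theta}{\pi}\right),
\end{equation*}
where $\theta \in [0, \pi]$ is the angle between $\beta$ and $\beta'$. This reduces the task to showing $\E_\theta[(2 - 2\theta/\pi)^n]$ is at most a small constant, with $\theta$ distributed according to $\sin^{d-2}(\theta)/B((d-1)/2, 1/2)$ on $[0, \pi]$ (Fact~\ref{fact:betafun}).

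The last step would be a Gaussian-integral estimate exploiting the concentration of $\theta$ around $\pi/2$. Substituting $\epsilon = \theta - \pi/2 \in [-\pi/2, \pi/2]$, I would use the pointwise bounds $(1 - 2\epsilon/\pi)^n \le e^{-2n\epsilon/\pi}$ (from $\log(1-x) \le -x$ for $x < 1$) and $\cos^{d-2}\epsilon \le e^{-(d-2)\epsilon^2/2}$ (a short calculus check valid on $[-\pi/2, \pi/2]$), extend the resulting integral to all of $\RB$, and complete the square to obtain
\begin{equation*}
1 + \chi^2(P_1, P_0) \;\le\; c(d)\,\exp\!\left(\frac{2 n^2}{\pi^2 (d-2)}\right),
\end{equation*}
where $c(d) \to 1$ as $d \to \infty$ via Wendel's inequality applied to $B((d-1)/2, 1/2)$. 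Plugging in $n = \sqrt{d}/2$ makes the exponent tend to $1/(2\pi^2) \approx 0.05$, yielding $\chi^2 \le 0.07$ and hence $D_{TV}(P_0, P_1) \le \tfrac{1}{2}\sqrt{0.07} < 0.14$, well inside the $0.54$ budget; the distinguishing probability is thus at most $0.64 < 0.77$.

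The main technical obstacle will be tracking the non-asymptotic constant $c(d)$ cleanly: the Stirling/Wendel-type lower bound on $B((d-1)/2, 1/2)$ and the error from truncating the Gaussian tail at $\pm \pi/2$ must together leave enough slack for the stated $0.77$ bound to hold uniformly in $d$ (and not merely in the large-$d$ limit).
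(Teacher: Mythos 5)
Your chi-squared setup and the reduction to $\E_\theta\bigl[(2-2\theta/\pi)^n\bigr]$ (via the pairwise correlation and the Sheppard halfspace-agreement formula) are identical to the paper's. The divergence is in the final integration over $\theta$. The paper's route is discrete: it uses $2\theta/\pi \le \cos\theta$ on $[0,\pi/2]$ to bound the integrand by $(1+\cos\theta)^n$, splits off $[\pi/2,\pi]$ (contributing at most $1/2$), binomially expands, and applies the Beta-function recurrence $B(x,y+1) = \tfrac{y}{x+y}B(x,y)$ to show the series of terms $b_i = \binom{n}{i}\tfrac{B((d-1)/2,(i+1)/2)}{B((d-1)/2,1/2)}$ is geometrically decaying once $n \le \sqrt{d}/2$, yielding $\chi^2 \le 17/14$ for every $d$. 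Your route is continuous: substitute $\epsilon = \theta - \pi/2$, apply $(1-2\epsilon/\pi)^n \le e^{-2n\epsilon/\pi}$ and $\cos^{d-2}\epsilon \le e^{-(d-2)\epsilon^2/2}$, and complete the square in a Gaussian integral over $\RB$. Both are valid; yours gives a tighter asymptotic constant ($\chi^2 \lesssim 0.07$ versus $17/14$), but as you acknowledge, the prefactor $c(d) = \sqrt{2\pi/(d-2)}\,/\,B((d-1)/2,1/2)$ is left unresolved for finite $d$ — Wendel's inequality gives $c(d)\to 1$ only asymptotically, and for small $d$ the $(d-2)$ in the denominator degenerates. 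The paper's beta-recurrence argument is slightly more self-contained precisely because it sidesteps this and bounds the series uniformly. (A minor slip: with $D_{TV}\le 0.14$ the distinguishing probability bound is $(1+0.14)/2 \approx 0.57$, not $0.64$; immaterial, since both are well under $0.77$.)
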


We show our lowerbound by upperbounding the total variational distance between the following two cases
\begin{enumerate}
\item Draw $n$ independent samples $(\x_1,y_1),\ldots,(\x_n,y_n)$ where $\x_i\sim N(0,I),$ and $y_i\sim \{-1,+1\}.$
\item First uniformly random pick a unit vector $v$, the draw $n$ independent samples $(\x_1,y_1),\ldots,(\x_n,y_n)$ where $\x_i\sim N(0,I), y_i = sgn(v^T\x_i)$.
\end{enumerate}
The claim then is that no algorithm can distinguish the two cases with probability more than $0.77$.  Let $Q_n$ be the joint distribution of $(\x_1,y_1),\ldots,(\x_n,y_n)$ in case $2$. Our goal is to bound the total variance $D_{TV}(Q_n,N(0,I)^{\otimes n})$ which is smaller than $\frac{\sqrt{\chi^2(Q_n,N(0,I)^{\otimes n})}}{2}.$
In case 2, for a fixed $v$, the conditional distribution $P(\x|y)= I(sgn(v^T\x)=y)G(\x)$ where $I$ is the indicator function. Let $P_{y,v}$ denote such a conditional distribution. The chi-square divergence can be expressed as:
\begin{align*}
1+ \chi^2(Q_n,N(0,I)^{\otimes n}) = 
\int_{\x_1,y_1}\ldots \int_{\x_n,y_n}\frac{\Big(\int_{v\in \mathcal{S}^d} \prod_{i=1}^n  P_{y_i,v}(\x_i)P(y_i)dv\Big)^2}{\prod_{i=1}^n G(\x_i)P(y_i)} d\x_1dy_1\ldots d\x_ndy_n\\
 = \int_{\x_1,y_1}\ldots \int_{\x_n,y_n}\int_{v\in \mathcal{S}^d} \int_{v'\in \mathcal{S}^d} \prod_{i=1}^n \frac{   P_{y_i,v}(\x_i)  P_{y_i,v'}(\x_i)P(y_i)}{G(\x_i)} dv dv' d\x_1dy_1\ldots d\x_ndy_n\\
 = \int_{v\in \mathcal{S}^d} \int_{v'\in \mathcal{S}^d} \Big(\int_y \int_{\x}\frac{   P_{y,v}(\x)  P_{y,v'}(\x)P(y)}{G(\x)} d\x dy\Big)^n dv dv'.
 \end{align*}
Notice that the $\frac{P_{y,v}(\x)  P_{y,v'}(\x)}{G(\x)}=4G(\x)$ only when $sgn(v^T\x)=sgn({v'}^T\x)=y$ and otherwise $0$. Hence $\int_y \int_{\x}\frac{   P_{y,v}(\x)  P_{y,v'}(\x)P(y)}{G(\x)} d\x dy = 2-\frac{2\theta}{\pi}$.
\begin{align*}
\int_{v\in \mathcal{S}^d} \int_{v'\in \mathcal{S}^d}(1+(1-\frac{2\theta}{\pi}))^n dvdv'\\
\le \int_{0}^{\pi/2} (1+(1-\frac{2\theta}{\pi}))^n \sin^{d-2}\theta d\theta/B((d-1)/2,1/2) + \int_{\pi/2}^{\pi}\sin^{d-2}\theta d\theta/B((d-1)/2,1/2)\\
\le \int_{0}^{\pi/2} (1+\cos\theta)^n \sin^{d-2}\theta d\theta/B((d-1)/2,1/2) + 1/2
\end{align*}
where we have applied Fact~\ref{fact:betafun},~\ref{fact:chi2} and the fact that $\frac{2\theta}{\pi}\le \cos\theta$ for $\theta\in [0,\frac{\pi}{2}]$ in the above derivation. This corresponds to the case where $\frac{b^2}{1-b^2}=1$ in the proof of Proposition~\ref{prop:lrlbid}. Let $n\le \frac{1}{2}\sqrt{d}$, we have that there is no algorithm that can distinguish the two cases with probability greater than $0.77$.
\subsection{General Covariance Lowerbound}
\begin{proposition}\label{thm:bin-lb-gen}
Without any assumptions on the covariance of the data distribution, it is impossible to distinguish the case that the label is pure noise, meaning the label of each data point is uniformly randomly drawn from $\{+1,-1\}$, independent from the data, versus no noise, where there is an underlying hyperplane represented as vector $\beta$ such that the label is $sgn(\beta^Tx)$, with probability better than $2/3$ using $c\cdot d$ samples, for some constant $c$.
\end{proposition}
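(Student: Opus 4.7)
The plan is to reduce Proposition~\ref{thm:bin-lb-gen} directly to the regression lower bound of Theorem~\ref{thm:genLB} by applying $\operatorname{sgn}$ to the continuous labels. Recall that Theorem~\ref{thm:genLB} rests on the standard fact that $n = c\cdot d$ samples from $N(0,I_{d+1})$ are indistinguishable (with probability better than $2/3$) from $n$ samples of a randomly rotated rank-$d$ Gaussian in $\R^{d+1}$; interpreting the first coordinate of each sample as $y \in \R$ and the remaining coordinates as $\x \in \R^d$, the first case gives $\x \sim N(0, I_d)$ with $y \sim N(0,1)$ independent of $\x$, while the second case gives $y = \beta^T \x$ identically for some hidden, possibly large-norm $\beta \in \R^d$ determined by the random rotation.

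I would now build the two hard classification distributions by post-processing the regression labels: given a regression sample $(\x, y)$, output the classification sample $(\x, \operatorname{sgn}(y))$. In the pure-noise regression case, $\operatorname{sgn}(y)$ is uniform over $\{-1,+1\}$ and independent of $\x$, matching the ``pure noise'' classification instance of Proposition~\ref{thm:bin-lb-gen} exactly. In the linear regression case, $\operatorname{sgn}(y) = \operatorname{sgn}(\beta^T \x)$ for the same hidden $\beta$, matching the ``best linear classifier has zero error'' instance exactly. Consequently, any algorithm $\mathcal{A}$ that distinguishes the two classification distributions on $n$ samples with probability greater than $2/3$ yields a distinguisher $\mathcal{A}'$ for the two regression distributions with the same success probability: $\mathcal{A}'$ simply applies $\operatorname{sgn}$ coordinatewise to the regression labels and runs $\mathcal{A}$ on the resulting pairs. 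Theorem~\ref{thm:genLB} would then force $n \ge c \cdot d$, which is precisely the conclusion we want.

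The main (mild) obstacle is verifying that the simulated classification samples truly follow the two target distributions. The pure-noise side is immediate from the symmetry of $N(0,1)$: the sign of an independent centered Gaussian is a uniform $\pm 1$ coin, independent of everything else. The linear-threshold side relies on the fact, already used in the proof of Theorem~\ref{thm:genLB}, that a randomly rotated rank-$d$ Gaussian in $\R^{d+1}$ has with probability $1$ over the rotation a support that is the graph of a linear map from the ``$\x$'' coordinates to the ``$y$'' coordinate, so the equality $y = \beta^T \x$ holds exactly rather than approximately, and hence $\operatorname{sgn}(y) = \operatorname{sgn}(\beta^T \x)$ almost surely. Since Proposition~\ref{thm:bin-lb-gen} imposes no assumption on the covariance of $\x$ or on $\|\beta\|$, I am free to use the (random, potentially large-norm) $\beta$ produced by this rotation-based construction, and the resulting constant $c$ in the bound can simply be inherited from the one in Theorem~\ref{thm:genLB}.
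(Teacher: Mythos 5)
Your proposal is correct and matches the paper's proof essentially verbatim: both reduce the classification lower bound to the regression lower bound of Theorem~\ref{thm:genLB} by post-processing each regression label with $\operatorname{sgn}$, observing that the pure-noise and pure-signal regression instances map exactly to the corresponding classification instances. The additional detail you supply (spelling out why the rank-$d$ rotated Gaussian yields exact equality $y=\beta^T\x$, and why $\operatorname{sgn}(y)$ is a fair coin independent of $\x$ in the noise case) is implicitly used but not elaborated in the paper.
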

\begin{proof}
We prove the proposition by reducing the problem of distinguishing pure noise versus no noise in the linear regression setting (Theorem~\ref{thm:genLB}) to the problem in the binary classification setting. Recall that in the proof of Theorem~\ref{thm:genLB}, each label $y_i$ is drawn from standard Gaussian distribution $N(0,1)$ in the pure noise case, and each label $y_i=\beta^T\x_i$ in the pure signal case. Given samples from the linear regression setting, we can create binary labels for each sample $i$ with $y'_i=sgn(y_i)$ where $sgn(x)$ is the sign function which takes value $1$ for $x>0$ and $-1$ for $x\le 0$. The distribution of $y'_i$ constructed from the pure noise case of linear regression will exactly be the distribution of the label in the pure noise case of the binary classification setting stated in the proposition, and this holds analogously for the pure signal case. Hence if there is an algorithm that can distinguish the case that the label is pure noise versus pure signal in the binary classification setting, that would also yield an algorithm for the linear regression setting, which is prohibited by Theorem~\ref{thm:genLB}. Thus the proof is complete.
\end{proof}

\end{document}